\documentclass{article}


\usepackage[nonatbib, preprint]{neurips_2024}
\usepackage[numbers]{natbib}

\usepackage[utf8]{inputenc} 
\usepackage[T1]{fontenc}    
\usepackage{url}            
\usepackage{booktabs}       
\usepackage{amsfonts}       
\usepackage{nicefrac}       
\usepackage{microtype}      
\usepackage{xcolor}    
\usepackage{mathrsfs}
\newcommand{\CMscr}{}
\let\CMscr=\mathscr
\usepackage[mathscr]{euscript}



\newcommand{\RomanNumeralCaps}[1]
    {\MakeUppercase{\romannumeral #1}}

\usepackage{xparse}
\usepackage{amsmath, amsthm, amssymb}
\usepackage{nccmath}
\usepackage{booktabs}
\usepackage{empheq}
\usepackage{pifont}
\usepackage{enumitem}
\usepackage{graphicx} 
\usepackage{wrapfig}
\usepackage{empheq}

\usepackage{thmtools}
\usepackage{thm-restate}


\usepackage{nicefrac}
\usepackage{booktabs}
\usepackage{multirow}
\usepackage{rotating}
\usepackage{bm}

\usepackage{tikz,pgfplots}
\usetikzlibrary{shapes}
\usetikzlibrary{decorations.markings}
\usetikzlibrary{plotmarks}
\usetikzlibrary{patterns}
\usetikzlibrary{hobby} 
\usetikzlibrary{arrows.meta} 
\tikzset{>={Latex[length=4,width=4]}} 
\usetikzlibrary{calc,intersections,decorations.markings}
\usepackage{siunitx}
\usepackage{ctable}

\usepackage{color, colortbl}
\definecolor{mydarkblue}{rgb}{0,0.08,0.45}
\usepackage[colorlinks=true,
    linkcolor=mydarkblue,
    citecolor=mydarkblue,
    filecolor=mydarkblue,
    urlcolor=mydarkblue,
    pdfview=FitH]{hyperref}
    
\usepackage{empheq}


\definecolor{myteal}{RGB}{27,158,119}
\definecolor{myorange}{RGB}{217,95,2}
\definecolor{myred}{RGB}{231,41,138}
\definecolor{mypurple}{RGB}{152,78,163}
\definecolor{myblue}{RGB}{55,126,184}
\definecolor{mygreen}{RGB}{0,100,0}
\definecolor{myroyalblue}{HTML}{4169E1}


\newtheorem{definition}{Definition}[section]

\newtheorem{proposition}{Proposition}[section]
\newtheorem{lemma}{Lemma}[section]
\newtheorem{theorem}{Theorem}[section]
\newtheorem{remark}{Remark}[section]
\newtheorem{corollary}{Corollary}[section]

\usepackage[framemethod=tikz]{mdframed}
\usepackage{subcaption}

\global\mdfdefinestyle{exampledefault}{%
outerlinewidth=0pt,innerlinewidth=0pt,
roundcorner=5pt,backgroundcolor=myblue,
leftmargin=0.2cm,rightmargin=0.2cm
}



\def\N{\mathbb{N}}
\def\R{\mathbb{R}}



\def \f{{\mathfrak{f}}}

\def\dif{\mathop{}\!\mathrm{d}}

\def\EE{{\mathbb E}\,}

\def\defas{\stackrel{\text{def}}{=}}

\DeclareDocumentCommand{\Prto} {o} {
  \IfNoValueTF {#1}
  {\overset{\Pr}{\longrightarrow}}
  { \xrightarrow[ #1 \to \infty]{\Pr }}
}

\DeclareDocumentCommand{\Asto} {o} {
  \IfNoValueTF {#1}
  {\overset{\text{\rm a.s.}}{\longrightarrow}}
  { \xrightarrow[ #1 \to \infty]{\text{\rm a.s.} }}
}

\DeclareDocumentCommand{\law} {o} {
  \IfNoValueTF {#1}
  {\overset{\text{law}}{=}}
  { \xrightarrow[ #1 \to \infty]{\Pr }}
}

\DeclareMathOperator{\Exp}{\mathbb{E}}




\newcommand{\beq}{ \begin{equation} }
\newcommand{\eeq}{ \end{equation} }




\newcommand{\ip}[1]{\langle {#1} \rangle }




\newcommand{\tr}{\text{Tr}}

\newcommand{\argmin}{\text{arg\,min}}

\newcommand{\vertiii}[1]{{\left\vert\kern-0.25ex\left\vert\kern-0.25ex\left\vert #1 
    \right\vert\kern-0.25ex\right\vert\kern-0.25ex\right\vert}}

\definecolor{myblue}{HTML}{D2E4FC}
\definecolor{Gray}{gray}{0.92}

\let\Im\relax
\DeclareMathOperator{\Im}{Im}
\let\Re\relax
\DeclareMathOperator{\Re}{Re}

\usetikzlibrary{decorations.markings,positioning}

\newtheorem{assumption}{Assumption}

\graphicspath{{./figures/}}

\makeatletter
\newcommand{\vast}{\bBigg@{4}}
\newcommand{\Vast}{\bBigg@{5}}

\newcommand{\pce}{\xi}
\newcommand{\sle}{\eta}

\title{4+3 Phases of Compute-Optimal Neural Scaling Laws}

%

\author{%
  Elliot Paquette\thanks{Corresponding author; website: \href{https://elliotpaquette.github.io/}{\texttt{https://elliotpaquette.github.io/}}.} \\
  McGill University\\
  \texttt{elliot.paquette@mcgill.ca} \\
   \And
   Courtney Paquette \\
   Google DeepMind \& McGill University \\
   \texttt{courtney.paquette@mcgill.ca} \\
   \AND
   Lechao Xiao\thanks{The authors contributed equally to the paper.}\\
   Google DeepMind \\
   \And
   Jeffrey Pennington\footnotemark[2]\\
   Google DeepMind \\
}

\begin{document}

\maketitle

\begin{abstract}
    We consider the solvable neural scaling model with three parameters: data complexity, target complexity, and model-parameter-count. We use this neural scaling model to derive new predictions about the compute-limited, infinite-data scaling law regime.  To train the neural scaling model, we run one-pass stochastic gradient descent on a mean-squared loss.  We derive a representation of the loss curves which holds over all iteration counts and improves in accuracy as the model parameter count grows.  We then analyze the compute-optimal model-parameter-count, and identify 4 phases (+3 subphases) in the data-complexity/target-complexity phase-plane.  The phase boundaries are determined by the relative importance of model capacity, optimizer noise, and embedding of the features. We furthermore derive, with mathematical proof and extensive numerical evidence, the scaling-law exponents in all of these phases, in particular computing the optimal model-parameter-count as a function of floating point operation budget. We include a colab notebook \href{https://tinyurl.com/2saj6bkj}{nanoChinchilla}\footnote{Available at: \href{https://tinyurl.com/2saj6bkj}{\texttt{ https://tinyurl.com/2saj6bkj}}} that reproduces some key results of the paper.
\end{abstract}


\section{Introduction}

The advent of large language models (LLMs) has changed our perceptions of the landscape of optimization and is resulting in the emergence of new interesting questions related to scaling. Prior to LLMs and other large models, we often viewed the large-scale optimization problems as being limited by the amount of data. 
In training language models, in contrast, data can be effectively infinite. Thus, \emph{compute budgets} can be the limitation.
This leads to the following natural question: given an architecture, given a fixed compute budget, and having unlimited data, \textit{how should one select the model size to minimize loss?}


To formally address this question, let us consider the general learning problem, 
\begin{equation} \label{eq:toy_loss}
\min_{\theta \in \mathbb{R}^d} \big \{ \CMscr{P}(\theta) = \mathbb{E}_x[\CMscr{R}(\theta; x)] \big \}, \quad \text{where $\CMscr{R} \, : \, \mathbb{R}^d \to \mathbb{R}$},
\end{equation}
the number of parameters $d$ is large, and the data vector $x$ is drawn from an unknown distribution. We solve \eqref{eq:toy_loss} using stochastic algorithms, such as stochastic gradient descent (SGD) with batch size $B$, under various parameter sizes $d$, that produce a sequence of iterates $\{\theta_r\}$. A standard formula used in practice to measure compute is the "6ND" formula \cite{kaplan2020scaling}, that is,
\begin{equation} \label{eq:compute}
\text{Compute (flops\footnotemark $\f$)} = \left(\text{iterations of alg. $(r)$} \, \, \times \text{batch size $(B)$} \right) \, \, \times \, \, \text{parameters $(d)$}.
\end{equation}
\footnotetext{Here and throughout we use flops to mean ``floating point operations'' and not as the rate floating point operations per second. We also drop the pre-factor $6$ in "6ND" formula for simplicity.}\!
Therefore, we can plot the loss curve $\CMscr{P}(\theta_r; d) = \CMscr{P}(r;d) = \CMscr{P}(\f/(d \cdot B); d)$ as a function of flops (see Fig.~\ref{fig:toy_scaling_law}). The question now is: given a fixed number of flops $\f$ and given batch size $B$, how should we choose the parameters $d$ so that we get the best loss, i.e.\ find $d^{\star}$ solving the constrained problem
\begin{equation} \label{eq:compute_optimal_problem}
d^{\star}(\f) \in \argmin_d \CMscr{P} \big (\tfrac{\f}{d \cdot B}; d \big )  = \argmin_d \big \{ \CMscr{P}(\theta_r; d)\, \,  \text{subj. to } \,\, \f = (r \times B) \times d \big \}. 
\end{equation}

\begin{wrapfigure}[33]{r}{0.45\textwidth}
\centering
\includegraphics[width=0.4\textwidth]{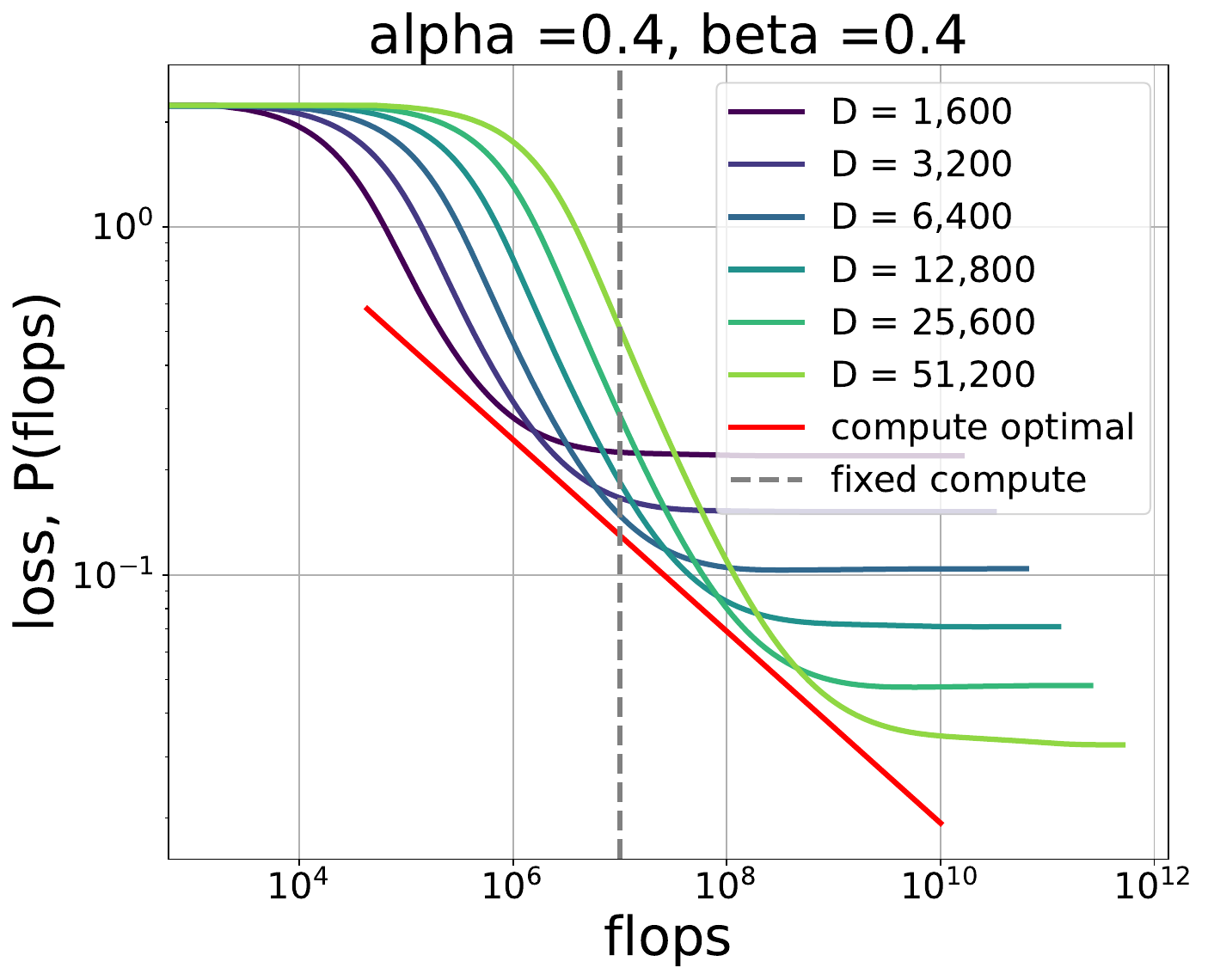}
\vspace{-0.1cm}
\caption{\textbf{Toy scaling problem}. We plot the loss function, $\CMscr{P}(\theta_r; d)$ as a function of flops $\f$ using \eqref{eq:compute}. Consider a fixed number of flops $\f = 10^7$ (dashed line). If we had chosen, e.g., $d = 1600$, we can run for a long time, but our model does not have a lot of capacity and thus the value of the loss function remains high. On the hand, we can increase capacity by choosing a large number of parameters (e.g., $d = 51,200$), but because our compute is fixed we can not run our algorithm for very long. Thus the loss value is still large. The optimal choice is $d \approx 6,400$. When done for every choice of $\f$ gives the compute-optimal curve (red line). This choice of $(\alpha, \beta)$ (Phase I) is an example of where \textit{model capacity} controls the compute-optimal curve, but it is not the only behavior we show. In other phases the compute-optimal is controlled by \textit{poor model embedding} (Phase II, III) and \textit{SGD noise} (Phase III, IV). }
    \label{fig:toy_scaling_law} 
\end{wrapfigure}

\paragraph{Main contributions.} In this work, we analyze a three parameter simple model, which we call \textit{power-law random features} (PLRF) \cite{maloney2022solvable,bahri2021explaining}. 
The three parameters in the PLRF are the data complexity ($\alpha$), target complexity ($\beta$) and model-parameter count $d$. Using this model, we derive a deterministic equivalent for the expected loss, as a function of $\alpha$, $\beta$, and $d$, that captures the training dynamics of one-pass SGD. This can be used to derive numerical predictions for the scaling laws.  We also extract exact expressions for the compute-optimal scaling laws and the optimal parameter $d^{\star}(\f) \in \text{argmin}_d \CMscr{P}(\tfrac{\f}{d \cdot B}; d)$ for large\footnote{We discuss \emph{how} large is large, but the truth is somewhat complicated and also quite dependent on the desired precision.  If $\pm0.05$ on the achieved scaling laws is tolerable, a flat $d>1000$ seems to suffice across all phases.} $d$, and give some estimates on the order of $d$ necessary for these scaling laws to take hold. 

We also observe for a large portion of the $(\alpha,\beta)$-phase plane, the optimal parameter is  $d^{\star}(\f) = \f^{1/2}$, suggesting a regime of \textit{universal scaling behavior} (see Fig. \ref{fig:scalinglaw heatmap} and Table \ref{table:phases_intro}). This verifies theoretically the Chinchilla scaling \cite{hoffmann2022chinchilla}.

The PLRF is not only analyzable, but also exhibits a rich behavior of compute-optimal curves/loss curves, which are qualitatively and quantitatively different depending on the strengths of the data $(\alpha)$ vs.\ target $(\beta)$ complexity. Particularly, we show that there are \textit{4 distinct (+3 sub phases)} compute-optimal curve/loss curve behaviors. 

\textit{Model constrained compute-optimal curves.} In two of the phases (Phase Ia,b,c and Phase II), it is the underlying model that dictates the curves. The algorithm has little/no impact. This appears in two forms. The first behavior are compute-optimal curves controlled by the capacity of the model (Phase Ia,b,c). 
Here once the algorithm reaches the limiting risk value possible (capacity), it is better to increase the model-parameter $d$. Another type of loss dynamics is due to poor model feature embedding (Phase II). Here the features are embedded in a way which is difficult to train.
After an initial large decrease in the loss value, this feature embedding distortion frustrates the algorithm and training slows, but it continues to solve.  However, solving to capacity wastes compute, in that it is compute-favored to increase the model parameter count $d$.

\textit{Algorithm constrained compute-optimal curves.} For some choices of $(\alpha, \beta)$ (Phase III and IV), it is the noise produced by the SGD algorithm that ultimately controls the tradeoff. Here the algorithm matters. Indeed, another algorithm could change the compute-optimal curves for these phases.

\definecolor{region1}{rgb}{0.683,0.9335,0.9726}
\definecolor{region2}{rgb}{1.0,0.7019,0.729411}
\definecolor{region3}{rgb}{0.729411, 1.0, 0.7882}
\definecolor{region4}{rgb}{1.0,0.8745,0.729411}

\definecolor{region1}{HTML}{CAC7FF}
\definecolor{region2}{rgb}{0.683,0.9335,0.9726}
\definecolor{region3}{rgb}{0.729411, 1.0, 0.7882}
\definecolor{region4a}{rgb}{1.0,0.8745,0.729411}
\definecolor{region4b}{rgb}{1.0,0.7019,0.729411}
\definecolor{region1c}{HTML}{E4D3FC}
\definecolor{region1b}{HTML}{FEE0F9}

\usetikzlibrary{arrows.meta}

\begin{figure}[t!]
\begin{minipage}{0.3 \textwidth}
\begin{tikzpicture}[scale=0.25]
  \message{Phase diagrams^^J}
  
  \def\xtick#1#2{\draw[thick] (#1)++(0,.2) --++ (0,-.4) node[below=-.5pt,scale=0.7] {#2};}
  \def\ytick#1#2{\draw[thick] (#1)++(.2,0) --++ (-.4,0) node[left=-.5pt,scale=0.7] {#2};}

  \draw[very thick, myteal, -{Latex[length=3mm]}]{(0.5,12) -- (7,1)};
  \draw[very thick, myred, -{Latex[length=3mm]}]{ (7,1) -- (15,1)};
  
  \fill[myroyalblue] (7,1) circle (10pt);
   \draw[very thick, myroyalblue, dashed, -{Latex[length=3mm]}]{(10,8)--(7,1)};
  \node [above right,scale=1.0,align=center] at (8,8) {\textcolor{myroyalblue}{\textbf{compute}}\\[-2pt]\textcolor{myroyalblue}{\textbf{optimal}}};
  
  \draw[very thick, -{Latex[length=3mm]}]{(0,0) -- (0,14)};
  \draw[very thick, -{Latex[length=3mm]}]{(0,0) -- (15,0)};
  
  \newcommand*{\xMin}{0}%
  \newcommand*{\xMax}{15}%
  \newcommand*{\yMin}{0}%
  \newcommand*{\yMax}{14}%
   \foreach \i in {0,...,4} {
        \draw [very thin, gray] (\xMin,\i*3+\yMin) -- (\xMax,\i*3+\yMin);
    }
  \node [below] at (7.5, 14) { \textbf{Phase Ia, Ib, Ic} };
  
  \node [right = 10 pt] at (2,8) { \textcolor{myteal}{$\mathscr{F}_{pp}$} };
  \node [above = 3 pt] at (10,1) { \textcolor{myred}{$\mathscr{F}_{0}$} };
  
  \node [below] at (7.5, 0) { \textbf{flops} };
  \node [rotate = 90, above, align=center] at (0,6) {\textbf{loss curve}};
  
\end{tikzpicture}
\end{minipage} \hspace{0.5cm}\begin{minipage}{0.3 \textwidth}
\begin{tikzpicture}[scale=0.25]
  \message{Phase diagrams^^J}
  
  \def\xtick#1#2{\draw[thick] (#1)++(0,.2) --++ (0,-.4) node[below=-.5pt,scale=0.7] {#2};}
  \def\ytick#1#2{\draw[thick] (#1)++(.2,0) --++ (-.4,0) node[left=-.5pt,scale=0.7] {#2};}

  \draw[very thick, myteal, -{Latex[length=3mm]}]{(0.5,12) -- (5,4)};
  \draw[very thick, myorange, -{Latex[length=3mm]}]{ (5,4) -- (10,1)};
  \draw[very thick, myred, -{Latex[length=3mm]}]{ (10,1) -- (15,1)};
  
  \fill[myroyalblue] (5,4) circle (10pt);
  (8,8) edge[bend right, dashed, very thick,-{Latex[length=3mm]}] node [left] {} (5,4);
   \draw[very thick, myroyalblue, dashed, -{Latex[length=3mm]}]{(8,8)--(5,4)};
  \node [above right,scale=1.0,align=center] at (8,8) {\textcolor{myroyalblue}{\textbf{compute}}\\[-2pt]\textcolor{myroyalblue}{\textbf{optimal}}};
  
  \draw[very thick, -{Latex[length=3mm]}]{(0,0) -- (0,14)};
  \draw[very thick, -{Latex[length=3mm]}]{(0,0) -- (15,0)};
  
  \newcommand*{\xMin}{0}%
  \newcommand*{\xMax}{15}%
  \newcommand*{\yMin}{0}%
  \newcommand*{\yMax}{14}%
   \foreach \i in {0,...,4} {
        \draw [very thin, gray] (\xMin,\i*3+\yMin) -- (\xMax,\i*3+\yMin);
    }
  \node [below] at (7.5, 14) { \textbf{Phase II} };
  
  \node [right = 10 pt] at (2,8) { \textcolor{myteal}{$\mathscr{F}_{pp}$} };
  \node [right = 10 pt] at (5,4) { \textcolor{myorange}{$\mathscr{F}_{ac}$} };
  \node [above = 3 pt] at (13,1) { \textcolor{myred}{$\mathscr{F}_{0}$} };
  
  \node [below] at (7.5, 0) { \textbf{flops} };
  \node [rotate = 90, above, align=center] at (0,6) {\textbf{loss curve}};
  
\end{tikzpicture}
\end{minipage} \hspace{0.5cm}
\begin{minipage}{0.1 \textwidth}
\begin{tikzpicture}[scale=0.25]
  \message{Phase III}
  
  \def\xtick#1#2{\draw[thick] (#1)++(0,.2) --++ (0,-.4) node[below=-.5pt,scale=0.7] {#2};}
  \def\ytick#1#2{\draw[thick] (#1)++(.2,0) --++ (-.4,0) node[left=-.5pt,scale=0.7] {#2};}
  
  \draw[very thick, myroyalblue, -{Latex[length=3mm]}]{(0.5,12) -- (5,4)};
  \draw[very thick, myorange, -{Latex[length=3mm]}]{ (5,4) -- (10,1)};
  \draw[very thick, myred, -{Latex[length=3mm]}]{ (10,1) -- (15,1)};
  
  \fill[myteal] (5,4) circle (10pt);
  (8,8) edge[bend right, dashed, very thick,-{Latex[length=3mm]}] node [left] {} (5,4);
   \draw[very thick, myteal, dashed, -{Latex[length=3mm]}]{(8,8)--(5,4)};
  \node [above right,scale=1.0,align=center] at (8,8) {\textcolor{myteal}{\textbf{compute}}\\[-2pt]\textcolor{myteal}{\textbf{optimal}}};
  
  \draw[very thick, -{Latex[length=3mm]}]{(0,0) -- (0,14)};
  \draw[very thick, -{Latex[length=3mm]}]{(0,0) -- (15,0)};
  
  \newcommand*{\xMin}{0}%
  \newcommand*{\xMax}{15}%
  \newcommand*{\yMin}{0}%
  \newcommand*{\yMax}{14}%
   \foreach \i in {0,...,4} {
        \draw [very thin, gray] (\xMin,\i*3+\yMin) -- (\xMax,\i*3+\yMin);
    }
  \node [below] at (7.5, 14) { \textbf{Phase III} };
  
  \node [right = 10 pt] at (2,8) { \textcolor{myroyalblue}{$\mathscr{K}_{pp}$} };
  \node [right = 10 pt] at (5,4) { \textcolor{myorange}{$\mathscr{F}_{ac}$} };
  \node [above = 3 pt] at (13,1) { \textcolor{myred}{$\mathscr{F}_{0}$} };
  
  \node [below] at (7.5, 0) { \textbf{flops} };
  \node [rotate = 90, above, align=center] at (0,6) {\textbf{loss curve}};
\end{tikzpicture}
\end{minipage} 
\def\tick#1#2{\draw[thick] (#1) ++ (#2:0.03*\ymax) --++ (#2-180:0.06*\ymax)}

\hspace{-0.75cm}
\begin{subfigure}[h]{0.35\textwidth} \begin{tikzpicture}[scale = 0.27]
  \message{Phase diagrams^^J}
  
  \def\xtick#1#2{\draw[thick] (#1)++(0,.2) --++ (0,-.4) node[below=-.5pt,scale=0.7] {#2};}
  \def\ytick#1#2{\draw[thick] (#1)++(.2,0) --++ (-.4,0) node[left=-.5pt,scale=0.7] {#2};}
  
  \coordinate (C) at (12,12); 
  \coordinate (T) at (5,5); 
  \coordinate (A) at (0,5);
  \coordinate (B) at (5,0);
  \coordinate (something) at (5,12);
  \coordinate (D) at (12, 5);
  
  \def\SL{(T) -- (5,12)}
  \def\SG{(0,0) -- (T)}
  \def\LG{(T) -- (12,12)}
  \def\region1{ (5,0) -- (something) }
  
  \fill[region1] (0,5) -- (T) -- (5,12) -- (0,12) -- cycle;
  \fill[region1] (0,5) -- (-3,8)--(-3,12)--(0,12)--cycle;
  \fill[region2] (T) -- (5,12) -- (12,12) -- cycle;
  \fill[region3] (T) -- (12,5) -- (12,12) -- cycle;
  \fill[region4a] (T) -- (12,5) -- (12,3) -- (5,3) -- cycle;
  \fill[region4b] (5,3) -- (12,3) -- (12,2.5) -- (5,2.5) -- cycle;
  \fill[region1c] (T) -- (5,0) -- (0, 5) -- cycle;
  \fill[region1b] (5,2.5) -- (12,2.5) -- (12, 0) -- (5,0) -- cycle;
  
  \fill[pattern=north west lines, pattern color=red] (-3,0) -- (-3, 8) -- (5,0) -- cycle;
  
  
  \draw [line width = 0.5mm, black] (2.5, 8.5) circle (20pt) node[scale = 1.0]{\scriptsize \textbf{\RomanNumeralCaps{1}a}};
 
  \draw [line width = 0.5mm, black] (6.8, 9.3) circle (20pt) node[scale = 1.0]{\scriptsize \textbf{\RomanNumeralCaps{2}}};  
  
  \draw [line width = 0.5mm, black] (9.5, 6.8) circle (20pt) node[scale = 1.0]{\scriptsize \textbf{\RomanNumeralCaps{3}}};
  
\draw [line width = 0.5mm, black] (9, 4) circle (25pt) node[scale = 1.0]{\scriptsize \textbf{\RomanNumeralCaps{4}a}};

\draw [line width = 0.5mm, black] (14, 0.6) circle (25pt) node[scale = 1.0]{\scriptsize \textbf{\RomanNumeralCaps{4}b}};

\draw [line width = 0.5mm, black] (3.5, 3.5) circle (20pt) node[scale = 1.0]{\scriptsize \textbf{\RomanNumeralCaps{1}b}};

\draw [line width = 0.5mm, black] (6.5, 1.2) circle (20pt) node[scale = 1.0]{\scriptsize \textbf{\RomanNumeralCaps{1}c}};

  
  \draw[ very thick, dashed, -{Latex[length = 2mm, width = 2mm]}]{(5,5) -- (12,12)};
  
  \draw[very thick, -{Latex[length = 2mm, width = 2mm]}] (A) -- (D);
  \draw[very thick, red] (-3,8) -- (B);
  \draw[very thick, -{Latex[length = 2mm, width = 2mm]}, dashed] (5,0) -- (5,12);
  \draw[very thick, -{Latex[length = 2mm, width = 2mm]}, dashed] (5,3) -- (12, 3); 
  \draw[very thick, -{Latex[length = 2mm, width = 2mm]}, dashed] (5,2.5) -- (12, 2.5); 
  
    \draw[ very thick,  -{Latex[length = 2mm, width = 2mm]}]{(0,0) -- (0,13)};
    \draw[ very thick,  -{Latex[length = 2mm, width = 2mm]}]{(0,0) -- (13,0)};
    \draw[ very thick,  -{Latex[length = 2mm, width = 2mm]}]{(0,0) -- (-4,0)};
    
    \node[scale=1.0,align=center, red] at (-0.5,2.0) {\scriptsize \textbf{no}\\[-5pt] \scriptsize \textbf{power law}};
    \node[scale=1.0,align=center, right] at (12,5.5) {\scriptsize \textbf{high-dim.}\\[-5pt] \scriptsize \textbf{line}};
    \node[align=center, right] at (12,3.4) {\scriptsize{$\bm 1-\frac{\bm 1}{ \sqrt{\bm 2}}$}};
    \node[align=center, right] at (12,2.2) {\scriptsize{$\bm 0.25$}};
    
  \draw[very thick, -{Latex[length = 2mm, width = 2mm]}, black] (13,1) arc   [
        start angle=300,
        end angle=150,
        x radius=3cm,
        y radius =1.3cm
    ];
  
  \node[left=17pt,above,rotate=90] at (0.2,6) {\scriptsize \textbf{$\bm \alpha$}};
  \node[below=9pt] at (6,0.2) {\scriptsize $\bm \beta$ };
  \ytick{A}{\footnotesize{\textbf{0.5}}};
  \xtick{B}{\footnotesize{\textbf{0.5}}};
  
    \fill[red] (T) circle (10pt) node[above left,scale=1.0,align=center] {{\scriptsize \textbf{pentuple}}\\[-5pt]{\scriptsize\textbf{point}}};
  
\end{tikzpicture}
\caption{Phase Diagram}
\label{fig:phase_diagram}
\end{subfigure}
\hspace{-0.4cm}
\begin{minipage}{0.3 \textwidth} \begin{tikzpicture}[scale=0.25]
  \message{Phase IVb}
  
  \def\xtick#1#2{\draw[thick] (#1)++(0,.2) --++ (0,-.4) node[below=-.5pt,scale=0.7] {#2};}
  \def\ytick#1#2{\draw[thick] (#1)++(.2,0) --++ (-.4,0) node[left=-.5pt,scale=0.7] {#2};}
  
  \draw[very thick, myteal, -{Latex[length=3mm]}]{(0.5,12) -- (5,4)};
  \draw[very thick, myroyalblue, -{Latex[length=3mm]}]{ (5,4) -- (10,1)};
  \draw[very thick, myred, -{Latex[length=3mm]}]{ (10,1) -- (15,1)};
  
  \fill[myorange] (5,4) circle (10pt);
  (8,8) edge[bend right, dashed, very thick,-{Latex[length=3mm]}] node [left] {} (5,4);
   \draw[very thick, myorange, dashed, -{Latex[length=3mm]}]{(8,8)--(5,4)};
  \node [above right,scale=1.0,align=center] at (8,8) {\textcolor{myorange}{\textbf{compute}}\\[-2pt]\textcolor{myorange}{\textbf{optimal}}};
  
  \draw[very thick, -{Latex[length=3mm]}]{(0,0) -- (0,14)};
  \draw[very thick, -{Latex[length=3mm]}]{(0,0) -- (15,0)};
  
  \newcommand*{\xMin}{0}%
  \newcommand*{\xMax}{15}%
  \newcommand*{\yMin}{0}%
  \newcommand*{\yMax}{14}%
   \foreach \i in {0,...,4} {
        \draw [very thin, gray] (\xMin,\i*3+\yMin) -- (\xMax,\i*3+\yMin);
    }
  \node [below] at (7.5, 14) { \textbf{Phase IVb} };
  
  \node [right = 10 pt] at (2,8) { \textcolor{myteal}{$\mathscr{F}_{pp}$} };
  \node [right = 10 pt] at (5,4) { \textcolor{myroyalblue}{$\mathscr{K}_{pp}$} };
  \node [above = 3 pt] at (13,1) { \textcolor{myred}{$\mathscr{F}_{0}$} };
  
  \node [below] at (7.5, 0) { \textbf{flops} };
  \node [rotate = 90, above, align=center] at (0,6) {\textbf{loss curve}};
\end{tikzpicture}
\end{minipage} 
\begin{minipage}{0.3 \textwidth}
\begin{tikzpicture}[scale=0.25]
  \message{Phase 4b}
  
  \def\xtick#1#2{\draw[thick] (#1)++(0,.2) --++ (0,-.4) node[below=-.5pt,scale=0.7] {#2};}
  \def\ytick#1#2{\draw[thick] (#1)++(.2,0) --++ (-.4,0) node[left=-.5pt,scale=0.7] {#2};}
  
  \draw[very thick, myteal, -{Latex[length=3mm]}]{(0.5,12) -- (5,4)};
  \draw[very thick, myroyalblue, -{Latex[length=3mm]}]{ (5,4) -- (10,1)};
  \draw[very thick, myred, -{Latex[length=3mm]}]{ (10,1) -- (15,1)};
  
  \fill[myorange] (10,1) circle (10pt);
   \draw[very thick, myorange, dashed, -{Latex[length=3mm]}]{(11,8)--(10,1)};
  \node [above right,scale=1.0,align=center] at (8,8) {\textcolor{myorange}{\textbf{compute}}\\[-2pt]\textcolor{myorange}{\textbf{optimal}}};
  
  \draw[very thick, -{Latex[length=3mm]}]{(0,0) -- (0,14)};
  \draw[very thick, -{Latex[length=3mm]}]{(0,0) -- (15,0)};
  
  \newcommand*{\xMin}{0}%
  \newcommand*{\xMax}{15}%
  \newcommand*{\yMin}{0}%
  \newcommand*{\yMax}{14}%
   \foreach \i in {0,...,4} {
        \draw [very thin, gray] (\xMin,\i*3+\yMin) -- (\xMax,\i*3+\yMin);
    }
  \node [below] at (7.5, 14) { \textbf{Phase IVa} };
  
  \node [right = 10 pt] at (2,8) { \textcolor{myteal}{$\mathscr{F}_{pp}$} };
  \node [right = 10 pt] at (5,4) { \textcolor{myroyalblue}{$\mathscr{K}_{pp}$} };
  \node [above = 3 pt] at (13,1) { \textcolor{myred}{$\mathscr{F}_{0}$} };
  
  \node [below] at (7.5, 0) { \textbf{flops} };
  \node [rotate = 90, above, align=center] at (0,6) {\textbf{loss curve}};
\end{tikzpicture}
\end{minipage} \hspace{3cm} 
\caption{\textbf{Phase Diagram and Cartoon Plots of Loss Curves in Different Phases.} \textbf{(a) Phase Diagram.} Colored regions represent where the training of the risk/compute-optimal curves look qualitatively and quantitatively different depending on $\alpha$ and $\beta$. This, in term, yields different scaling law $(\sle)$ and parameter count $(\pce)$ exponents for each of the phases. Critical point at $\alpha = \beta = 1/2$ where all behaviors are observed. The other plots illustrate the components of $\mathscr{F}$ (via $\mathscr{F}_0, \mathscr{F}_{pp}, \mathscr{F}_{ac}$) and $\mathscr{K}_{pp}$ which dominate the loss curve for each phase (see Sec.~\ref{sec:loss_high_dim_line} \& Sec.~\ref{sec:loss_high_dim_line} for proofs); tradeoff between the functions where the compute-optimal point occurs is also indicated (see Sec.~\ref{sec:intro_forcing_kernel_functions} for definitions and  Sec.~\ref{sec:compute_optimal_curves_intro} \& Sec.~\ref{sec:compute_optimal_curve_detail} for proofs). 
 } \label{fig:cartoon}
\end{figure}

\paragraph{Related work.}

The key source of inspiration for this work are \cite{hoffmann2022chinchilla, kaplan2020scaling}, which identified compute optimality as a fundamental concept in scaling large language models and made a substantial empirical exploration of it.  The problem setup was formulated by \cite{maloney2022solvable}, where additionally data-limited scalings were considered, but compute optimality was not (nor indeed any algorithmic considerations); see also \cite{bordelon2024dynamical} where gradient flow is considered in the same setting.

There is a substantial body of work considering scaling laws of losses (trained to minimum-loss) of dataset size vs parameter count, in a variety of settings (linear, random features, deep networks).  See especially: \cite{bahri2021explaining,sharma2020neural, simon2023more}, wherein a ``hidden-manifold'' model is considered for the data.  We note that as we consider one-pass SGD, some dataset/parameter-count scaling laws are implicit from the results here; however, the training method (one-pass SGD) is, in some regimes, suboptimal given unlimited compute.

For additional related work on random features models (and sample complexity), random matrix theory in machine learning, and other deterministic equivalents for SGD, see Section~\ref{sec:additional_related_work}. We note that while this paper is fundamentally about computation, but the novel mathematical contributions could also be recast in terms of generalization bounds of one-pass SGD, some of which are new. For a detailed comparison of the convergence rates and sample complexity, see Table~\ref{table:sample_complexity}. 


\begin{figure}[t!]
     \centering
     \begin{subfigure}[h]{0.3\textwidth}
         \centering
         \includegraphics[width=\textwidth]{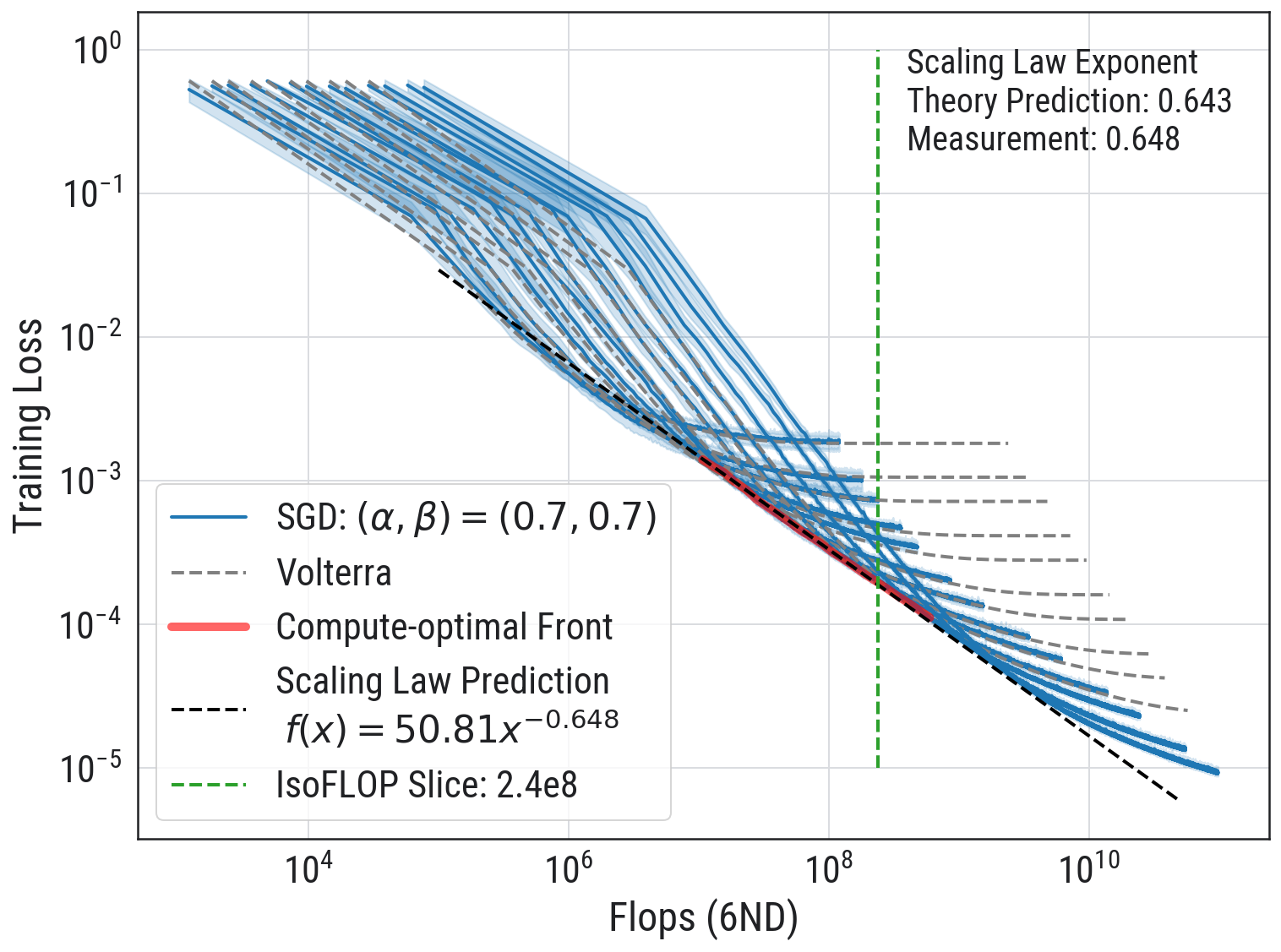}
         \caption{Compute-optimal Front }
         \label{fig:y equals x}
     \end{subfigure}
     \hfill
     \begin{subfigure}[h]{0.3\textwidth}
         \centering
         \includegraphics[width=\textwidth]{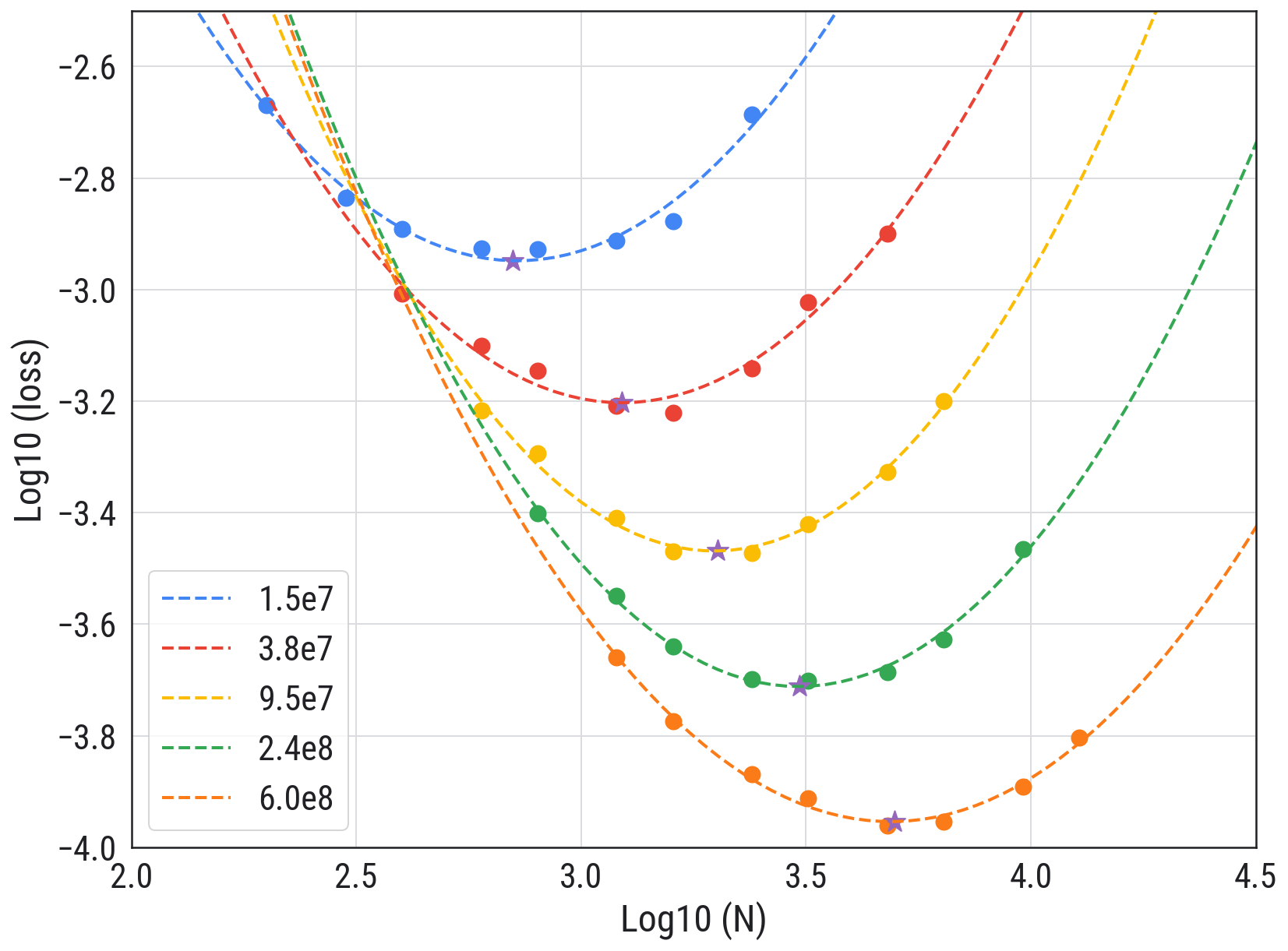}
         \caption{IsoFLOP}
         \label{fig:three sin x}
     \end{subfigure}
     \hfill
     \begin{subfigure}[h]{0.3\textwidth}
         \centering
         \includegraphics[width=\textwidth]{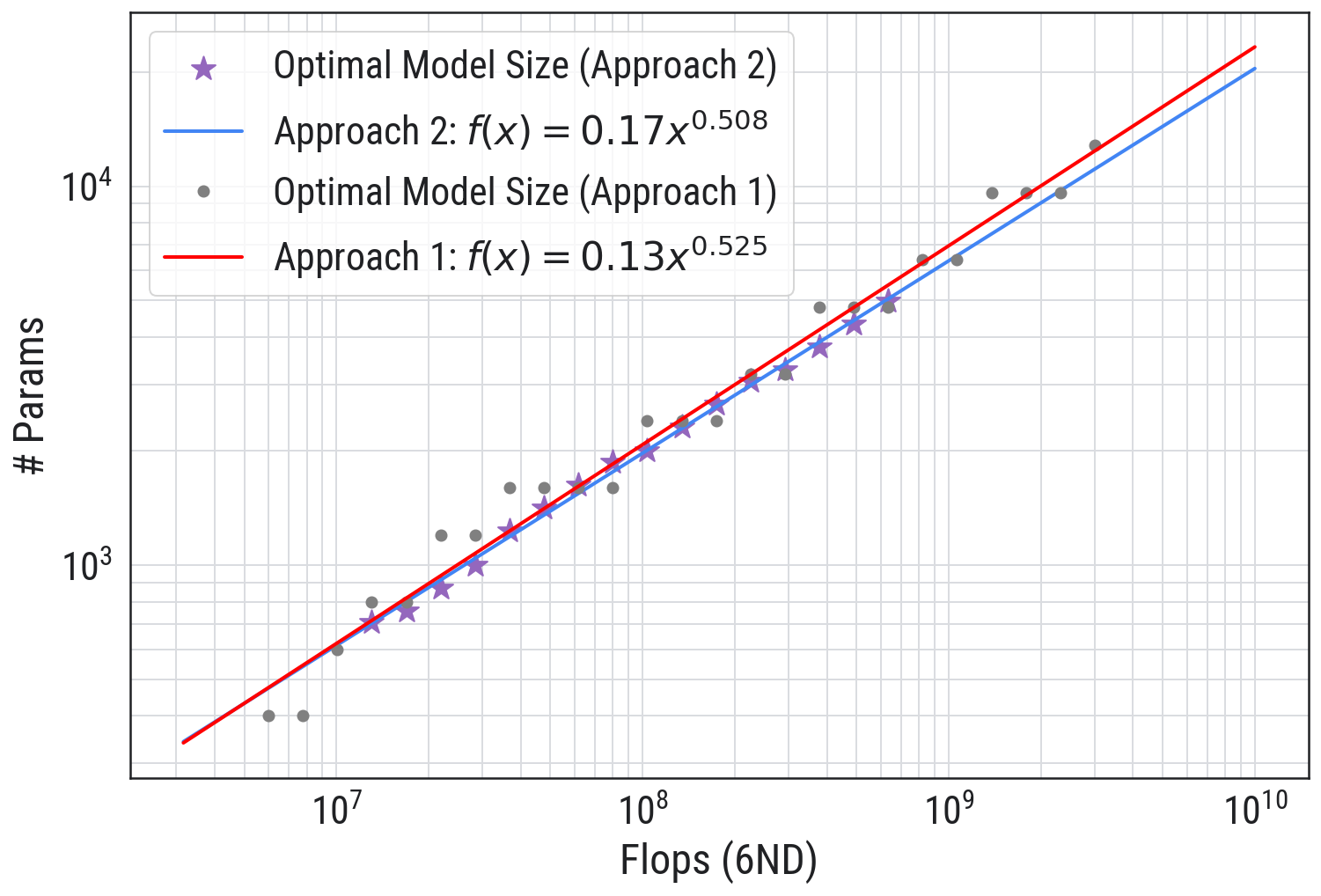}
         \caption{Optimal Model Size}
         \label{fig:five over x}
     \end{subfigure}
        \caption{{\bf Compute-Optimal Front in Phase II-III boundary.} (a) The Volterra equations perfectly captures the training dynamics of SGD when model-parameter count ranges from $d=200\to12800$.
        (b) We apply IsoFLOP approach \cite{hoffmann2022chinchilla} to our toy model to extract the optimal-compute front: (compute-optimal loss)
        (highlighted in red in (a)) and the optimal model size: (compute-optimal model size) (scattered in purple in (c)). 
        Power-law fitting compute-optimal front gives a measurement of the scaling law exponent 0.648 (vs. theoretical prediction 0.643 in Table \ref{table:phases_intro}). In (c), we power-law fit the relation between compute and (empirical) optimal model size via Approach 1 and 2 used in \cite{hoffmann2022chinchilla}: $d^{\star} \asymp \f^{0.508}$ and $d^{\star} \asymp \f^{0.525}$, resp. (vs. theory, $d^{\star} \asymp \f^{0.5}$). See Sec.~\ref{sec:experimental_results} for details.
        }
        \label{fig:three graphs}
\end{figure}

\subsection{Problem Setup: SGD on Power-law Random Features} \label{sec:main_simple_scaling_model}
In this work, we analyze the three parameter \textit{power-law random features} (PLRF) model, that is,
\begin{equation} \label{eq:loss}
    \min_{\theta \in \mathbb{R}^d}~ \big \{  \CMscr{P}(\theta) \defas \mathbb{E}_{x}[ ( \ip{W^T x, \theta} - \ip{x, b} )^2] \big \}.
\end{equation}
We embed the data vector $x \in \mathbb{R}^v$ in $\mathbb{R}^d$ through the matrix $W \in \mathbb{R}^{v \times d}$ and construct noiseless targets\footnote{With label noise, the scaling laws are the same as we report here, up to a scale at which the label noise is the limiting factor in the optimization and further increase of compute-budget or $d$ does not yield any benefits.} by dotting a fixed $b \in \mathbb{R}^v$ with the sample $x$. The use of the matrix $W$ allows the model to have variable capacity ($d$) independent of the data set size. The samples $x \in \mathbb{R}^v$ and labels $b \in \mathbb{R}^v$ have power law dependence, whereas the matrix $W$ has entries distributed as $N(0, 1/d)$. 

\begin{assumption}[Data and labels, $\alpha$ and $\beta$] The samples $x \in \mathbb{R}^v$ are distributed according to $(x_j) \sim j^{-\alpha} z_j$ for all $1 \le j \le v$ and $\{z_j\}_{j=1}^v \sim N(0,1)$. The labels are scalars constructed by dotting the sample $x$ with a signal $b \in \mathbb{R}^v$ whose entries $(b_j) = j^{-\beta}$.
\end{assumption}


Without the random matrix $W$, the $\alpha, \beta$ are related to what is known in the literature as source and capacity conditions \cite{caponnetto2007optimal,carratino2018learning, dieuleveut2016nonparametric, pillaud2018statistical}. For a detailed comparison of the parameters and related work, see Section~\ref{sec:additional_related_work} and Table~\ref{table:comparison_parameters}.

The dimensions we consider throughout are always such that $v \ge C d$ for $C > 1$. Throughout both $v$ and $d$ need to be large, but for some choices of $\alpha$ and $\beta$, the $v$ will need to be comparable to $d$. 

\begin{definition}[Admissible $v$ and $d$]  We assume that $v \ge C d$ with $C > 1$ and $v, d \to \infty$. Above the high-dimensional line, which is when $2 \alpha > 1$, we suppose $v/d \to r \in (1,\infty) \cup \{\infty\}$.\footnote{In fact, we may take $v=\infty$ for $2\alpha > 1$.} On the other hand, below the high-dimensional line $(2 \alpha < 1)$ we limit $v$ to be $v/d \to r \in (1, \infty)$.\footnote{Indeed one can, in the former case, take $d \le v \le d^{1/(1-2\alpha)}$, but for simplicity of presentation we focus on the proportional regime when $2\alpha < 1$.}
\end{definition}

One can rewrite the expression in \eqref{eq:loss} using the convenient form:
\begin{equation} \label{eq:lsq}
    \min_{\theta \in \mathbb{R}^d} ~ \big \{ \CMscr{P}(\theta) = \ip{D(W\theta - b), (W\theta - b)} \big \}, \quad \text{where $D = \text{diag}(j^{-2\alpha}) \in \mathbb{R}^{v \times v}$.} 
\end{equation}

\paragraph{Algorithmic set-up.} To solve the minimization problem in \eqref{eq:lsq}, we use one-pass SGD with mini-batches of size $B$ (independent of $d$)\footnote{One can study batch size $B$ growing with $d$, but for simplicity we let $B = 1$. Thus we only consider $B$ independent of $d$ setting.} and constant learning rate $\gamma > 0$: letting  $\theta_0 = 0$, we iterate
\begin{equation} 
\label{eq:sgd}
\text{drawing $\{x^i_r\}_{i=1}^B$ fresh iid samples and} \, \,  \theta_{r+1} = \theta_r - \gamma \sum_{i=1}^B W^T x^i_r \big [ \ip{W^Tx^i_r, \theta_r} - \ip{x^i_r, b} \big ].
\end{equation}
The learning rate and batch size will need to satisfy a condition to ensure convergence (Prop.~\ref{prop: sufficient_conditions_learning_rate_main}). 

\paragraph{Main goal.} Under this setup, our main goal is to characterize the compute-optimal frontier. 
Precisely, we want to find the parameter count exponent $\pce$ and scaling law exponent $\sle$, such that, 
\[
d^{\star}(\f) \asymp \f^{\xi} \quad  \text{and}\quad \CMscr{P} \big (\tfrac{\f}{d^{\star}B}; d^{\star} \big ) \asymp \f^{-\eta}.
\]

\paragraph{Notation.} We use $\CMscr{P}(\theta_r) = \CMscr{P}(r)$ when we want to emphasize the iteration counter $r$. We say $\CMscr{A}(r, v,d) \sim \mathscr{A}(r, v, d)$ for functions $\CMscr{A}(r, v,d), \mathscr{A}(r, v,d) > 0$ if for every $\varepsilon > 0$ and for all admissible $v$ and $d$, there exists an $r_0,d_0$ such that for all $d > d_0$ and $r \ge r_0$
\[
(1-\varepsilon) \mathscr{A}(r, v,d) \le
\CMscr{A}(r, v,d) \le (1 + \varepsilon) \mathscr{A}(r, v,d).
\]
We write $\asymp$ if the upper and lower bounds hold with some constants $c,C$ in place of $1 \mp \varepsilon$ respectively and $\lesssim,\gtrsim$ if only one inequality holds.

\renewcommand{\arraystretch}{1.1}
\ctable[notespar,
caption = {\textbf{Large $d$ behavior of the forcing function and kernel function.} See Sec.~\ref{sec:asymptotic} for proofs.
\vspace{-0.5em}
} ,label = {table:forcing function},
captionskip=2ex,
pos =!t
]{l }{}{
\toprule
\begin{minipage}{0.95\textwidth}
\quad \qquad \textbf{Function} \qquad \qquad {\footnotesize $^* \Gamma(x)$ is the Gamma function}
\end{minipage}
\\
\midrule
\begin{minipage}{0.95\textwidth} 
 
$\mathscr{F}_0(r) \asymp d^{-2\alpha + \max \{ 0, 1-2 \beta\}}$
\end{minipage} 
\\
\midrule
\begin{minipage}{0.95\textwidth} 
$\mathscr{F}_{pp}(r) \sim (2\alpha)^{-1}  \times  \Gamma \big (\tfrac{\beta}{\alpha} - \tfrac{1}{2 \alpha} + 1\big ) \times (2 \gamma B \times r)^{-(1 + \beta/\alpha) + 1/(2 \alpha)}$
\end{minipage} 
\\
\midrule
\begin{minipage}{0.95\textwidth}
$
    \mathscr{F}_{ac}(r) \le \begin{cases}
    C \times \mathscr{F}_0(r), & \text{if  $2\beta > 1$, $2\alpha < 1$}\\
    0, & \text{if $2 \beta < 1$}
    \end{cases}
$ \quad for $C > 0$, independent of $d$\\
If $2\beta > 1, 2 \alpha > 1$, 
$\mathscr{F}_{ac}(r) \sim \big ( \sum_{j=1}^v j^{-2 \beta} \big ) (2\alpha)^{-1} \Gamma \big ( 1 - \tfrac{1}{2\alpha} \big ) \times (2\gamma B \times r)^{-1 + 1/(2 \alpha)} \times d^{-1}$
 \end{minipage}
 \\
 \midrule
 \begin{minipage}{0.95\textwidth}
$\mathscr{K}_{pp}(r) \sim (2\alpha)^{-1} \times \Gamma \big (2-\tfrac{1}{2\alpha} \big ) \times (2\gamma B \times r)^{-2 + 1/(2\alpha)}$
\end{minipage} 
\\
 \bottomrule
}

\section{Learning Dynamics of SGD}
Compute-optimal curves \eqref{eq:compute_optimal_problem} for the random features model \eqref{eq:loss} rely on accurate predictions for the learning trajectory of SGD. Similar to the works of \cite{paquette2021sgd,paquette2022homogenization}, we show that the expected loss under SGD satisfies a convolution-type Volterra equation (for background on Volterra equations, see Section~\ref{sec:volterra_equation})
\begin{equation} \label{eq:Volterra_expected_intro}
    \mathbb{E}[ \CMscr{P}(\theta_r) \, | \, W ] = \!\!\stackrel{\text{forcing func.}} {\underbrace{\CMscr{F}(r)}_{\text{grad. descent}}} \!\! \! \!\!+ \underbrace{\CMscr{K} \! \! * \EE[ \CMscr{P}(\theta_r) \, | \, W]}_{\text{SGD noise}}, \, \text{where $(\CMscr{K} \! \! * f)(r) = \sum_{s=0}^{r-1} \CMscr{K}(r-1-s) f(s)$} .
\end{equation}
The forcing function $\CMscr{F}(r)$ and kernel function $\CMscr{K}(r)$ are explicit functions of the matrix $\hat{K} = D^{1/2} W W^T D^{1/2}$, where $D = \text{Diag}(j^{-2\alpha}, 1 \le j \le v)$, and $\Gamma \subset \mathbb{C}$ a contour enclosing the spectrum of $\hat{K} \in [0,1]$,
\begin{equation}
\begin{gathered}
    \CMscr{F}(r) \defas \frac{-1}{2\pi i} \oint_{\Gamma} \ip{(\hat{K}-z)^{-1}(D^{1/2} b), (D^{1/2}b)} ( 1- 2\gamma B z + \gamma^2 B(B+1) z^2 )^r \, \dif z\\
    \text{and} \quad 
    \CMscr{K}(r) \defas \frac{-1}{2\pi i}  \tr \bigg ( \oint_{\Gamma} (\hat{K}-z)^{-1} z^2 ( 1- 2\gamma B z + \gamma^2 B(B+1) z^2 )^r  \, \dif z \bigg ).
\end{gathered}
\end{equation}
Intuitively, the forcing function is gradient descent on the random features model and the kernel function is the excess risk due to 1 unit of SGD noise. 

\paragraph{Deterministic equivalent.} The fo rcing function $\CMscr{F}(r)$ and kernel function $\CMscr{K}(r)$ are random functions depending on the random matrix $\hat{K}$. Indeed, it is the \textit{resolvent of $\hat{K}$}, $(\hat{K}-z)^{-1}$, which plays a significant role in $\CMscr{F}$ and $\CMscr{K}$. We remove this randomness from the expression by using a deterministic equivalent -- a technique from random matrix theory.

Formally, we define the deterministic equivalent for the resolvent of $\hat{K}$, denoted by $\mathscr{R}(z)$, implicitly via a fixed point equation
\begin{equation} \label{eq:resolvent_intro}
m(z) \defas \frac{1}{1 + \tfrac{1}{d} \sum_{j=1}^v \frac{j^{-2\alpha}}{j^{-2\alpha}m(z) -z } } \quad \text{where} \quad \mathscr{R}(z) = \text{Diag} \bigg (  \frac{1}{j^{-2\alpha} m(z) - z} \, :\, 1 \le j \le v \bigg ).
\end{equation}
This deterministic equivalent $\mathscr{R}(z)$ is viewed, roughly, as $\mathbb{E}_W[(\hat{K}-z)^{-1}] \approx \mathscr{R}(z)$; though it is not formally the expectation over $W$. By replacing the resolvent of $\hat{K}$ with $\mathscr{R}(z)$, there exists a deterministic function $\mathscr{P}(r)$ which solves a convolution Volterra equation, matching \eqref{eq:Volterra_expected_intro}:
    \begin{gather}
       \mathscr{P}(r) = \stackrel{\text{forcing func.}} {\underbrace{\mathscr{F}(r)}_{\text{grad. descent}}} + \underbrace{(\mathscr{K} * \mathscr{P})(r)}_{\text{SGD noise}}  \label{eq:volterra_equation}
       \\
       \text{where} \quad \mathscr{F}(r) \defas \frac{-1}{2 \pi i} \oint_{\Gamma} 
       \ip{(\mathscr{R}(z)(D^{1/2} b), (D^{1/2}b)}
       ( 1- 2 \gamma B z + \gamma^2 B(B+1) z^2 )^r \, \dif z 
       \label{eq:forcing_function_intro}
       \\
       \text{and} \quad 
       \mathscr{K}(r) \defas \gamma^2 B \cdot \tr \bigg ( \frac{-1}{2 \pi i} \oint_{\Gamma}  \mathscr{R}(z) z^2 ( 1- 2 \gamma B z + \gamma^2 B(B+1) z^2 )^r \, \dif z \bigg).
       \label{eq:kernel_function_intro}
    \end{gather}
The solution to the Volterra equation with deterministic equivalent \eqref{eq:volterra_equation} numerically exactly matches the training dynamics of SGD, see Fig.~\ref{fig:three graphs}. A discussion of the deterministic equivalent for $(\hat{K}-z)^{-1}$ can be found in Sec.~\ref{sec:spectrum_K}.  All our mathematical analysis will be for the deterministic equivalents, going forward.\footnote{There is good numerical evidence that the deterministic equivalent captures all interesting features of the PLRF.  There is a vast random matrix theory literature on making precise comparisons between resolvents and their deterministic equivalents.  It seems a custom analysis will be needed for this problem, given the relatively high precision required, and we do not attempt to resolve this mathematically here.} The derivation of the Volterra equation for the expected loss can be found in Sec.~\ref{sec:derivation_volterra}.


An immediate consequence of  \eqref{eq:volterra_equation} is that for convolution Volterra equations bounded solutions occur if and only if the forcing function is bounded and the \textit{kernel norm} $\|\mathscr{K}\| \defas \sum_{s=0}^{\infty} \mathscr{K}(s) <1$. This directly translates into a sufficient condition on the batch size and learning rate of SGD.

\begin{figure}[t!]
    \centering
     \begin{subfigure}[h]{0.40\textwidth}
         \centering
                 \includegraphics[width=\textwidth]{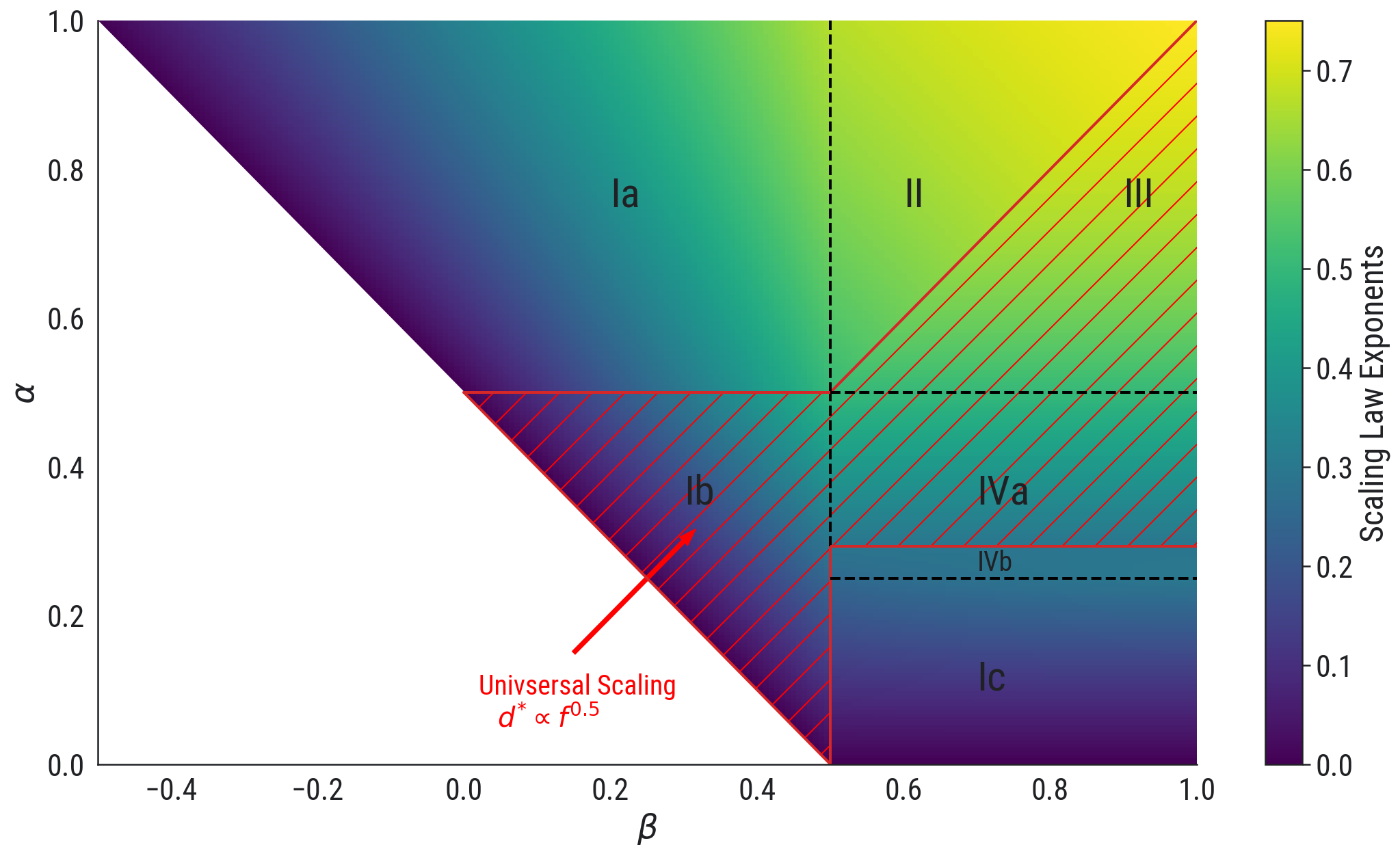}
         \caption{Scaling Law Exponent}
         \label{fig:scalinglaw heatmap}
     \end{subfigure}
     \hspace{0.2cm}
\begin{subfigure}[h]{.5\textwidth}
         \centering
         \includegraphics[width=\textwidth]{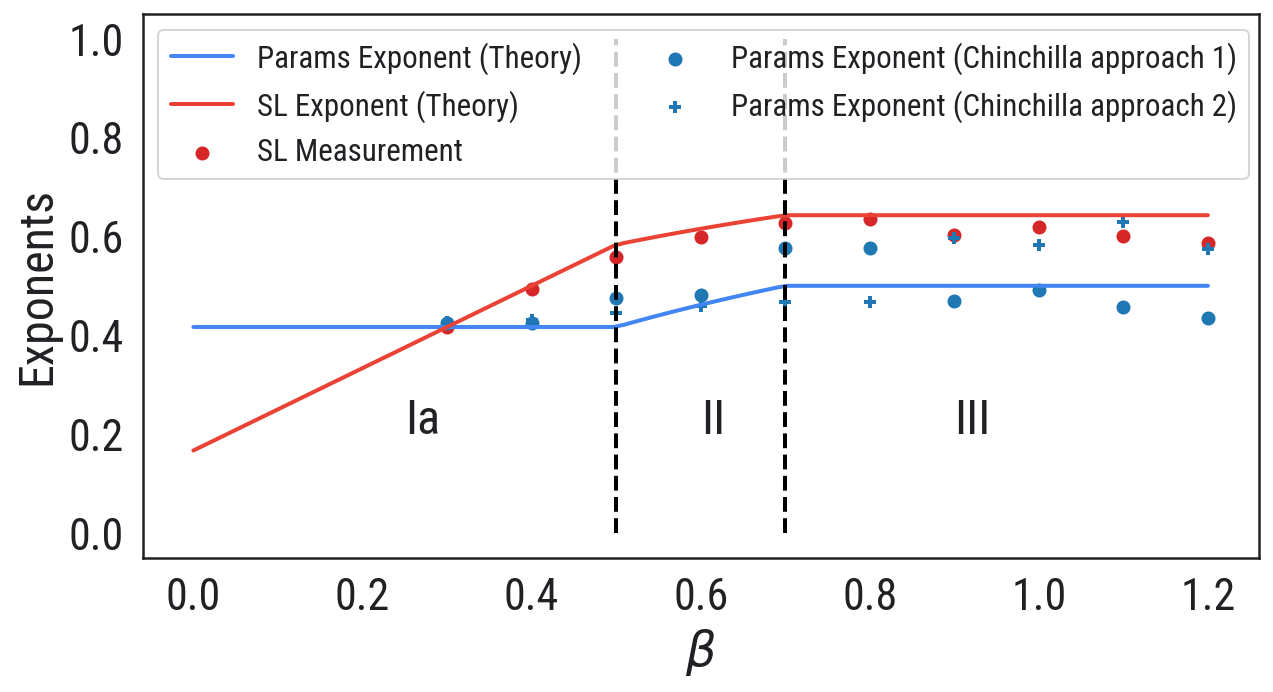}
         \caption{Empirical vs Theory ($\alpha=0.7$)}
         \label{fig:fixed_alpha_exponent}
     \end{subfigure}

 \caption{ \textbf{(a) Scaling Law Exponents. }
 The heatmap displays scaling law exponents ($\sle$) in the $(\alpha, \beta)$-plane. 
Hatched lines represent region with universal scaling behavior, $d^{\star} \asymp \f^{0.5}$, independent of $(\alpha, \beta)$.  \textbf{(b) Exponent Measurements.} 
Compare empirical exponents (following \cite{hoffmann2022chinchilla}; see Sec.\ref{sec:experimental_results} for details) to theoretical predictions, traversing the phase diagram horizontally at $\alpha=0.7$ from Phases Ia $\rightarrow$ II $\rightarrow$ III as $\beta \uparrow$.  } 
\end{figure}

\begin{proposition}[Sufficient conditions on learning rate and batch] \label{prop: sufficient_conditions_learning_rate_main}
Suppose learning rate $\gamma$ and batch $B$ satisfy 
    $
    \|\mathscr{K}\| < 1 \text{ and } \gamma (B+1) < 2.
    $
Then $\mathscr{P}(r)$ is bounded. 
\end{proposition}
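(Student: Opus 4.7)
The proof splits cleanly into two steps: (i) establishing that the forcing function $\mathscr{F}(r)$ is uniformly bounded in $r$, for which the condition $\gamma(B+1) < 2$ is used, and (ii) propagating this bound through the Volterra equation \eqref{eq:volterra_equation} via a Neumann/Picard iteration, for which $\|\mathscr{K}\| < 1$ is used. Both steps rest on a spectral-measure representation of the deterministic resolvent $\mathscr{R}(z)$.

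\textbf{Step (i): bounding $\mathscr{F}$.} Let $p(z) := 1 - 2\gamma B z + \gamma^2 B(B+1) z^2$ denote the polynomial multiplying $\mathscr{R}(z)$ inside the contour integrals \eqref{eq:forcing_function_intro}--\eqref{eq:kernel_function_intro}. First I would show that $0 \le p(z) \le 1$ for $z \in [0,1]$. Non-negativity follows from the completion of squares
\[
    p(z) = (1 - \gamma B z)^2 + \gamma^2 B z^2 \ge 0.
\]
The upper bound $p(z) \le 1$ rearranges to $z(\gamma(B+1)z - 2) \le 0$, which holds on $[0, 2/(\gamma(B+1))]$; the hypothesis $\gamma(B+1) < 2$ puts all of $[0,1]$ strictly inside this good region. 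Next, I would invoke the matrix-Herglotz structure of $\mathscr{R}(z)$ implied by \eqref{eq:resolvent_intro} to collapse the contour $\Gamma$ onto the real axis and obtain the spectral representation
\[
    \mathscr{F}(r) = \int_0^1 p(x)^r \, \dif\mu(x),
\]
where $\mu$ is a non-negative measure on $[0,1]$ with total mass $\mathscr{F}(0) = \langle Db, b\rangle = \sum_{j=1}^v j^{-2\alpha - 2\beta}$. Hence $0 \le \mathscr{F}(r) \le \mathscr{F}(0) < \infty$ uniformly in $r$.

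\textbf{Step (ii): propagating the bound to $\mathscr{P}$.} Applying the same spectral collapse to \eqref{eq:kernel_function_intro} gives $\mathscr{K}(r) = \gamma^2 B \int_0^1 x^2 p(x)^r \, \dif\rho(x) \ge 0$ for a positive measure $\rho$, so the hypothesis $\|\mathscr{K}\| := \sum_s \mathscr{K}(s) < 1$ coincides with the $\ell^1$-norm bound $\|\mathscr{K}\|_{\ell^1} < 1$. Iterating \eqref{eq:volterra_equation} formally yields the Neumann series
\[
    \mathscr{P}(r) = \sum_{k=0}^{\infty} (\mathscr{K}^{*k} * \mathscr{F})(r),
\]
and the discrete Young inequality $\|\mathscr{K}^{*k} * \mathscr{F}\|_\infty \le \|\mathscr{K}\|^k \|\mathscr{F}\|_\infty$ together with the geometric sum $\sum_k \|\mathscr{K}\|^k = (1-\|\mathscr{K}\|)^{-1}$ gives
\[
    \sup_{r \ge 0} \mathscr{P}(r) \le \frac{\|\mathscr{F}\|_\infty}{1-\|\mathscr{K}\|} < \infty.
\]
Convergence of the Neumann series is justified because all summands are non-negative (both $\mathscr{F}$ and $\mathscr{K}$ are non-negative by Step (i) and the preceding computation).

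\textbf{Expected obstacle.} The one genuinely non-routine ingredient is the Stieltjes representation used in Step (i): that is, verifying that the matrix-valued function $\mathscr{R}(z)$ determined implicitly by \eqref{eq:resolvent_intro} is a bona fide matrix Herglotz--Nevanlinna function whose boundary values define the positive measures $\mu$ and $\rho$ above, and that their supports lie in $[0,1]$. This amounts to checking that $\Im\, m(z)$ has the correct sign in the upper half-plane using the fixed-point equation and analyzing the self-consistent equation near the real axis. This is standard for Marchenko-Pastur-type self-consistent equations and fits naturally with the analysis of the deterministic equivalent already needed elsewhere in the paper; once it is in hand, the two steps above are essentially one-liners.
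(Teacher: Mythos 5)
Your proposal is correct and takes essentially the same approach as the paper: the hypothesis $\gamma(B+1)<2$ enters exactly as in the paper's two-case analysis of the quadratic $1-2\gamma B z+\gamma^2 B(B+1)z^2$ on $[0,1]$ (your completion of squares is the same computation), and $\|\mathscr{K}\|<1$ is then used through the standard Neumann-series/renewal criterion for bounded solutions of convolution Volterra equations, which the paper cites as a known result rather than re-deriving via Young's inequality. The Herglotz-positivity step you flag as the one obstacle is the same sign structure of $m(z)$ that the paper establishes in its random-matrix section, so it is available and your argument goes through.
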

\begin{remark} 
Below the line $2 \alpha = 1$, the kernel norm diverges with $v$ for fixed constant $\gamma$, and so we must take $\gamma \to 0$ to ensure bounded solutions. Thus, provided $\gamma \sim v^{2 \alpha -1}$, then
\[
\|\mathscr{K}\| \sim \frac{\gamma}{2} \sum_{j=1}^v j^{-2\alpha} \sim \frac{\gamma}{2 (1-2 \alpha)} v^{1-2 \alpha} \quad \text{ is order 1.}
\]
Thus, the kernel norm, $\|\mathscr{K}\|$, is always constant order for all $\alpha$.
\end{remark}
The batch $B$ and $\gamma$ can depend on $d$. For simplicity, we only consider $B$ order 1 in this work.
For a proof of the necessary and sufficient conditions on $\gamma$ and $B$, see Prop.~\ref{prop: sufficient_conditions_learning_rate}, and see Cor.~\ref{cor:Knorm} for the asymptotic on $\|\mathscr{K}\|$. 

The Volterra equation in \eqref{eq:volterra_equation} can be analyzed to give a more explicit formula for $\mathscr{P}$ (see Section~\ref{sec:volterra_training} for proof). 

\begin{theorem}[Approximation solution for $\mathscr{P}$] \label{thm:approx_sol_Volterra} Suppose $\gamma$ and $B$ are at most half the convergence threshold and $2 \alpha + 2 \beta > 1$, $\alpha > \tfrac{1}{4}$, that $\beta < 1 + 2\alpha$ and that $\alpha,\beta$ avoid all the critical lines of the phase diagram. \footnote{In spite of Theorem~\ref{thm:approx_sol_Volterra} holding only for $\alpha > \tfrac{1}{4}$, we expect this to hold for all $2\alpha + 2\beta > 1$ as supported numerically. When $\alpha < \tfrac{1}{4}$, the kernel function stops being power law and, as a result, requires a different set of tools to prove the result.  The constraint $\beta < 1 + 2\alpha$ is purely technical, and is related to how we estimate the deterministic equivalent.  Finally, the constraint that we are not on the critical lines is done just for simplicity, as some statements develop log-corrections along the critical lines.  For complete correctness, these assumptions should be assumed to hold throughout.} There exists an $M > 0$ large and a constant $C = C(\alpha, \beta, M)$, independent of $d$, so that for all admissible $v$ and $d$, for all $\gamma B r > M$,
\begin{equation} \label{eq:approx_sol_Volterra}
\mathscr{F}(r) + (\mathscr{K} * \mathscr{F})(r) \le \mathscr{P}(r) \le \mathscr{F}(r) + C \times (\mathscr{K} * \mathscr{F})(r). 
\end{equation}
The convolution $\mathscr{K} * \mathscr{F}$ further simplifies 
\begin{equation} \label{eq:simplified_Volterra_solution}
\tilde{c} \times \big (  \mathscr{F}(r)  + \frac{1}{\gamma B} \cdot \mathscr{K}(r) \big )   \le (\mathscr{K} * \mathscr{F})(r) \le \tilde{C} \times \big (  \stackrel{\text{forcing func.}} {\underbrace{\mathscr{F}(r)}_{\text{grad. descent}}} \! \! \! \! + \frac{1}{\gamma B} \cdot \stackrel{\text{kernel func.}}{\underbrace{\mathscr{K}(r)}_{\text{SGD noise}}} \big ).
\end{equation}
for some constants $\tilde{c} = \tilde{c}(\alpha, \beta, M)$ and $\tilde{C} = \tilde{C}(\alpha, \beta, M) > 0$ independent of $d$. 
\end{theorem}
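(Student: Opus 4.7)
The plan is to prove the convolution estimate \eqref{eq:simplified_Volterra_solution} first, then use it to close a Neumann iteration for \eqref{eq:approx_sol_Volterra}. Throughout I work with the decomposition $\mathscr{F}=\mathscr{F}_0+\mathscr{F}_{pp}+\mathscr{F}_{ac}$ from Table~\ref{table:forcing function}, and I use $\mathscr{F}'\defas \mathscr{F}_{pp}+\mathscr{F}_{ac}$ for the decaying piece.

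The lower bound in \eqref{eq:approx_sol_Volterra} is immediate: nonnegativity of $\mathscr{F},\mathscr{K},\mathscr{P}$ together with $\mathscr{P}=\mathscr{F}+\mathscr{K}*\mathscr{P}$ gives $\mathscr{P}\ge\mathscr{F}$, and feeding this inequality back into the right-hand side once yields $\mathscr{P}\ge\mathscr{F}+\mathscr{K}*\mathscr{F}$.

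For the convolution estimate \eqref{eq:simplified_Volterra_solution}, I split $(\mathscr{K}*\mathscr{F})(r)=\sum_{s<r}\mathscr{K}(r{-}1{-}s)\mathscr{F}(s)$ at $s=r/2$. On the right half $s>r/2$, the regular variation of $\mathscr{F}$ (from Table~\ref{table:forcing function}) gives $\mathscr{F}(s)\asymp\mathscr{F}(r)$, yielding a contribution $\asymp\mathscr{F}(r)\cdot\|\mathscr{K}\|$, which is $\asymp\mathscr{F}(r)$ since the half-threshold hypothesis on $\gamma,B$ keeps $\|\mathscr{K}\|$ bounded away from $0$ and $1$ (Cor.~\ref{cor:Knorm}). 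On the left half $s\le r/2$, $\mathscr{K}(r{-}1{-}s)\asymp\mathscr{K}(r)$ by the same regular variation, producing $\mathscr{K}(r)\cdot\sum_{s\le r/2}\mathscr{F}(s)$. The $\mathscr{F}_0$-part of that sum is $(r/2)\mathscr{F}_0\asymp\mathscr{F}_0/(\gamma B)$ in the regime $\gamma Br>M$, and a Riemann-sum approximation of $\mathscr{F}_{pp}$ against its Table~\ref{table:forcing function} asymptotic evaluates to $\asymp 1/(\gamma B)$ times a constant depending only on $(\alpha,\beta,M)$. Matching corners produces $(\mathscr{K}*\mathscr{F})(r)\asymp\mathscr{F}(r)+\tfrac{1}{\gamma B}\mathscr{K}(r)$, which is \eqref{eq:simplified_Volterra_solution}.

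For the upper bound of \eqref{eq:approx_sol_Volterra}, I iterate the Volterra equation to the Neumann series $\mathscr{P}=\sum_{k\ge 0}\mathscr{K}^{*k}*\mathscr{F}$, valid because $\|\mathscr{K}^{*n}*\mathscr{P}\|_\infty\le\|\mathscr{P}\|_\infty\|\mathscr{K}\|^n\to 0$ (Prop.~\ref{prop: sufficient_conditions_learning_rate_main} gives boundedness of $\mathscr{P}$). A ``two-tails-make-one-tail'' estimate for the power-law kernel---the same split argument as above with $\mathscr{F}$ replaced by $\mathscr{K}$---yields $(\mathscr{K}*\mathscr{K})(r)\le C\|\mathscr{K}\|\,\mathscr{K}(r)$, and by induction $\mathscr{K}^{*k}(r)\le k\,C^{k-1}\|\mathscr{K}\|^{k-1}\mathscr{K}(r)$. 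Convolving against $\mathscr{F}$ and invoking \eqref{eq:simplified_Volterra_solution}, the $\mathscr{F}'$-piece of $\mathscr{K}^{*k}*\mathscr{F}$ is bounded by $k\,(C\|\mathscr{K}\|)^{k-1}$ times $(\mathscr{K}*\mathscr{F})$; the $\mathscr{F}_0$-piece is controlled by the exact identity $\mathscr{K}^{*k}*\mathscr{F}_0=\mathscr{F}_0\sum_{s<r}\mathscr{K}^{*k}(s)\le\mathscr{F}_0\|\mathscr{K}\|^k$, again a geometric series. Summing over $k\ge 1$ converges precisely because the half-threshold hypothesis forces $C\|\mathscr{K}\|<1$, giving $\sum_{k\ge 1}\mathscr{K}^{*k}*\mathscr{F}\le C'(\mathscr{K}*\mathscr{F})$ and hence the upper bound.

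The main obstacle is the convolution split when $2\beta<1$, where $\mathscr{F}_{pp}$ ceases to be summable and the left-endpoint partial sum $\mathscr{K}(r)\sum_{s\le r/2}\mathscr{F}_{pp}(s)$ grows polynomially in $r$. One must verify phase-by-phase that this polynomial growth is exactly matched, up to constants, by either $\mathscr{F}(r)$ or $\tfrac{1}{\gamma B}\mathscr{K}(r)$ on the right-hand side of \eqref{eq:simplified_Volterra_solution}, which requires enumerating the dominance orderings among $\mathscr{F}_0,\mathscr{F}_{pp},\mathscr{F}_{ac},\mathscr{K}_{pp}$ across the $(\alpha,\beta)$-plane. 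The stated hypotheses enter precisely at this step: $\alpha>1/4$ keeps $\mathscr{K}_{pp}$ in its power-law regime (its exponent $2-1/(2\alpha)$ changes sign at $\alpha=1/4$), $2\alpha+2\beta>1$ ensures $\mathscr{F}_{pp}$ decays, $\beta<1+2\alpha$ is the range in which Sec.~\ref{sec:asymptotic} produces the Table~\ref{table:forcing function} asymptotics used in the Riemann-sum step, and omitting the critical lines avoids the log corrections that would otherwise appear in the split argument.
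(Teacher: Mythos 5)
Your route is essentially the paper's: lower bound by positivity of the Neumann series, upper bound by dominating the iterated convolutions $\mathscr{K}^{*k}$ by a geometric multiple of $\mathscr{K}$, and the simplification of $\mathscr{K}*\mathscr{F}$ via a split at $r/2$ together with the kernel norm and the partial sums of $\mathscr{F}$. The genuine gap is quantitative, and it sits at the heart of your Neumann-series step. Your symmetric ``two-tails'' split gives $(\mathscr{K}*\mathscr{K})(r)\le C\|\mathscr{K}\|\,\mathscr{K}(r)$ only with $C\approx 2\cdot 2^{2-1/(2\alpha)}>2$, because replacing $\mathscr{K}(r/2)$ by $\mathscr{K}(r)$ costs the factor $2^{2-1/(2\alpha)}$ for a kernel decaying like $r^{-2+1/(2\alpha)}$. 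The hypothesis that $\gamma,B$ are at most half the convergence threshold only gives $\|\mathscr{K}\|\lesssim\tfrac12$, so your asserted geometric ratio $C\|\mathscr{K}\|<1$ does not follow from the stated assumptions (for larger $\alpha$ the ratio is close to $4$), and the series $\sum_k \mathscr{K}^{*k}*\mathscr{F}$ is not controlled. This is precisely why the paper proves the sub-exponential estimate with the sharp constant, $\sum_{s}\mathscr{K}(s)\mathscr{K}(r-s)\le(2+\epsilon)\|\mathscr{K}\|\,\mathscr{K}(r)$ (Prop.~\ref{prop:subexponential}), using an asymmetric split in which everything outside a $\delta r$-neighborhood of the two endpoints is shown to be negligible and near the endpoints $\mathscr{K}(r-s)\le(1+O(\delta))\mathscr{K}(r)$; that constant is then propagated through all $n$-fold convolutions by Kesten's lemma (Lemma~\ref{lem:kesten_lemma}), with the boundary factor $K(0)/K(T)$ (needed because the power-law behavior only holds for $\gamma Bs>M$) entering once rather than per iteration, and summed via Lemma~\ref{lem:non_asymptotic_Volterra_conv}. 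Without this sharpening, or an extra smallness assumption on $\|\mathscr{K}\|$ that the theorem does not make, your upper bound in \eqref{eq:approx_sol_Volterra} does not close.

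A second, smaller problem is in your convolution estimate. The claim that the $\mathscr{F}_0$-part of $\sum_{s\le r/2}\mathscr{F}(s)$ is $(r/2)\mathscr{F}_0\asymp\mathscr{F}_0/(\gamma B)$ is false for $\gamma Br\gg 1$: it grows linearly in $r$, and likewise $\sum_{s\le r/2}\mathscr{F}_{pp}(s)$ grows polynomially when $2\beta<1$. You correctly flag the non-summable case as the main obstacle and identify where the hypotheses enter, but you do not carry out the verification; showing that $\mathscr{K}(r)\sum_{s\le r/2}\mathscr{F}(s)\lesssim\mathscr{F}(r)+\tfrac{1}{\gamma B}\mathscr{K}(r)$ on the whole window $M\le\gamma Br\lesssim d^{2\alpha}$ requires exactly the case analysis of the paper's Prop.~\ref{prop:forcing_function_norm} (using $\gamma Br\lesssim d^{2\alpha}$ and, below the high-dimensional line, $\gamma\asymp d^{2\alpha-1}$). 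As written, your proposal asserts the conclusion of that proposition rather than proving it, and the one explicit computation you give for $\mathscr{F}_0$ is incorrect.
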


\begin{remark} If we were to run gradient descent instead of SGD (i.e., $\gamma$ small), then we would only have the forcing term, that is, $\mathscr{P}(r) = \mathscr{F}(r).$
The measurable effect of SGD comes from the second term that contains the kernel function. For this reason, we refer to \textit{SGD noise} as $
\frac{1}{\gamma B} \cdot \mathscr{K}(r).$
\end{remark}

In light of \eqref{eq:approx_sol_Volterra} and \eqref{eq:simplified_Volterra_solution}, we have trapped the training loss between the sum of $\mathscr{F}$ and $\mathscr{K}$, so it suffices now to understand the forcing and kernel functions. 

\begin{figure}[t!]
     \centering
    \includegraphics[width=\textwidth]{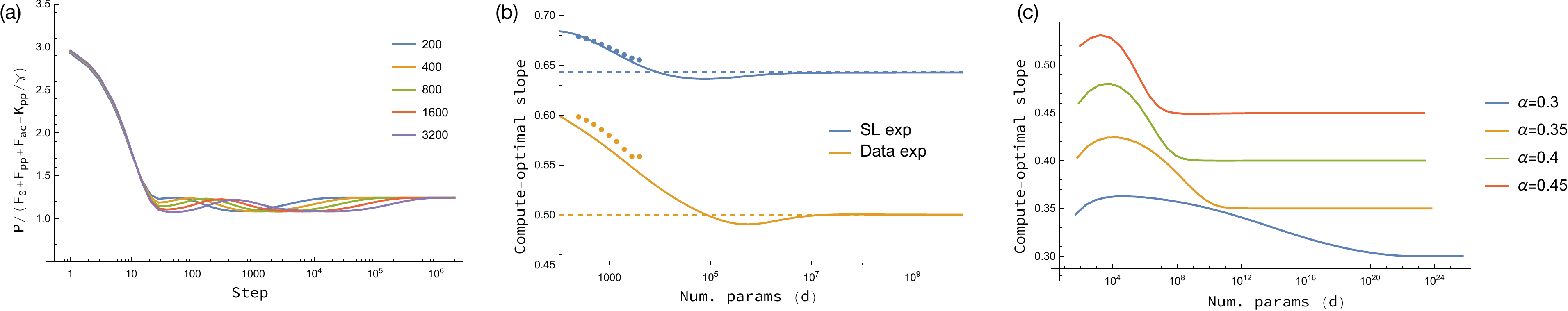}
        \caption{{\bf Finite-size effects.} {\bf (a)} The ratio of the exact solution of eq.~(\ref{eq:volterra_equation}) to the estimate in eq.~(\ref{eq:simplified_loss}) is bounded by constants for all $r$, confirming the validity of eq.~(\ref{eq:simplified_loss}); shown here is $(\alpha, \beta) = (0.7,1.2)$. {\bf (b)} For non-asymptotic $d$, the estimate in eq.~(\ref{eq:simplified_loss}) (solid curves) predicts both the magnitudes and trends of the measured exponents of the empirical compute-optimal frontier (points), shown here for $(\alpha, \beta) = (0.7,1.2)$ computed using Approach 0 (see Appendix~\ref{sec:experimental_results}) to capture the instantaneous slope; the dashed lines show the asymptotic exponents from Table \ref{table:phases_intro}. {\bf (c)} The finite-size behavior relaxes to the asymptotic predictions over horizons whose length can grow exceedingly large, especially in the vicinity of the phase transition, shown here for $\beta = 0.7$ approaching the Phase 4a$\to$4b boundary.
        }
        \label{fig:finite_d_effects}
\end{figure}

\subsection{Forcing function and kernel function}  \label{sec:intro_forcing_kernel_functions}
We decompose the forcing function \eqref{eq:forcing_function_intro}, $\mathscr{F}$, and the kernel function, \eqref{eq:kernel_function_intro}, $\mathscr{K}$, into
\begin{equation}
    \mathscr{F}(r) = \mathscr{F}_0(r) + \mathscr{F}_{ac}(r) + \mathscr{F}_{pp}(r) + \text{errors}_{\mathscr{F}}(r) \quad \text{and} \quad \mathscr{K}(r) = \mathscr{K}_{pp}(r) + \text{errors}_{\mathscr{K}}(r). 
\end{equation}
Each term is explicit and has an asymptotic equivalence (when $1 \lesssim \gamma B r \lesssim d^{2\alpha}$) given by
\begin{equation} \label{eq:asymptotic_function_form}
\mathscr{F}_i(r,d), \mathscr{K}_{pp}(r,d) \sim c \times r^{-\tau} \times d^{-\sigma} \quad \text{for some constants $c, \tau, \sigma > 0$ (see Table~\ref{table:forcing function})}. 
\end{equation}
The two error terms are such that for large $d$ with $1 \lesssim \gamma B r \lesssim d^{2\alpha}$, 
\[
|\text{errors}_{\mathscr{F}}(r)| \le C \times (\mathscr{F}_0(r) + \mathscr{F}_{ac}(r) + \mathscr{F}_{pp}(r) ) \quad \text{and} \quad |\text{errors}_{\mathscr{K}}(r)| \le C \times \mathscr{K}_{pp}(r), 
\]
for some constant $C > 0$. For $\gamma B r \gtrsim d^{2\alpha}$, the forcing function $\mathscr{F}(r) \asymp \mathscr{F}_0(r)$, the limiting risk value. The terms arise from different parts of the spectrum of the deterministic equivalent for $\hat{K}$ (see Fig.~\ref{fig:spectra_parts}). 
\begin{enumerate}[leftmargin=*]
    \item \textit{Point mass at $0$:} $\mathscr{F}_0(0) = \mathscr{F}_0(r)$ is the limiting value of $\mathscr{P}(r) \asymp d^{-2\alpha + \max \{0, 1-2 \beta\}}$ as $r \to \infty$. It occurs because the loss is irreducible $(d < v)$, that is a component of the target is not in the image of the RF model (or equivalently that $\hat K$ has a kernel). 
    \item \textit{Aligned features:} The function $\mathscr{F}_{pp}(r)$ represents gradient descent on the components of features which are aligned to the underlying population features.  Indeed, if we ran gradient descent on the population loss \emph{without} a random features map (or a diagonal $W$), this would be the loss curve.
    
    \item \textit{Distorted features:} The function $\mathscr{F}_{ac}(r)$ is the result of feature distortion, where the matrix $W$ leads to an embedding where a small component of the leading features is distributed across many different eigenmodes.  These are still solvable, and given enough compute these will eventually be used, but they are much slower to solve.
    
    \item \textit{Aligned kernel:} $\mathscr{K}_{pp}(r)$ is the excess risk due to $1$ unit of SGD noise, which is then solved according to population gradient descent.
\end{enumerate}
Out of brevity, we relegate the exact definitions of $\mathscr{F}_0$, $\mathscr{F}_{pp}$, $\mathscr{F}_{ac}$, and $\mathscr{K}_{pp}$ and all proofs of the asymptotics in Table~\ref{table:forcing function} and analyses of the functions to Section~\ref{sec:forcing_function}, \ref{sec:kernel_function}, and \ref{sec:asymptotic}.

\section{The 4 Phases} 
We now put together a coherent picture of the effect of different choices of $\alpha$ (data complexity) and $\beta$ (target complexity) and their impact on the compute-optimal frontier. By Theorem~\ref{thm:approx_sol_Volterra}, we estimate
\begin{equation} \label{eq:simplified_loss}
\begin{gathered}
\mathscr{P}(r, d) \asymp \mathscr{F}_{pp}(r) + \mathscr{F}_{ac}(r) + \mathscr{F}_0(r) + \tfrac{1}{\gamma B} \mathscr{K}_{pp}(r).
\end{gathered}
\end{equation}
Explicitly, we show that the functional form, or \textit{scaling law} for the PLRF model is
\begin{equation}
    \mathscr{P}(r,d) \asymp \underbrace{r^{-\sigma_1}}_{\mathscr{F}_{pp}(r)} + \underbrace{d^{-\tau_1}}_{\mathscr{F}_0(r)} + \underbrace{d^{-\tau_2} r^{-\sigma_2}}_{\mathscr{F}_{ac}(r)} + \! \! \! \!  \underbrace{r^{-\sigma_3}}_{\tfrac{1}{\gamma B} \mathscr{K}_{pp}(r)} \! \! \!, \, \,  \text{where $\sigma_i, \tau_i > 0$ and explicit, see Table~\ref{table:forcing function}.}
\end{equation}
Fig.~\ref{fig:finite_d_effects}a. shows empirically that this equivalence of $\mathscr{P}(r)$ is quite good. The first two terms $\mathscr{F}_{pp}(r)$ (i.e., $r^{-\sigma_1}$) and $\mathscr{F}_{0}(r)$ (i.e., $d^{-\tau_1}$) are often called in the literature as time and model bottlenecks respectively. The functional form using \textit{only} these two components, i.e., $\mathscr{P}(r,d) \asymp r^{-\sigma_1} + d^{-\tau_1}$ were used to find compute-optimal exponents in \cite{hoffmann2022chinchilla,bordelon2022learning} and in concurrent work \cite{lin2024scaling}. Because the functional form considered in \cite{bordelon2022learning, lin2024scaling} are missing the two other terms (cross-term $\mathscr{F}_{ac}$ and SGD noise $\mathscr{K}_{pp})$, the compute-optimal curves in \cite{bordelon2022learning, lin2024scaling} agree only in Phase Ia with our results. Importantly, we show that the cross-term, i.e., $\mathscr{F}_{ac}(r)$, and SGD noise, $\mathscr{K}_{pp}(r)$, can indeed affect the compute-optimal exponents. (The cross-term also appeared in concurrent work on ridge regression \cite{defilippis2024dimension}.)

The \textbf{4 distinct phases} (see Fig.~\ref{fig:phase_diagram}) decompose the $(\alpha,\beta)$-plane based on the shape of the loss curve $\mathscr{P}(r)$, that is, which of the distinct components of the forcing function (i.e., $\mathscr{F}_0, \mathscr{F}_{pp}, \mathscr{F}_{ac},$) and/or kernel function (i.e., $\mathscr{K}_{pp}$) dominate the loss curve at a given iteration $r$.  See Table~\ref{table:phases_intro} for loss description in each phase. Cartoon pictures of the different features of the loss curves are shown in Fig. \ref{fig:cartoon}. For each phase, we derive a compute-optimal curve in Section~\ref{sec:compute_optimal_curves_intro}. 

The high-dimensional line, which occurs where $2 \alpha =1$, distinguishes the phases where the $v$-dimension can be big and independent of $d$ (Phase Ia, II, III, $2\alpha > 1$) and the phases where $d$ and $v$ must be related to each other (Phase Ib, Ic, IVa, IVb, $2\alpha < 1$). When $2 \alpha + 2\beta < 1$, the loss does not exhibit any power-law decay as the limit level stops going to $0$ as $d \to \infty$ (purely as a consequence of having selected the regime $v>d$). 
Moreover, there exists an interesting critical point $\alpha = \beta = \tfrac{1}{2}$ where all the parts of the forcing function and kernel mix and interact with each other. The behavior of the loss at the pentuple point (see Fig~\ref{fig:phase_diagram}) we leave for future research. Across each of the phase boundaries the compute-optimal curves are continuous, but not necessarily differentiable; in contrast, $d^{\star}$ is discontinuous across some phase boundaries.

\renewcommand{\arraystretch}{1.1}
\ctable[notespar,
caption = {{\bfseries Loss description for $\mathscr{P}(r)$ and compute-optimal curves for $\tilde{\mathscr{P}}(\tfrac{\f}{d\cdot B}, d)$ across the 4 phases}. \vspace{-0.5em}
} ,label = {table:phases_intro},
captionskip=2ex,
pos =!t
]{c c c c}{}{
\toprule
& \textbf{Loss} $\mathscr{P}(r)$ & \textbf{Trade off}  & \begin{minipage}{0.35\textwidth} \begin{center} \textbf{Compute-optimal Curves} \end{center} \end{minipage}
\\
\midrule
\multirow{3}*[-3em]{\textbf{Phase I}} &\multirow{3}*[-3em]{ {\footnotesize $\mathscr{F}_{pp}(r) + \mathscr{F}_0(r)$}}  & \multirow{3}*[-3em]{ {\footnotesize $\mathscr{F}_{pp} = \mathscr{F}_{0}$}} & 
\begin{minipage}{0.45\textwidth}
$\begin{array}{ll}
\text{\textbf{Ia}} & \tilde{\mathscr{P}}^{\star}_{\text{Phase Ia}}(\f) \asymp \f^{ \big ( \tfrac{1}{2 \alpha + 1} - 1 \big ) ( 1 + \beta / \alpha - 1/(2 \alpha) ) }\\
& d^{\star}_{\text{Phase Ia}} \asymp \f^{1/ (2 \alpha + 1)} \\
\end{array}$
\end{minipage}
\\
\cmidrule(lr){4-4}
&&& \begin{minipage}{0.45\textwidth}
$\begin{array}{ll} 
\text{\textbf{Ib}} & \tilde{\mathscr{P}}^{\star}_{\text{Phase Ib}} (\f) \asymp \f^{\tfrac{1}{2} - \alpha - \beta}\\
& d^{\star}_{\text{Phase Ib}} \asymp \f^{\tfrac{1}{2}}\\
\end{array}$
\end{minipage}
\\
\cmidrule(lr){4-4}
&&& \begin{minipage}{0.45\textwidth}
$\begin{array}{ll} 
\text{\textbf{Ic}} & \tilde{\mathscr{P}}^{\star}_{\text{Phase Ic}} (\f) \asymp \f^{ \tfrac{\alpha(2 \alpha + 2 \beta -1)}{\alpha (2\beta -3) - 2\beta + 1 } }\\
& d^{\star}_{\text{Phase Ic}} \asymp  \f^{\tfrac{1-2(\alpha + \beta)}{2(\alpha (2\beta -3) - 2\beta + 1)} }
\end{array}$
\end{minipage}
\\
\midrule
\textbf{Phase II} &  \begin{minipage}{0.1\textwidth} \begin{center} {\footnotesize
\begin{gather*} \mathscr{F}_{pp}(r) + \mathscr{F}_{ac}(r)\\
+ \mathscr{F}_0(r)
\end{gather*}
}
\end{center}
\end{minipage}
& {\footnotesize $\mathscr{F}_{pp} = \mathscr{F}_{ac}$} &  
\begin{minipage}{0.45\textwidth} 
$\tilde{\mathscr{P}}^{\star}_{\text{Phase II}} (\f) \asymp \f^{- \tfrac{2 \alpha + 2 \beta -1}{2(\alpha + \beta)} }$
\\
$d^{\star}_{\text{Phase II}} \asymp \f^{(\beta/\alpha) / (1 + \beta/\alpha)}$
\end{minipage} \\
\midrule
\textbf{Phase III} &  \begin{minipage}{0.1\textwidth} \begin{center}
{\footnotesize \begin{gather*}
    \mathscr{F}_{ac}(r) + \mathscr{F}_0(r)\\
    + \tfrac{1}{\gamma B}\mathscr{K}_{pp}(r)
    \end{gather*}
    }
    \end{center}
    \end{minipage} & {\footnotesize $\tfrac{1}{\gamma B} \mathscr{K}_{pp} =  \mathscr{F}_{ac}$} &    \begin{minipage}{0.45\textwidth}
 $\tilde{\mathscr{P}}^{\star}_{\text{Phase III}} (\f) \asymp \f^{(1-4\alpha) / (4 \alpha)}$
 \\
 $d^{\star}_{\text{Phase III}} \asymp \f^{1/2}$
 \end{minipage}
 \\
 \midrule
 \multirow{2}*[-1.1em]{\textbf{Phase IV}} &\multirow{2}*[-0.5em]{
 \begin{minipage}{0.1\textwidth} \begin{center}
{\footnotesize \begin{gather*}
    \mathscr{F}_{pp}(r) + \mathscr{F}_0(r)\\
    + \tfrac{1}{\gamma B}\mathscr{K}_{pp}(r)
    \end{gather*}
    }
    \end{center}
    \end{minipage}
 } & \begin{minipage}{0.19\textwidth}{\footnotesize
$\begin{array}{ll} 
 \text{\textbf{IVa}} & \tfrac{1}{\gamma B} \mathscr{K}_{pp} =  \mathscr{F}_{0}
 \end{array}
 $} \end{minipage} & 
\begin{minipage}{0.45\textwidth}
$\begin{array}{l}
 \tilde{\mathscr{P}}^{\star}_{\text{Phase IVa}} (\f) \asymp  \f^{-\alpha}\\
d^{\star}_{\text{Phase IVa}} \asymp  \f^{1/2} \\
\end{array}$
\end{minipage}
\\
\cmidrule(lr){3-4}
&&\begin{minipage}{0.19\textwidth}{\footnotesize
$\begin{array}{ll} 
 \text{\textbf{IVb}}& \tfrac{1}{\gamma B} \mathscr{K}_{pp} =  \mathscr{F}_{pp}
 \end{array}
 $} \end{minipage} & \begin{minipage}{0.45\textwidth}
$\begin{array}{l} 
 \tilde{\mathscr{P}}^{\star}_{\text{Phase IVb}} (\f) \asymp \f^{\frac{(1-2 \alpha) (2 \alpha + 2 \beta -1)}{(2 (2 \alpha \beta + \alpha - 2\beta))}}\\
 d^{\star}_{\text{Phase IVb}} \asymp \f^{(\alpha - \beta) / ( 2 \alpha \beta + \alpha - 2 \beta)}\\
\end{array}$
\end{minipage}
\\
 \bottomrule
}

\subsection{Compute-optimal Curves} \label{sec:compute_optimal_curves_intro}

To simplify the computations for compute-optimal curves, we introduce the following curve
\begin{equation} \label{eq:max_function}
    \tilde{\mathscr{P}}(r) \defas \max \big \{ \mathscr{F}_{pp}(r),  \mathscr{F}_{ac}(r),  \mathscr{F}_0(r), \tfrac{1}{\gamma B} \mathscr{K}_{pp}(r) \big \}.
\end{equation}
The function $\tilde{\mathscr{P}}(r,d)$ achieves the same power law behavior as the original compute-optimal curve $\mathscr{P}(r,d)$ (i.e., the slope of the compute-optimal curve is correct) and deviates from the true curve by an absolute constant (independent of $d$ and $\f$). Note that some of the terms in the max function \eqref{eq:max_function}  should be taken to be $0$ when not defined for the different phases. Therefore, we derive the compute-optimal curves by solving the problem 
\begin{equation}
    \begin{gathered}
        \min_d \, \tilde{\mathscr{P}} \big (\tfrac{\f}{d \cdot B}, d \big ), \quad \text{and if } d^{\star}(\f) \defas \argmin_d \, \tilde{\mathscr{P}}\big (\tfrac{\f}{d \cdot B}, d \big ), \\
        \text{ then the compute-optimal curve is} \quad \tilde{\mathscr{P}}^{\star}(\f) \defas \tilde{\mathscr{P}} \big (\tfrac{\f}{d^{\star}(\f) \cdot B}, d^{\star}(\f) \big ).
    \end{gathered}
\end{equation}
See Table~\ref{table:phases_intro} for the exact expressions for $d^{\star}(\f)$ and the compute-optimal curve $\tilde{\mathscr{P}}^{\star}(\f)$ for each phase. A more detailed description with proofs can be found in Section~\ref{sec:phases_detail} and Section~\ref{sec:compute_optimal_curve_detail}. 


Now to derive $d^{\star}$ and $\tilde{\mathscr{P}}^{\star}$, we recall that the functions $\mathscr{F}_{0}, \mathscr{F}_{pp}, \mathscr{F}_{ac}, \mathscr{K}_{pp}$ take the form $c \times d^{-\sigma_i} \times (\tfrac{\f}{d \cdot B})^{-\tau_i}$ \eqref{eq:asymptotic_function_form}. Therefore, $\tilde{\mathscr{P}}^{\star}(\f/(d^{\star} \cdot B), d^{\star})$ must occur at corner point where two functions meet. 
These tradeoffs between the two functions for which the compute-optimal point occurs are shown in Fig.~\ref{fig:cartoon} and Table~\ref{table:phases_intro}. 

\paragraph{Details for each phase.} We describe the qualitative and quantitative properties of compute-optimal curves for each phase. These are broken down into \textit{model constrained} (Phase I, II) vs. \textit{algorithm constrained} (Phase III, IV), i.e., whether the PLRF model or SGD is the constraining feature.

\paragraph{Phase Ia, Ib, Ic. Capacity constrained.} Phase Ia ($2 \alpha > 1, 2\beta < 1)$, Ib ($2\alpha < 1, 2\beta < 1, 2(\alpha + \beta) > 1$), Ic are characterized by having the simplest loss description, $\mathscr{P}(r) \asymp \mathscr{F}_{pp}(r) + \mathscr{F}_0(r)$. Here the SGD noise is irrelevant and one would have the same loss (and thus compute-optimal curve) as gradient descent on the population loss. Compute optimality is characterized by training the model completely (to its limit loss) and choosing the model parameter count large enough so that at the end of training, the smallest loss is attained. The main distinctions between Phase Ia, Ib, Ic are the model capacities (i.e., $\mathscr{F}_0(r,d) = d^{-2\alpha + 1-2\beta}$ in Ia, Ib, and $\mathscr{F}_0(r,d) = d^{-2\alpha}$ in Ic) and the dependence of dimension in the learning rate due to Ib,Ic being below the high-dimensional line. Consequently, while the qualitative features of the loss curve are the same for Ia, Ib, and Ic, the actual values of the compute-optimal curve vary across the different regions. Notably, in Phase Ib, the compute-optimal parameter is $d^{\star} = \f^{1/2}$ and it is independent of $\alpha$ and $\beta$. 

\paragraph{Phase II.  Distortion constrained.} Phase II ($2\alpha > 1$, $2\beta > 1$, $\beta < \alpha$) has a loss curve where the $\mathscr{F}_{ac}$ is important, that is, $\mathscr{P}(r) \asymp \mathscr{F}_{pp}(r) + \mathscr{F}_{ac}(r) + \mathscr{F}_{0}(r)$. 
The $\mathscr{F}_{ac}$ term becomes the dominant term after running for some intermediate amount of time $d^{c}$; in fact it is compute-optimal to stop at this point, and then select the number of model parameters so to minimize the loss with this early stopping criterion.  It transpires that across \textit{all} phases, it \emph{never} pays to solve through the $\mathscr{F}_{ac}$ part of the loss curve -- it is always better to just increase the number of model parameters.

\paragraph{Phase III.  SGD frustrated, distortion constrained.} In this phase $(2\alpha > 1, 2\beta > 1, \beta > \alpha)$, SGD noise is important. The loss curve is $\mathscr{P}(r) \asymp \mathscr{F}_{ac}(r) + \mathscr{F}_0(r) +  \frac{1}{\gamma B} \mathscr{K}_{pp}(r)$. Notably, in this phase, the compute-optimal parameter is $d^{\star}(\f) = \f^{1/2}$, which is independent of $\alpha$ and $\beta$. PLRF that fall within this phase have the same scaling law regardless of data complexity and target complexity.  Moreover, the tradeoff occurs, like in Phase II, once the optimizer reaches the $\mathscr{F}_{ac}$-dominated part of the loss curve.  Unlike in Phase II, the optimization is slowed by SGD noise ($\mathscr{K}_{pp}$) leading up to that point.  We note that there is a dimension-independent burn-in period required for SGD noise to dominate, and for small numerical simulations, one may actually observe an $(\mathscr{F}_{pp},\mathscr{F}_{ac})$ tradeoff. 

\paragraph{Phase IV.  SGD frustrated, capacity constrained.} Like Phase III, SGD noise is important. The SGD algorithm in Phase IV will be distinguished from gradient descent. As one approaches the high-dimensional line ($2\alpha = 1$) in Phase III, the $\mathscr{F}_{ac}(r)$ disappears. It becomes too small relative to $\mathscr{F}_{pp}$ and $\mathscr{K}_{pp}$. Moreover at the high-dimensional line, $\mathscr{F}_{pp}$ becomes important again. Thus, the loss curve in Phase IV (a and b) look like $\mathscr{P} \big (r,d \big ) \asymp \mathscr{F}_{pp}(r,d) + \mathscr{F}_0(r,d) + \tfrac{1}{\gamma B} \mathscr{K}_{pp}(r, d)$. The distinction between Phase IVa ($1-\tfrac{1}{\sqrt{2}} < \alpha < 0.5, 2\beta > 1$) and Phase IVb ($\tfrac{1}{4} < \alpha < 1 -\tfrac{1}{\sqrt{2}}, 2\beta > 1$) is where the compute-optimal tradeoff occurs. It changes from $\mathscr{K}_{pp} = \mathscr{F}_0$ (Phase IVa) to $\mathscr{F}_{pp} = \mathscr{K}_{pp}$ (Phase IVb).  In particular it can be (Phase IVb) the SGD noise is so large that increasing the model parameter count is compute-optimal.  We note that in this phase $d$ must be taken very large (in particular larger than we could numerically attain) to get quantitative agreement between the exponents and theory.

\textbf{Other observations.} In Phase III, Ib, and IVa, the optimal parameter $d^{\star} = \f^{1/2}$ (see dashed lines in Fig.~\ref{fig:scalinglaw heatmap}). These phases, taken together, encompass a large section of the $(\alpha, \beta)$-phase plane. This suggests that there is a potential universal scaling law. Moreover using 1 A100-GPU-day of compute, one reaches scales of $d$ where the observed exponents in the scaling laws -- SGD, the theoretically-derived Volterra equation eq. \eqref{eq:volterra_equation}, and the equivalence of $\mathscr{P}(r)$ eq. \eqref{eq:simplified_loss} -- are still changing (see Fig.~\ref{fig:finite_d_effects}b and c). This serves as a potential warning for empirically derived scaling laws. Additionally, although we have identified the lower-left of the phase diagram ($\alpha + \beta < 1/2$) as "no power-law", this designation relies on the assumption $v > d$, which could be relaxed to interesting effect in more realistic (e.g. non-linear) models.

\textbf{Compute-optimal learning rate and batch. } Previously, we have used $B = 1$ and the maximal learning rate allowed. One can also consider finding the compute-optimal curves with respect to batch size and learning rate, i.e., find $d^{\star}, \gamma^{\star}, B^{\star}$ such that 
\begin{equation}
(d^{\star}, \gamma^{\star}, B^{\star}) \in \argmin_{d,\gamma, B} \in \argmin \mathscr{P}(\tfrac{\f}{d B}, \gamma, d) \quad \text{s.t. $\gamma B < 1$ and $\|\mathscr{K}_{pp}\| < 1$.}  
\end{equation}
In Section~\ref{sec:optimal_batch_learning_rate}, we show that $\mathscr{P}(\tfrac{\f}{d B}, \gamma, d)$ is monotonic in $B$ and therefore $B = 1$ is optimal. Similarly for $\gamma$, in Phases I, II, III, the loss $\mathscr{P}(\tfrac{\f}{d B}, \gamma, d)$ is monotonic and thus the maximally stable learning rate is optimal. For Phase IV, this is not true. There does exist an optimal $\gamma^{\star}$ (with $B =1$),
\begin{align*}
    \gamma^{\star} \asymp \f^{\tfrac{4\alpha (\alpha - \beta)}{4\alpha \beta + 2\alpha + 2\beta -1}}, \quad d^{\star}(\f) \asymp \f^{\tfrac{2\alpha+2\beta -1}{4\alpha \beta + 2\alpha + 2 \beta -1}}, \quad \text{and} \quad \mathscr{P}^{\star}(\f) \asymp \f^{\tfrac{-2\alpha(2\alpha+2\beta -1)}{4\alpha \beta + 2\alpha + 2 \beta -1} },
\end{align*}
where the tradeoff occurs between $\tfrac{1}{\gamma} \mathscr{K}_{pp}(r) = \mathscr{F}_0(r)$ and Phase IVa and IVb collapse to a single phase. This is proven in Proposition~\ref{prop:optimal_learning_rate}.  

\textbf{Conclusion. } We analyze a simple three parameter model, PLRF, and derive deterministic expressions for the training dynamics (see Volterra equation \eqref{eq:volterra_equation}). We then extract compute-optimal scaling laws for large $d$. We identify 4 phases (+3 subphases) in the $(\alpha, \beta)$-phase plane, corresponding to different compute-optimal curve/loss behaviors. These phase boundaries are determined by the relative importance of model capacity (Phase I, IV), poor embedding of the features (Phase II, III), and the noise produced by the SGD algorithm (Phase III, IV). The latter suggesting that another stochastic algorithm might change the compute-optimal curve; we leave this interesting direction to future research. We also show evidence of a universal scaling law which we also leave for future research to explore. 

\begin{ack} C. Paquette is a Canadian Institute for Advanced Research (CIFAR) AI
chair, Quebec AI Institute (MILA) and a Sloan Research Fellow in
Computer Science (2024). C. Paquette was supported by a Discovery Grant from the
 Natural Science and Engineering Research Council (NSERC) of Canada,
 NSERC CREATE grant Interdisciplinary Math and Artificial Intelligence
 Program (INTER-MATH-AI), Google research grant, and Fonds de recherche du Québec – Nature et technologies (FRQNT) New University Researcher's Start-Up Program. Research by E. Paquette was supported by a Discovery Grant from the Natural Science and Engineering Research Council (NSERC). Additional revenues related to this work: C. Paquette has 20\% part-time employment at Google DeepMind. \\

 The authors would like to thank the anonymous NeurIPS reviewers, Damien Fehrbach, Jihwan Kim for their careful reading and helpful feedback that improved the paper.
\end{ack}


\bibliographystyle{plainnat}
\bibliography{references}


\newpage

\appendix
\begin{center}
\LARGE{4+3 Phases of Compute-Optimal Neural Scaling Laws}\\
\vspace{0.5em}\Large{Supplementary material \vspace{0.5em}}
\end{center}

\textbf{Broader Impact Statement.} The work presented in this paper is foundational research and it is not tied to any particular application. The set-up is on a simple well-studied random features model with synthetic data and solved using a commonly deployed algorithm -- stochastic gradient descent. We present (theoretical) compute-optimal curves for this model. The results are theoretical and we do not anticipate any direct ethical and societal issues. We believe the results will be used by machine learning practitioners and we encourage them to use it to build a more just, prosperous world.



\paragraph{Outline of the paper. } The remainder of the article is structured as follows: in Section~\ref{sec:additional_related_work}, we provide additional related work. In Section~\ref{sec:derivation_volterra}, we derive the convolution-type Volterra equation for the expected risk under SGD, \eqref{eq:Volterra_expected_intro}. In Section~\ref{sec:volterra_equation_deterministic_equivalent_appendix}, we analyze the Volterra equation under the deterministic equivalent. A discussion on the convergence threshold for $\mathscr{P}(r)$ including a necessary and sufficient condition for bounded solutions of \eqref{eq:volterra_equation} (Proposition~\ref{prop: sufficient_conditions_learning_rate}) and a proof of Proposition~\ref{prop: sufficient_conditions_learning_rate_main} are provided in Section~\ref{sec:convergence_threshold}. Some background on Volterra equations and their solutions are provided in Section~\ref{sec:background_volterra} followed by the proof of Theorem~\ref{thm:approx_sol_Volterra} in Section~\ref{sec:volterra_training}. We finish this section with a detailed description and proofs for the risk curves in all phases, Section~\ref{sec:phases_detail}. Section~\ref{sec:compute_optimal_curve_detail} is devoted to deriving and proving the compute-optimal curves in Table~\ref{table:phases_intro}. We follow this by Section~\ref{sec:spectrum_K} which analyzes the deterministic equivalent for the resolvent of $\hat{K}$. Here we examine the spectrum of $\hat{K}$ from a random matrix point of view. In particular, in this section, we prove estimates on the fixed point equation, $m$, see eq.~\eqref{eq:resolvent_intro}. We then give explicit descriptions of the components of the forcing function, $\mathscr{F}_0, \mathscr{F}_{pp}, \mathscr{F}_{ac}$, as contour integrals and show that the error terms $\text{error}_{\mathscr{F}}$ are small, see Section~\ref{sec:forcing_function}. We do the same with the kernel function $\mathscr{K}$ and kernel norm in Section~\ref{sec:kernel_function}. In Section~\ref{sec:asymptotic}, we derive the asymptotic formulas for the components of the forcing and kernel functions (see Table~\ref{table:forcing function}) used in the compute-optimal curve derivations. Finally, we end with some additional numerical experiments (and their experimental setups) as well as detailed descriptions of the different approaches to estimating the exponents in the scaling law and optimal model-parameter, Section~\ref{sec:experimental_results}. 

\tableofcontents
\section{Additional Related Work.} \label{sec:additional_related_work}

\renewcommand{\arraystretch}{1.1}
\ctable[
caption={ \small \textbf{Comparison of the source/capacity parameters across various related work.} We note this table is taken from Table 1 in $\text{[DLM24]}^1$ with the addition of $\text{[Lin+24]}^5$. We note that both $\text{[DLM24]}$ and $\text{[Lin+24]}$ appeared concurrently with this article. }, label = {table:comparison_parameters},
captionskip=0ex,
pos = h!
]{l c c c c c c}{\tnote[1]{[DLM24] L. Defilippis,  B. Loureiro, T. Misiakiewicz. \textit{Dimension-free deterministic equivalents for random feature regression.} 2024} \tnote[2]{[Bahri21] Y. Bahri, D. Dyer, J. Kaplan, J. Lee, and U. Sharma. \textit{Explaining neural scaling laws}. 2024.} \tnote[3]{[MRS22] A. Maloney, D.A. Roberts, J. Sully. \textit{A solvable model of neural scaling laws} } \tnote[4]{[BAP24] B. Bordelon, A. Atanasov, C. Pehlevan. \textit{A dynamical model of neural scaling laws.} 2024.} \tnote[5]{[Lin24] L. Lin, J. Wu, S. Kakade, P. Barlett, J.D. Lee. \textit{Scaling Laws in Linear Regression: Compute, Parameters, and Data.} 2024 } }{
\toprule
& \color{myteal}{\textbf{This work}} & [DLM24]\tmark[1] & [Bahri+21]\tmark[2] & [MRS22]\tmark[3] & [BAP24]\tmark[4] & [Lin+24]\tmark[5]
\\
\midrule
\begin{minipage}{0.1\textwidth}
\textbf{Input}\\
\textbf{dimension}
\end{minipage}
 & $d$ & $d$ & $d$ & $M$ & $M$ & $M$ \\
\midrule
\textbf{\# of features} & $v$ & $p$ & $P$ & $N$ & $N$ & - \\
\midrule
\begin{minipage}{0.15\textwidth}
\textbf{Iterations/samples}
\end{minipage}
& $r$ & $n$ & $D$ & $T$ & $P$ & $N$ \\
\midrule
\textbf{Capacity} & $2\alpha$ & $\alpha$ & $1+\alpha$ & $1+\alpha$ &$b$ & $a$\\
\midrule
\textbf{Source} &$\frac{2\alpha + 2 \beta -1}{4\alpha}$ & $r$ & $\tfrac{1}{2}(1-\tfrac{1}{\alpha})$ & $\tfrac{1}{2}(1-\tfrac{1}{\alpha})$ & $\tfrac{a-1}{2b}$ & $\tfrac{b-1}{2a}$, $b > 1$ \\
\midrule
\begin{minipage}{0.15\textwidth}
\textbf{Target decay}\\ 
\textbf{(in $L_2$)} \end{minipage} & $\alpha + \beta$ & $\alpha r + \tfrac{1}{2}$ & $0$ & $0$ & $\tfrac{a}{2}$ & $\tfrac{b}{2}$\\
 \bottomrule
}

\paragraph{Random features and random matrices.} This paper uses random matrix theory to analyze a random features problem, which in statistical language would be the generalization error of the one-pass SGD estimator.  Random matrix theory has played an increasingly large role in machine learning (see for \cite{couillet2022random} for a modern introduction).

The input we need for our random matrix analysis is for sample covariance matrices with power-law population covariance (i.e. linear random features). The analysis of sample covariance matrices precedes their usage in machine learning (see e.g. \cite{bai2010spectral}), but to our knowledge, a detailed study of all parts of the spectrum of sample covariance matrices with power-law population covariances has not appeared before. The narrower study of ridge regression has been extensively investigated (see for e.g. \cite{bach2024high,cheng2024dimension}), and the concurrent work \cite{defilippis2024dimension} provides a complete analysis of the ridge regression problem when $2\alpha > 1$. However, while (roughly speaking) ridge regression requires analysis of resolvent $(A-z\text{Id})^{-1}$ statistics for negative spectral parameter $z$ (which might be very close to $0$), the analysis in this work requires resolvent statistics for essentially all $z$.

There is a larger theory of nonlinear random features regression, mostly in the case of isotropic random features.  Including nonlinearities in this model is a natural future direction; for isotropic random features with \emph{proportional dimension asymptotics} this has been explored in works such as \cite{mei2022generalization} and for some classes of anisotropic random features in \cite{mel2021anisotropic,d2021interplay,loureiro2021learning} (we mention that lots of the complexity of the analysis of power-law random features arises from the analysis of the self-consistent equations -- indeed the self-consistent equations we use date to \cite{silverstein1995empirical}, but the analysis of these equations may still be novel).  This strongly motivates non-proportional scalings (which would be inevitable in power-law random features with nonlinearities); in the isotropic case, the state of the art is \cite{hu2024asymptotics}.

\paragraph{Random features regression, `source/capacity' conditions, and SGD.}
A large body of kernel regression and random features literature is formulated for ``source/capacity'' conditions, which are power-law type assumptions that contain the problem setup here, when $2 \alpha > 1$ (the low-dimensional regime). For convenience, we record the parameters
$$
\alpha_{\text{source}} = 2\alpha
\quad\text{and}\quad
r = \frac{2\alpha + 2\beta - 1}{4\alpha}.   
$$
Here we have taken $r$ as the limit of those $r$'s for which the source/capacity conditions hold (see Table~\ref{table:comparison_parameters}).  We note that in this language $r$ is often interpreted as 'hardness' (lower is harder), and that $r \in (0,0.5)$, $r \in (0.5,1.0)$ and $r \in (1.0,\infty)$ correspond to $3$ regimes of difficulty which have appeared previously (see the citations below); they are also precisely the 3 phases Ia, II, and III.

The authors of \cite{rudi2017generalization} establish generalization bounds for random feature regression with power-law structures in $2\alpha > 1$ case.  These bounds were sharpened in \cite{cui2021generalization} and extended in \cite{defilippis2024dimension} (see also the earlier \cite{caponnetto2007optimal} which shows kernel ridge regression is `minimax optimal' under various `source-capacity conditions'); we give a comparison to these bounds in Table~\ref{table:sample_complexity}, but we note that the problem setup we have is not captured by `minimax optimality' (in particular minimax optimality is worst-case behavior over a problem class, and our problem setup is not worst-case for the traditional source/capacity conditions)

We note that this paper is fundamentally about computation, but the novel mathematical contributions could also be recast in terms of generalization bounds of one-pass SGD, some of which are new.  The work of \cite{carratino2018learning} compares SGD to kernel ridge regression, showing that one-pass SGD can attain the same bounds as kernel ridge regression and hence is another minimax optimal method (again under `source-capacity' conditions).  See also \cite{dieuleveut2016nonparametric} which considers similar statements for SGD with iterate averaging and \cite{pillaud2018statistical} for similar statements for multipass SGD; see also \cite{shamir2013stochastic,dekel2012optimal} which also prove the single-batch versions of these.  These bounds attain the minimax-optimal rate, which are worse than the rates attained in this paper (see Table~\ref{table:sample_complexity} for a comparison).

\renewcommand{\arraystretch}{1.1}
\ctable[
caption={\small \textbf{(Nonexhaustive) Comparison of sample-complexity results.} Let $\rho \defas 2\alpha + 2 \beta -1$. We use our Phases with $n = \text{sample size}, d = \text{parameters}$. We will include derivation of these results in the appendix of our paper. $\text{[DLM24]\tmark[1]}$ can also be done with RR+optimal-ridge, which yields same in Phase Ia, but different in Phase II/III. $\text{[VPF21]\tmark[9]}$ obtain $\CMscr{P} \ll n^{-\min\{1/(2\alpha), (2\alpha + 2\beta -1)/(2\alpha)\}}$, that is, they capture the $\mathscr{F}_{pp}$, but not $\mathscr{F}_{ac}$. The \textit{minimax} optimal rates never achieve any of the rates (always worse), which can be connected to overly conservative, small stepsizes. For derivation of the minimax rates, we used Cor. 2 from $\text{[DB18]\tmark[7]}$. $\text{[Lin+24]\tmark[5]}$ requires label noise order $1$ and also a very small learning rate.},label = {table:sample_complexity},
captionskip=0ex,
pos = t!
]{l c c c c}{\tnote[6]{Carratino, Rudi, Rosasco. \textit{Learning with sgd and random features.} 2018} \tnote[7]{Dieuleveut and Bach. \textit{Nonparametric stochastic approximation with large stepsizes}. 2016.} \tnote[8]{Pillaud-Vivien, Rudi, Bach. \textit{Statistical optimality of SGD on hard learning problems through multiple passes.} 2018.} \tnote[9]{Varre, Pillaud-Vivien, Flammarion. \textit{Last iterate convergence of SGD for least squares in the interpolation regime} 2021.} }{
\toprule
& \color{myteal}{\textbf{This work}} & {\small [DLM24]\tmark[1]} 
\\
\midrule
{\small \textbf{Algorithm}} & {\footnotesize one-pass SGD} & {\footnotesize RR + $O(1)$-ridge} 
\\
\midrule
\begin{minipage}{0.15\textwidth}
{\small \[
\begin{array}{ll}
& \text{Phase Ia}\\
\textbf{Risk,} & \text{Phase II}\\
\CMscr{P}(n) & \text{Phase III}
\end{array}
\]
}
\end{minipage}
&  \begin{minipage}{0.35\textwidth}
{\small \[
\begin{array}{l}
\text{\small $\Theta(n^{-\rho/(2\alpha)} \! \vee \! d^{-\rho})$}\\
\text{\footnotesize $\Theta(n^{-\rho/(2\alpha)} \! \vee \! d^{-1} n^{-1 + 1/(2\alpha)} \! \vee \! d^{-2\alpha})$}\\
\text{\footnotesize $\Theta(n^{-2+1/(2\alpha)} \! \vee \! d^{-1} n^{-\tfrac{2\alpha-1}{2\alpha}} \! \vee \! d^{-2\alpha})$}
\end{array}
\]
}
\end{minipage}&
\begin{minipage}{0.3\textwidth}
{\small \[
\begin{array}{l}
\text{same as ours}\\
\text{same as ours}\\
\text{\footnotesize $\Theta(n^{-2}\! \!\vee \!d^{-1} n^{-\frac{2\alpha -1}{2\alpha}} \! \vee \! d^{-2\alpha})$}
\end{array}
\]
}
\end{minipage} 
\\
 \bottomrule
 \toprule
& {\small Minimax optimal\tmark[6]\tmark[7]\tmark[8]} & {\small [Lin+24]\tmark[5]}
\\
\midrule
{\small \textbf{Algorithm}} & \begin{minipage}{0.2\textwidth}
\begin{center} {\footnotesize one-pass SGD,}\\
{\footnotesize very small stepsize}
\end{center}
\end{minipage} & 
\begin{minipage}{0.19\textwidth}
\begin{center} {\footnotesize one-pass SGD,}\\
{\footnotesize very small stepsize}
\end{center}
\end{minipage}
\\
\midrule
\begin{minipage}{0.15\textwidth}
{\small \[
\begin{array}{ll}
& \text{Phase Ia}\\
\textbf{Risk,} & \text{Phase II}\\
\CMscr{P}(n) & \text{Phase III}
\end{array}
\]
}
\end{minipage}
&  
\begin{minipage}{0.3\textwidth}
{\small 
\[
\begin{array}{l}
\text{\footnotesize $O\big (n^{-\rho/(2\alpha + 2 \beta)} \big )$}\\
\text{\footnotesize $O\big (n^{-\rho/(2\alpha + 2 \beta)} \big )$}\\
\text{\footnotesize $O\big (n^{-4\alpha/(4\alpha + 1)} \big )$}
\end{array}
\]
}
\end{minipage}  & \begin{minipage}{0.4\textwidth}
{\footnotesize \[
\begin{array}{l}
\text{\footnotesize $\Theta(d^{-\rho} \!+ \! n^{-\rho/(2\alpha)} \!+\! \min\{ \tfrac{d}{n}, n^{-1 +1/(2\alpha)}\})$ } \\
\text{does not cover}\\
\text{does not cover}
\end{array}
\]
}
\end{minipage}
\\
 \bottomrule
}

\paragraph{Dynamical deterministic equivalents, Volterra equations and ODEs.}
Using the deterministic equivalents for random matrix resolvents \cite{hachem2007deterministic}, we in turn derive deterministic equivalents for the risk curves of SGD. 

The method of analysis of the risk curves in this paper is by formulation of a convolutional Volterra equation \cite{paquette2021sgd}.  This can be equivalently formulated as a system of coupled difference equations for weights of the SGD residual in the observed data covariance, which generalizes beyond the least-squares context \cite{collinswoodfin2023hitting}; in isotropic instances, this simplifies to a finite-dimensional family of ODES \cite{arnaboldi2023high}.  This can also be generalized to momentum SGD methods \cite{paquette2021dynamics} and large batch SGD methods \cite{lee2022trajectory}.  Convolution-Volterra equations are convenient tools, as they are well-studied parts of renewal theory \cite{asmussen2003applied} and branching process theory \cite{athreya2004branching}. 

Another method of analysis is dynamical mean field theory.  The closest existing work to this one in scientific motivations is \cite{bordelon2024dynamical}, which uses this technique.  This formally can be considered as a type of Gaussian process approximation, but for a finite family of observables (``order parameters'').  In instances of one-pass SGD (including in anisotropic cases), this is rigorously shown to hold in \cite{gerbelot2024rigorous}.  The analysis of the resulting self-consistent equations is nontrivial, and \cite{bordelon2024dynamical} does some of this analysis under simplifying assumptions on the structure of the solutions of these equations. 

Besides these works, there is a large theory around generalization error of SGD.  The work of \cite{varre2021last} gives a direct analysis of risks of SGD under ``source/capacity'' type assumptions which formally capture the $F_{pp}$ parts of the Phase Ia/II loss curves.  The risk bounds of \cite{zou2023benign} give non-asymptotic estimates which again reproduce tight estimates for the $F_{pp}$ parts of the loss (note that to apply these bounds to this case, substantial random matrix theory needs to be worked out first); see also concurrent work \cite{lin2024scaling} where some of this is done.

\section{Derivation of Volterra equation} \label{sec:derivation_volterra}
We begin by deriving a Volterra equation for the population loss $\CMscr{P}(\theta)$, \eqref{eq:lsq}. Fix a quadratic $q \, : \, \mathbb{R}^d \to \mathbb{R}$, i.e., a function $q(x) = x^TAx + e^T x + c$ for fixed matrix $A \in \mathbb{R}^{d \times d}$, vector $e\in \mathbb{R}^d$ and constant $c \in \mathbb{R}$. Let us consider the filtration $\mathcal{F}_r = \sigma(W, \theta_0, \hdots, \theta_r)$ which conditions on $W$ and the past iterates. Then we have from Taylor theorem,
\begin{equation}
\begin{aligned} \label{eq:quadratic}
    \mathbb{E}[ q(\theta_{r+1}) - q(\theta_r) \, | \, \mathcal{F}_r ] = \mathbb{E} [ \ip{ \nabla q(\theta_r), \theta_{r+1}-\theta_r} \, | \, \mathcal{F}_r] + \frac{1}{2} \mathbb{E} [ \ip{\nabla^2 q, (\theta_{r+1}-\theta_r)^{\otimes 2}} \, | \mathcal{F}_r ].
\end{aligned}
\end{equation}
We need to plug the expression for SGD in \eqref{eq:sgd} into the above. The first thing we observe is that we need moments of Gaussians via Wick's formula: for fixed vectors $v_i \in \mathbb{R}^v$, $i = 1,2,3,4$, and $x \sim N(0,D)$
\begin{equation} \label{eq:moments}
\begin{aligned}
    \mathbb{E}_x [ x \ip{x, v_1} ] 
    & 
    = 
    \mathbb{E}_x[\tr ( x^T x )]v_1 = Dv_1
    \\
    \mathbb{E}_x[ \ip{x,v_1} \ip{x, v_2} \ip{x, v_3} \ip{x, v_4} ] 
    & 
    =
    \ip{D, v_1 \otimes v_2} \ip{D, v_3 \otimes v_4} + \ip{D, v_1 \otimes v_3} \ip{D, v_2 \otimes v_4}\\
    & \qquad 
    + \ip{D, v_1 \otimes v_4} \ip{D, v_2 \otimes v_3}.
\end{aligned}
\end{equation}
Here we recall that the $(v \times v)$-matrix $D \defas \text{Diag}(j^{-2\alpha} \, : \, 1 \le j \le v)$. Using these moment calculations, we can compute explicitly each of the terms in \eqref{eq:quadratic}. 

\paragraph{Gradient term. } First, we consider the gradient term in \eqref{eq:quadratic}. A simple computation yields
\begin{equation} \label{eq:gradient_term}
\begin{aligned}
\mathbb{E} [ \ip{\nabla q(\theta_r), \theta_{r+1}-\theta_r} \, | \, \mathcal{F}_r] 
&
=
-\gamma \ip{ \nabla q(\theta_r), \mathbb{E} \big [ \sum_{j \in B_r} W^T x^j \big ( \ip{W^T x^j, \theta_r} - \ip{x^j, b} \big ) \, | \, \mathcal{F}_r \big ] } 
\\
&
= 
- \gamma B \ip{ \nabla q(\theta_r), W^T D W \theta_r - W^T D b }.
\end{aligned}
\end{equation}

\paragraph{Quadratic term.} We now turn to the quadratic term in \eqref{eq:quadratic}. Supposing $x$ and $\hat{x}$ are independent samples, 
\begin{equation} \label{eq:quadratic_old}
    \begin{aligned}
    &\tfrac{1}{2}  \mathbb{E} [ \ip{ \nabla^2 q, (\theta_{r+1}-\theta_r)^{\otimes 2}} \, | \, \mathcal{F}_r ]\\
    &
    = \text{ {\footnotesize $\frac{\gamma^2}{2} \EE \big [ \big \langle \nabla^2 q, \bigg ( \sum_{j \in B_r} W^T x^j [ \ip{W^T x^j, \theta_r} - \ip{x^j, b} ] \bigg ) \otimes \bigg ( \sum_{k \in B_r} W^T x^k [ \ip{W^T x^k, \theta_r} - \ip{x^k, b} ] \bigg )  \big \rangle | \mathcal{F}_r \big ]$} }
    \\
    &
    = 
    \text{ {\footnotesize $\frac{\gamma^2}{2} \EE \big [ \sum_{j \in B_r} \big \langle \nabla^2 q, \bigg ( \sum_{j \in B_r} W^T x^j [ \ip{W^T x^j, \theta_r} - \ip{x^j, b} ] \bigg )^{\otimes 2}  \big \rangle | \mathcal{F}_r \big ]$} }
    \\
    &
    + \text{ {\footnotesize $\frac{\gamma^2}{2} \EE \big [ \big \langle \nabla^2 q, 2 \sum_{j < k \in B_r} \bigg (  W^T x^j [ \ip{W^T x^j, \theta_r} - \ip{x^j, b} ] \bigg ) \otimes \bigg (  W^T x^k [ \ip{W^T x^k, \theta_r} - \ip{x^k, b} ] \bigg )  \big \rangle | \mathcal{F}_r \big ]$} }.
    \\
    \end{aligned}
    \end{equation}
    Continuing, we have
    \begin{equation} \label{eq:quadratic_0}
    \begin{aligned}
    &\tfrac{1}{2}  \mathbb{E} [ \ip{ \nabla^2 q, (\theta_{r+1}-\theta_r)^{\otimes 2}} \, | \, \mathcal{F}_r ]\\
    &
    =
    \tfrac{\gamma^2 B}{2}  \mathbb{E} \big [ \ip{\nabla^2 q, ( W^T x [ \ip{W^Tx, \theta_r} - \ip{x, b} ] )^{\otimes 2} } \, | \mathcal{F}_r \big ]
    \\
    &
    + \tfrac{\gamma^2 B(B-1)}{2} \mathbb{E} \big [ \ip{\nabla^2 q, ( W^T x [ \ip{W^Tx, \theta_r} - \ip{x, b} ] ) \otimes ( W^T \hat{x} [ \ip{W^T\hat{x}, \theta_r} - \ip{\hat{x}, b} ] )} \, | \mathcal{F}_r \big ]
    \\
    &
    =
    \tfrac{\gamma^2 B}{2} \mathbb{E} \big [ (\ip{W^Tx, \theta_r}- \ip{x, b})^2 \ip{ \nabla^2 q, (W^T x)^{\otimes 2} }  \, | \, \mathcal{F}_r \big ]
    \\
    &
     + \tfrac{\gamma^2 B(B-1)}{2} \mathbb{E} \big [ \ip{\nabla^2 q, ( W^T x [ \ip{W^Tx, \theta_r} - \ip{x, b} ] ) \otimes ( W^T \hat{x} [ \ip{W^T\hat{x}, \theta_r} - \ip{\hat{x}, b} ] )} \, | \mathcal{F}_r \big ]. 
    \end{aligned}
\end{equation}
Let $\nabla^2 q = \sum_{j=1}^v v_j \otimes \tilde{v}_j$. Now we note the following 
\begin{equation} \label{eq:quadratic_1}
\begin{aligned}
    (\ip{W^Tx, \theta_r}
    &
    - 
    \ip{x, b})^2 \ip{ \nabla^2 q, (W^T x)^{\otimes 2} }
    \\
    &
    = 
    \big ( x^T W \nabla^2 q W^T x \big ) [ \ip{W^T x, \theta_k}^2 - 2 \ip{W^T x, \theta_r} \ip{x, b} + \ip{x, b}^2 ]
    \\
    &
    =
    \sum_j \ip{x, Wv_j} \ip{x, W\tilde{v}_j} [ \ip{W^T x, \theta_k}^2 - 2 \ip{x, W\theta_r} \ip{x, b} + \ip{x, b}^2 ].
\end{aligned}
\end{equation}
This, after taking expectations, is in the form for us to apply the moment computations in \eqref{eq:moments}. Using these moments, we get the following expression:

\begin{equation} \label{eq:quadratic_2}
    \begin{aligned} 
    \mathbb{E} \bigg [ 
    &
    \sum_j \ip{x, Wv_j}
    \ip{x, W\tilde{v}_j}  [ \ip{W^T x, \theta_r}^2 - 2 \ip{x, W\theta_r} \ip{x, b} + \ip{x, b}^2 ] \, | \, \mathcal{F}_r \bigg ]
    \\
    & 
    =
    \ip{ \nabla^2 q, W^T D W} \| D^{1/2} (W \theta_r - b )\|^2 \\
    &
    \qquad 
    + 2 \sum_j \big [ \ip{D, Wv_j \otimes W \theta_r} \ip{D, W \tilde{v}_j \otimes W \theta_r}  -  \ip{D, W v_j \otimes W \theta_r } \ip{ D, W \tilde{v}_j \otimes b } 
    \\
    &
    \qquad \quad + \ip{D, W v_j \otimes b } \ip{D, W \tilde{v}_j \otimes  b } - \ip{D, W \tilde{v}_j \otimes W \theta_r} \ip{D, W v_j \otimes b}
    \big ]
    \\
    &
    =
    \ip{ \nabla^2 q, W^T D W} \| D^{1/2} (W \theta_k - b )\|^2 
    \\
    & 
    \qquad 
    + 
    2 \sum_j \ip{ D, W v_j \otimes (W \theta_r - b )} \ip{D, W \tilde{v}_j \otimes (W \theta_r - b) }.
    \end{aligned}
\end{equation}
Now we simplify the 2nd term in the summand
\begin{equation} \label{eq:quadratic_3}
    \begin{aligned}
    \sum_j & \ip{ D, W v_j \otimes (W \theta_r - b )} \ip{D, W \tilde{v}_j \otimes (W \theta_r - b) } \\
    &
    =
    \sum_{j} \ip{W^T D, v_j \otimes (W \theta_r -b) } \ip{W^T D, \tilde{v}_j \otimes (W \theta_r - b )} \\
    &
    =
    \sum_j \sum_{i,n,\ell,m} (W^T D)_{ni} v_{jn} (W\theta_r - b)_i (W^T D)_{m \ell} \tilde{v}_{jm} (W\theta_r-b)_\ell
    \\
    &
    = 
    2 \ip{DW (\nabla^2 q) W^T D, (W \theta_r - b)^{\otimes 2}}.
    \end{aligned}
\end{equation}

Moreover, as $x$ and $\hat{x}$ are independent, we see that 
\begin{equation}
\begin{aligned} \label{eq:missing_B_1}
    \mathbb{E} &\big [ \ip{\nabla^2 q, ( W^T x [ \ip{W^Tx, \theta_r} - \ip{x, b} ] ) \otimes ( W^T \hat{x} [ \ip{W^T\hat{x}, \theta_r} - \ip{\hat{x}, b} ] )} \, | \mathcal{F}_r \big ] 
    \\
    &
    =
    \EE[ (W\theta_r - b)^T x x^T W \nabla^2 q W^T \hat{x} \hat{x}^T (W \theta_r - b) | \mathcal{F}_r]
    \\
    &
    =
    (W\theta_r - b)^T D W \nabla^2 q W^T D (W \theta_r - b).
\end{aligned}
\end{equation}

As a result, we deduce by combining \eqref{eq:quadratic_1}, \eqref{eq:quadratic_2}, \eqref{eq:quadratic_3}, and \eqref{eq:missing_B_1} with \eqref{eq:quadratic_0} gives the following representation for the expected quadratic term
\begin{equation}
    \begin{aligned}
\tfrac{1}{2} \mathbb{E} [ \ip{ \nabla^2 q, (\theta_{r+1}-\theta_r)^{\otimes 2}} \, | \, \mathcal{F}_k ] 
&
=
\tfrac{\gamma^2 B}{2} \ip{ \nabla^2 q, W^T D W} \| D^{1/2} (W \theta_r - b )\|^2 
\\
&
\quad 
+ 
\tfrac{\gamma^2 B(B+1)}{2} \ip{DW (\nabla^2 q) W^T D, (W \theta_r - b)^{\otimes 2}}.
    \end{aligned}
\end{equation}

\paragraph{Volterra equation.} Using the simplified gradient and quadratic terms, we can now state the expected change in any quadratic $q \, : \, \mathbb{R}^d \to \mathbb{R}$ evaluated at an iterate of SGD \eqref{eq:sgd}:
\begin{equation} \label{eq:expected_formula_0}
    \begin{aligned}
    \mathbb{E}[q(\theta_{r+1})- q(\theta_r) \, | \, \mathcal{F}_r] 
    =
    &
    - \gamma B \ip{\nabla q(\theta_r), W^T D W\theta_r - W^T D b}
    \\
    & 
    +
    \tfrac{\gamma^2 B}{2} \ip{\nabla^2 q, W^T D W} \|D^{1/2} (W\theta_r - b) \|^2\\
    &
    + 
    \tfrac{\gamma^2 B(B+1)}{2} \ip{D W (\nabla^2 q) W^T D, (W\theta_r - b)^{\otimes 2}}.
    \end{aligned}
\end{equation}
We can write $\mathbb{R}^v = \text{Im}(W) \oplus W^{\perp}$. Thus, there exists $\check{b} \in \mathbb{R}^d$ and $\dot{b} \in \mathbb{R}^{v}$ such that one can write $b = W \check{b} + \dot{b}$, that is, something in the image of $W$ and something in the co-ker of $W$, i.e., 
\begin{equation} \label{eq:beta}
    b = W \check{b} + \dot{b}, \quad \text{where $W^TD \dot{b} = 0$.}
\end{equation}

\begin{mdframed}[style=exampledefault]
\textbf{One-step update formula. } Using this observation, we have a formula for the expectation of the quadratic $q \, : \, \mathbb{R}^d \to \mathbb{R}$,
\begin{equation} \label{eq:expected_formula}
    \begin{aligned}
    \mathbb{E} [ q(\theta_{r+1}) - q(\theta_r) \, | \, \mathcal{F}_r ] 
    &
    =
    -\gamma B \ip{\nabla q(\theta_r), W^TDW(\theta_r-\check{b} )}
    \\
    &
    + \tfrac{\gamma^2 B}{2} \ip{ \nabla^2 q,  W^T D W} \|D^{1/2} (W \theta_r - b )\|^2
    \\
    &
    + \tfrac{\gamma^2 B(B+1)}{2} \ip{W^TDW (\nabla^2 q) W^TDW, (\theta_r - \check{b})^{\otimes 2}}.
    \end{aligned}
\end{equation}
\end{mdframed}
We observe that all the terms on the right hand side of the above \eqref{eq:expected_formula} involve the matrix $W^TDW \in \mathbb{R}^{d \times d}$. Consequently, let $(\lambda_j, w_j)$ for $j = 1, \hdots, d$ be the eigenvalue-eigenvector of $W^T D W$ with $\|w_j\| = 1$. Now define
\begin{equation} \label{eq:rho}
\rho_j^2(r) \defas \ip{w_j^{\otimes 2}, (\theta_r-\check{b})^{\otimes 2}}, \quad \text{for all $j = 1, \hdots, d$.}
\end{equation}
We will write our Volterra equation in terms of $\rho_j$'s. Note we can express the loss $\CMscr{P}(\theta_r) = \|D^{1/2} (W\theta_r-b)\|^2$ by
\begin{equation}
\label{eq:loss_eigenvalue}
\CMscr{P}(\theta_r) = \|D^{1/2} (W\theta_r-b)\|^2 = \sum_{j =1}^d \lambda_j^2 \rho_j^2(r) + \|D^{1/2} \dot{b} \|^2. 
\end{equation}
We can now plug $\rho_j^2$ into \eqref{eq:expected_formula}. For this, we need to compute $\nabla \rho_j^2$ and $\nabla^2 \rho_j^2$:
\begin{align*}
    \rho_j^2(r) = \ip{w_j^{\otimes 2}, \theta_r - \check{b}^{\otimes 2}}, \quad  \nabla_{\theta} \rho_j^2(r) = 2 w_j \ip{w_j, \theta_r - \check{b} }, \quad \text{and} \quad \nabla^2 \rho_j^2(r) = 2 w_j \otimes w_j. 
\end{align*}
Then we have that
\begin{equation}
    \begin{aligned}
    \dif \rho_j^2(r) 
    &
    = 
    -2 \gamma B \ip{ w_j, \theta_r - \check{b} } \ip{w_j, W^TDW (\theta_r - \check{b} )}
    \\
    & 
    \qquad 
    + 
    B \gamma^2 \ip{w_j \otimes w_j, W^TDW} \|D^{1/2} (W \theta_r - b)\|^2
    \\
    &
    \qquad 
    + 
    B(B+1) \gamma^2 \ip{W^TDW (w_j \otimes w_j) W^T D W, ( \theta_r - \check{b} )^{\otimes 2} }
    \\
    &
    =
    - 2 \gamma B \lambda_j \rho_j^2(r) + \gamma^2 B \lambda_j \|D^{1/2} (W\theta_r - b )\|^2 + \gamma^2 B(B+1) \lambda_j^2 \rho_j^2 (r)
    \end{aligned}
\end{equation}
Using an integrating factor, we can implicitly solve this expression
\begin{equation}
    \begin{aligned}
    \dif \rho_j^2(k) 
    &
    =
    \big [ -2 \gamma B \lambda_j + \gamma^2 B(B+1) \lambda_j^2 \big ] \rho_j^2 + \gamma^2 B \lambda_j \|D^{1/2} (W \theta_k - b )\|^2,
    \end{aligned}
\end{equation}
and thus, we have a discrete Volterra equation
\begin{equation}
    \begin{aligned}
    \rho_j^2(r) 
    &
    = 
    \rho_j^2(0) ( 1 - 2\gamma B \lambda_j + \gamma^2 B(B+1) \lambda_j^2 )^r 
    \\
    &
    \quad 
    +
    \gamma^2 B \sum_{s = 0}^{r-1} (1- 2\gamma B \lambda_j + \gamma^2 B(B+1) \lambda_j^2 )^{r-1-s} \lambda_j \|D^{1/2} (W \theta_s - b)\|^2.
    \end{aligned}
\end{equation}
Let us define $\check{K} \defas W^T D W$. Using the expression in \eqref{eq:loss_eigenvalue},
\begin{align*}
    \mathbb{E}[\CMscr{P}(\theta_r) \, | W]
    &
    = 
    \sum_{j=1}^d \lambda_j \rho_j^2(0) ( 1 - 2 \gamma B \lambda_j + \gamma^2 B(B+1) \lambda_j^2 )^r + \|D^{1/2} \dot{b} \|^2 
    \\
    & 
    \quad 
    + 
    \sum_{j=1}^d \gamma^2 B \lambda_j^2 \sum_{s=0}^{r-1} ( 1 - 2 \gamma B \lambda_j + \gamma^2 B(B+1) \lambda_j^2 )^{r-1-s} \cdot \mathbb{E}[\CMscr{P}(\theta_s) \, | \, W].
\end{align*}
Let us define the kernel 
\[
\CMscr{K}(r) \defas \gamma^2 B \sum_{j=1}^d \lambda_j^2 ( 1-2\gamma B \lambda_j + \gamma^2 B(B+1) \lambda_j^2 )^r = \gamma^2 B \cdot \tr \big ( \check{K}^2 (I - 2 \gamma B \check{K} + \gamma^2 B(B+1) \check{K}^2 )^r \big ). 
\]

\begin{mdframed}[style=exampledefault] \textbf{Discrete volterra equation for the loss for $\check{K} = W^T D W$. } Let $r$ be the number of iterates of SGD. Then
\begin{equation} \label{eq:volterra_1}
\begin{aligned}
\mathbb{E}[\CMscr{P}(\theta_r) \, | \, W] 
&
= 
\ip{ \check{K}( I - 2 \gamma B \check{K} + \gamma^2 B(B+1) \check{K}^2 )^r, (\theta_0-\check{b})^{\otimes 2} } + \|D^{1/2} \dot{b}\|^2
\\
& 
\quad 
+
 \sum_{s=0}^{r-1} \CMscr{K}(r-1-s) \cdot \mathbb{E}[\CMscr{P}(\theta_s) \, | \, W],
\\
\text{where} \, \,
\CMscr{K}(s) 
&
=
\gamma^2 B \sum_{j=1}^d \lambda_j^2 ( 1-2\gamma B \lambda_j + \gamma^2 B(B+1) \lambda_j^2 )^s
\\
&
=
\gamma^2 B \times \tr \big ( \check{K}^2 (I - 2 \gamma B \check{K} + \gamma^2 B(B+1) \check{K}^2 )^s \big )
\\
\text{and} \quad
D
& 
= 
\text{Diag}(j^{-2 \alpha} \, : \, 1 \le j \le v).
\end{aligned}
\end{equation}
\end{mdframed}

We can also write \eqref{eq:volterra_1} in terms of $\hat{K} \defas D^{1/2} W W^T D^{1/2}$. To see this, set $D^{1/2} W = V \sqrt{\Omega} U^T$ where $\check{K} =  U \Omega U^T$ and $\hat{K} = V \Omega V^T$. Then we see that
\begin{align*}
\ip{\text{poly}(\check{K}) \check{K}, (\theta_0-\check{b})^{\otimes 2}}
&
= 
\ip{\text{poly}(\Omega) \Omega, (U (\theta_0 - \check{b} )^{\otimes 2}} 
\\
&
=
\ip{V \text{poly}(\Omega) V^T, ( V \sqrt{\Omega} U  (\theta_0 - \check{b} ) )^{\otimes 2} }
\\
&
=
\ip{\text{poly}(\hat{K}), (D^{1/2} W (\theta_0 - \check{b}) )^{\otimes 2}}.
\end{align*}

\begin{mdframed}[style=exampledefault] \textbf{Discrete volterra equation for the loss with $\hat{K} = D^{1/2} W W^T D^{1/2}$. } Let $r$ be the number of iterates of SGD. Then
\begin{equation} \label{eq:volterra_2}
\begin{aligned}
\mathbb{E}[\CMscr{P}(\theta_r) \, | \, W] 
&
= 
\ip{ ( I - 2 \gamma B \hat{K} + \gamma^2 B(B+1) \hat{K}^2 )^r, (D^{1/2} ( W\theta_0-b))^{\otimes 2} }
\\
& 
\quad 
+
\sum_{s=0}^{r-1} \CMscr{K}(r-1-s) \cdot \mathbb{E}[\CMscr{P}(\theta_s) \, | \, W] ,
\\
\text{where} \, \, 
\CMscr{K}(s) 
&
=
\gamma^2 B \sum_{j=1}^d \lambda_j^2 ( 1-2\gamma B \lambda_j + \gamma^2 B(B+1) \lambda_j^2 )^s 
\\
&
= \gamma^2 B  \times \tr \big ( \hat{K}^2 (I - 2 \gamma B \hat{K} + \gamma^2 B(B+1) \hat{K}^2 )^s \big )
\\
\text{and} \quad
D 
&
= \text{Diag}(j^{-2 \alpha} \, : \, 1 \le j \le v).
\end{aligned}
\end{equation}
\end{mdframed}

\section{Analysis of Volterra Equation}
From now on, we consider the setting where the initialization of SGD is $\theta_0 = 0$. Let us introduce the forcing function:
\begin{equation} \label{eq:forcing_function_expectation_0}
    \CMscr{F}(r) \defas \ip{ ( I - 2 \gamma B \hat{K} + \gamma^2 B(B+1) \hat{K}^2 )^r, (D^{1/2} ( W\theta_0-b))^{\otimes 2} }
\end{equation}
and recall the kernel function $\CMscr{K}(s)$:
\begin{align*}
  \CMscr{K}(s) = \gamma^2 B \cdot \tr \big ( \hat{K}^2 (I - 2 \gamma B \hat{K} + \gamma^2 B(B+1) \hat{K}^2 )^s \big ).
\end{align*}

While these representations are easy to see from the derivation of the Volterra equation, a more useful representation of the forcing function and the kernel function is through contour integrals over the spectrum of $\hat{K}$. With this in mind, let $\Gamma$ be a contour containing $[0,1]$. Note that by the assumptions on $\hat{K}$, the largest eigenvalue is normalized to be $1$; hence $\Gamma$ contains the spectrum of $\hat{K}$. Then the forcing function takes the form
\begin{equation} \label{eq:forcing_function_expectation}
    \CMscr{F}(r) = \frac{-1}{2 \pi i} \oint_{\Gamma} \ip{ (\hat{K}-z)^{-1}, (D^{1/2} b)^{\otimes 2}} (1 - 2 \gamma B z + \gamma^2 B(B+1) z^2)^r \, \dif z. 
\end{equation}
and the kernel function
\begin{equation} \label{eq:kernel_function_expectation}
    \CMscr{K}(r) =  \gamma^2 B \cdot \tr \bigg ( \frac{-1}{2 \pi i} \oint_{\Gamma} z^2 \big ( (1 - 2\gamma B z + \gamma^2 B(B+1) z^2)^r \big ) (\hat{K}-z)^{-1} \, \dif z \bigg ). 
\end{equation}

Then one can write the Volterra equation \eqref{eq:volterra_2} as the forcing function plus a convolution with the kernel and the expected loss, \textit{i.e.}, 
\begin{equation} \label{eq:Volterra_SGD}
\begin{gathered}
    \mathbb{E}[\CMscr{P}(\theta_r) \, | \, W] = \CMscr{F}(r) + \big ( \CMscr{K} * \mathbb{E}[\CMscr{P}(\theta_s) \, | \, W] \big ), 
    \\
    \text{where $(\CMscr{K} * \mathbb{E}[\CMscr{P}(\theta_s) \, | \, W])(r) = \sum_{s=0}^{r-1} \CMscr{K}(r-1-s) \mathbb{E}[\CMscr{P}(\theta_s) \, | \, W]$}.
    \end{gathered}
\end{equation}

\subsection{Deterministic equivalent of the loss under SGD}
\label{sec:volterra_equation_deterministic_equivalent_appendix}

The forcing functions $\CMscr{F}(r)$ and kernel function $\CMscr{K}(r)$ are random functions as they depend on the random matrix $W$. Moreover the expressions via contour integration show that both of these functions can be described in terms of the random matrix $\hat{K} = D^{1/2} W W^T D^{1/2}$. Indeed it is the resolvent of $\hat{K}$, 
\[
\CMscr{R}(\hat{K},z) \defas (\hat{K}-z)^{-1},
\]
which plays a significant role in $\CMscr{F}$ and $\CMscr{K}$ and thus in the expected loss $\mathbb{E}[\CMscr{P}(\theta_r) \, | \, W]$. To analyze the power law behavior of the expected loss, it would be helpful to remove the randomness in $\hat{K}$, i.e., $W$. We do this by finding a deterministic equivalent for the resolvent of $\hat{K}$, $\CMscr{R}(\hat{K},z)$, using techniques from random matrix theory. Intuitively, we want to take the expectation over the random matrix $W$, though this is not formally true. 

Formally, we define the deterministic equivalent for the resolvent $\CMscr{R}(\hat{K},z)$, denoted by $\mathscr{R}(z)$ implicitly via a fixed point equation
\begin{equation}
    m(z) \defas \frac{1}{1 + \frac{1}{d} \sum_{j=1}^v \frac{j^{-2 \alpha}}{j^{-2 \alpha} m(z) - z} } \quad \text{where} \quad \mathscr{R}(z) = \text{Diag} \bigg (\frac{1}{j^{-2 \alpha} m(z) - z} \, : \, 1 \le j \le v\bigg ). 
\end{equation}
As mentioned earlier, this deterministic equivalent, $\mathscr{R}(z)$ can be viewed, roughly as, 
\[
\mathbb{E}_W[(\hat{K}-z)^{-1}] =  \mathbb{E}_W[\CMscr{R}(\hat{K},z)] \approx \mathscr{R}(z);
\]
though it is not formally the expectation over $W$. 

Using this deterministic expression for the resolvent of $\hat{K}$, we defined deterministic expressions for the forcing function via the contour representation of $\CMscr{F}(r)$ in \eqref{eq:forcing_function_expectation}
\begin{equation} \label{eq:forcing_function}
   \bigg ( \! \! \! \! \text{\begin{minipage}{0.3\textwidth} \centering forcing function\\deterministic equivalent \end{minipage}} \! \! \! \! \bigg ) \, \, \mathscr{F}(r) \defas \frac{-1}{2 \pi i} \oint_{\Gamma} \ip{ \mathscr{R}(z), (D^{1/2} b)^{\otimes 2}} ( 1- 2 \gamma B z + \gamma^2 B(B+1) z^2 )^r \, \dif z,
\end{equation}
and the kernel function in \eqref{eq:kernel_function_expectation}
\begin{equation}  \label{eq:kernel_function}
    \bigg ( \! \! \! \! \text{\begin{minipage}{0.3\textwidth} \centering kernel function\\deterministic equivalent \end{minipage}} \! \! \! \! \bigg ) \, \, \mathscr{K}(r) \defas \gamma^2 B \cdot \tr \bigg ( \frac{-1}{2 \pi i} \oint_{\Gamma} z^2 ( 1- 2 \gamma B z + \gamma^2 B(B+1) z^2 )^r \mathscr{R}(z) \, \dif z \bigg).
\end{equation}
Using the deterministic expressions for the forcing function $\mathscr{F}$ and kernel function $\mathscr{K}$, we define the deterministic function $\mathscr{P}(r) \, : \, \mathbb{N} \to \mathbb{R}$ as the solution to the (discrete) convolution-type Volterra equation:
\begin{equation}\label{eq:Volterra_deterministic}
\mathscr{P}(r) = \mathscr{F}(r) + (\mathscr{K} * \mathscr{P})(r), \quad \text{where $(\mathscr{K} * \mathscr{P})(r) = \sum_{s=0}^{r-1} \mathscr{K}(r-1-s) \mathscr{P}(s)$.} 
\end{equation}
We note the similarity with the Volterra equation for SGD. We conjecture that the two processes are close: for $\{\theta_r\}$ the sequence of iterates generated by SGD with $\theta_0 = 0$ and any $\varepsilon > 0$, 
\[
(1- \varepsilon) \le \sup_{r \in \mathbb{N}} \bigg \{ \frac{ \mathbb{E}[ \CMscr{P}(\theta_r) \vert W]}{\mathscr{P}(r)} \bigg \} \le (1 + \varepsilon), 
\]
for all admissible $v$, $d$ with probability going to 1 as $d \to \infty$. 

We leave this for future research; we suspect it is true based upon existing of deterministic equivalence theory for random matrices and numerical evidence.

\subsection{Convergence threshold} 
\label{sec:convergence_threshold}

A natural question is: for what choices of batch $B$ and learning rate $\gamma$ does $\mathscr{P}$ converge? To answer this, we introduce an additional quantity, the \textit{kernel norm} defined as 
\begin{equation} \label{eq:kernel_norm}
    \begin{gathered}
    \text{(kernel norm)} \quad \|\mathscr{K}\| \defas \sum_{s = 0}^{\infty} \mathscr{K}(s). 
    \end{gathered}
\end{equation} 


\begin{proposition}[Kernel norm] \label{prop:kernel_norm}
The kernel norm is satisfies
\[
\|\mathscr{K}\| \sim \frac{\gamma}{2} \sum_{j=1}^v \frac{j^{-2 \alpha}}{1-\gamma j^{-2 \alpha}}.
\]
If $2 \alpha > 1$, then $v$ be taken to equal $\infty$, that is, 
\[
\|\mathscr{K}\| \sim \frac{\gamma}{2} \sum_{j=1}^{\infty} \frac{j^{-2 \alpha}}{1-\gamma j^{-2 \alpha}}
\]
In the case that $2 \alpha < 1$, we have 
\[
 \|\mathscr{K}\| 
 \sim \frac{\gamma}{2} \sum_{j=1}^v j^{-2\alpha}
 \sim \frac{\gamma}{2(1-2\alpha)} v^{1-2\alpha}.
\]
In all cases, we choose $\gamma$ so that the kernel norm is asymptotic to a strictly positive constant.
\end{proposition}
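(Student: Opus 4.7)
The plan is to derive a closed-form expression for $\|\mathscr{K}\|$ by summing the geometric series inside the contour representation of $\mathscr{K}(s)$, and then extract the asymptotic using the fixed-point equation for $m(z)$. Starting from the definition \eqref{eq:kernel_function_intro} and writing $p(z) \defas 1 - 2\gamma B z + \gamma^2 B(B+1) z^2$, I would choose the contour $\Gamma$ to enclose the spectrum in a region where $|p(z)|<1$ (which is possible by the stability hypothesis $\gamma(B+1)<2$ and analyticity) and interchange summation and integration. Since $1 - p(z) = \gamma B\, z\,(2 - \gamma(B+1)z)$ carries a factor of $z$ that cancels against the $z^2$ in the integrand, this simplifies to
\[
\|\mathscr{K}\| = \frac{\gamma}{2}\cdot\frac{-1}{2\pi i}\oint_\Gamma \frac{z\,\mathscr{R}(z)}{1 - \gamma(B+1)z/2}\,\dif z.
\]

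Second, I would take traces using the identity $\tr(\mathscr{R}(z)) = (d(1 - m(z)) - v)/z$, which follows from \eqref{eq:resolvent_intro} by writing $\frac{j^{-2\alpha}}{j^{-2\alpha}m(z)-z} = \frac{1}{m(z)} + \frac{z/m(z)}{j^{-2\alpha}m(z)-z}$ and summing over $j$. Substituting this cancels the remaining factor of $z$ and leaves an integrand whose only finite singularity outside $\Gamma$ is the simple pole at $z_0 \defas 2/(\gamma(B+1)) > 1$. Deforming $\Gamma$ outward to a circle of radius $R\to\infty$, and using $m(z)\to 1$ so that the integrand decays like $2v/(\gamma(B+1)z)$ at infinity, the residue at $z_0$ together with the contribution from infinity gives the exact formula
\[
\|\mathscr{K}\| = \frac{d\,(m(z_0)-1)}{B+1},\qquad z_0 = \frac{2}{\gamma(B+1)}.
\]

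Finally, I would extract the asymptotic of $d\,(m(z_0)-1)$. Because $z_0$ is bounded away from the spectrum, the fixed-point equation can be iterated contractively near $m=1$, so $m(z_0)\to 1$; substituting $m(z_0)\approx 1$ into \eqref{eq:resolvent_intro} gives the leading correction
\[
d\,(m(z_0)-1) \sim \sum_{j=1}^v \frac{j^{-2\alpha}}{z_0 - j^{-2\alpha}} = \frac{\gamma(B+1)}{2}\sum_{j=1}^v \frac{j^{-2\alpha}}{1 - \gamma(B+1)\,j^{-2\alpha}/2}.
\]
Dividing by $B+1$ and specialising to $B=1$ produces the claimed $\tfrac{\gamma}{2}\sum_{j}\tfrac{j^{-2\alpha}}{1-\gamma j^{-2\alpha}}$. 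Above the high-dimensional line ($2\alpha>1$) the series converges so $v$ may be sent to $\infty$; below it ($2\alpha<1$) the stability scaling $\gamma\asymp v^{2\alpha-1}$ forces $\gamma\,j^{-2\alpha}$ to be uniformly $o(1)$, and the sum reduces to $\sum_j j^{-2\alpha}\sim v^{1-2\alpha}/(1-2\alpha)$, yielding $\tfrac{\gamma}{2(1-2\alpha)}v^{1-2\alpha}$. The main obstacle is quantifying $m(z_0)-1$ to enough precision uniformly in the admissible range: above the line the linearisation is $O(d^{-1})$ and harmless, but below the line (and especially near $2\alpha=1$) one must establish a two-term expansion of the Stieltjes-type fixed point with errors that remain negligible after the leading sum is pulled out.
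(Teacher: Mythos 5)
Your proposal is correct, and its first half is the same reduction the paper uses: the asymptotic claimed here is actually proved as Corollary~\ref{cor:Knorm}, whose starting point is exactly your computation --- sum the geometric series inside the contour representation and use the trace identity of Lemma~\ref{lem:mresolvent} to get
\[
\|\mathscr{K}\|
=\frac{-\gamma d}{4 \pi i} \oint_{\Gamma}
\frac{1-m(z)}{1 - \tfrac{1}{2}\gamma (B+1) z} \, \dif z ,
\]
(the $-v$ and point-mass-at-$0$ pieces dropping out by analyticity, which incidentally is also the honest fix for your parenthetical claim about choosing $\Gamma$ with $|p(z)|<1$: a contour enclosing $z=0$ necessarily has points with $|p|>1$ just off the real axis, but the $z^2$ factor makes the integrand regular at $0$, so that piece of the contour can be discarded before summing). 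Where you genuinely diverge is in evaluating this integral. The paper replaces $1-m(z)$ by its deterministic approximation $\tfrac1d\sum_j \frac{j^{-2\alpha}}{j^{-2\alpha}-z}$ along the outer contour (Proposition~\ref{propE:largez}), closes $\Gamma_R$ with a vertical segment to collect residues at the spectral points $z=j^{-2\alpha}$, and controls the near-zero and mesoscopic contour pieces via Propositions~\ref{prop:near0} and~\ref{prop:meso}; for $2\alpha<1$ it instead deforms to a fixed distance from $[0,1]$ and applies Proposition~\ref{propE:largez} in one step, with error $O(\gamma d^{1-4\alpha})$. You deform \emph{outward}, so the only residue is at the pole $z_0=2/(\gamma(B+1))>1$ of the summed geometric factor, giving the exact identity $\|\mathscr{K}\| = d\,(m(z_0)-1)/(B+1)$ (your cancellation of the $-v$ contributions between the residue and the circle at infinity checks out), after which everything rests on a two-term expansion of $m$ at the single point $z_0$, far from the spectrum. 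This buys a noticeably cleaner argument: no contour-piece bookkeeping, and above the high-dimensional line the needed expansion is literally Proposition~\ref{propE:largez} (error $O(1/d)$ against a constant-order main term). The obstacle you flag is real but mild: below the line $z_0\asymp d^{1-2\alpha}\to\infty$, so the compact-set form of Proposition~\ref{propE:largez} does not literally apply, but the ``far away'' estimates of Proposition~\ref{propE:Mestimates} improve with the distance to the spectrum, and a direct check shows the linearization error in $d\,(m(z_0)-1)$ is $o(1)$ uniformly, including near $2\alpha=1$, matching the paper's $O(\gamma d^{1-4\alpha})=O(d^{-2\alpha})$; so your route closes with strictly less machinery than the paper's, at the cost of writing out that one stability estimate at $z_0$.
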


A well-known result about convolution-type Volterra such as \eqref{eq:Volterra_deterministic} is that the solution of convolution-type Volterra equation is bounded if and only the forcing function $\mathscr{F}(r)$ is bounded and the kernel norm $\|\mathscr{K}\| < 1$. This naturally leads to conditions for our specific forcing function and kernel function. 



\begin{remark}[Convergence threshold conditions.] \label{remark: learning_rate_batch} 
The forcing function $\mathscr{F}$ is bounded and the kernel norm $\|\mathscr{K}\| <1$ for \eqref{eq:forcing_function} and \eqref{eq:kernel_function}, respectively, if and only if
\begin{equation} \label{eq:learning_rate_batch_conditions_1}
\text{(i).} \, \,  |1-2 \gamma B \lambda_j + \gamma^2 B(B+1) \lambda_j^2| < 1, \, \, \text{for all $\lambda_j \in [0,1]$} \, \, \text{and} \, \, \text{(ii).} \, \, \text{ kernel norm $\|\mathscr{K}\| < 1$}.
\end{equation}
The first term ensures that the forcing function of the Volterra equation in \eqref{eq:Volterra_deterministic} goes to $0$ (i.e., bounded) and the second condition is the same kernel norm bound. Moreover, we can think of condition (i). as the same condition needed for gradient descent to converge while the kernel norm is the effect of noise from SGD. 

We also note that in light of Proposition~\ref{prop:kernel_norm} the kernel norm does not involve the batch size $B$. Therefore the condition $\|\mathscr{K}\| <1$ only places a condition on the learning rate (see below). 
\end{remark}

We now state necessary/sufficient conditions on the batch size and learning rate \eqref{eq:learning_rate_batch_conditions_1} (Proof of Prop.~\ref{prop: sufficient_conditions_learning_rate_main}).

\begin{proposition}[Necessary/Sufficient conditions on learning rate and batch size] \label{prop: sufficient_conditions_learning_rate} The learning rate, $\gamma > 0$ and batch size, $B > 0$, satisfy
\begin{equation} \label{eq:learning_rate_batch_conditions}
 \|\mathscr{K}\| < 1, \quad \gamma (B+1) < 2.
\end{equation} 
if and only if the solution $\mathscr{P}(r)$ to the convolution-type Volterra equation \eqref{eq:volterra_equation} is bounded. 
\end{proposition}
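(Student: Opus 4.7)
The plan is to reformulate the Volterra equation via a Neumann expansion in one direction and via a Fatou-type asymptotic lower bound in the other. Fix $q(z)\defas 1 - 2\gamma B z + \gamma^2 B(B+1)z^2$; the quick preliminary facts I would establish are that $q$ has discriminant $-4\gamma^2 B<0$ (so $q>0$ on $\mathbb{R}$) and that $q(z)\le 1$ exactly on $[0,2/(\gamma(B+1))]$ with equality only at the endpoints. Writing $\mathscr{F}$ and $\mathscr{K}$ as non-negative spectral integrals of $q(z)^r$ and $\gamma^2 B z^2 q(z)^r$ against a positive measure supported in $[0,1]$ (positivity coming from the spectral representation of the deterministic resolvent $\mathscr{R}(z)$ developed in Sec.~\ref{sec:spectrum_K}), one has $\mathscr{F},\mathscr{K}\ge 0$, and hence $\mathscr{P}\ge \mathscr{F}\ge 0$.

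For sufficiency, assume $\gamma(B+1)<2$ and $\|\mathscr{K}\|<1$. The first condition gives $0<q(z)\le 1$ on $[0,1]$, so $\mathscr{F}(r)\le \mathscr{F}(0)<\infty$ uniformly in $r$. Iterating the Volterra equation yields the Neumann expansion
\begin{equation*}
\mathscr{P}(r) = \sum_{n\ge 0}(\mathscr{K}^{*n}*\mathscr{F})(r),
\end{equation*}
and by non-negativity $\|\mathscr{K}^{*n}*\mathscr{F}\|_\infty \le \|\mathscr{F}\|_\infty \|\mathscr{K}\|^n$, hence $\|\mathscr{P}\|_\infty\le \|\mathscr{F}\|_\infty/(1-\|\mathscr{K}\|)<\infty$.

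For necessity, assume $\mathscr{P}$ is bounded. Since $\mathscr{P}\ge \mathscr{F}$, the forcing function is bounded; combined with the spectrum of the deterministic equivalent of $\hat K$ reaching up to $1$ with non-trivial edge density and positive spectral mass of $D^{1/2}b$ there, $q(z)\le 1$ must hold throughout the spectrum, forcing $\gamma(B+1)\le 2$. To deduce $\|\mathscr{K}\|<1$ (and rule out the boundary $\gamma(B+1)=2$), I would use the strictly positive limit $\mathscr{F}(r)\to \mathscr{F}_0>0$ coming from the point-mass-at-zero piece of $\mathscr{F}$ (nonzero because $d<v$ makes $\hat K$ have a nontrivial kernel on which $D^{1/2}b$ projects; cf.\ Table~\ref{table:forcing function}). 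Setting $L\defas\liminf_r \mathscr{P}(r)\in[\mathscr{F}_0,\infty)$ and applying a head-tail truncation plus Fatou to
\begin{equation*}
(\mathscr{K}*\mathscr{P})(r)=\sum_{t=0}^{r-1}\mathscr{K}(t)\,\mathscr{P}(r-1-t)
\end{equation*}
gives $\liminf_r(\mathscr{K}*\mathscr{P})(r)\ge L\|\mathscr{K}\|$; passing to $\liminf$ in the Volterra equation then yields
\begin{equation*}
L\ge \mathscr{F}_0 + L\|\mathscr{K}\|,\quad\text{i.e.}\quad L(1-\|\mathscr{K}\|)\ge \mathscr{F}_0>0.
\end{equation*}
Since $0\le L<\infty$, this forces $\|\mathscr{K}\|<1$, and a fortiori $\|\mathscr{K}\|<\infty$, which in turn rules out $\gamma(B+1)=2$ (since at the boundary $q$ touches $1$ on the edge of the spectrum and a matching lower-bound argument on $\mathscr{K}$ would contradict $\|\mathscr{K}\|<1$).

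The main obstacle is the rigorous liminf passage through the convolution: one must choose a truncation level $M$ so that $\sum_{t\le M}\mathscr{K}(t)$ is within $\varepsilon$ of $\|\mathscr{K}\|$ while $\mathscr{P}(r-1-t)\ge L-\varepsilon$ simultaneously for all $t\le M$ and all large $r$, with the tail $t>M$ absorbed via $\sup_s\mathscr{P}(s)\cdot\sum_{t>M}\mathscr{K}(t)$. This renewal-theoretic manipulation is routine, but genuinely relies on the a priori finiteness of $\|\mathscr{K}\|$ (itself extracted from boundedness of $\mathscr{P}$ together with $\mathscr{F}_0>0$ by the same lower bound used without truncation) and on the irreducibility fact $\mathscr{F}_0>0$, which is where the hypothesis $v>d$ enters crucially.
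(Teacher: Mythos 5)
Your proof is correct in substance, but it takes a genuinely different route from the paper's. The paper's own argument is very short: it invokes the classical fact for convolution-type Volterra equations (bounded solution iff bounded forcing and kernel norm $<1$, as recalled in Remark~\ref{remark: learning_rate_batch} and Section~\ref{sec:background_volterra}, citing \cite{gripenberg1980volterra}), so the displayed proof consists solely of the elementary quadratic analysis showing that $|1-2\gamma B x+\gamma^2B(B+1)x^2|<1$ on the spectrum is equivalent to $\gamma(B+1)<2$. You instead rederive both implications: sufficiency via the Neumann series (identical in content to the paper's background material), and, more interestingly, necessity via a renewal/Fatou liminf argument exploiting the strictly positive floor $\mathscr{F}(r)\ge\mathscr{F}_0>0$ coming from the point mass at zero (this is where $v>d$ enters). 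That last step makes explicit a nondegeneracy that the paper's appeal to the ``well-known result'' leaves implicit: if $\mathscr{F}$ could vanish, $\|\mathscr{K}\|<1$ would not be necessary, so identifying $\mathscr{F}_0>0$ as the driver of necessity is a real contribution of your write-up. What the paper buys is brevity; what you buy is a self-contained proof that shows exactly where irreducibility and the spectral structure are used. Two points in your argument remain at roughly the same level of informality as the paper's own treatment: (i) forcing $\gamma(B+1)\le 2$ from boundedness of $\mathscr{F}$ requires positive weighted spectral mass above $2/(\gamma(B+1))$, which you should tie concretely to the $j=1$ outlier (weight $1^{-2\alpha-2\beta}=1$) together with the paper's normalization of the top eigenvalue to $1$; and (ii) your exclusion of the boundary case $\gamma(B+1)=2$ is only sketched --- it does go through under that normalization, since edge mass at $1$ makes $\mathscr{K}(r)$ non-decaying, hence $\|\mathscr{K}\|=\infty$, contradicting the bound $\|\mathscr{K}\|<1$ you already extracted from boundedness --- but if the self-consistent spectrum only approached $1$ without mass essentially at it, the iff at this boundary would be delicate, a fuzziness the paper shares by implicitly treating the spectrum as filling $[0,1]$.
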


\begin{proof}
From \eqref{eq:learning_rate_batch_conditions_1}, we need that $|1-2 \gamma B \lambda_j + \gamma^2 B(B+1) \lambda_j^2| < 1, \, \, \text{for all $\lambda_j \in [0,1]$}$. For this, we consider two cases. 

First, suppose that $1 - 2 \gamma B x + \gamma^2 B(B+1) x^2 < 1$ for all $x \in [0,1]$. We have that
\begin{equation}
    \begin{gathered} \label{eq:stepsize_1}
     -2 \gamma B x + \gamma^2 B(B+1) x^2 < 0 \, \, \Rightarrow \, \, x( -2 \gamma B + \gamma^2 B(B+1) x) < 0.
    \end{gathered}
\end{equation}
The roots are precisely $x = 0$ and $x = \frac{2}{\gamma (B+1)}$. If $2/(\gamma (B+1)) > 1$, then the inequality in \eqref{eq:stepsize_1} always holds. Therefore, we need that $\gamma (B+1) < 2$. 

Now suppose $-1 + 2 \gamma B x -  \gamma^2 B(B+1) x^2 < 1$. Then we have 
\begin{equation} \label{eq:stepsize_2}
    -2 + 2 \gamma B x - \gamma^2 B(B+1) x^2 < 0, \quad \text{for all $x \in [0,1]$}.
\end{equation}
The roots of the left-hand side are complex and thus the inequality always holds. 
\end{proof}


\begin{remark} Below the high-dimensional line, $2 \alpha <1$, the kernel norm diverges with $v$ for fixed constant $\gamma$, and so we must take $\gamma \to 0$ to ensure bounded solutions.  Furthermore, with $\gamma \to 0$ (at any rate depending on $v$) we have the asymptotic equivalence
\[
 \|\mathscr{K}\| 
 \sim \frac{\gamma}{2} \sum_{j=1}^v j^{-2\alpha}
 \sim \frac{\gamma}{2(1-2\alpha)} v^{1-2\alpha}.
\]
For a proof of the asymptotic for $\|\mathscr{K}\|$, see Corollary~\ref{cor:Knorm}.
\end{remark}


\begin{remark} Similar results hold for the expected SGD loss (via the Volterra equation \eqref{eq:Volterra_SGD}) by replacing $\|\mathscr{K}\|$ with $\|\CMscr{K}\|$. 
\end{remark}




\subsection{Simplification of the Volterra Equation} \label{sec:volterra_equation}

While convolution-type Volterra equation such as \eqref{eq:Volterra_deterministic} are quite nice and well studied in the literature (e.g., \cite{gripenberg1980volterra}), we need an approximation of the solution to it to have better understanding of compute-optimal curves. In this section, we show that we can bound (above and below) $\mathscr{P}(r)$ by a constant multiple of the forcing function $\mathscr{F}$ and kernel function $\mathscr{K}$. 

\subsubsection{Background on Volterra equations} \label{sec:background_volterra}

To do this, we need some background on general convolution-type Volterra equations of the form:
\begin{equation} \label{eq:general_volterra_equation}
P(t) = f(t) + (K * P)(t), \quad \text{where $(K * P)(t) = \sum_{s=0}^{t} K(s) P(t-s)$.} 
\end{equation}
where $f(t)$ is a non-negative forcing function and $K(t)$ is a \textit{monotonically decreasing} 
non-negative kernel function. 

Let us define $K^{*n} \defas (\underbrace{K * K * \hdots * K * K}_{\text{$n$ times}})(t)$, the $n$-fold convolution of $K$ where $K^{*1} = K(t)$. 

Under mild assumptions such as $\|K\| = \sum_{t=0}^{\infty} K(t)  < 1$ and the forcing function $f$ is bounded, then there exists a unique (bounded) solution $P(t)$ to \eqref{eq:general_volterra_equation} and the solution is given by repeatedly convolving the forcing function with $K$ (see, e.g., \cite[Theorem 3.5]{gripenberg1980volterra}), 
\begin{align*}
P(t) &
= 
f(t) + \sum_{j=1}^{\infty} K^{*j} * f(t)\\
&
= f(t) + (K * f)(t) + (K * K * f)(t) + (K * K * K * f)(t) + \hdots. 
\end{align*}
This representation of the solution to \eqref{eq:general_volterra_equation} enables us to get good bounds on $P(t)$. First, we state and prove a lemma attributed to Kesten's Lemma \cite[Lemma IV.4.7]{athreya2004branching}.

\begin{lemma}[Kesten's Lemma] \label{lem:kesten_lemma} Suppose the kernel function $K$ is positive and monotonically decreasing and $\|K\| < \infty$. Moreover suppose for some $\varepsilon > 0$, there exists a $T(\varepsilon) > 0$ such that
\begin{equation} \label{eq:hypothesis_kesten}
 \sum_{s=0}^{t} K(s) K(t-s) \le 2(1+\varepsilon) \|K\| K(t) \quad \text{for all $t \ge T$} . 
\end{equation}
Then for all $n \ge 0$,
\[
\sup_t \bigg \{ \frac{K^{*(n+1)} (t)}{K(t)} \bigg \} \le \left ( \frac{ K(0)}{K(T)} + 1 \right ) \big ( 2\|K\| (1 + \varepsilon) \big )^n.
\]
\end{lemma}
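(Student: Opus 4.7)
The plan is an induction on $n$. I set $M_n \defas \sup_t K^{*(n+1)}(t)/K(t)$ and $C \defas K(0)/K(T) + 1$, and aim to show $M_n \le C \cdot (2\|K\|(1+\varepsilon))^n$ for every $n \ge 0$. The base case $n = 0$ is immediate since $M_0 \equiv 1$ and monotonicity of $K$ gives $K(0)/K(T) \ge 1$, so $C \ge 2 \ge 1$.

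For the inductive step, I would assume the bound at level $n$, fix an arbitrary $t \ge 0$, and write
\[
K^{*(n+2)}(t) = \sum_{s=0}^{t} K(s)\, K^{*(n+1)}(t-s),
\]
splitting into the two regimes $t \ge T$ and $t < T$. When $t \ge T$, the hypothesis \eqref{eq:hypothesis_kesten} is available, so I would apply the inductive bound $K^{*(n+1)}(t-s) \le M_n K(t-s)$ pointwise and then invoke \eqref{eq:hypothesis_kesten} to get
\[
K^{*(n+2)}(t) \le M_n \sum_{s=0}^{t} K(s)K(t-s) \le 2(1+\varepsilon)\|K\|\, M_n\, K(t),
\]
so that dividing yields $K^{*(n+2)}(t)/K(t) \le C(2\|K\|(1+\varepsilon))^{n+1}$. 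When $t < T$, I would abandon the inductive hypothesis and instead use the crude bounds $K(s) \le K(0)$ together with $\sum_{s=0}^{t} K^{*(n+1)}(t-s) \le \|K^{*(n+1)}\| \le \|K\|^{n+1}$, combined with $K(t) \ge K(T)$ by monotonicity, to obtain
\[
\frac{K^{*(n+2)}(t)}{K(t)} \le \frac{K(0)\,\|K\|^{n+1}}{K(T)} \le \frac{K(0)}{K(T)}\,(2(1+\varepsilon))^{n+1}\|K\|^{n+1} \le C(2\|K\|(1+\varepsilon))^{n+1},
\]
using $2(1+\varepsilon) \ge 1$ and $K(0)/K(T) \le C$ at the end. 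Taking the supremum over $t$ closes the induction.

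The only subtle step is the small-$t$ regime: the sub-convolutivity estimate \eqref{eq:hypothesis_kesten} is unavailable for $t < T$, and if one tried to propagate the inductive bound through this window one would pick up a factor of $K(0)/K(T)$ at each round, producing an $n$-dependent prefactor that would destroy the geometric growth rate. The workaround I would use is to bypass the induction on $\{t < T\}$ entirely and bound $K^{*(n+2)}$ there uniformly by $K(0)\|K\|^{n+1}$; the additive "$+1$" in the constant $C = K(0)/K(T) + 1$ is tuned precisely to absorb this small-$t$ contribution without degrading the rate $2\|K\|(1+\varepsilon)$.
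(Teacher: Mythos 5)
Your proof is correct and takes essentially the same route as the paper's: both split at the threshold $T$, apply the sub-convolutivity hypothesis \eqref{eq:hypothesis_kesten} for $t \ge T$, and use monotonicity of $K$ together with $\|K^{*(n+1)}\| = \|K\|^{n+1}$ for $t < T$. The only difference is bookkeeping: the paper folds the small-$t$ contribution into the recursion $a_{n+1} \le \frac{K(0)}{K(T)2^{n}} + (1+\varepsilon)a_n$ and sums the resulting geometric series, whereas you absorb it directly into the constant $\frac{K(0)}{K(T)}+1$ at each inductive step, yielding the same bound.
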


\begin{proof} Define 
\[
a_n \defas \sup_{t > 0} \frac{K^{*n}(t)}{K(t) (2 \|K\|)^{n-1}}.
\]
Then $a_1 = 1$, and we are trying to prove 
\[
a_n \le \left ( \frac{ K(0)}{K(T)} + 1 \right ) (1 + \varepsilon)^{n-1}.
\]
By definition of the convolution, we have that 
\begin{align*}
    \frac{K^{*(n+1)}(t)}{(2 \|K\|)^n} 
    &
    = 
    \sum_{s=0}^{t} \frac{K(s) K(t-s)}{2\|K\|} \times \frac{K^{*n}(t-s)}{K(t-s)(2 \|K\|)^{n-1}} 
    \le 
   a_n \times \sum_{s=0}^t \frac{K(s) K(t-s)}{2 \|K\|}.
\end{align*}
By the hypothesis \eqref{eq:hypothesis_kesten}, 
\begin{equation}
    \text{for $t \ge T$,} \qquad \frac{K^{*(n+1)}(t)}{(2 \|K\|)^n} \le a_n (1+\varepsilon) K(t).
\end{equation}
For $t < T$, we have 
\begin{align*}
     \frac{K^{*(n+1)}(t)}{(2 \|K\|)^n} 
     &
     =
     \sum_{s=0}^t \frac{K(s) K^{*(n)}(t-s)}{(2\|K\|)^n} 
     \\
     \text{($K$ monotonically decreasing)} \quad 
     &
     \le K(0)\sum_{s=0}^t \frac{K^{*n}(t-s)}{(2 \|K\|)^n} 
     \\
     &
     \le K(0) \frac{\|K^{*n}\|}{(2 \|K\|)^n} = K(0) \big (\tfrac{1}{2} \big )^n,
\end{align*}
where the last equality follows by the equality $\|K^{*n}\| = \|K\|^n$, \cite[Theorem 2.2(i)]{gripenberg1980volterra}.

In conclusion, by monotonicity, we have that 
\[
\frac{K^{*(n+1)}(t)}{(2\|K\|)^n K(t)} \le \begin{cases}
\frac{K(0)}{2^n K(T)}, & t \le T\\
a_n (1+\varepsilon), & t \ge T.
\end{cases}
\]
Hence we have that 
\[
a_{n+1} \le \frac{K(0)}{K(T) 2^{n}} + (1 + \varepsilon) a_n.
\]
Developing the recursion, 
\begin{align*}
    a_{n+1} 
    &
    \le
    \sum_{j=0}^{n-1} \frac{(1 + \varepsilon)^j K(0)}{K(T)} \times \left ( \frac{1}{2} \right )^{n-j} + (1+\varepsilon)^n
    \le 
    (1+\varepsilon)^n \left [ \frac{1}{1-1/2} - 1 \right ] \frac{K(0)}{K(T)} + (1+ \varepsilon)^n.
\end{align*}
The result is proven. 
\end{proof}

\begin{remark} \label{remark:kesten_lemma} If the assumption \eqref{eq:hypothesis_kesten} holds only for $\hat{T} > t > T$, then the statement of Lemma~\ref{lem:kesten_lemma} still holds with
\[
\sup_{t \le \hat{T}} \bigg \{ \frac{K^{*(n+1)} (t)}{K(t)} \bigg \} \le \left ( \frac{ K(0)}{K(T)} + 1 \right ) \big ( 2\|K\| (1 + \varepsilon) \big )^n.
\]
\end{remark}

We now give a non-asymptotic bound for the general convolution-type Volterra equation. 

\begin{lemma}[Non-asymptotic Volterra bound] \label{lem:non_asymptotic_Volterra_conv} Let $K$ and $f$ be non-negative functions. Suppose $K$ is monotonically decreasing and for some $\varepsilon >0$, there exists a $T(\varepsilon) > 0$ such that
\[ \sum_{s=0}^t K(s) K(t-s) \le 2(1+\varepsilon) \|K\| K(t), \quad
\text{for all $t \ge T$}.
\]
Moreover, suppose the convergence threshold condition $2(1+ \varepsilon) \|K\|  < 1$ holds. Then
\[
f(t) + (K * f)(t) \le P(t) \le f(t) + C \times (K * f)(t), 
\]
where $C = \left ( \frac{K(0)}{K(T)} + 1 \right ) \left ( \frac{1}{1-2\|K\|(1+\varepsilon)}  \right )$.
\end{lemma}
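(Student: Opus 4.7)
\textbf{Proof proposal for Lemma~\ref{lem:non_asymptotic_Volterra_conv}.}

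The plan is to work directly from the resolvent-series representation of the solution to a convolution Volterra equation. Under the hypothesis $2(1+\varepsilon)\|K\|<1$ we have in particular $\|K\|<1$, so by the standard resolvent formula (e.g.\ \cite[Theorem 3.5]{gripenberg1980volterra}) the unique bounded solution to \eqref{eq:general_volterra_equation} is
\[
P(t) = f(t) + \sum_{n=1}^{\infty} (K^{*n} * f)(t),
\]
and by associativity of convolution $(K^{*n}*f)(t) = \sum_{s=0}^{t} K^{*n}(s)\, f(t-s)$. This series is the workhorse of both bounds.

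For the lower bound, observe that $f,K\ge 0$ implies $K^{*n}*f\ge 0$ termwise, so dropping every term with $n\ge 2$ immediately gives
\[
P(t)\ \ge\ f(t) + (K*f)(t).
\]

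For the upper bound, I would invoke Kesten's Lemma (Lemma~\ref{lem:kesten_lemma}), which under the stated sub-convolutive hypothesis yields, for every $n\ge 0$,
\[
K^{*(n+1)}(s)\ \le\ A\,q^{n}\,K(s),\qquad A\defas\tfrac{K(0)}{K(T)}+1,\ \ q\defas 2\|K\|(1+\varepsilon).
\]
Equivalently, $K^{*n}(s)\le A q^{n-1}K(s)$ for all $n\ge 1$ (the case $n=1$ holds trivially because $A\ge 1$). Convolving this pointwise inequality against the non-negative function $f$ preserves it, so $(K^{*n}*f)(t)\le A q^{n-1}(K*f)(t)$. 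Summing the geometric series, which converges because $q<1$ by hypothesis,
\[
P(t)\ \le\ f(t) + A(K*f)(t)\sum_{n=1}^{\infty} q^{n-1} \ =\ f(t) + \frac{A}{1-q}\,(K*f)(t),
\]
which is precisely the stated bound with $C=\left(\tfrac{K(0)}{K(T)}+1\right)\tfrac{1}{1-2\|K\|(1+\varepsilon)}$.

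There is essentially no deep obstacle here: all the real work is packaged into Kesten's Lemma, which has already been proven. The only mild care needed is (i) checking that the resolvent series representation applies under the hypotheses, for which $\|K\|<1$ (a consequence of $2(1+\varepsilon)\|K\|<1$) suffices, and (ii) being careful with the $n=1$ term when applying Kesten's bound, since Lemma~\ref{lem:kesten_lemma} is cleanest for $n\ge 2$ but the $n=1$ case is absorbed into the constant $A\ge 2$ without loss. Everything else is non-negativity and summing a geometric series.
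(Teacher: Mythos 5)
Your proposal is correct and follows essentially the same route as the paper's proof: the resolvent series $P = f + \sum_{n\ge1} K^{*n}*f$, non-negativity for the lower bound, and Kesten's Lemma plus a geometric series for the upper bound, yielding the identical constant $C$. (Your caution about the $n=1$ term is unnecessary, since Lemma~\ref{lem:kesten_lemma} as stated already covers all $n\ge 0$, but this does not affect correctness.)
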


\begin{proof} We consider the upper and lower bound separately.\\

\textit{Lower bound:} Since $K$ and $f$ is are non-negative, then $\sum_{j=1}^{\infty} (K^{*j} * f)(t) \ge (K^{*1} * f)(t) \ge (K * f)(t)$. 
Recall the solution to the convolution-type Volterra equation takes the form, 
\[
P(t) = f(t) + \sum_{j=1}^{\infty} (K^{*j} * f)(t).
\]
It immediately follows from $\sum_{j=1}^{\infty} (K^{*j} * f)(t) \ge (K*f)(t)$ the lower bound. \\

\textit{Upper bound:} The solution to a Volterra equation (in $L^1$) is 
\[
P(t) = f(t) + \sum_{j=1}^{\infty} (K^{*j} * f)(t).
\]
By Lemma~\ref{lem:kesten_lemma} and the hypothesis, there exists a $T > 0$ and $\varepsilon > 0$ such that
\[
K^{*j}(s) \le K(s) \left [ \frac{K(0)}{K(T)} + 1 \right ] (2 \|K\|(1+\varepsilon) )^{j-1},
\]
and $(2\|K\| (1 + \varepsilon))^{j-1} < 1$.  Hence, we have that
\begin{align*}
    \sum_{j=1}^{\infty} (K^{*j} * f)(t) 
    &
    = 
    \sum_{j=1}^{\infty} \left ( \sum_{s=0}^t K^{*j}(s) f(t-s) \right )
    \\
    &
    \le \left ( \frac{K(0)}{K(T)} + 1 \right ) \sum_{j=1}^{\infty}  (2 \|K\|(1+\varepsilon) )^{j-1} (K*f)(t)
    \\
    &
    = 
    \left ( \frac{K(0)}{K(T)} + 1 \right ) \left ( \frac{1}{1-2\|K\|(1+\varepsilon)}  \right ) (K*f)(t).
\end{align*}
The result is shown. 
\end{proof}

\subsubsection{Proof of Theorem~\ref{thm:approx_sol_Volterra}} \label{sec:volterra_training}

We are now ready to show one of the main tools used to analyze the loss function, Theorem~\ref{thm:approx_sol_Volterra}. The result relies on approximations for the kernel and forcing functions found in Section~\ref{sec:forcing_function} and Section \ref{sec:kernel_function}. We restate the theorem statement to remind the reader of the result.

\begin{theorem}[Approximation solution for $\mathscr{P}$] \label{thm:approx_sol_Volterra_appendix} Suppose $\gamma$ and $B$ is at most half the convergence threshold and $\alpha > \tfrac{1}{4}$. There exists an $M > 0$ large and a constant $C = C(\alpha, \beta, M)$, independent of $d$, so that for all admissible $v$ and $d$, for all $M < \gamma B r $,
\begin{equation} \label{eq:approx_sol_Volterra_appendix}
\mathscr{F}(r) + (\mathscr{K} * \mathscr{F})(r) \le \mathscr{P}(r) \le \mathscr{F}(r) + C \times (\mathscr{K} * \mathscr{F})(r). 
\end{equation}
The convolution further simplifies. For any $\epsilon > 0$, there exists an $M > 0$ and a constant $C = C(\alpha, \beta, M)$ independent of $d$ so that for all $M < \gamma B r $,
\begin{equation} \label{eq:simplified_Volterra_solution_appendix}
     (1-\epsilon) \|\mathscr{K}\| \cdot \mathscr{F}(r) +  \frac{1}{C \times \gamma B} \cdot \mathscr{K}(r) \le (\mathscr{K} * \mathscr{F})(r) \le C \times \big (  \|\mathscr{K}\| \cdot \mathscr{F}(r) +\frac{1}{\gamma B} \cdot \mathscr{K}(r) \big ).
\end{equation}
\end{theorem}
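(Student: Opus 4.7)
The plan is to first deduce \eqref{eq:approx_sol_Volterra_appendix} from the general convolutional Volterra estimate (Lemma~\ref{lem:non_asymptotic_Volterra_conv}), and then to obtain the simplification \eqref{eq:simplified_Volterra_solution_appendix} by decomposing the convolution $\mathscr{K}*\mathscr{F}$ across the midpoint $r/2$.

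To apply Lemma~\ref{lem:non_asymptotic_Volterra_conv}, three hypotheses must be verified: (i) non-negativity and eventual monotonicity of $\mathscr{K}$; (ii) the threshold $2(1+\varepsilon)\|\mathscr{K}\|<1$; and (iii) the sub-convolution (Kesten) estimate
\[\sum_{s=0}^{t}\mathscr{K}(s)\,\mathscr{K}(t-s)\le 2(1+\varepsilon)\|\mathscr{K}\|\,\mathscr{K}(t),\qquad t\ge T(\varepsilon).\]
Item (i) follows from the spectral form of $\mathscr{K}$ read off \eqref{eq:kernel_function}: the polynomial $p(z)\defas 1-2\gamma Bz+\gamma^2B(B+1)z^2$ lies in $[0,1)$ on the spectrum $[0,1]$ of $\hat K$ under $\gamma(B+1)<2$, so $\mathscr{K}(r)$ is a non-negative weighted moment of $z^2 p(z)^r$, hence monotonically decreasing in $r$. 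Item (ii) is exactly the ``at most half the convergence threshold'' assumption. For item (iii), I use the power-law asymptotic $\mathscr{K}(r)\sim c_K(2\gamma Br)^{-\eta_K}$ with $\eta_K=2-1/(2\alpha)>1$ from Table~\ref{table:forcing function} (valid under $\alpha>1/4$), split the sum at $s=t/2$, and invoke regular variation $\mathscr{K}(t-s)/\mathscr{K}(t)\to 1$ uniformly on $s\le t/2$, which bounds each half by $(1+o(1))\|\mathscr{K}\|\mathscr{K}(t)$. Plugging into Lemma~\ref{lem:non_asymptotic_Volterra_conv} delivers \eqref{eq:approx_sol_Volterra_appendix}.

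For \eqref{eq:simplified_Volterra_solution_appendix}, I decompose $\mathscr{F}=\mathscr{F}_0+\mathscr{F}_{pp}+\mathscr{F}_{ac}$ and handle the constant piece directly:
\[(\mathscr{K}*\mathscr{F}_0)(r)=\mathscr{F}_0\sum_{u=0}^{r-1}\mathscr{K}(u)\to\mathscr{F}_0\,\|\mathscr{K}\|,\]
which is absorbed in the $\|\mathscr{K}\|\mathscr{F}(r)$ term since $\mathscr{F}(r)\ge\mathscr{F}_0$. For the regularly-varying pieces I split the convolution at $r/2$. On the half $s\ge r/2$, regular variation of $\mathscr{F}_{pp},\mathscr{F}_{ac}$ gives $\mathscr{F}(s)=\Theta(\mathscr{F}(r))$ uniformly, yielding the contribution $\asymp\|\mathscr{K}\|\mathscr{F}(r)$. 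On the half $s\le r/2$, regular variation of $\mathscr{K}$ gives $\mathscr{K}(r-1-s)=\Theta(\mathscr{K}(r))$, yielding $\mathscr{K}(r)\cdot\sum_{s\le r/2}\mathscr{F}(s)$. Using Table~\ref{table:forcing function} and a Riemann-sum to integral conversion in the natural time $\tau=\gamma B s$, the partial sum of the regularly-varying pieces of $\mathscr{F}$ is asymptotic to a constant multiple of $(\gamma B)^{-1}$; this is precisely the origin of the $\tfrac{1}{\gamma B}\mathscr{K}(r)$ term. The matching lower bound is obtained by restricting each half to a sub-interval on which the corresponding ratio is uniformly bounded below away from $0$, and using non-negativity of the integrand.

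The main obstacle is the Kesten estimate (iii). The asymptotic $\mathscr{K}(r)\sim c_K(\gamma Br)^{-\eta_K}$ is valid only on the window $1\lesssim\gamma B r\lesssim d^{2\alpha}$, so the pre-asymptotic values $\mathscr{K}(0),\ldots,\mathscr{K}(1/(\gamma B))$ (which feed into the constant $K(0)/K(T)$ of Lemma~\ref{lem:kesten_lemma}) must be controlled uniformly in $d$. The crude bound $\mathscr{K}(s)\le\mathscr{K}(0)=\gamma^2B\,\tr(\hat K^2)$ combined with the spectral estimates of Section~\ref{sec:spectrum_K} should suffice to keep this ratio bounded as $d\to\infty$. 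The assumption $\alpha>1/4$ is precisely what guarantees $\eta_K>1$, so that $\mathscr{K}$ is summable in its power-law tail and regularly varying of index less than $-1$; below that threshold $\mathscr{K}$ ceases to be power-law and a different argument would be required, consistent with the theorem's footnote.
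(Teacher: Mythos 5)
Your first half is essentially the paper's argument: verify the Kesten hypothesis for $\mathscr{K}$ (the paper's Proposition~\ref{prop:subexponential}, valid for $\alpha>\tfrac14$ on the window $\gamma B r\in[M,d^{2\alpha}/M]$) and feed it into Lemma~\ref{lem:non_asymptotic_Volterra_conv}; your treatment of the pre-asymptotic ratio $\mathscr{K}(0)/\mathscr{K}(T)$ and of the constant piece $(\mathscr{K}*\mathscr{F}_0)(r)=\mathscr{F}_0\sum_u\mathscr{K}(u)$ is fine, and the midpoint split of the convolution is exactly what the paper does.

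The gap is in your claim that ``the partial sum of the regularly-varying pieces of $\mathscr{F}$ is asymptotic to a constant multiple of $(\gamma B)^{-1}$.'' For $\mathscr{F}_{pp}$ the power-law exponent is $1+\beta/\alpha-1/(2\alpha)=1+(2\beta-1)/(2\alpha)$, which exceeds $1$ only when $2\beta>1$. The theorem is stated for all $\alpha+\beta>\tfrac12$ with $\alpha>\tfrac14$, so it must cover $2\beta<1$ (Phases Ia and Ib), and there
\[
\sum_{s\le r/2}\mathscr{F}_{pp}(s)\;\asymp\;\frac{1}{\gamma B}\,(\gamma B r)^{(1-2\beta)/(2\alpha)},
\]
which grows polynomially in $r$ and reaches $\tfrac{1}{\gamma B}d^{1-2\beta}$ at $\gamma Br\asymp d^{2\alpha}$; it is not $O(1/(\gamma B))$. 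Consequently your upper bound on the half $s\le r/2$ does not reduce to $\tfrac{1}{\gamma B}\mathscr{K}(r)$. What is actually needed (and what the paper proves in Proposition~\ref{prop:forcing_function_norm}, eqs.~\eqref{eq:forcing_norm_2}--\eqref{eq:forcing_norm_3}) is the product estimate $\mathscr{K}(r)\times\sum_{s\le r/2}\mathscr{F}(s)\lesssim \mathscr{F}(r)+\tfrac{1}{\gamma B}\mathscr{K}(r)$, obtained by comparing exponents term by term: e.g.\ below the high-dimensional line one uses $\gamma\asymp d^{2\alpha-1}$ and $\gamma Br\lesssim d^{2\alpha}$ to show $\gamma(\gamma Br)^{-2+1/(2\alpha)}(\gamma Br)^{(1-2\beta)/(2\alpha)}\lesssim(\gamma Br)^{-1-\beta/\alpha+1/(2\alpha)}$, i.e.\ the contribution is absorbed into $\mathscr{F}_{pp}(r)$ rather than into $\tfrac{1}{\gamma B}\mathscr{K}(r)$; a similar check is needed for the $\mathscr{F}_0$-part of the partial sum, which also grows linearly in $r$. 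Without this case analysis (and the analogous one for $2\beta<1$, $2\alpha>1$), your argument only establishes \eqref{eq:simplified_Volterra_solution_appendix} when $2\beta>1$. A secondary point: your Kesten verification via the power-law form of $\mathscr{K}$ only covers $\gamma Br\lesssim d^{2\alpha}$, so, as in the paper, you need the truncated version of Kesten's lemma (Remark~\ref{remark:kesten_lemma}) together with the fact that for $\gamma Br\gtrsim d^{2\alpha}$ the loss and forcing function are both comparable to the limiting level $\mathscr{F}_0$.
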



\begin{proof}[Proof of Theorem~\ref{thm:approx_sol_Volterra_appendix} / Theorem~\ref{thm:approx_sol_Volterra}] Note for all $\gamma B r > 1/M d^{2\alpha}$, we have that $c \mathscr{F}_0 \le \mathscr{F}(r), \mathscr{K}(r) \le C \mathscr{F}_0(r)$ for some $C,c > 0$. This is where the limiting level starts to dominate. We begin by showing \eqref{eq:approx_sol_Volterra_appendix}. Fix $\varepsilon > 0$. From Proposition~\ref{prop:subexponential}, we have that there exists an $M > 0$ sufficiently large so that the hypothesis for Kesten's Lemma, i.e.,
\[
\sum_{s = 0}^r \mathscr{K}(s) \mathscr{K}(r-s) \le 2(1+ \varepsilon) \|\mathscr{K}\| \mathscr{K}(r), \quad \text{for all $d^{2\alpha} / M > \gamma B r > M$.}
\]
Therefore, we get \eqref{eq:approx_sol_Volterra_appendix} by Lemma~\ref{lem:non_asymptotic_Volterra_conv}.

To prove \eqref{eq:simplified_Volterra_solution_appendix}, we begin by 
\begin{align*}
    \sum_{s=0}^r \mathscr{K}(r-s) \mathscr{F}(s) = \sum_{s=0}^{r/2} \mathscr{K}(r-s) \mathscr{F}(s) + \sum_{s = r/2}^{r} \mathscr{K}(r-s) \mathscr{F}(s) \le \mathscr{K}(\tfrac{r}{2}) \sum_{s=0}^{r/2} \mathscr{F}(s) + \mathscr{F}(\tfrac{r}{2}) \sum_{s=0}^{r/2} \mathscr{K}(s)
\end{align*}
where we used monotonicity of $\mathscr{F}$ and $\mathscr{K}$. 

Using Proposition~\ref{prop:forcing_pure_point} and Proposition~\ref{prop:forcing_absolutely_continuous}, for large $d^{2\alpha} / M \ge \gamma B r \ge M$, we have that $\mathscr{F}( \tfrac{r}{2} ) \asymp \mathscr{F}(r)$ since $\mathscr{F}$ is power law for large $r$ (see also Corollary~\ref{cor:F}). The same holds for $\mathscr{K}$, using Proposition~\ref{prop:kernel_asymptotic} and Proposition~\ref{prop:subexponential}, $\mathscr{K}(\tfrac{r}{2}) \asymp \mathscr{K}(r)$ for $d^{2\alpha} / M \ge \gamma B r \ge M$ for some $M > 0$. 

For small $\gamma B r \le M$, we have that $\mathscr{F}(r/2) \le C$ and $\mathscr{K}(r/2) \le C$ for some $C > 0$. Since $\mathscr{F}$ and $\mathscr{K}$ are monotonic, we can choose a constant so that $\mathscr{F}(r/2) \lesssim \mathscr{F}(r)$ and $\mathscr{K}(r/2) \lesssim \mathscr{K}(r)$ for $\gamma B r\le M$.

Now using Proposition~\ref{prop:kernel_norm} and Proposition~\ref{prop:forcing_function_norm}, we have that
\[
    \sum_{s=0}^r \mathscr{K}(r-s) \mathscr{F}(s) \le \mathscr{K}(\tfrac{r}{2}) \sum_{s=0}^{r/2} \mathscr{F}(s) + \mathscr{F}(\tfrac{r}{2}) \sum_{s=0}^{r/2} \mathscr{K}(s)
    \le \frac{1}{\gamma B} \mathscr{K}(r) + \mathscr{F}(r) \|\mathscr{K}\|.
\]

For the lower bound, we have that
\[
\sum_{s=0}^r \mathscr{K}(r-s) \mathscr{F}(s) = \sum_{s=0}^{r/2} \mathscr{K}(r-s) \mathscr{F}(s) + \sum_{s = r/2}^{r} \mathscr{K}(r-s) \mathscr{F}(s) \ge \mathscr{K}(r) \sum_{s=0}^{r/2} \mathscr{F}(s) + \mathscr{F}(r) \sum_{s=0}^{r/2} \mathscr{K}(s),
\]
where we used monotonicity of $\mathscr{K}$ and $\mathscr{F}$.

We note that $\mathscr{F}(s) \asymp C$ for $\gamma B s \le M$ for all $M > 0$. Therefore, 
\[
\sum_{s=0}^{r/2} \mathscr{F}(s) \ge \sum_{s=0}^{M/(2 \gamma B)} \mathscr{F}(s) \ge \frac{1}{\gamma B}.
\]
On the other hand, by Proposition~\ref{prop:kernel_norm}, for any $\epsilon > 0$, there is an $M$ so that for any $\gamma B r \ge M$,
\[
\sum_{s=0}^{r/2} \mathscr{K}(s) \ge (1-\epsilon) \|\mathscr{K}\|.
\]
This proves the lower bound. 



\end{proof}

\renewcommand{\arraystretch}{1.1}
\ctable[notespar,
caption = {{\bfseries Decomposition of the forcing and kernel functions}. We express the forcing function $\mathscr{F}(r)$ as the sum of \textbf{three} functions, $\mathscr{F}_{pp}, \mathscr{F}_{0}, \mathscr{F}_{ac}$, up to errors and kernel function $\mathscr{K}(r)$ as $\mathscr{K}_{pp}(r)$, up to errors. These functions arise from the different parts of the spectrum of the deterministic equivalent for the resolvent of $\hat{K}$. 
\vspace{-0.5em}
} ,label = {table:forcing function_appendix},
captionskip=2ex,
pos =!t
]{l c }{}{
\toprule
\begin{minipage}{0.8 \textwidth}
\quad \qquad \textbf{Function}
\end{minipage}
 & \begin{minipage}{0.12\textwidth}
 \begin{center}
 \textbf{Part of}\\ \textbf{spectrum}
 \end{center}
 \end{minipage}
\\
\midrule
\begin{minipage}{0.83\textwidth} 
{\footnotesize 
$\mathscr{F}_0(r) \defas \displaystyle
\frac{-1}{2 \pi i} \oint_{\Gamma_0} \ip{ \mathscr{R}(z), (D^{1/2} \hat{\beta})^{\otimes 2}} ( 1- 2 \gamma B z + \gamma^2 B(B+1) z^2 )^r \, \dif z$; \quad see \eqref{eq:F_0_new}
\\
$\mathscr{F}_0(r) = 
\displaystyle \sum_{j=1}^v \frac{j^{-2\alpha-2\beta}}{1+j^{-2\alpha}d^{2\alpha}\kappa(v/d)} \big ( 1 + \mathcal{O}(d^{-1}) \big ), \, 
\displaystyle \text{where $\kappa$ solves} \,  \int_0^{v/d} \frac{\kappa \dif x}{\kappa + x^{2\alpha}} = 1$  \\
$\mathscr{F}_0(r)  \sim \begin{cases}
\frac{d^{-2 \alpha}}{\kappa} \left (\sum_{j=1}^v j^{-2 \beta} \right ), & \text{if $2 \beta > 1$}\\
d^{1 - 2(\alpha + \beta)} \int_0^{v/d} \frac{u^{-2 \beta}}{\kappa + u^{2 \alpha}} \, \dif u, & \text{if $2 \beta < 1$};
\end{cases}$ \quad (Prop.~\ref{prop:forcing_point_mass_0})
}
\end{minipage}
&\begin{minipage}{0.12\textwidth}
\begin{center}
Point mass\\ at $z = 0$ \\
\vspace{0.25cm}
\text{(Prop.~\ref{prop:F0})} 
\vspace{1cm}
\end{center}
\end{minipage}\\
\midrule
\begin{minipage}{0.83\textwidth} 
{\footnotesize $\mathscr{F}_{pp}(r) \defas \displaystyle \frac{1}{2\alpha}\int_0^{1} u^{(2\beta-1)/(2\alpha)}\exp(-2\gamma B r u)\dif u$; \quad see \eqref{eq:FppFac}
\\
$\mathscr{F}_{pp}(r) \sim (2\alpha)^{-1} (2 \gamma B)^{1/(2 \alpha) - \beta/\alpha - 1} \times  \Gamma \big (\tfrac{\beta}{\alpha} - \tfrac{1}{2 \alpha} + 1\big ) \times r^{-(1 + \beta/\alpha) + 1/(2 \alpha)}$; \, \, (Prop.~\ref{prop:forcing_pure_point})
}
\end{minipage} 
&
\begin{minipage}{0.1\textwidth}
\begin{center}
Pure point
\end{center} 
\end{minipage}
\\
\midrule
\begin{minipage}{0.82\textwidth}
{\footnotesize $\mathscr{F}_{ac}(r) \defas \displaystyle \frac{c_\beta}{2\alpha}\int_{d^{-2\alpha}}^{1} \! \!\! \!\!\!\! u^{-1/(2\alpha)}d^{-1} \exp(-2\gamma B r u)\dif u$, 
where $c_\beta = \sum_{j=1}^v j^{-2\beta}$ if $2\beta > 1$ and otherwise $0$; \,  see \eqref{eq:FppFac}
}
 \\
{\footnotesize 
If $2\alpha > 1$ and $2\beta > 1$, \\
$\mathscr{F}_{ac}(r) \sim c_{\beta} (2\gamma B)^{-1 + 1/(2\alpha)} (2\alpha)^{-1} \Gamma \big ( 1 - \tfrac{1}{2\alpha} \big ) \times r^{-1 + 1/(2 \alpha)} \times d^{-1}$; (Prop. ~\ref{prop:forcing_absolutely_continuous})
}
 \end{minipage}
 &
 \begin{minipage}{0.1\textwidth}
\begin{center}
Abs.\\
con't\\
\end{center} 
\end{minipage}
 \\
 \midrule
 \begin{minipage}{0.82\textwidth}
{\footnotesize 
$\mathscr{K}_{pp}(r) \defas \frac{\gamma^2B}{2\alpha} \int_{0}^1 u^{1-1/(2\alpha)} \exp( -2\gamma B u r) \dif u$, \quad if $\alpha > 1/4$; \quad see \eqref{eq:K_pp_def}
\\
$\mathscr{K}_{pp}(r) \sim \gamma^2 B \times (2\alpha)^{-1} (2\gamma B)^{-2 + 1/(2\alpha)} \times \Gamma \big (2-\tfrac{1}{2\alpha} \big ) \times r^{-2 + 1/(2\alpha)}$; (Prop.~\ref{prop:kernel_asymptotic})
}
\end{minipage} 
&
\begin{minipage}{0.1\textwidth}
\begin{center}
Pure\\
point
\end{center} 
\end{minipage}
\\
 \bottomrule
}

\subsection{Details of risk curves for the phases} \label{sec:phases_detail}
We can now put together a coherent picture of the effect of different choices of $\alpha$ and $\beta$ and their impact on the Pareto frontier. We will have 4 distinct phases where the expected loss will exhibit a power law decay and 1 region $(\alpha + \beta \le 0.5)$ for which the expected loss has no power law decay (see Figure~\ref{fig:phase_diagram}). We will describe each of the 4 power law phases below marked by their boundaries. 

First, we recall the forcing function $\mathscr{F}$ and kernel function $\mathscr{K}$ introduced in Section~\ref{sec:intro_forcing_kernel_functions}.  


\paragraph{Forcing function.}
For the forcing function,
\begin{equation} 
\mathscr{F}(r) = \mathscr{F}_{0}(r) + \mathscr{F}_{pp}(r) + \mathscr{F}_{ac}(r) + \text{errors}_{\mathscr{F}}.
\end{equation}
The function $\mathscr{F}_0(r)$ is the component of the forcing function corresponding to the point mass at $0$, $\mathscr{F}_{pp}(r)$ is the component of the forcing function corresponding to the pure point part of the spectrum, and lastly, the most complicated part of the spectrum, the forcing function corresponding to the distorted features. In particular, we will show in Section~\ref{sec:forcing_function} the exact definitions of $\mathscr{F}_0, \mathscr{F}_{pp}, \mathscr{F}_{ac}$ and $|\text{error}_{\mathscr{F}}|$ are small, and, in Section~\ref{sec:asymptotic}, we derive asymptotic-like behaviors for these functions. See Table~\ref{table:forcing function_appendix} for definitions and asymptotics.

\paragraph{Kernel function. } Similarly, the kernel function $\mathscr{K}$ is
\[
\mathscr{K}(r) = \mathscr{K}_{pp}(r) + \text{errors}_{\mathscr{K}}. 
\]
Note here that the kernel function has a multiplication by the eigenvalue of $\hat{K}$ and so the point mass at $0$ will not contribute. In Section~\ref{sec:kernel_function}, we will give an explicit definition of $\mathscr{K}_{pp}$ and show error terms are small and, in  Proposition~\ref{prop:kernel_asymptotic}, we give the asymptotic-like behavior of $\mathscr{K}_{pp}$.

Now we describe in detail the different risk curves that arise for the different phases.



\subsubsection{Above the high-dimensional line (Phases Ia, II, III)}\label{sec:loss_high_dim_line}

This setting is commonly known as the \textit{trace class.} It is characterized by four components:
\begin{itemize}
    \item learning rate $\gamma$ can be picked independent of dimension;
    \item loss curve does not self average, that is, the loss curve does not concentrate around a deterministic function;
    \item $v \ge d$, but $v$ has no upper bound and so we can take $v \to \infty$;
    \item batch, $B$, is constrained to be small (see Proposition~\ref{prop: sufficient_conditions_learning_rate}).
\end{itemize}

When $ 2 \alpha > 1$, or the trace class phase, the loss will exhibit 3 different phases. We described these phases in detail below.

\paragraph{Phase Ia: $(2 \beta < 1, 2 \alpha > 1)$.}  In this phase, it notable for three characteristics:
\begin{itemize}
    \item absolutely continuous part of the forcing function does not participate;
    \item level at which SGD saturates is affected by $\beta$;
    \item SGD noise does not participate. 
\end{itemize}
In this case, the loss curve is just a constant multiple of gradient flow. Hence, we have that
\[
\mathscr{P}(r) \asymp \mathscr{F}_{pp}(r) + \mathscr{F}_{0}(r). 
\]
\begin{proposition}[Phase Ia: $2 \beta < 1$, $2\alpha > 1$] Suppose $2 \beta < 1$ and $2 \alpha > 1$. Suppose the learning rate $\gamma$ and batch $B > 0$ satisfy at most half the convergence threshold in Proposition~\ref{prop: sufficient_conditions_learning_rate}. Then there exists an $M > 0$ large and constants $C = C(\alpha, \beta, M)$ and $c = c(\alpha, \beta, M)$, independent of $d$, so that for all admissible $v$ and $d$, for all $\gamma B r > M$
\begin{equation}
\begin{aligned}
c \times \big ( \mathscr{F}_{pp}(r) + \mathscr{F}_0(r) \big ) \le \mathscr{P}(r) \le C \times \big ( \mathscr{F}_{pp}(r) + \mathscr{F}_{0}(r) \big ).
\end{aligned}
\end{equation}
\end{proposition}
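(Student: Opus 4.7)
The plan is to feed the result of Theorem~\ref{thm:approx_sol_Volterra_appendix} into the decompositions of $\mathscr{F}$ and $\mathscr{K}$ from Section~\ref{sec:intro_forcing_kernel_functions} and then absorb the non-$\{\mathscr{F}_{pp},\mathscr{F}_0\}$ pieces into $\mathscr{F}_{pp}+\mathscr{F}_0$ using the asymptotic formulas recorded in Table~\ref{table:forcing function_appendix}. The three inputs are: (i) the two-sided bound
\[
\mathscr{F}(r)+(\mathscr{K}*\mathscr{F})(r)\;\le\;\mathscr{P}(r)\;\le\;\mathscr{F}(r)+C\cdot(\mathscr{K}*\mathscr{F})(r),
\]
(ii) the simplification
\[
(\mathscr{K}*\mathscr{F})(r)\;\asymp\;\|\mathscr{K}\|\,\mathscr{F}(r)+\tfrac{1}{\gamma B}\mathscr{K}(r),
\]
valid for $\gamma B r>M$, and (iii) the decompositions
\[
\mathscr{F}=\mathscr{F}_0+\mathscr{F}_{pp}+\mathscr{F}_{ac}+\text{errors}_{\mathscr{F}},\qquad \mathscr{K}=\mathscr{K}_{pp}+\text{errors}_{\mathscr{K}}.
\]
Combining these, Phase Ia reduces to the comparison
\[
\mathscr{P}(r)\;\asymp\;\mathscr{F}_{pp}(r)+\mathscr{F}_{ac}(r)+\mathscr{F}_0(r)+\tfrac{1}{\gamma B}\mathscr{K}_{pp}(r),
\]
and the claim is that the last two terms on the right are dominated by the first two when $2\alpha>1$, $2\beta<1$.

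The first step is to discard $\mathscr{F}_{ac}$: from the row for $\mathscr{F}_{ac}$ in Table~\ref{table:forcing function_appendix}, when $2\beta<1$ the constant $c_\beta$ vanishes so $\mathscr{F}_{ac}\equiv 0$ (or is bounded by a constant multiple of $\mathscr{F}_0$ in the borderline case, which can be folded into the $\mathscr{F}_0$ term). The second, substantive step is to bound $\tfrac{1}{\gamma B}\mathscr{K}_{pp}(r)$ by $\mathscr{F}_{pp}(r)+\mathscr{F}_0(r)$. Using Table~\ref{table:forcing function_appendix},
\[
\mathscr{F}_{pp}(r)\sim c_1\,r^{-1+(1-2\beta)/(2\alpha)},\qquad \mathscr{K}_{pp}(r)\sim c_2\,r^{-2+1/(2\alpha)},
\]
so the ratio $\mathscr{K}_{pp}(r)/\mathscr{F}_{pp}(r)\asymp r^{-1-2\beta/(2\alpha)}\to 0$. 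Since $2\beta/(2\alpha)>0$ the exponent on $r$ is strictly negative and bounded away from $0$, giving the clean bound $\tfrac{1}{\gamma B}\mathscr{K}_{pp}(r)\le C\mathscr{F}_{pp}(r)$ in the power-law window $M\le \gamma Br\lesssim d^{2\alpha}$. Outside that window, namely when $\gamma B r\gtrsim d^{2\alpha}$, the asymptotics in Section~\ref{sec:asymptotic} give $\mathscr{F}(r)\asymp\mathscr{F}_0(r)$ and (by a similar computation for $\mathscr{K}_{pp}$) $\tfrac{1}{\gamma B}\mathscr{K}_{pp}(r)\lesssim \mathscr{F}_0(r)$, because both quantities have already equilibrated to their finite-$d$ limit value.

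The main obstacle will be matching the regimes at the transition $\gamma B r\sim d^{2\alpha}$ where the continuous-power-law asymptotics for $\mathscr{F}_{pp}$ and $\mathscr{K}_{pp}$ begin to saturate against the floor set by $\mathscr{F}_0$. I would handle this by splitting the range of $r$ into two pieces: on $M\le \gamma Br\le \eta d^{2\alpha}$ use the pure power-law asymptotics from Table~\ref{table:forcing function_appendix} together with the ratio argument above; on $\gamma Br\ge \eta d^{2\alpha}$ use the fact (stated right before \eqref{eq:asymptotic_function_form}) that $\mathscr{F}(r)\asymp\mathscr{F}_0(r)$ and that $\mathscr{K}_{pp}(r)\to 0$ no slower than $r^{-2+1/(2\alpha)}$, which makes $\tfrac{1}{\gamma B}\mathscr{K}_{pp}(r)\lesssim d^{-2\alpha}\asymp\mathscr{F}_0(r)$. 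The upper bound in the statement then follows with constants depending only on $\alpha,\beta,M$; the lower bound is immediate from Theorem~\ref{thm:approx_sol_Volterra_appendix} by dropping non-negative terms, since $\mathscr{F}(r)+(\mathscr{K}*\mathscr{F})(r)\ge c(\mathscr{F}_{pp}(r)+\mathscr{F}_0(r))$ follows from the decomposition of $\mathscr{F}$ alone.
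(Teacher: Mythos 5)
Your overall route is the paper's: reduce to $\mathscr{F}$ and $\mathscr{K}$ via Theorem~\ref{thm:approx_sol_Volterra_appendix}, discard $\mathscr{F}_{ac}$ because $c_\beta=0$ when $2\beta<1$, absorb $\tfrac{1}{\gamma B}\mathscr{K}_{pp}$ into $\mathscr{F}_{pp}$ in the power-law window, and let $\mathscr{F}_0$ take over once $\gamma B r\gtrsim d^{2\alpha}$; the lower bound from $\mathscr{F}$ alone (via Corollary~\ref{cor:F}) is also how the paper obtains it. Structurally you are doing the same thing, just with a more explicit matching at the saturation scale than the paper's terse version.

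However, the decisive exponent comparison contains a sign error that conceals the hypothesis actually being used. From Table~\ref{table:forcing function_appendix}, $\mathscr{F}_{pp}(r)\asymp(\gamma Br)^{-(1+\beta/\alpha)+1/(2\alpha)}$ and $\tfrac{1}{\gamma B}\mathscr{K}_{pp}(r)\asymp\gamma\,(\gamma Br)^{-2+1/(2\alpha)}$, so
\[
\frac{\tfrac{1}{\gamma B}\mathscr{K}_{pp}(r)}{\mathscr{F}_{pp}(r)}
\asymp \gamma\,(\gamma Br)^{-2+1/(2\alpha)+(1+\beta/\alpha)-1/(2\alpha)}
=\gamma\,(\gamma Br)^{\beta/\alpha-1},
\]
not $r^{-1-2\beta/(2\alpha)}$ as you wrote. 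The ratio decays because $\beta<\alpha$ (which in Phase Ia follows from $2\beta<1<2\alpha$), not because $2\beta/(2\alpha)>0$; this is precisely the inequality $(\gamma Br)^{-2+1/(2\alpha)}<(\gamma Br)^{-(1+\beta/\alpha)+1/(2\alpha)}$ that the paper invokes. The distinction is not cosmetic: with your exponent the identical argument would give $\tfrac{1}{\gamma B}\mathscr{K}_{pp}\lesssim\mathscr{F}_{pp}$ for \emph{all} $(\alpha,\beta)$, contradicting Phase III ($\beta>\alpha$), where the kernel term dominates the loss curve; and your stated justification also breaks for admissible $\beta\le 0$, where the correct comparison $\beta<\alpha$ still goes through. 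Once this step is repaired, the rest of your argument (including the regime $\gamma Br\gtrsim d^{2\alpha}$ handled through Propositions~\ref{prop:forcing_pure_point} and~\ref{prop:kernel_asymptotic}) is sound and matches the paper's proof.
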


\begin{proof} By Theorem~\ref{thm:approx_sol_Volterra_appendix}, we know that it suffices to look at the forcing function $\mathscr{F}$ and kernel function $\mathscr{K}$. Moreover, in this regime, we have that $\gamma$ and $B$ are constant (see Proposition~\ref{prop: sufficient_conditions_learning_rate}).

The rest of the argument relies on the bounds found in Proposition~\ref{prop:forcing_pure_point}, ($\mathscr{F}_{pp})$, Proposition~\ref{prop:forcing_absolutely_continuous} ($\mathscr{F}_{ac}$), Proposition~\ref{prop:forcing_point_mass_0}, ($\mathscr{F}_0)$, and Proposition~\ref{prop:kernel_asymptotic} ($\mathscr{K}_{pp}$).

For the forcing function, $\mathscr{F}_{ac}(r) = 0$ as $2\beta < 1$ (Proposition~\ref{prop:forcing_absolutely_continuous}). Therefore the forcing function is composed of $\mathscr{F}_{pp}(r)$ and $\mathscr{F}_0(r)$. 

First, we have that $(\gamma Br)^{-2 + 1/(2\alpha)} < (\gamma Br)^{-(1 + \beta/\alpha) + 1/(2\alpha)}$ as $\beta < \alpha$ in this phase. Thus, using Proposition~\ref{prop:forcing_pure_point} and Proposition~\ref{prop:kernel_asymptotic}, for $\gamma B r > M$, where $M$ is some constant, we have that $\frac{1}{\gamma B} \mathscr{K}_{pp}(r) \le C \times \mathscr{F}_{pp}(r)$ for some $C > 0$. Hence the result is shown.
\end{proof}

As a consequence of the argument above, we know that \begin{align*}
    \mathscr{P}(r) \approx \begin{cases}
    \mathscr{F}_{pp}(r), & \text{if $\gamma B r \le D_0$}\\
    \mathscr{F}_{0}(r), & \text{if $\gamma B r \ge D_0$}
    \end{cases} \quad \text{for some $D_0$ that depends on $d$}. 
\end{align*}

\paragraph{Phase II: $(2 \beta > 1, 2\alpha > 1, \beta < \alpha)$}

For this phase, we see that
\begin{itemize}
    \item limit level is unaffected by $\beta$;
    \item absolutely continuous spectrum takes over for $r \in (M d^{\alpha}, d^{2 \alpha}/M)$ for some $M$;
    \item SGD noise does not participate.
\end{itemize}
Therefore, in this case, we have that
\[
 \mathscr{P}(r) \asymp \mathscr{F}_{pp}(r) + \mathscr{F}_{ac}(r) + \mathscr{F}_0(r).
\]

\begin{proposition}[Phase II: $2 \beta > 1$, $2\alpha > 1$, $\beta < \alpha$] \label{prop:phase_2_loss_formula} Suppose $2 \beta > 1$, $2 \alpha > 1$, and $\beta < \alpha$. Suppose the learning rate $\gamma$ and batch $B > 0$ satisfy at most half the convergence threshold in Proposition~\ref{prop: sufficient_conditions_learning_rate}. Then there exists an $M > 0$ large and constants $C = C(\alpha, \beta, M)$ and $c = c(\alpha, \beta, M)$, independent of $d$, so that for all admissible $v$ and $d$, for all $\gamma B r > M$
\begin{equation}
\begin{aligned}
c \times \big ( \mathscr{F}_{pp}(r) + \mathscr{F}_{ac}(r) + \mathscr{F}_{0}(r) \big ) \le \mathscr{P}(r)
&
\le C \times \big ( \mathscr{F}_{pp}(r) + \mathscr{F}_{ac}(r) + \mathscr{F}_{0}(r) \big ).
\end{aligned}
\end{equation}
\end{proposition}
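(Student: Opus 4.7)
The plan is to use Theorem~\ref{thm:approx_sol_Volterra_appendix} to replace $\mathscr{P}(r)$ by the simpler expression $\mathscr{F}(r) + \tfrac{1}{\gamma B}\mathscr{K}(r)$ up to multiplicative constants, then to decompose the forcing and kernel via the approximations in Table~\ref{table:forcing function_appendix}, and finally to verify that in Phase II the SGD-noise term $\tfrac{1}{\gamma B}\mathscr{K}_{pp}(r)$ is absorbed into $\mathscr{F}_{pp}(r)$. Concretely, combining \eqref{eq:approx_sol_Volterra_appendix} with \eqref{eq:simplified_Volterra_solution_appendix} and using that $\|\mathscr{K}\|$ is a constant strictly less than $1$ (since we are at most at half the convergence threshold), there exist constants $0<c_1<C_1$ depending only on $\alpha,\beta,M$ so that for every $\gamma B r > M$,
\begin{equation*}
c_1\bigl(\mathscr{F}(r) + \tfrac{1}{\gamma B}\mathscr{K}(r)\bigr) \;\le\; \mathscr{P}(r) \;\le\; C_1\bigl(\mathscr{F}(r) + \tfrac{1}{\gamma B}\mathscr{K}(r)\bigr).
\end{equation*}

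Next, I would invoke the decompositions $\mathscr{F}=\mathscr{F}_0+\mathscr{F}_{pp}+\mathscr{F}_{ac}+\mathrm{error}_{\mathscr{F}}$ and $\mathscr{K}=\mathscr{K}_{pp}+\mathrm{error}_{\mathscr{K}}$ from Section~\ref{sec:forcing_function} and Section~\ref{sec:kernel_function}, where the errors are bounded by constant multiples of the named terms in the range $1\lesssim \gamma B r \lesssim d^{2\alpha}$; for $\gamma B r \gtrsim d^{2\alpha}$ all terms collapse into the limiting level $\mathscr{F}_0$ and the statement is immediate. This reduces matters to showing
\begin{equation*}
\tfrac{1}{\gamma B}\mathscr{K}_{pp}(r) \;\le\; C\bigl(\mathscr{F}_{pp}(r) + \mathscr{F}_{ac}(r) + \mathscr{F}_0(r)\bigr)
\end{equation*}
for $\gamma B r > M$, with a constant $C$ independent of $d$, since the reverse direction is trivial. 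Using Table~\ref{table:forcing function_appendix}, in Phase II the leading asymptotics read $\mathscr{F}_{pp}(r)\sim c_{pp}\,(\gamma B r)^{-(2\alpha+2\beta-1)/(2\alpha)}$ and $\tfrac{1}{\gamma B}\mathscr{K}_{pp}(r)\sim c_{K}\,\gamma\,(\gamma B r)^{-(4\alpha-1)/(2\alpha)}$, so the comparison of exponents reduces to $-(2\alpha+2\beta-1)/(2\alpha) > -(4\alpha-1)/(2\alpha)$, i.e.\ $\beta<\alpha$, which is precisely the Phase II hypothesis. Thus $\tfrac{1}{\gamma B}\mathscr{K}_{pp}(r)\lesssim \mathscr{F}_{pp}(r)$ uniformly on $\{\gamma B r > M\}\cap\{\gamma B r \lesssim d^{2\alpha}\}$, and by monotonicity of the two power laws the constant can be chosen independent of $d$.

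The main obstacle is the uniformity of the comparison across the full range of $r$: the strict inequality $\beta<\alpha$ gives us pointwise domination of the two asymptotic forms, but one must check that the constants implicit in "$\sim$" are uniform over admissible $v,d$ and do not degenerate as $\gamma Br$ approaches either endpoint $M$ or $d^{2\alpha}/M$. At the lower end this is handled by taking $M$ large enough that the asymptotic expansions from Propositions~\ref{prop:forcing_pure_point}, \ref{prop:forcing_absolutely_continuous}, \ref{prop:forcing_point_mass_0}, and \ref{prop:kernel_asymptotic} are tight to the desired precision; at the upper end, once $\gamma B r \gtrsim d^{2\alpha}$ the forcing is comparable to $\mathscr{F}_0$ and the kernel satisfies $\tfrac{1}{\gamma B}\mathscr{K}_{pp}(r) \lesssim \mathscr{F}_0(r)$ directly from the asymptotics (using $2\beta>1$ so $\mathscr{F}_0\asymp d^{-2\alpha}$ and $\mathscr{K}_{pp}$ has already decayed past this level). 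No new tool beyond the results already proved is needed; the proof is essentially an asymptotic bookkeeping argument exploiting the inequality $\beta<\alpha$.
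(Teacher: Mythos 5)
Your proposal is correct and follows essentially the same route as the paper's proof: reduce via Theorem~\ref{thm:approx_sol_Volterra_appendix} to the forcing and kernel functions, decompose into $\mathscr{F}_0,\mathscr{F}_{pp},\mathscr{F}_{ac},\mathscr{K}_{pp}$ via Propositions~\ref{prop:forcing_pure_point}, \ref{prop:forcing_absolutely_continuous}, \ref{prop:forcing_point_mass_0}, \ref{prop:kernel_asymptotic}, and absorb $\tfrac{1}{\gamma B}\mathscr{K}_{pp}$ into $\mathscr{F}_{pp}$ using exactly the exponent comparison $\beta<\alpha$, with the endpoints $\gamma B r \approx M$ and $\gamma B r \gtrsim d^{2\alpha}$ handled as in the paper. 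The only cosmetic difference is that the paper organizes the argument as an explicit case split over ranges of $\gamma B r$, while you phrase the same bounds as a uniformity check on the asymptotic constants.
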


\begin{proof} By Theorem~\ref{thm:approx_sol_Volterra_appendix}, we know that it suffices to look at the forcing function $\mathscr{F}$ and kernel function $\mathscr{K}$. Moreover, in this regime, we have that $\gamma$ and $B$ are constant (see Proposition~\ref{prop: sufficient_conditions_learning_rate}). 

The rest of the argument relies on the bounds found in Proposition~\ref{prop:forcing_pure_point}, ($\mathscr{F}_{pp})$, Proposition~\ref{prop:forcing_absolutely_continuous} ($\mathscr{F}_{ac}$), Proposition~\ref{prop:forcing_point_mass_0}, ($\mathscr{F}_0)$, and Proposition~\ref{prop:kernel_asymptotic} ($\mathscr{K}_{pp}$).

\textit{$\gamma B r \le M_0$, for some $M_0$:}  First, we have that $\tfrac{1}{\gamma B} \mathscr{K}_{pp}(r) \le C_0 \times \mathscr{F}_{pp}(r)$
(Proposition~\ref{prop:kernel_asymptotic}) and $\mathscr{F}_{ac}(r) \le C_0 \times \mathscr{F}_{pp}(r)$ (Proposition~\ref{prop:forcing_absolutely_continuous}) for some constant $C_0 > 0$. The constant $M_0$ is where the asymptotic of $\mathscr{F}_{pp}$ starts to apply.

\textit{$M_0 \le \gamma B r \le  M_1$, for some $M_0$ and for all $M_1 > M_0$:} We see that $(\gamma Br)^{-2 + 1/(2\alpha)} < (\gamma Br)^{-(1 + \beta/\alpha) + 1/(2\alpha)}$ as $\beta < \alpha$ in this phase. Thus, using Proposition~\ref{prop:forcing_pure_point} and Proposition~\ref{prop:kernel_asymptotic}, we have that $\frac{1}{\gamma B} \mathscr{K}_{pp}(r) \le C_1 \times \mathscr{F}_{pp}(r)$ for some $C_1 > 0$. A quick computation shows that $\mathscr{F}_{ac}(r) \le \mathscr{F}_{pp}(r)$.

\textit{$M_1 \le \gamma B r \le  M_2 d^{2\alpha}$, for any $M_1$ and some $M_2$:} The $M_2$ is the smallest of the two endpoints for the asymptotics of $\mathscr{F}_{pp}$ and $\mathscr{F}_{ac}$. As in the previous regime, we have that $\tfrac{1}{\gamma B} \mathscr{K}_{pp}(r) \lesssim \mathscr{F}_{pp}(r)$. In this region, $\mathscr{F}_{ac}(r) \asymp d^{-1} (\gamma B r)^{-1 + 1/(2\alpha)}$ and $\mathscr{F}_{pp}(r) \asymp (\gamma B r)^{-(1 + \beta/\alpha) + 1/(2\alpha)}$. We see at $(\gamma B r) = d^{2\alpha}$ that  $(\gamma B r)^{-\beta/\alpha} \le (d^{2 \alpha})^{-\beta/\alpha} = d^{-2\beta} \le d^{-1}$ as $2\beta > 1$. Therefore, at $\gamma B r = d^{2\alpha}$,  $\mathscr{F}_{pp}(r) \lesssim \mathscr{F}_{ac}(r)$ and we started, i.e., when $r = M_1$ with $\mathscr{F}_{ac}(r) \lesssim \mathscr{F}_{pp}(r)$. Therefore, we must change in this regime to being $\mathscr{F}_{ac}$ dominate.\\

\textit{$M_2 d^{2\alpha} \le \gamma B r$ for all $M_2$:} In this case, all terms are bounded above by $\mathscr{F}_0(r)$.
\end{proof}

As a consequence of the argument above, we know that \begin{equation}
\label{eq:phase_2_order_formula}
\begin{aligned}
    \mathscr{P}(r) \approx \begin{cases}
    \mathscr{F}_{pp}(r), & \text{if $\gamma B r \le D_0$}\\
    \mathscr{F}_{ac}(r), & \text{if $D_0 \le \gamma B r \le D_1$}\\
    \mathscr{F}_{0}(r), & \text{if $\gamma B r \ge D_1$}
    \end{cases} \quad \text{for some $D_0, D_1$ that depend on $d$}. 
\end{aligned}
\end{equation}

\paragraph{Phase III: SGD noise appears, $(2 \beta > 1, 2\alpha > 1, \beta > \alpha)$}
In this case, we see that SGD changes the dynamics over gradient flow. In particular, 
\begin{itemize}
    \item limit level is unaffected by $\beta$;
    \item absolutely continuous forcing function takes over for iterations $r \in (Md, d^{2 \alpha}/M)$ for some $M$;
    \item SGD noise regulates convergence.
\end{itemize}
Thus, we have that
\begin{equation}
 \mathscr{P}(r) \asymp \mathscr{F}_{ac}(r) + \mathscr{F}_0(r)  + \tfrac{1}{\gamma B} \mathscr{K}_{pp}(r). 
\end{equation}

\begin{proposition}[Phase III: $2 \beta > 1$, $2\alpha > 1$, $\beta > \alpha$] \label{prop:phase_3_loss_formula} Suppose $2 \beta > 1$, $2 \alpha > 1$, and $\beta > \alpha$. Suppose the learning rate $\gamma$ and batch $B > 0$ satisfy at most half the convergence threshold in Proposition~\ref{prop: sufficient_conditions_learning_rate}. Then there exists an $M > 0$ large and constants $C = C(\alpha, \beta, M)$ and $c = c(\alpha, \beta, M)$, independent of $d$, so that for all admissible $v$ and $d$, for all $\gamma B r > M$
\begin{equation}
\begin{aligned}
  c \times \big (  \mathscr{F}_{ac}(r) + \mathscr{F}_0(r)  + \tfrac{1}{\gamma B} \mathscr{K}_{pp}(r) \big ) \le \mathscr{P}(r) \le C \times \big ( \mathscr{F}_{ac}(r) + \mathscr{F}_0(r)  + \tfrac{1}{\gamma B} \mathscr{K}_{pp}(r) \big ). 
\end{aligned}
\end{equation}
\end{proposition}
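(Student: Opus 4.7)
The plan is to mimic closely the structure of the proof of Proposition~\ref{prop:phase_2_loss_formula}. By Proposition~\ref{prop: sufficient_conditions_learning_rate}, in the trace-class regime $2\alpha>1$ we can take $\gamma$ and $B$ constant in $d$, so Theorem~\ref{thm:approx_sol_Volterra_appendix} gives
\als{
\mathscr{P}(r)\asymp \mathscr{F}(r)+\tfrac{1}{\gamma B}\mathscr{K}(r),
}
and it remains to analyze $\mathscr{F}(r)=\mathscr{F}_{pp}(r)+\mathscr{F}_{ac}(r)+\mathscr{F}_{0}(r)+\text{errors}_{\mathscr{F}}$ and $\mathscr{K}(r)=\mathscr{K}_{pp}(r)+\text{errors}_{\mathscr{K}}$, where the error terms are absorbed into constant prefactors by the definitions in Section~\ref{sec:intro_forcing_kernel_functions}. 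The novelty in Phase III versus Phase II is that $\beta>\alpha$, which causes $\mathscr{F}_{pp}$ to be dominated by $\tfrac{1}{\gamma B}\mathscr{K}_{pp}$ on the entire relevant window, so $\mathscr{F}_{pp}$ disappears from the final estimate.

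The key asymptotic comparison, available from Table~\ref{table:forcing function_appendix} (Propositions~\ref{prop:forcing_pure_point} and \ref{prop:kernel_asymptotic}), is
\als{
\frac{\mathscr{F}_{pp}(r)}{\tfrac{1}{\gamma B}\mathscr{K}_{pp}(r)}\asymp (\gamma B r)^{\,2-(1+\beta/\alpha)}=(\gamma B r)^{-(\beta/\alpha-1)}.
}
Since $\beta>\alpha$, the exponent is strictly negative, so for any $M>0$ fixed large enough $\mathscr{F}_{pp}(r)\le C\cdot \tfrac{1}{\gamma B}\mathscr{K}_{pp}(r)$ whenever $\gamma B r\ge M$. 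This immediately allows us to drop $\mathscr{F}_{pp}$ from the list of dominant terms.

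I would then break $[M,\infty)$ into three windows, matching the $\mathscr{F}_{ac}$-analysis used in the Phase II proof:
\begin{enumerate}[leftmargin=*]
\item $M\le \gamma B r \le M_1 d$: comparing $\mathscr{F}_{ac}(r)\asymp d^{-1}(\gamma B r)^{-1+1/(2\alpha)}$ to $\tfrac{1}{\gamma B}\mathscr{K}_{pp}(r)\asymp (\gamma B r)^{-2+1/(2\alpha)}$ gives a ratio $\asymp d^{-1}(\gamma B r)$, so $\mathscr{K}_{pp}$ dominates here. Also $\mathscr{F}_0\le C\mathscr{F}_{ac}$ since $\mathscr{F}_0\asymp d^{-2\alpha}$ and $\mathscr{F}_{ac}$ only reaches that level at $\gamma B r\asymp d^{2\alpha}$ (Proposition~\ref{prop:forcing_point_mass_0} and \ref{prop:forcing_absolutely_continuous}).
\item $M_1 d\le \gamma B r\le M_2 d^{2\alpha}$: the ratio computation above flips, $\mathscr{F}_{ac}$ becomes the dominant term, while $\mathscr{F}_{0}$ is still smaller (it matches $\mathscr{F}_{ac}$ only at the right endpoint $\gamma B r\asymp d^{2\alpha}$).
\item $\gamma B r\gtrsim d^{2\alpha}$: by the last statement in Section~\ref{sec:intro_forcing_kernel_functions}, $\mathscr{F}(r)\asymp \mathscr{F}_0(r)$ and $\mathscr{K}_{pp}(r)\lesssim \mathscr{F}_0(r)$, so the limiting-risk term takes over.
\end{enumerate}
In each of the three windows, one of the three surviving quantities $\{\mathscr{F}_{ac}(r),\mathscr{F}_0(r),\tfrac{1}{\gamma B}\mathscr{K}_{pp}(r)\}$ dominates the others up to absolute constants; combining these with the earlier bound $\mathscr{F}_{pp}\lesssim \tfrac{1}{\gamma B}\mathscr{K}_{pp}$ yields the two-sided estimate in the proposition.

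The only step that requires genuine care is the burn-in at small $\gamma B r\le M$, where the asymptotics of Propositions~\ref{prop:forcing_pure_point}, \ref{prop:forcing_absolutely_continuous}, and \ref{prop:kernel_asymptotic} have not yet kicked in; I would handle this exactly as in Phase II by noting that for $\gamma B r\le M$ each of $\mathscr{F}_{pp},\mathscr{F}_{ac},\tfrac{1}{\gamma B}\mathscr{K}_{pp}$ is bounded by an absolute constant, and by monotonicity in $r$ all bounds propagate into this region with (possibly enlarged) constants. The main technical obstacle in the whole argument is simply bookkeeping the window endpoints so that the asymptotic regimes of $\mathscr{F}_{ac}$ and $\mathscr{K}_{pp}$ overlap with those of $\mathscr{F}_0$, which is why we pick up the upper bound $\gamma B r\le d^{2\alpha}/M$ from Proposition~\ref{prop:subexponential} before transitioning to the $\mathscr{F}_0$-dominated tail.
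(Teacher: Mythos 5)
Your proposal is correct and follows essentially the same route as the paper's proof: reduce to $\mathscr{F}+\tfrac{1}{\gamma B}\mathscr{K}$ via Theorem~\ref{thm:approx_sol_Volterra_appendix}, invoke the component asymptotics (Propositions~\ref{prop:forcing_pure_point}, \ref{prop:forcing_absolutely_continuous}, \ref{prop:forcing_point_mass_0}, \ref{prop:kernel_asymptotic}), use $\beta>\alpha$ to absorb $\mathscr{F}_{pp}$ into $\tfrac{1}{\gamma B}\mathscr{K}_{pp}$, and compare terms window by window up to $\gamma B r \asymp d^{2\alpha}$. The only difference is cosmetic: you pin the $\mathscr{K}_{pp}$-to-$\mathscr{F}_{ac}$ crossover explicitly at $\gamma B r \asymp d$, while the paper merely argues the dominance flips somewhere between the window endpoints; both are adequate.
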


\begin{proof}
By Theorem~\ref{thm:approx_sol_Volterra_appendix}, we know that it suffices to look at the forcing function $\mathscr{F}$ and kernel function $\mathscr{K}$. Moreover, in this regime, we have that $\gamma$ and $B$ are constant (see Proposition~\ref{prop: sufficient_conditions_learning_rate}). 

The rest of the argument relies on the bounds found in Proposition~\ref{prop:forcing_pure_point}, ($\mathscr{F}_{pp})$, Proposition~\ref{prop:forcing_absolutely_continuous} ($\mathscr{F}_{ac}$), Proposition~\ref{prop:forcing_point_mass_0}, ($\mathscr{F}_0)$, and Proposition~\ref{prop:kernel_asymptotic} ($\mathscr{K}_{pp}$).

\textit{$\gamma B r \le M_0$, for some $M_0$:}  First, we have that $ \mathscr{F}_{pp}(r) \le C_0 \times \tfrac{1}{\gamma B} \mathscr{K}_{pp}(r) $
(Proposition~\ref{prop:kernel_asymptotic}) and $\mathscr{F}_{ac}(r) \le C_0 \times  \tfrac{1}{\gamma B} \mathscr{K}_{pp}(r)$ (Proposition~\ref{prop:forcing_absolutely_continuous}) for some constant $C_0 > 0$. The constant $M_0$ is where the asymptotic of $\mathscr{K}_{pp}$ starts to apply.

\textit{$M_0 \le \gamma B r \le  M_1$, for some $M_0$ and for all $M_1 > M_0$:} We see that $(\gamma Br)^{-2 + 1/(2\alpha)} > (\gamma Br)^{-(1 + \beta/\alpha) + 1/(2\alpha)}$ as $\beta > \alpha$ in this phase. Thus, using Proposition~\ref{prop:forcing_pure_point} and Proposition~\ref{prop:kernel_asymptotic}, we have that $\mathscr{F}_{pp} \le C_1 \times\frac{1}{\gamma B} \mathscr{K}_{pp}(r)$ for some $C_1 > 0$. A quick computation shows that $\mathscr{F}_{ac}(r) \le \mathscr{K}_{pp}(r)$.

\textit{$M_1 \le \gamma B r \le  M_2 d^{2\alpha}$, for any $M_1$ and some $M_2$:} The $M_2$ is the smallest of the two endpoints for the asymptotics of $\mathscr{K}_{pp}$ and $\mathscr{F}_{ac}$. As in the previous regime, we have that $\tfrac{1}{\gamma B} \mathscr{K}_{pp}(r) \lesssim \mathscr{F}_{pp}(r)$. In this region, $\mathscr{F}_{ac}(r) \asymp d^{-1} r^{-1 + 1/(2\alpha)}$ and $\mathscr{K}_{pp}(r) \asymp r^{-2 + 1/(2\alpha)}$. We see at $\gamma B r = d^{2\alpha}$ that  $(\gamma B r)^{-1} \le (d^{2 \alpha})^{-1} = d^{-2\alpha} \le d^{-1}$ as $2\alpha > 1$. Therefore, at $(\gamma B r) \asymp d^{2\alpha}$,  $\mathscr{K}_{pp}(r) \lesssim \mathscr{F}_{ac}(r)$ and we started, i.e., when $r = M_1$ with $\mathscr{F}_{ac}(r) \lesssim \mathscr{K}_{pp}(r)$. Therefore, we must change in this regime to being $\mathscr{F}_{ac}$ dominate.\\

\textit{$M_2 d^{2\alpha} \le r\gamma B$ for all $M_2$:} In this case, all terms are bounded above by $\mathscr{F}_0(r)$.
\end{proof}

As a consequence of the argument above, we know that \begin{equation} \label{eq:phase_3_order_formula}
\begin{aligned}
    \mathscr{P}(r) \approx \begin{cases}
    \mathscr{K}_{pp}(r), & \text{if $\gamma B r \le D_0$}\\
    \mathscr{F}_{ac}(r), & \text{if $D_0 \le \gamma B r \le D_1$}\\
    \mathscr{F}_{0}(r), & \text{if $\gamma B r \ge D_1$}
    \end{cases} \quad \text{for some $D_0, D_1$ that depend on $d$}. 
\end{aligned}
\end{equation}

\subsubsection{Below the high-dimensional line (Phases IVa, IVb, Ib, Ic)} \label{sec:loss_below_high_dim_line}

One of the main differences between the previous regime and this regime is that $V$ can not be taken to $\infty$ independent of $d$. As a result, we call this below the \textit{high-dimensional line} and it is precisely bounded by whether $2 \alpha$ is summable or not. 

The four main characteristics of this regime are:
\begin{itemize}
    \item learning rate $\gamma$ scales like $v^{-1 + 2 \alpha}$;
    \item SGD loss, i.e., $\EE[\CMscr{P}(\theta_r)]$ self-concentrates;
    \item $v$ can not be too large, i.e., $d$ and $v$ are proportional;
    \item batch can be large (i.e., $\gamma B \le 1)$ since the learning rate is small ($\gamma \sim v^{-1 + 2 \alpha}$).
\end{itemize}
In Phases IV, Ib, and Ic, because $j^{-2\alpha}$ is not summable, the summation of the $j$ depends on the dimension $v$. Thus, the kernel norm is 
\[
 \|\mathscr{K}\| 
 \sim \frac{\gamma}{2} \sum_{j=1}^v j^{-2\alpha}
 \sim \frac{\gamma}{2(1-2\alpha)} v^{1-2\alpha},
\]
where the learning rate $\gamma$ is chosen so that $\|\mathscr{K}\|$ is constant, i.e.,  $\gamma = \frac{\tilde{\gamma}}{\|\mathscr{K}\|}$ where $\tilde{\gamma} > 0$ is a constant. 

\paragraph{Phase IV, $(2 \beta > 1, \tfrac{1}{4} < \alpha < \tfrac{1}{2})$.} 
In this phase, we have the following
\begin{itemize}
    \item limiting value of the loss that SGD converges to is unaffected by $\beta$;
    \item pure point forcing function plays a role;
    \item absolutely continuous part of the spectrum does not contribute to the forcing function;
    \item SGD noise affect the loss curves.
\end{itemize}
In this phase, the loss curve is 
\[
\mathscr{P}(r) \asymp \mathscr{F}_{pp}(r) + \mathscr{F}_0(r)  + \tfrac{1}{\gamma B} \mathscr{K}_{pp}(r). 
\]
The following gives the precise statement. 
\begin{proposition}[Phase IV: $2 \beta > 1$, $ \tfrac{1}{4} < \alpha < \tfrac{1}{2}$] \label{prop:phase_4_loss_formula} Suppose $2 \beta > 1$ and $\tfrac{1}{4} < \alpha < \tfrac{1}{2}$. Suppose the learning rate $\gamma$ and batch $B > 0$ satisfy at most half the convergence threshold in Proposition~\ref{prop: sufficient_conditions_learning_rate}. Then there exists an $M > 0$ large and constants $C = C(\alpha, \beta, M)$ and $c = c(\alpha, \beta, M)$, independent of $d$, so that for all admissible $v$ and $d$, for all $\gamma B r > M$
\begin{equation}
\begin{aligned}
c \times \big ( \mathscr{F}_{pp}(r) + \mathscr{F}_0(r)  + \tfrac{1}{\gamma B} \mathscr{K}_{pp}(r) \big ) \le \mathscr{P}(r)
\le C \times \big ( 
  \mathscr{F}_{pp}(r) + \mathscr{F}_0(r)  + \tfrac{1}{\gamma B} \mathscr{K}_{pp}(r) \big ). 
\end{aligned}
\end{equation}
\end{proposition}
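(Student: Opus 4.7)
The strategy mirrors the proofs of Propositions~\ref{prop:phase_2_loss_formula} and \ref{prop:phase_3_loss_formula}: reduce the claim to an estimate on the forcing and kernel functions via Theorem~\ref{thm:approx_sol_Volterra_appendix}, then apply the asymptotic formulas for $\mathscr{F}_{pp}, \mathscr{F}_{ac}, \mathscr{F}_0, \mathscr{K}_{pp}$ recorded in Table~\ref{table:forcing function_appendix}. By Theorem~\ref{thm:approx_sol_Volterra_appendix}, for $\gamma B r > M$ large,
\[
\mathscr{P}(r) \asymp \mathscr{F}(r) + \tfrac{1}{\gamma B}\mathscr{K}(r) \asymp \mathscr{F}_{pp}(r) + \mathscr{F}_{ac}(r) + \mathscr{F}_0(r) + \tfrac{1}{\gamma B}\mathscr{K}_{pp}(r),
\]
where in the last step I absorb $\text{errors}_{\mathscr{F}}$ and $\text{errors}_{\mathscr{K}}$ into the leading pieces as in the previous phases.

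The first key observation specific to Phase IV is that we are below the high-dimensional line, so $2\alpha<1$ and $2\beta>1$; Table~\ref{table:forcing function_appendix} then gives $\mathscr{F}_{ac}(r) \le C \times \mathscr{F}_0(r)$ for some $d$-independent $C > 0$. Hence $\mathscr{F}_{ac}$ is absorbed into $\mathscr{F}_0$ and the proposed three-term expression is recovered as an upper bound (and trivially as a lower bound, since removing a nonnegative term only decreases a sum). This reduction is what makes Phase IV structurally different from Phase III: the distortion term never surfaces, so the SGD noise $\tfrac{1}{\gamma B}\mathscr{K}_{pp}$ must be compared directly with $\mathscr{F}_{pp}$ and $\mathscr{F}_0$.

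The second observation is that the learning rate here must obey $\gamma \sim v^{2\alpha-1}$ to keep $\|\mathscr{K}\|$ of order one (Proposition~\ref{prop:kernel_norm} and the subsequent remark), and that $\alpha > \tfrac{1}{4}$ is precisely the hypothesis required in Theorem~\ref{thm:approx_sol_Volterra_appendix} for the kernel asymptotic of Proposition~\ref{prop:kernel_asymptotic} to hold. With the scaling of $\gamma$ fixed, the asymptotics of Table~\ref{table:forcing function_appendix} yield power laws in $\gamma B r$ for each of $\mathscr{F}_{pp}, \mathscr{F}_0, \mathscr{K}_{pp}$ on the range $M \le \gamma B r \le d^{2\alpha}/M$, and each term is dominated by $\mathscr{F}_0$ beyond $\gamma B r \gtrsim d^{2\alpha}$. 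I would then split the range of $r$ into the usual three regimes (initial transient with $\gamma B r \le M_0$, intermediate $M_0 \le \gamma B r \le M_2 d^{2\alpha}$, and limit-level regime $\gamma B r \ge M_2 d^{2\alpha}$) and verify on each that the three candidate terms indeed govern both the upper and lower bound up to constants $c(\alpha,\beta,M), C(\alpha,\beta,M)$.

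The main subtlety --- which is really the only difference from the Phase III argument --- is book-keeping on the comparison of $\mathscr{F}_{pp}$ and $\tfrac{1}{\gamma B}\mathscr{K}_{pp}$ in the intermediate regime. Unlike Phase III, below the high-dimensional line one cannot a priori decide which of the two leads, and the relative ordering depends on $(\alpha,\beta)$; this is exactly the split into the sub-phases IVa and IVb. The proposition only asserts $\asymp$, so it suffices to show that each of the three terms appears as a dominant contribution on at least part of the trajectory and that on the rest of the trajectory it is controlled by (a constant times) the maximum of the other two, which follows from direct comparison of the power laws in Table~\ref{table:forcing function_appendix} together with the $\gamma \sim v^{2\alpha-1}$ scaling. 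The handling of the transient $\gamma B r \le M_0$ uses monotonicity of $\mathscr{F}$ and $\mathscr{K}$ as in the earlier phases.
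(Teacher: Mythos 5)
Your sketch follows essentially the same route as the paper's proof: reduce to $\mathscr{F}+\tfrac{1}{\gamma B}\mathscr{K}$ via Theorem~\ref{thm:approx_sol_Volterra_appendix}, discard $\mathscr{F}_{ac}$ because $\mathscr{F}_{ac}\lesssim \mathscr{F}_0$ below the high-dimensional line, use $\gamma\asymp d^{2\alpha-1}$, and compare the power laws of $\mathscr{F}_{pp}$, $\tfrac{1}{\gamma B}\mathscr{K}_{pp}$, $\mathscr{F}_0$ regime by regime, so the argument is correct. One minor inaccuracy that does not affect the proposition: the IVa/IVb split refers to where the \emph{compute-optimal} tradeoff occurs, not to the ordering of terms along the loss curve, which throughout Phase IV is $\mathscr{F}_{pp}$ then $\tfrac{1}{\gamma B}\mathscr{K}_{pp}$ then $\mathscr{F}_0$ (since at $\gamma B r\asymp d^{2\alpha}$ one has $d^{2\alpha-1}(\gamma Br)^{-1}=d^{-1}\ge d^{-2\beta}$ whenever $2\beta>1$).
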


\begin{proof}
By Theorem~\ref{thm:approx_sol_Volterra_appendix}, we know that it suffices to look at the forcing function $\mathscr{F}$ and kernel function $\mathscr{K}$. Moreover, in this regime, we have that $\gamma$ decreases like $d^{2\alpha-1}$ (see Proposition~\ref{prop: sufficient_conditions_learning_rate}). 

The rest of the argument relies on the bounds found in Proposition~\ref{prop:forcing_pure_point}, ($\mathscr{F}_{pp})$, Proposition~\ref{prop:forcing_absolutely_continuous} ($\mathscr{F}_{ac}$), Proposition~\ref{prop:forcing_point_mass_0}, ($\mathscr{F}_0)$, and Proposition~\ref{prop:kernel_asymptotic} ($\mathscr{K}_{pp}$).

We first note there is no $\mathscr{F}_{ac}(r) \lesssim \mathscr{F}_0$ and therefore it is too small to contribute.  

\textit{$\gamma B r \le M_0$, for some $M_0$:}  First, we have that $ \tfrac{1}{\gamma B} \mathscr{K}_{pp}(r) \le C_0 \times \mathscr{F}_{pp}(r)  $ for some constant $C_0 > 0$. The constant $M_0$ is where the asymptotic of $\mathscr{F}_{pp}$ starts to apply.

\textit{$M_0 \le \gamma B r \le  M_1$, for some $M_0$ and for all $M_1 > M_0$:} We see that $\gamma (\gamma Br)^{-2 + 1/(2\alpha)} <  (\gamma Br)^{-(1 + \beta/\alpha) + 1/(2\alpha)}$ since $\gamma \asymp d^{2\alpha -1}$ and $2\alpha < 1$ in this phase. Thus, using Proposition~\ref{prop:forcing_pure_point} and Proposition~\ref{prop:kernel_asymptotic}, we have that $\frac{1}{\gamma B} \mathscr{K}_{pp}(r) \le C_1 \times \mathscr{F}_{pp}$ for some $C_1 > 0$.

\textit{$M_1 \le \gamma B r \le  M_2 d^{2\alpha}$, for any $M_1$ and some $M_2$:} The $M_2$ is the smallest of the two endpoints for the asymptotics of $\mathscr{K}_{pp}$ and $\mathscr{F}_{pp}$. In this region, $\mathscr{F}_{pp}(r) \asymp (\gamma B r)^{-(1 + \beta/\alpha) + 1/(2\alpha)}$ and $\gamma \times \frac{1}{\gamma B} \mathscr{K}_{pp}(r) \asymp  \gamma \times (\gamma B r)^{-2 + 1/(2\alpha)} \asymp d^{2\alpha -1} \times (\gamma B r)^{-2 + 1/(2\alpha)}$. We see at $r = d^{2\alpha}$ that  $d^{2\alpha -1} (\gamma Br)^{-1} = d^{-1} \ge d^{-2\beta} = (d^{2\alpha})^{-\beta / \alpha}$. Thus $\mathscr{F}_{pp}(r) \lesssim \tfrac{1}{\gamma B}\mathscr{K}_{pp}(r)$ and we started, i.e., when $r = M_1$ with $\mathscr{K}_{pp}(r) \lesssim \mathscr{F}_{pp}(r)$. Therefore, we must change in this regime to being $\mathscr{K}_{pp}$ dominate.\\

\textit{$M_2 d^{2\alpha} \le \gamma B r$ for all $M_2$:} In this case, all terms are bounded above by $\mathscr{F}_0(r)$.
\end{proof}

As a consequence of the argument above, we know that \begin{equation}\label{eq:phase_4_order_formula}
\begin{aligned}
    \mathscr{P}(r) \approx \begin{cases}
    \mathscr{F}_{pp}(r), & \text{if $\gamma B r \le D_0$}\\
    \tfrac{1}{\gamma B} \mathscr{K}_{pp}(r), & \text{if $D_0 \le \gamma B r \le D_1$}\\
    \mathscr{F}_{0}(r), & \text{if $\gamma B r \ge D_1$}
    \end{cases} \quad \text{for some $D_0, D_1$ that depend on $d$}. 
\end{aligned}
\end{equation}

\paragraph{Phase Ib, $(2 \beta < 1, 0.25 < \alpha < 0.5, 2(\alpha + \beta) > 1)$.}
Phase Ia, Ib, and Ic are quite similar as the dynamics of SGD only depend on the forcing function pure point and limiting value. In this phase, the learning rate $\gamma$ is dimension dependent, unlike Phase Ia, and the following hold
\begin{itemize}
    \item limiting value of the loss that SGD converges to is $d^{-2 \alpha + 1-2 \beta}$;
    \item absolutely continuous part of the spectrum does not contribute to the forcing function;
    \item SGD noise not does affect the loss curves.
\end{itemize}
In this phase, the loss curve is 
\[
\mathscr{P}(r) \asymp \mathscr{F}_{pp}(r) + \mathscr{F}_0(r). 
\]

Although we did not prove the statement for $\alpha < 0.25$ as we do not have estimates for the kernel function, we believe that statement still holds. We believe that the kernel function stops becoming power law when $\alpha < 0.25$, but the forcing function is still power law.

The following gives the precise statement. 
\begin{proposition}[Phase Ib: $2 \beta < 1$, $\tfrac{1}{4} < \alpha < \tfrac{1}{2}$, $2(\alpha + \beta) > 1$] \label{prop:phase_1b_loss_formula}
Suppose $2 \beta < 1$, $2(\alpha + \beta) > 1$, and $\tfrac{1}{4} < \alpha < \tfrac{1}{2}$. Suppose the learning rate $\gamma$ and batch $B > 0$ satisfy at most half the convergence threshold in Proposition~\ref{prop: sufficient_conditions_learning_rate}. Then there exists an $M > 0$ large and constants $C = C(\alpha, \beta, M)$ and $c = c(\alpha, \beta, M)$, independent of $d$, so that for all admissible $v$ and $d$, for all $\gamma B r > M$
\begin{equation}
\begin{aligned}
c \times \big ( \mathscr{F}_{pp}(r) + \mathscr{F}_0(r) \big ) \le \mathscr{P}(r)
\le C \times  \big ( \mathscr{F}_{pp}(r) + \mathscr{F}_0(r) \big ). 
\end{aligned}
\end{equation}
\end{proposition}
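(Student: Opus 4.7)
The plan is to mirror the template of the Phase Ia and Phase IV proofs above, carrying the dimension-dependent learning rate $\gamma \asymp d^{2\alpha-1}$ (forced by $\|\mathscr{K}\|\asymp 1$ via Proposition~\ref{prop:kernel_norm} below the high-dimensional line) through every estimate. First I invoke Theorem~\ref{thm:approx_sol_Volterra_appendix} to trap the loss as $\mathscr{P}(r) \asymp \mathscr{F}(r) + \tfrac{1}{\gamma B}\mathscr{K}(r)$ on $\{\gamma B r \ge M\}$, then apply the decompositions $\mathscr{F}=\mathscr{F}_{pp}+\mathscr{F}_{ac}+\mathscr{F}_0+\text{errors}_{\mathscr{F}}$ and $\mathscr{K}=\mathscr{K}_{pp}+\text{errors}_{\mathscr{K}}$. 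Because $2\beta<1$ in Phase Ib, the prefactor $c_\beta$ in the definition of $\mathscr{F}_{ac}$ (Table~\ref{table:forcing function_appendix}) vanishes, so $\mathscr{F}_{ac}\equiv 0$. The whole proposition then reduces to the single claim that
\[
\tfrac{1}{\gamma B}\mathscr{K}_{pp}(r) \;\lesssim\; \mathscr{F}_{pp}(r)+\mathscr{F}_0(r), \qquad M \le \gamma B r.
\]

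The second step is the key power-counting. Using Proposition~\ref{prop:kernel_asymptotic} with $\gamma\asymp d^{2\alpha-1}$ one has $\tfrac{1}{\gamma B}\mathscr{K}_{pp}(r) \asymp d^{2\alpha-1}(\gamma B r)^{-2+1/(2\alpha)}$, while Proposition~\ref{prop:forcing_pure_point} gives $\mathscr{F}_{pp}(r) \asymp (\gamma B r)^{-1-\beta/\alpha+1/(2\alpha)}$, so that
\[
\frac{\tfrac{1}{\gamma B}\mathscr{K}_{pp}(r)}{\mathscr{F}_{pp}(r)} \;\asymp\; d^{2\alpha-1}(\gamma B r)^{(\beta-\alpha)/\alpha}.
\]
Since this ratio is a pure power of $r$ it is monotone, so its supremum over $M \le \gamma B r \le d^{2\alpha}$ is achieved at one endpoint. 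At $\gamma B r \asymp 1$ the ratio equals $d^{2\alpha-1}=o(1)$ because $\alpha<\tfrac12$, and at $\gamma B r \asymp d^{2\alpha}$ the exponents telescope to $d^{2\beta-1}=o(1)$ because $\beta<\tfrac12$. Both endpoints vanishing forces SGD noise to be dominated by the gradient-flow piece throughout the pre-saturation window. This is where the hypothesis $2\beta<1$ (and not just $2(\alpha+\beta)>1$) enters essentially: if $2\beta>1$, the endpoint at $d^{2\alpha}$ flips sign and one recovers the Phase IV behavior instead.

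For the saturation window $\gamma B r \gtrsim d^{2\alpha}$, I combine the kernel estimate at $\gamma B r\asymp d^{2\alpha}$, which gives $\tfrac{1}{\gamma B}\mathscr{K}_{pp}\asymp d^{-2\alpha}$, with the limiting-risk asymptotic $\mathscr{F}_0(r)\asymp d^{-2\alpha+1-2\beta}$ from Proposition~\ref{prop:forcing_point_mass_0}. Because $1-2\beta>0$ the SGD-noise value at the crossover is strictly smaller than $\mathscr{F}_0$, and monotonicity of $\mathscr{K}_{pp}$ extends this to all larger $r$. The matching lower bound is immediate: Theorem~\ref{thm:approx_sol_Volterra_appendix} yields $\mathscr{P}\ge \mathscr{F}$, and by the tight $\sim$-asymptotics of $\mathscr{F}_{pp}$ and $\mathscr{F}_0$ we have $\mathscr{F}\asymp \max(\mathscr{F}_{pp},\mathscr{F}_0)\asymp \mathscr{F}_{pp}+\mathscr{F}_0$ on the two sub-windows $\gamma B r \lesssim d^{2\alpha}$ and $\gamma B r \gtrsim d^{2\alpha}$ respectively, piecing together to give the required lower bound across all $\gamma B r \ge M$.

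The main obstacle I anticipate is verifying that the Kesten constant in Theorem~\ref{thm:approx_sol_Volterra_appendix} and the kernel-function error bound $|\text{errors}_{\mathscr{K}}|\lesssim \mathscr{K}_{pp}$ remain uniformly controlled as $\gamma=\gamma(d)\to 0$. The Phase IV proof already handles a vanishing learning rate and the bookkeeping is parallel, but there the tradeoff is $\mathscr{F}_{pp}\leftrightarrow \mathscr{K}_{pp}$ rather than the double dominance $\mathscr{F}_{pp},\mathscr{F}_0 \succ \mathscr{K}_{pp}$ required here, so one must check both endpoint inequalities rather than just one. The lower threshold $\alpha=\tfrac14$ is precisely where Proposition~\ref{prop:kernel_asymptotic} stops producing a power-law $\mathscr{K}_{pp}$, so the explicit exponent arithmetic must stay strictly inside $\alpha>\tfrac14$; this is the reason the hypothesis appears in the statement, and it is also why the analogous Phase~Ic (and the extension $\alpha\le\tfrac14$) requires separate treatment with different kernel estimates.
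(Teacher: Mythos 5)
Your proposal is correct and follows essentially the same route as the paper's proof: reduce via Theorem~\ref{thm:approx_sol_Volterra_appendix} to $\mathscr{F}+\tfrac{1}{\gamma B}\mathscr{K}$, note $\mathscr{F}_{ac}\equiv 0$ since $2\beta<1$, and show by the asymptotics of Propositions~\ref{prop:forcing_pure_point}, \ref{prop:forcing_point_mass_0}, \ref{prop:kernel_asymptotic} with $\gamma\asymp d^{2\alpha-1}$ that $\tfrac{1}{\gamma B}\mathscr{K}_{pp}$ is dominated by $\mathscr{F}_{pp}$ on $\gamma B r\lesssim d^{2\alpha}$ and by $\mathscr{F}_0$ beyond. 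Your endpoint check ($d^{2\alpha-1}$ at $\gamma Br\asymp 1$, $d^{2\beta-1}$ at $\gamma Br\asymp d^{2\alpha}$, both $o(1)$) is exactly the paper's comparison $d^{-1}\le d^{-2\beta}$ phrased via a monotone ratio, so the two arguments coincide in substance.
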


\begin{proof}
By Theorem~\ref{thm:approx_sol_Volterra_appendix}, we know that it suffices to look at the forcing function $\mathscr{F}$ and kernel function $\mathscr{K}$. Moreover, in this regime, we have that $\gamma$ decreases like $d^{2\alpha-1}$ (see Proposition~\ref{prop: sufficient_conditions_learning_rate}). 

The rest of the argument relies on the bounds found in Proposition~\ref{prop:forcing_pure_point}, ($\mathscr{F}_{pp})$, Proposition~\ref{prop:forcing_absolutely_continuous} ($\mathscr{F}_{ac}$), Proposition~\ref{prop:forcing_point_mass_0}, ($\mathscr{F}_0)$, and Proposition~\ref{prop:kernel_asymptotic} ($\mathscr{K}_{pp}$).

We first note there is no $\mathscr{F}_{ac}(r)$.

\textit{$\gamma B r \le M_0$, for some $M_0$:}  First, we have that $ \tfrac{1}{\gamma B} \mathscr{K}_{pp}(r) \le C_0 \times \mathscr{F}_{pp}(r)  $ for some constant $C_0 > 0$. The constant $M_0$ is where the asymptotic of $\mathscr{F}_{pp}$ starts to apply.

\textit{$M_0 \le \gamma B r \le  M_1 d^{2\alpha}$, for any $M_0$ and some $M_1$:} The $M_1$ is the smallest of the two endpoints for the asymptotics of $\mathscr{K}_{pp}$ and $\mathscr{F}_{pp}$. In this region, $\mathscr{F}_{pp}(r) \asymp (\gamma B r)^{-(1 + \beta/\alpha) + 1/(2\alpha)}$ and $\gamma \times \frac{1}{\gamma B} \mathscr{K}_{pp}(r) \asymp  \gamma \times (\gamma B r)^{-2 + 1/(2\alpha)} \asymp d^{2\alpha -1} \times (\gamma B r)^{-2 + 1/(2\alpha)}$. We see at $r = d^{2\alpha}$ that  $d^{2\alpha -1} (\gamma Br)^{-1} = d^{-1} \le d^{-2\beta} = (d^{2\alpha})^{-\beta / \alpha}$. Thus $\tfrac{1}{\gamma B} \mathscr{K}_{pp}(r) \lesssim \mathscr{F}_{pp}(r)$ and we started, i.e., when $r = M_1$ with $\mathscr{K}_{pp}(r) \lesssim \mathscr{F}_{pp}(r)$. Therefore, $\mathscr{F}_{pp} $ must dominate.\\

\textit{$M_1 d^{2\alpha} \le \gamma B r$ for all $M_1$:} In this case, all terms are bounded above by $\mathscr{F}_0(r)$.
\end{proof}

We expect Prop.~\ref{prop:phase_1b_loss_formula} to hold with the same conclusions for $2 \beta < 1$, $2 \alpha < 1$, and $2(\alpha + \beta) > 1$.

As a consequence of the argument above, we know that
\begin{equation} \label{eq:phase_1b_order_formula}
\begin{aligned}
    \mathscr{P}(r) \approx \begin{cases}
    \mathscr{F}_{pp}(r), & \text{if $\gamma B r \le D_0$}\\
    \mathscr{F}_{0}(r), & \text{if $\gamma B r \ge D_0$}
    \end{cases} \quad \text{for some $D_0$ that depends on $d$}. 
\end{aligned}
\end{equation}

\paragraph{Phase Ic, $(2 \beta > 1, 0 < \alpha < \tfrac{1}{4}$.}
Lastly, we consider Phase Ic, which is similar to Phases Ia and Ib. The following holds in this phase. 
\begin{itemize}
    \item limiting value of the loss that SGD converges to is $d^{-2 \alpha + 1-2 \beta}$;
    \item absolutely continuous part of the spectrum does not contribute to the forcing function;
    \item SGD noise not does affect the loss curves.
\end{itemize}
In this phase, the loss curve is 
\[
\mathscr{P}(r)  \asymp  \mathscr{F}_{pp}(r) + \mathscr{F}_0(r) . 
\]
Under the assumption that Theorem~\ref{thm:approx_sol_Volterra_appendix} holds for $\alpha > 1/4$, we get the following. 

\begin{proposition}[Phase Ic: $2 \beta > 1$, $0 < \alpha < \tfrac{1}{4}$] 
\label{prop:phase_1c_loss_formula}
Suppose $2 \beta > 1$ and $0 < \alpha < \tfrac{1}{4}$ and Theorem~\ref{thm:approx_sol_Volterra_appendix} holds. Suppose the learning rate $\gamma$ and batch $B > 0$ satisfy at most half the convergence threshold in Proposition~\ref{prop: sufficient_conditions_learning_rate}. Then there exists an $M > 0$ large and constants $C = C(\alpha, \beta, M)$ and $c = c(\alpha, \beta, M)$, independent of $d$, so that for all admissible $v$ and $d$, for all $\gamma B r > M$
\begin{equation}
\begin{aligned}
c \times \big ( \mathscr{F}_{pp}(r) + \mathscr{F}_0(r) \big ) \le \mathscr{P}(r)
\le C \times 
 \big ( \mathscr{F}_{pp}(r) + \mathscr{F}_0(r) \big ). 
\end{aligned}
\end{equation}
\end{proposition}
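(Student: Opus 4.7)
The plan is to follow the template of the proof of Proposition~\ref{prop:phase_1b_loss_formula} (Phase Ib), substituting the different asymptotics that arise when $2\beta>1$. By the assumed Theorem~\ref{thm:approx_sol_Volterra_appendix}, it suffices to show that $\mathscr{F}(r)+\tfrac{1}{\gamma B}\mathscr{K}(r)\asymp \mathscr{F}_{pp}(r)+\mathscr{F}_{0}(r)$ for all $\gamma B r>M$. After absorbing the error terms $\mathrm{errors}_{\mathscr{F}}, \mathrm{errors}_{\mathscr{K}}$ from Section~\ref{sec:forcing_function} and Section~\ref{sec:kernel_function}, the task reduces to comparing the four functions $\mathscr{F}_{pp},\mathscr{F}_{ac},\mathscr{F}_{0},\tfrac{1}{\gamma B}\mathscr{K}_{pp}$. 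Since $2\alpha<1$, the learning rate must be chosen with $\gamma\asymp d^{2\alpha-1}$ so that $\|\mathscr{K}\|$ remains order one (Prop.~\ref{prop:kernel_norm}).

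First I would discard $\mathscr{F}_{ac}$: by Table~\ref{table:forcing function_appendix}, in the regime $2\beta>1$ and $2\alpha<1$, one has $\mathscr{F}_{ac}(r)\le C\,\mathscr{F}_{0}(r)$ for a constant $C$ independent of $d$, so $\mathscr{F}_{ac}$ is absorbed into $\mathscr{F}_{0}$. Next I would show $\tfrac{1}{\gamma B}\mathscr{K}_{pp}(r)\lesssim \mathscr{F}_{pp}(r)+\mathscr{F}_{0}(r)$ by splitting into the same three time windows used for Phase Ib: $\{\gamma B r\le M_{0}\}$, $\{M_{0}\le \gamma B r\le M_{1}d^{2\alpha}\}$, and $\{\gamma B r\ge M_{1}d^{2\alpha}\}$. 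Using the asymptotics from Table~\ref{table:forcing function_appendix} together with $\gamma\asymp d^{2\alpha-1}$ and $B=1$, a direct calculation yields
\[
\frac{\mathscr{F}_{pp}(r)}{\tfrac{1}{\gamma B}\mathscr{K}_{pp}(r)}\;\asymp\;d^{\,\beta(1-2\alpha)/\alpha}\,r^{(\alpha-\beta)/\alpha},
\]
which is large for small $r$ (so $\mathscr{F}_{pp}$ dominates the SGD-noise term initially) and decreases in $r$ since $\beta>\alpha$ in this phase. The crux is to verify that at whatever time $r_{\star}$ the ratio above falls below $1$, the noise term is already dominated by $\mathscr{F}_{0}(r)\asymp d^{-2\alpha}$: evaluating $\tfrac{1}{\gamma B}\mathscr{K}_{pp}$ at $\gamma B r\asymp d^{2\alpha}$ gives precisely $d^{-2\alpha}$, so the limiting-level term takes over before the noise term can matter. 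Finally, for $\gamma B r\ge M_{1}d^{2\alpha}$ all the time-dependent pieces are dominated by $\mathscr{F}_{0}$ via Prop.~\ref{prop:forcing_point_mass_0}, closing the sandwich.

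The main obstacle is not in any of the steps above, all of which are straightforward comparisons of power-law exponents transcribed from the Phase Ib argument. The genuine difficulty is the \emph{hypothesis} that Theorem~\ref{thm:approx_sol_Volterra_appendix} continues to hold for $\alpha<\tfrac{1}{4}$. Its proof relies on Kesten's Lemma (Lemma~\ref{lem:kesten_lemma}) together with the subexponential bound of Prop.~\ref{prop:subexponential}, which in turn requires the kernel $\mathscr{K}_{pp}$ to be genuinely power law; this fails once $\alpha<\tfrac{1}{4}$, since the integrand in the definition of $\mathscr{K}_{pp}$ ceases to produce power-law decay in $r$. A replacement argument for the Volterra sandwich in \eqref{eq:approx_sol_Volterra_appendix} in this regime is the essential missing ingredient; once available, the comparison of $\mathscr{F}_{pp},\mathscr{F}_{0},\mathscr{F}_{ac},\mathscr{K}_{pp}$ outlined above goes through essentially verbatim.
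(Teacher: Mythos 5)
There is an important mismatch with the paper here: the paper gives \emph{no} proof of Proposition~\ref{prop:phase_1c_loss_formula}. Immediately after stating it, the authors write that they cannot prove it because they lack sharp bounds on the kernel function for $\alpha<\tfrac14$, and the statement is deliberately left conditional on Theorem~\ref{thm:approx_sol_Volterra_appendix} holding in that regime. Your closing diagnosis -- that the genuine obstruction is the Volterra sandwich, whose proof via Kesten's Lemma and Proposition~\ref{prop:subexponential} needs the kernel to be power law, which fails for $\alpha<\tfrac14$ -- agrees exactly with the paper's own assessment, and your treatment of $\mathscr{F}_{ac}$ (absorbed into $\mathscr{F}_0$, or absent) is consistent with the Phase Ib/Ic discussion.

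However, the middle of your argument has a genuine gap that undercuts the claim that, given the sandwich, the comparison ``goes through essentially verbatim.'' Your ratio computation $\mathscr{F}_{pp}(r)/\bigl(\tfrac{1}{\gamma B}\mathscr{K}_{pp}(r)\bigr)\asymp d^{\beta(1-2\alpha)/\alpha}r^{(\alpha-\beta)/\alpha}$ and the evaluation of $\tfrac{1}{\gamma B}\mathscr{K}_{pp}$ at $\gamma Br\asymp d^{2\alpha}$ both use the asymptotic $\mathscr{K}_{pp}(r)\asymp \gamma^2 B\,(\gamma Br)^{-2+1/(2\alpha)}$ from Proposition~\ref{prop:kernel_asymptotic}, which is stated only for $\alpha>\tfrac14$ and cannot hold for $\alpha<\tfrac14$: the defining integral $\int_0^1 u^{1-1/(2\alpha)}e^{-2\gamma Bru}\,\dif u$ in \eqref{eq:K_pp_def} diverges at $u=0$ in that range (the paper restricts the definition to $\alpha>\tfrac14$ for exactly this reason), and the exponent $-2+1/(2\alpha)$ is then positive, so the claimed form would be \emph{increasing} in $r$ -- a signal that the expression is not the kernel's behavior. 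In this regime the kernel is instead governed by the left-edge/bulk contribution described in the remark after Proposition~\ref{prop:K}, $\mathscr{K}(r)\asymp \gamma^2 B\,d^{1-4\alpha}\int u\,\mathfrak{f}(u)\exp(-2\gamma Bru\,d^{-2\alpha})\,\dif u$, from which the bound you actually need, $\tfrac{1}{\gamma B}\mathscr{K}(r)\lesssim d^{-2\alpha}\asymp\mathscr{F}_0(r)$ for $\gamma Br\lesssim d^{2\alpha}$, is plausible -- but making that bound (and the subexponential property required by Kesten's Lemma) rigorous for $\alpha<\tfrac14$ is precisely the missing content, not merely a hypothesis you can assume and then argue around with the $\alpha>\tfrac14$ asymptotics. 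So the correct framing is: both the sandwich \emph{and} the kernel comparison require new estimates below $\alpha=\tfrac14$; the forcing-function side of your argument is fine.
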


We can not prove this statement as we do not have sharp bounds on the kernel function in this region. We believe that the kernel function stops becoming power law, but the forcing function is still power law. Thus, it should become even more forcing function dominate. 

We believe the loss curve follows similar behavior to Phase Ia and Phase Ib, that is, 
\begin{equation} \label{eq:phase_1c_order_formula}
\begin{aligned}
    \mathscr{P}(r) \approx \begin{cases}
    \mathscr{F}_{pp}(r), & \text{if $\gamma B r \le D_0$}\\
    \mathscr{F}_{0}(r), & \text{if $\gamma B r \ge D_0$}
    \end{cases} \quad \text{for some $D_0$ that depends on $d$}. 
\end{aligned}
\end{equation}

\section{Compute-optimal curves} \label{sec:compute_optimal_curve_detail}
Throughout this section, consider the deterministic equivalent loss function $\mathscr{P}(r) = \mathscr{P}(r, d)$. Moreover as batch size $B$ is order 1, it only effects the compute-optimal curves by a constant. Therefore, we can set $B = 1$. For each iteration $r$, the SGD costs $d$ flops, or equivalently $r/d = $ flops, $\f$. The goal is to find the optimal compute line as a function of the number of flops $\f$:
\[
\min_d \mathscr{P}( \tfrac{\f}{d}, d).
\]
If $d^{\star}(\f) \defas \argmin_d ~ \mathscr{P}( \tfrac{\f}{d}, d)$, the optimal compute line is precisely
$\mathscr{P} \big ( \tfrac{\f}{d^{\star}(\f)}, d^{\star}(\f) \big ) $. 

To do this, we simplify the loss curve $\mathscr{P}(\tfrac{\f}{d},d)$. While it is possible to minimize this as a function of $d$, an alternative function considered is the following
\begin{equation} \label{eq:max_compute_optimal}
    \tilde{\mathscr{P}}(r,d) \defas   \mathscr{F}_{pp}(r,d) \vee  \mathscr{F}_{ac}(r,d) \vee  \mathscr{F}_{0}(r,d) 
    \vee
    \tfrac{1}{\gamma B} \mathscr{K}_{pp}(r,d),
\end{equation}
which achieves the right power law behavior as the true compute-optimal curve and deviates from this true curve by an absolutely constant (independent of $d, \f$) (see Theorem~\ref{thm:approx_sol_Volterra_appendix}). Note here some of the terms should be taken as $0$ when not defined for the different phases. 

Using this alternative loss function, $\tilde{\mathscr{P}}(r,d)$, the compute-optimal line must occur at one of the corner points, i.e., where any pair of functions equal each other. The following lemma gives a useful characterization of these points. 

\begin{lemma}\label{lem:compute_opt_max} Suppose $\mathscr{C}_0, \mathscr{C}_1 > 0$ are constants and  $\gamma_0, \gamma_1, p_0, p_1 > 0$ exponents such that a function $\hat{\mathscr{P}}(r,d)$ equals
\[
\hat{\mathscr{P}}(r, d) = \max \big \{ \mathscr{C}_0  r^{-\gamma_0} d^{-p_0}, \mathscr{C}_1 r^{-\gamma_1} d^{-p_1} \big \}.
\]
Then replacing $r \mapsto \tfrac{\f}{d}$ the minimizer in $d$ satisfies
\[
d^{\star} \defas \argmin_d ~ \{ \hat{\mathscr{P}}(\f, d) \} = \big ( \tfrac{\mathscr{C}_0}{\mathscr{C}_1} \big )^{1/(\gamma_1 - p_1 - \gamma_0 + p_0)} \times  \f ^{ (-\gamma_0 + \gamma_1) / (\gamma_1 - p_1 - \gamma_0 + p_0)}
\]
and the optimal value is 
\[
\min_d \hat{\mathscr{P}}(\f,d) = \mathscr{C}_0 \times \f^{-\gamma_0} \times (d^{\star} )^{\gamma_0 - p_0}.
\]
\end{lemma}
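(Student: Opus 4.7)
The plan is to substitute $r = \f/d$ and reduce the problem to minimizing a maximum of two single-variable power laws in $d$. Writing
\als{
\hat{\mathscr{P}}(\f/d, d) = \max \{ A_0(d), A_1(d) \}, \quad A_i(d) \defas \mathscr{C}_i \f^{-\gamma_i} d^{\gamma_i - p_i}, \quad i = 0,1,
}
each $A_i$ is a pure monomial in $d$ (for fixed $\f$), hence strictly monotone. For the problem to have an interior minimum, the exponents $\gamma_0 - p_0$ and $\gamma_1 - p_1$ must have opposite signs, so that one factor is increasing in $d$ while the other is decreasing. Under this (implicit) tradeoff assumption, the maximum is piecewise monotone: it decreases until the two curves cross and increases afterwards, so the minimum of the max is attained precisely at the crossing point.

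Next, I would solve for the crossing by setting $A_0(d) = A_1(d)$, i.e.
\als{
\mathscr{C}_0 \f^{-\gamma_0} d^{\gamma_0 - p_0} = \mathscr{C}_1 \f^{-\gamma_1} d^{\gamma_1 - p_1}.
}
Collecting the powers of $d$ on one side yields $d^{\,\gamma_1 - p_1 - \gamma_0 + p_0} = (\mathscr{C}_0/\mathscr{C}_1)\,\f^{\gamma_1 - \gamma_0}$, and raising both sides to the $1/(\gamma_1 - p_1 - \gamma_0 + p_0)$ power gives exactly the claimed formula
\als{
d^{\star} = \pa{\tfrac{\mathscr{C}_0}{\mathscr{C}_1}}^{1/(\gamma_1 - p_1 - \gamma_0 + p_0)} \f^{(\gamma_1 - \gamma_0)/(\gamma_1 - p_1 - \gamma_0 + p_0)}.
}
The optimal value is then obtained by substitution: at $d = d^{\star}$ the two curves agree, so $\min_d \hat{\mathscr{P}}(\f,d) = A_0(d^{\star}) = \mathscr{C}_0 \f^{-\gamma_0} (d^{\star})^{\gamma_0 - p_0}$, which is the second claim.

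The only real subtlety, and the one worth flagging explicitly, is the sign convention on the denominator $\gamma_1 - p_1 - \gamma_0 + p_0$: a consistent sign is needed so that the exponents that appear under roots are well-defined and the crossing $d^{\star}$ indeed minimizes (rather than maximizes) the max. In the applications of this lemma (Table~\ref{table:phases_intro}) the roles of indices $0,1$ can be labeled so that $\gamma_0 - p_0 > 0 > \gamma_1 - p_1$, which both guarantees the correct monotonicities and makes the formula for $d^{\star}$ well-posed. Aside from this bookkeeping, the proof is a one-step algebraic identity and poses no real obstacle.
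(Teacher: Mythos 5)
Your proposal is correct and follows essentially the same route as the paper's proof: equate the two monomials at the crossing point, solve for $d^{\star}$, and substitute back to get the optimal value. The only difference is that you make explicit the monotonicity/sign condition (the exponents $\gamma_0 - p_0$ and $\gamma_1 - p_1$ having opposite signs) that the paper leaves implicit, which is a reasonable clarification rather than a new idea.
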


\begin{proof} The proof is a straightforward computation. The minimizer of $\hat{\mathscr{P}}(\f, d)$ in $d$ must occur where the two terms in the maximum are equal, i.e., 
\[
\mathscr{C}_0  \big ( \tfrac{ \f}{d} \big ) ^{-\gamma_0} d^{-p_0} = \mathscr{C}_1 \big (\tfrac{\f}{d} \big )^{-\gamma_1} d^{-p_1}.
\]
Solving for this $d$ gives $d^{\star}$. Plugging in the value of $d^{\star}$ into $\hat{\mathscr{P}}(\f, d)$ gives the optimal value. 
\end{proof}

\begin{remark} The possible minimal values of \eqref{eq:max_compute_optimal}, i.e., where pairs of functions in the max are equal, can be reduced further. For instance, if $\mathscr{F}_{ac}(r,d)$ exist for the phase, then for some $0 < r_0 < r_1 < r_2$
\[
\tilde{\mathscr{P}}(r,d) \approx \begin{cases}
 \mathscr{F}_{pp}(r,d), & \quad 0 < r \le r_0 \\
\tfrac{1}{\gamma B} \mathscr{K}_{pp}(r,d) & \quad r_0 < r \le r_1\\ \mathscr{F}_{ac}(r,d), & \quad r_1 < r < r_2\\
 \mathscr{F}_{0}(r,d), & \quad r_2 < r \\
\end{cases}
\]
Thus, there are only a maximum of three points to check in order to find the optimal compute curve. 
\end{remark}

\begin{remark} \label{remark:compute_opt} In view of Lemma~\ref{lem:compute_opt_max}, to find the optimal compute curves, we first find the potential curves (i.e., all the possible combinations of two functions in the loss curve are equal while still lying on the loss curve). Then the curve which has the smallest exponent on the flops, $\f$, is the optimal compute curve. 
\end{remark}

\subsection{Compute-optimal curves: Above the high-dimensional line (Phases Ia, II, III).} \label{sec:opt_compute_high_d}

\renewcommand{\arraystretch}{1.1}
\ctable[notespar,
caption = {{\bfseries Summary of the compute-optimal curves for $\tilde{\mathscr{P}}(\tfrac{\f}{d}, d)$ for above the high-dimensional line, $2\alpha > 1$}. This includes Phases Ia, II, and III. \vspace{-0.5em}
} ,label = {table:High_dim_optimal_compute},
captionskip=2ex,
pos =!t
]{c c c}{}{
\toprule
&\begin{minipage}{0.18\textwidth} \begin{center} \textbf{Trade off} \end{center} \end{minipage} & \begin{minipage}{0.35\textwidth} \begin{center} \textbf{Compute-optimal Curves} \end{center} \end{minipage}
\\
\midrule
\begin{minipage}{0.15\textwidth} 
\begin{center}
\textbf{Phase Ia}\\
{\footnotesize (Prop.~\ref{prop: phase_ia_optimal_compute})}
\end{center}
\end{minipage} & $\mathscr{F}_{pp} = \mathscr{F}_{0}$ & 
\begin{minipage}{0.4\textwidth}
{\footnotesize $\tilde{\mathscr{P}}^{\star}_{\text{Phase Ia}}(\f) \asymp \f^{ \big ( \tfrac{1}{2 \alpha + 1} - 1 \big ) ( 1 + \beta / \alpha - 1/(2 \alpha) ) }$ \\
$d^{\star}_{\text{Phase Ia}} \asymp \f^{1/ (2 \alpha + 1)}$
}
\end{minipage}\\
\midrule
\begin{minipage}{0.15\textwidth} 
\begin{center}
\textbf{Phase II}\\
{\footnotesize (Prop.~\ref{prop: phase_ii_optimal_compute})}
\end{center}
\end{minipage}
& $\mathscr{F}_{pp} = \mathscr{F}_{ac}$ & 
\begin{minipage}{0.4\textwidth} 
{\footnotesize $\tilde{\mathscr{P}}^{\star}_{\text{Phase II}} (\f) \asymp \f^{- \tfrac{2 \alpha + 2 \beta -1}{2(\alpha + \beta)} }$
\\
$d^{\star}_{\text{Phase II}} \asymp \f^{(\beta/\alpha) / (1 + \beta/\alpha)}$}
\end{minipage} \\
\midrule
\begin{minipage}{0.15\textwidth} 
\begin{center}
\textbf{Phase III}\\
{\footnotesize (Prop.~\ref{prop: phase_iii_optimal_compute})}
\end{center}
\end{minipage} & $\tfrac{1}{\gamma B} \mathscr{K}_{pp} =  \mathscr{F}_{ac}$ &     \begin{minipage}{0.4\textwidth}
 {\footnotesize $\tilde{\mathscr{P}}^{\star}_{\text{Phase III}} (\f) \asymp \f^{(1-4\alpha) / (4 \alpha)}$
 \\
 $d^{\star}_{\text{Phase III}} \asymp \f^{1/2}$
 }
 \end{minipage}
 \\
 \bottomrule
}

To ease notation, we introduce several constants that will be used only in this Section~\ref{sec:opt_compute_high_d}:
\begin{gather*}
    \mathscr{F}_{pp}(r,d) 
    \asymp (\gamma B r)^{-(1 + \beta/ \alpha) + 1/( 2 \alpha)},
    \qquad \mathscr{F}_{ac}(r,d) 
    \asymp d^{-1} (\gamma B r)^{-1 + 1/(2\alpha)},
    \\
 \tfrac{1}{\gamma B} \mathscr{K}_{pp}(r,d) 
 \asymp \gamma \times (\gamma B r)^{-2 + 1/(2 \alpha)}, 
 \quad \text{and} \quad \mathscr{F}_0(r,d) 
    \asymp d^{-2 \alpha + \max \{0, 1-2 \beta\}},
\end{gather*}
where the asymptotics only hold in specific regions of the space of $\gamma B r$. For additional details on the derivation of these asymptotics and the constraints on $\gamma B r$ where asymptotics hold, see Section~\ref{sec:asymptotic}. 

\begin{remark} The constants in the asymptotics are dimension independent and only depend on $\alpha, \beta$. 
\end{remark}

The compute-optimal curves are summarized in Table~\ref{table:High_dim_optimal_compute}.

\subsubsection{Phase Ia}
In this case, the approximate loss curve is given by 
\begin{equation} \label{eq:phase_1_max_function}
\tilde{\mathscr{P}} ( \tfrac{\f}{d}, d ) = \max \{ \mathscr{F}_{pp}( \tfrac{\f}{d}, d ),  \mathscr{F}_0( \tfrac{\f}{d}, d )\} \asymp \max \{ \big ( \tfrac{\f}{d} \big )^{-(1 + \beta / \alpha) + 1/(2 \alpha)}, \times d^{-2 \alpha + 1 - 2 \beta} \}.
\end{equation}
With this, we give a description of the optimal compute curve. 
\begin{proposition}[Phase Ia: Compute-optimal Curve] \label{prop: phase_ia_optimal_compute} Suppose we are in Phase Ia, that is, $2 \beta < 1$ and $2 \alpha > 1$. The compute-optimal curve using $\tilde{\mathscr{P}}(\tfrac{\f}{d},d)$ in \eqref{eq:phase_1_max_function} occurs when $\mathscr{F}_{pp}(\tfrac{\f}{d}, d) = \mathscr{F}_0( \tfrac{\f}{d}, d)$. Precisely, the optimal $d^{\star}$ which minimizes $\tilde{\mathscr{P}}( \tfrac{\f}{d}, d )$ is
\[
d^{\star}_{\text{Phase Ia}} \asymp \f^{1/ (2 \alpha + 1)},
\]
and the compute-optimal curve is 
\begin{align*}
    \tilde{\mathscr{P}}^{\star}_{\text{Phase Ia}} (\f) \asymp \f^{ \big ( \tfrac{1}{2 \alpha + 1} - 1 \big ) ( 1 + \beta / \alpha - 1/(2 \alpha) ) }. 
\end{align*}
\end{proposition}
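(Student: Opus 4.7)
The plan is to apply Lemma~\ref{lem:compute_opt_max} directly to the two-term maximum in \eqref{eq:phase_1_max_function}. First I would justify that only $\mathscr{F}_{pp}$ and $\mathscr{F}_0$ need to be retained: the term $\mathscr{F}_{ac}$ vanishes because $2\beta < 1$ (Proposition~\ref{prop:forcing_absolutely_continuous}), and the SGD-noise term satisfies $\tfrac{1}{\gamma B}\mathscr{K}_{pp}(r) \lesssim \mathscr{F}_{pp}(r)$ throughout $\gamma B r \lesssim d^{2\alpha}$, since the comparison of asymptotic exponents reduces to $-2 + 1/(2\alpha) \le -(1+\beta/\alpha)+1/(2\alpha)$, i.e.\ $\beta \le \alpha$, which holds in Phase Ia (where $\beta < 1/2 < \alpha$). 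This is the same reduction already performed in Section~\ref{sec:loss_high_dim_line} for the loss description.

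Next, reading off the exponents from Table~\ref{table:forcing function_appendix}, I set $\gamma_0 = (2\alpha+2\beta-1)/(2\alpha)$ and $p_0 = 0$ for $\mathscr{F}_{pp}$, and $\gamma_1 = 0$, $p_1 = 2\alpha+2\beta-1$ for $\mathscr{F}_0$ (using $2\beta<1$ so that $\max\{0,1-2\beta\} = 1-2\beta$). Lemma~\ref{lem:compute_opt_max} then delivers $d^{\star} \asymp \f^{(\gamma_0 - \gamma_1)/(\gamma_0 - p_0 - \gamma_1 + p_1)}$. The numerator is $(2\alpha+2\beta-1)/(2\alpha)$ and the denominator is $(2\alpha+2\beta-1) + (2\alpha+2\beta-1)/(2\alpha) = (2\alpha+2\beta-1)(2\alpha+1)/(2\alpha)$, so the exponent collapses to $1/(2\alpha+1)$. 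The one algebraic step worth flagging is the factorization $(2\alpha+1)(2\alpha+2\beta-1) = 4\alpha^2 + 4\alpha\beta + 2\beta - 1$, without which the simplification is not obvious. Substituting $d^{\star}$ back gives $\tilde{\mathscr{P}}^{\star}(\f) \asymp (d^{\star})^{-(2\alpha+2\beta-1)} = \f^{-(2\alpha+2\beta-1)/(2\alpha+1)}$, which matches the stated form once one rewrites $1/(2\alpha+1) - 1 = -2\alpha/(2\alpha+1)$ and $1 + \beta/\alpha - 1/(2\alpha) = (2\alpha+2\beta-1)/(2\alpha)$.

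The main obstacle, such as it is, is a consistency check: the asymptotic forms used for $\mathscr{F}_{pp}$ and $\mathscr{F}_0$ are only guaranteed in the window $1 \lesssim \gamma B r \lesssim d^{2\alpha}$. Evaluating at $d = d^{\star}$ yields $r = \f/d^{\star} \asymp \f^{2\alpha/(2\alpha+1)} \asymp (d^{\star})^{2\alpha}$, so the crossing point sits precisely at the upper boundary of this window, which is the expected capacity-constrained picture. To conclude that this boundary crossing really is the minimum of the true $\tilde{\mathscr{P}}(\f/d, d)$, I would use the uniform two-sided estimates in Propositions~\ref{prop:forcing_pure_point} and~\ref{prop:forcing_point_mass_0} together with the monotonicity of $r \mapsto \mathscr{F}_{pp}(r)$ to glue the two asymptotic regimes across the boundary; on either side of $d^{\star}$ one of $\mathscr{F}_{pp}$ or $\mathscr{F}_0$ strictly dominates and forces the loss upward. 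This boundary-matching step is the only mildly nontrivial point — everything else reduces to applying the lemma and the factorization noted above.
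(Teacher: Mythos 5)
Your proposal is correct and follows essentially the same route as the paper: the paper's proof is a one-line application of Lemma~\ref{lem:compute_opt_max} with $\gamma_0 = 1+\beta/\alpha-1/(2\alpha)$, $p_0=0$, $\gamma_1=0$, $p_1 = 2\alpha+2\beta-1$, exactly the exponents you use, and your algebra for $d^\star \asymp \f^{1/(2\alpha+1)}$ and the compute-optimal exponent matches. The extra checks you add (discarding $\mathscr{F}_{ac}$ and $\mathscr{K}_{pp}$, and the window consistency at $\gamma B r \asymp d^{2\alpha}$) are exactly what the paper delegates to its Phase Ia loss-description proposition, so they are sound but not a different method.
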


\begin{proof}
We apply Lemma~\ref{lem:compute_opt_max} with 
\begin{gather*}
    \mathscr{C}_0 = 1, \quad \gamma_0 = 1 + \beta/\alpha - 1/(2 \alpha), \quad  p_0= 0\\
   \text{and} \quad  \mathscr{C}_1 = 1, \quad \gamma_1 = 0, \quad p_1 = 2 \alpha - 1 + 2 \beta.
\end{gather*}
\end{proof}

\subsubsection{Phase II} 
In this case, the approximate loss curve has three terms (Proposition~\ref{prop:phase_2_loss_formula} with \eqref{eq:phase_2_order_formula})
\begin{equation} \label{eq:phase_2_max_function}
\begin{aligned}
    \tilde{\mathscr{P}}( \tfrac{\f}{d}, d) 
    &
    = 
    \max \big \{ \mathscr{F}_{pp}( \tfrac{\f}{d}, d) , \mathscr{F}_{ac}( \tfrac{\f}{d}, d) ,  \mathscr{F}_0( \tfrac{\f}{d}, d)  \big \}
    \\
    &
    \asymp \max \big \{  \big ( \tfrac{\f}{d} \big )^{-(1 + \beta / \alpha) + 1/(2 \alpha)},  \big ( \tfrac{\f}{d} \big )^{-1 + 1/(2 \alpha)} \times d^{-1},  d^{-2 \alpha} \big \}
\end{aligned}
\end{equation}

\begin{proposition}[Phase II: Compute-optimal Curve] \label{prop: phase_ii_optimal_compute}
Suppose we are in Phase II, that is, $2 \beta > 1$, $2 \alpha > 1$, and $\beta < \alpha$. The compute-optimal curve using $\tilde{\mathscr{P}}(\tfrac{\f}{d},d)$ in \eqref{eq:phase_2_max_function} occurs when $\mathscr{F}_{pp}(\tfrac{\f}{d}, d) = \mathscr{F}_{ac}( \tfrac{\f}{d}, d)$. Precisely, the optimal $d^{\star}$ which minimizes $\tilde{\mathscr{P}}( \tfrac{\f}{d}, d )$ is
\[
d^{\star}_{\text{Phase II}} \asymp \f^{(\beta/\alpha) / (1 + \beta/\alpha)},
\]
and the compute-optimal curve is 
\begin{align*}
    \tilde{\mathscr{P}}^{\star}_{\text{Phase II}} (\f) \asymp \f^{- \tfrac{2 \alpha + 2 \beta -1}{2(\alpha + \beta)} }. 
\end{align*}
\end{proposition}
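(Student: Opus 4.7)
My plan is to reduce the three-term maximum in~\eqref{eq:phase_2_max_function} to a single active pairwise corner, then invoke Lemma~\ref{lem:compute_opt_max} directly. Substituting $r=\f/d$ into each term yields power-law monomials in $(\f,d)$:
\begin{align*}
\mathscr{F}_{pp}(\tfrac{\f}{d},d) &\asymp \f^{-\gamma_0}\, d^{\gamma_0}, \quad \gamma_0 \defas 1 + \tfrac{\beta}{\alpha} - \tfrac{1}{2\alpha} = \tfrac{2\alpha+2\beta-1}{2\alpha} > 0,\\
\mathscr{F}_{ac}(\tfrac{\f}{d},d) &\asymp \f^{-(1-1/(2\alpha))}\, d^{-1/(2\alpha)},\\
\mathscr{F}_0(\tfrac{\f}{d},d) &\asymp d^{-2\alpha}.
\end{align*}
Under the Phase II hypotheses $2\alpha>1$, $2\beta>1$, $\mathscr{F}_{pp}$ is strictly increasing in $d$ while $\mathscr{F}_{ac}$ and $\mathscr{F}_0$ are both strictly decreasing. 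Consequently, the minimizer of $\tilde{\mathscr{P}}(\f/d,d)$ lies at a $d$ where $\mathscr{F}_{pp}$ equals the upper envelope of $\{\mathscr{F}_{ac},\mathscr{F}_0\}$, reducing the candidate set to just the two corners $\mathscr{F}_{pp}=\mathscr{F}_{ac}$ and $\mathscr{F}_{pp}=\mathscr{F}_0$.

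Next I would apply Lemma~\ref{lem:compute_opt_max} to the corner $\mathscr{F}_{pp}=\mathscr{F}_{ac}$ with parameters $\gamma_0 = 1+\beta/\alpha - 1/(2\alpha)$, $p_0=0$, $\gamma_1 = 1-1/(2\alpha)$, $p_1=1$. The formula then gives $d^\star \asymp \f^{(-\gamma_0+\gamma_1)/(\gamma_1-p_1-\gamma_0+p_0)} = \f^{(\beta/\alpha)/(1+\beta/\alpha)} = \f^{\beta/(\alpha+\beta)}$, and substituting back into $\mathscr{F}_{pp}$ yields $\tilde{\mathscr{P}}^\star \asymp \f^{-\gamma_0}(d^\star)^{\gamma_0} = \f^{-(2\alpha+2\beta-1)/(2(\alpha+\beta))}$, matching the claim.

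To confirm that this is indeed the active corner (rather than $\mathscr{F}_{pp}=\mathscr{F}_0$), I would evaluate $\mathscr{F}_0$ at the candidate $d^\star$: $\mathscr{F}_0(d^\star) \asymp \f^{-2\alpha\beta/(\alpha+\beta)}$. The required inequality $\mathscr{F}_0(d^\star) \lesssim \tilde{\mathscr{P}}^\star$ reduces to $4\alpha\beta \ge 2\alpha+2\beta-1$, i.e.\ $(2\alpha-1)(2\beta-1) \ge 0$, which holds throughout Phase II. Finally, I would verify that $r^\star = \f/d^\star \asymp \f^{\alpha/(\alpha+\beta)}$ lies within the asymptotic window $1 \lesssim \gamma B r^\star \lesssim (d^\star)^{2\alpha}$ where the estimates of Table~\ref{table:forcing function_appendix} are valid; the upper bound reduces to $\alpha(1-2\beta)/(\alpha+\beta) < 0$, which holds by $2\beta>1$. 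The main conceptual step is the monotonicity-based corner reduction that cuts the candidates from three to one; the remaining work is algebraic plug-in through Lemma~\ref{lem:compute_opt_max} and routine asymptotic-window checks.
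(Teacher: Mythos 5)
Your proposal is correct and takes essentially the paper's approach: write the three Phase II terms as power-law monomials in $(\f,d)$, apply Lemma~\ref{lem:compute_opt_max} at the $\mathscr{F}_{pp}=\mathscr{F}_{ac}$ corner, and recover $d^{\star}\asymp \f^{\beta/(\alpha+\beta)}$ and $\tilde{\mathscr{P}}^{\star}\asymp \f^{-(2\alpha+2\beta-1)/(2(\alpha+\beta))}$. The only (harmless) variation is how the active corner is identified: the paper checks the consecutive crossings $\mathscr{F}_{pp}=\mathscr{F}_{ac}$ and $\mathscr{F}_{ac}=\mathscr{F}_0$ and selects the steeper compute-optimal exponent, whereas you use monotonicity in $d$ to reduce to crossings of $\mathscr{F}_{pp}$ with the decreasing terms and confirm the choice via $\mathscr{F}_0(d^{\star})\lesssim\tilde{\mathscr{P}}^{\star}$, i.e.\ $(2\alpha-1)(2\beta-1)\ge 0$ — both checks are valid throughout Phase II.
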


\begin{proof} Using the Remark~\ref{remark:compute_opt} after Lemma~\ref{lem:compute_opt_max} and Proposition~\ref{prop:phase_2_loss_formula} with \eqref{eq:phase_2_order_formula}, we only need to check two intersections: $\mathscr{F}_{pp} = \mathscr{F}_{ac}$ and $\mathscr{F}_{ac} = \mathscr{F}_0$. The curve which has the smallest (i.e., largest negative) exponent (i.e, steepest curve on a log-log plot) is the compute-optimal curve. 

\textit{Case 1: Consider $\mathscr{F}_{pp} (\tfrac{\f}{d}, d) = \mathscr{F}_{ac}(\tfrac{\f}{d}, d)$.} We apply Lemma~\ref{lem:compute_opt_max} with
\begin{gather*}
    \mathscr{C}_0 = 1, \quad \gamma_0 = 1 + \beta/\alpha - 1/(2 \alpha), \quad  p_0= 0\\
   \text{and} \quad  \mathscr{C}_1 = 1, \quad \gamma_1 = 1-1/(2 \alpha), \quad p_1 = 1
\end{gather*}
to get that the minimum is
\[
d^{\star}_1 \asymp
\f^{( \beta/\alpha) / (1 + \beta/\alpha)}
\]
and the optimal value is
\[
\tilde{\mathscr{P}}^{\star}_1(\f) \asymp 
\f^{- \tfrac{2 \alpha + 2 \beta -1}{2(\alpha + \beta)} }. 
\]

\textit{Case 2: Consider $\mathscr{F}_{ac}(\tfrac{\f}{d}, d) = \mathscr{F}_0( \tfrac{\f}{d}, d)$.} As before, we apply Lemma~\ref{lem:compute_opt_max} with
\begin{gather*}
\mathscr{C}_0 = 1, \quad \gamma_0 = 0, \quad p_0 = 2 \alpha 
\\
   \text{and} \quad  \mathscr{C}_1 = 1, \quad \gamma_1 = 1-1/(2 \alpha), \quad p_1 = 1
\end{gather*}
to get that the minimum is 
\[
d^{\star}_2 \asymp
\f^{1/(2 \alpha +1)}
\]
and the optimal value is 
\[
\tilde{\mathscr{P}}_2^{\star}(\f) \asymp
\f^{-2 \alpha/ (2 \alpha + 1)}.
\]
One can check that 
\[
- \frac{2 \alpha + 2 \beta -1}{2(\alpha + \beta)} < \frac{-2 \alpha}{2 \alpha + 1}, \qquad \text{for all $2\beta >1$, $2 \alpha > 1$, $\beta < \alpha$.}
\]
Therefore, Case 1 is the optimal overall. 
\end{proof}

\subsubsection{Phase III}
In this case, the approximate loss curve has three terms (Proposition~\ref{prop:phase_3_loss_formula} with \eqref{eq:phase_3_order_formula})
\begin{equation} \label{eq:phase_3_max_function}
\begin{aligned}
    \tilde{\mathscr{P}}( \tfrac{\f}{d}, d) 
    &
    = 
    \max \big \{\tfrac{1}{\gamma B} \mathscr{K}_{pp}( \tfrac{\f}{d}, d) ,  \mathscr{F}_{ac}( \tfrac{\f}{d}, d) ,  \mathscr{F}_0( \tfrac{\f}{d}, d)  \big \}
    \\
    &
    \asymp \big \{ \big ( \tfrac{\f}{d} \big )^{-2 + 1/(2 \alpha)},  \big ( \tfrac{\f}{d} \big )^{-1 + 1/(2 \alpha)} \times d^{-1},  d^{-2 \alpha} \big \}.
\end{aligned}
\end{equation}

\begin{proposition}[Phase III: Compute-optimal Curve] \label{prop: phase_iii_optimal_compute}  Suppose we are in Phase III, that is, $2 \beta > 1$, $2 \alpha > 1$, and $\beta > \alpha$. The compute-optimal curve using $\tilde{\mathscr{P}}(\tfrac{\f}{d},d)$ in \eqref{eq:phase_3_max_function} occurs when $\tfrac{1}{\gamma B} \mathscr{K}_{pp}(\tfrac{\f}{d}, d) =  \mathscr{F}_{ac}( \tfrac{\f}{d}, d)$. Precisely, the optimal $d^{\star}$ which minimizes $\tilde{\mathscr{P}}( \tfrac{\f}{d}, d )$ is
\[
d^{\star}_{\text{Phase III}} \asymp
\f^{1/2},
\]
and the compute-optimal curve is 
\begin{align*}
    \tilde{\mathscr{P}}^{\star}_{\text{Phase III}} (\f) 
    \asymp
    \f^{(1-4\alpha) / (4 \alpha)}. 
\end{align*}
\end{proposition}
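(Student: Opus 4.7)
The plan is to mirror the proof strategy used for Phase II (Proposition~\ref{prop: phase_ii_optimal_compute}), applying Lemma~\ref{lem:compute_opt_max} at each candidate corner point of the three-term max function $\tilde{\mathscr{P}}(\tfrac{\f}{d},d)$ in \eqref{eq:phase_3_max_function}, and then selecting the candidate with the smallest (most negative) exponent on $\f$ per Remark~\ref{remark:compute_opt}.

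First, I would invoke Proposition~\ref{prop:phase_3_loss_formula} together with the ordering \eqref{eq:phase_3_order_formula} (as $\gamma B r$ increases, the dominant term transitions $\tfrac{1}{\gamma B}\mathscr{K}_{pp} \to \mathscr{F}_{ac} \to \mathscr{F}_0$). By the remark following Lemma~\ref{lem:compute_opt_max}, this ordering means only two corner intersections can yield the minimum: $\tfrac{1}{\gamma B}\mathscr{K}_{pp} = \mathscr{F}_{ac}$ and $\mathscr{F}_{ac} = \mathscr{F}_0$. (The intersection $\tfrac{1}{\gamma B}\mathscr{K}_{pp} = \mathscr{F}_0$ is irrelevant because $\mathscr{F}_{ac}$ dominates between them.)

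For Case 1, $\tfrac{1}{\gamma B}\mathscr{K}_{pp} = \mathscr{F}_{ac}$, I would apply Lemma~\ref{lem:compute_opt_max} with $\gamma_0 = 2 - 1/(2\alpha), p_0 = 0$ (from $\tfrac{1}{\gamma B}\mathscr{K}_{pp}$, noting $\gamma$ is constant in this phase) and $\gamma_1 = 1 - 1/(2\alpha), p_1 = 1$ (from $\mathscr{F}_{ac}$). A direct calculation gives $d_1^{\star} \asymp \f^{1/2}$ and the optimal value $\tilde{\mathscr{P}}_1^{\star}(\f) \asymp \f^{-(2 - 1/(2\alpha))/2} = \f^{(1-4\alpha)/(4\alpha)}$. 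For Case 2, $\mathscr{F}_{ac} = \mathscr{F}_0$, the same computation as in Phase II yields $d_2^{\star} \asymp \f^{1/(2\alpha+1)}$ with optimal value $\tilde{\mathscr{P}}_2^{\star}(\f) \asymp \f^{-2\alpha/(2\alpha+1)}$.

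To finish, I would verify that Case 1 dominates by comparing exponents:
\begin{equation*}
\frac{1-4\alpha}{4\alpha} - \left( -\frac{2\alpha}{2\alpha+1}\right) = \frac{(1-4\alpha)(2\alpha+1) + 8\alpha^2}{4\alpha(2\alpha+1)} = \frac{1-2\alpha}{4\alpha(2\alpha+1)} < 0
\end{equation*}
whenever $2\alpha > 1$, which holds throughout Phase III. Hence Case 1 gives the strictly smaller exponent, so the compute-optimal tradeoff is at $\tfrac{1}{\gamma B}\mathscr{K}_{pp} = \mathscr{F}_{ac}$ with the stated $d^{\star}_{\text{Phase III}}$ and $\tilde{\mathscr{P}}^{\star}_{\text{Phase III}}$. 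The only conceptually subtle step is justifying that one needs to check \emph{only} these two intersections, which relies on the qualitative ordering of dominant terms from \eqref{eq:phase_3_order_formula}; the rest is bookkeeping through Lemma~\ref{lem:compute_opt_max}.
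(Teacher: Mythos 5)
Your proposal is correct and follows essentially the same route as the paper: reduce to the two corner intersections $\tfrac{1}{\gamma B}\mathscr{K}_{pp} = \mathscr{F}_{ac}$ and $\mathscr{F}_{ac} = \mathscr{F}_0$ via Remark~\ref{remark:compute_opt} and \eqref{eq:phase_3_order_formula}, apply Lemma~\ref{lem:compute_opt_max} with the same exponent choices, and compare the resulting flop exponents. The only difference is that you carry out explicitly the comparison $\tfrac{1-4\alpha}{4\alpha} < \tfrac{-2\alpha}{2\alpha+1}$ (reducing it to $1-2\alpha<0$), which the paper leaves as a one-line check.
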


\begin{proof} Using the Remark~\ref{remark:compute_opt} after Lemma~\ref{lem:compute_opt_max} and Proposition~\ref{prop:phase_3_loss_formula} with \eqref{eq:phase_3_order_formula}, we only need to check two curves: $\tfrac{1}{\gamma B} \mathscr{K}_{pp} = \mathscr{F}_{ac}$ and $\mathscr{F}_{ac} = \mathscr{F}_0$. The curve which has the smallest (i.e., largest negative) exponent (i.e, steepest curve on a log-log plot) is the compute-optimal curve. 

\textit{Case 1: Consider $\mathscr{F}_{ac}(\tfrac{\f}{d}, d) = \mathscr{F}_0( \tfrac{\f}{d}, d)$.} We did this for Phase II in the proof of Proposition~\ref{prop: phase_ii_optimal_compute}. Thus, we have
\begin{gather*}
\mathscr{C}_0 = 1, \quad \gamma_0 = 0, \quad p_0 = 2 \alpha 
\\
   \text{and} \quad  \mathscr{C}_1 = 1, \quad \gamma_1 = 1-1/(2 \alpha), \quad p_1 = 1
\end{gather*}
to get that the minimum is 
\[
d^{\star}_1 \asymp 
\f^{1/(2 \alpha +1)}
\]
and the optimal value is 
\[
\tilde{\mathscr{P}}_1^{\star}(\f) 
\asymp
\f^{-2 \alpha/ (2 \alpha + 1)}.
\]

\textit{Case 2: Consider $\tfrac{1}{\gamma B} \mathscr{K}_{pp} (\tfrac{\f}{d}, d) = \mathscr{F}_{ac}(\tfrac{\f}{d}, d)$.} We apply Lemma~\ref{lem:compute_opt_max} with
\begin{gather*}
    \mathscr{C}_0 = 1, \quad \gamma_0 = 2 - 1/(2 \alpha), \quad  p_0= 0\\
   \text{and} \quad  \mathscr{C}_1 = 1, \quad \gamma_1 = 1-1/(2 \alpha), \quad p_1 = 1
\end{gather*}
to get that the minimum is
\[
d^{\star}_2 
\asymp
\f^{1/2}
\]
and the optimal value is
\[
\tilde{\mathscr{P}}^{\star}_2(\f) 
\asymp
\f^{(1-4\alpha) / (4 \alpha)}. 
\]

One can check that 
\[
\frac{1-4\alpha}{4 \alpha} < \frac{-2 \alpha}{2 \alpha + 1}, \qquad \text{for all $2\beta >1$, $2 \alpha > 1$, $\beta > \alpha$.}
\]
Therefore, Case 2 is the optimal overall. 
\end{proof}

\renewcommand{\arraystretch}{1.1}
\ctable[notespar,
caption = {{\bfseries Summary of the compute-optimal curves for $\tilde{\mathscr{P}}(\tfrac{\f}{d}, d)$ for below the high-dimensional line, $2\alpha < 1$}. This includes Phases IV, Ib, and Ic. \vspace{-0.5em}
} ,label = {table:low_dim_optimal_compute},
captionskip=2ex,
pos =!t
]{c c c}{}{
\toprule
&\begin{minipage}{0.18\textwidth} \begin{center} \textbf{Trade off} \end{center} \end{minipage} & \begin{minipage}{0.35\textwidth} \begin{center} \textbf{Compute-optimal Curves} \end{center} \end{minipage}
\\
\midrule
\begin{minipage}{0.15\textwidth} 
\begin{center}
\textbf{Phase IVa}\\
{\footnotesize (Prop.~\ref{prop:phase_iv_a_compute_optimal})}
\end{center}
\end{minipage} & $ \frac{1}{\gamma B} \mathscr{K}_{pp} = \mathscr{F}_{0}$ & 
\begin{minipage}{0.5\textwidth}
$\tilde{\mathscr{P}}^{\star}_{\text{Phase IVa}} (\f)
\asymp
\f^{-\alpha}$ \\
$d^{\star}_{\text{Phase IVa}} \asymp  \f^{1/2}$
\end{minipage}\\
\midrule
\begin{minipage}{0.15\textwidth} 
\begin{center}
\textbf{Phase IVb}\\
{\footnotesize (Prop.~\ref{prop: phase_ivb_optimal_compute})}
\end{center}
\end{minipage} & $\tfrac{1}{\gamma B} \mathscr{K}_{pp} = \mathscr{F}_{pp}$ & 
\begin{minipage}{0.5\textwidth}
$
    \tilde{\mathscr{P}}^{\star}_{\text{Phase IVb}} (\f) 
    \asymp
    \f^{(1-2 \alpha) (2 \alpha + 2 \beta -1) / (2 (2 \alpha \beta + \alpha - 2\beta))}
$ \\\\
$d^{\star}_{\text{Phase IVb}} \asymp  \f^{(\alpha - \beta) / ( 2 \alpha \beta + \alpha - 2 \beta)}$
\end{minipage}\\
\midrule
\begin{minipage}{0.15\textwidth} 
\begin{center}
\textbf{Phase Ib}\\
{\footnotesize (Prop.~\ref{prop: phase_ib_optimal_compute})}
\end{center}
\end{minipage} & $\mathscr{F}_{pp} = \mathscr{F}_{0}$ & 
\begin{minipage}{0.5\textwidth} 
$\tilde{\mathscr{P}}^{\star}_{\text{Phase Ib}} (\f) \asymp
\f^{1/2 - \alpha - \beta} $
\\
$d^{\star}_{\text{Phase Ib}} \asymp \f^{1/2},$
\end{minipage} \\
\midrule
\begin{minipage}{0.15\textwidth} 
\begin{center}
\textbf{Phase Ic}\\
{\footnotesize (Prop.~\ref{prop: phase_ic_optimal_compute})}
\end{center}
\end{minipage}& $\mathscr{F}_{pp} = \mathscr{F}_{0}$ & 
\begin{minipage}{0.5\textwidth}
$\tilde{\mathscr{P}}^{\star}_{\text{Phase Ic}} (\f) \asymp
\f^{ \tfrac{\alpha(2 \alpha + 2 \beta -1)}{\alpha (2\beta -3) - 2\beta + 1 } }$ \\
$d^{\star}_{\text{Phase Ic}} \asymp  \f^{\tfrac{1-2(\alpha + \beta)}{2(\alpha (2\beta -3) - 2\beta + 1)} }$
\end{minipage}\\
 \bottomrule
}

\subsection{Compute-optimal curves: Below the high-dimensional line (Phase IV, Ib, Ic), $2 \alpha < 1$} \label{sec:opt_compute_below_high_d}

This section main distinction with above the high-dimensional line section is the dependency of the learning rate on $v$. In deed, we have that $v/d \to c \in (0, \infty)$ and the learning rate is chosen so that the kernel norm is constant, i.e., 
\[
\gamma \sim 2(1-2\alpha) \|\mathscr{K}\| v^{2\alpha -1} \quad \Rightarrow \quad \gamma \defas \tilde{\gamma} \times v^{2 \alpha -1},
\]
where $\tilde{\gamma}$ is the positive constant so that $\gamma = \tilde{\gamma} \times v^{2 \alpha -1}$. Consequently, we also need to keep track of the learning rate in the various terms. 

In this section, we assume that $v/d \to r \in (1,\infty)$ as $v,d \to \infty$. 

We state for completeness the $d$ and $r$ dependency on the forcing and kernel function, including the learning rate $\gamma$. We note that these asymptotics only hold for a set of $\gamma B r$ values which depend on the spectral properties of $K$ (see the propositions listed next to the terms for details). 
\begin{align*}
    \mathscr{F}_{pp}(r) 
    &
    \asymp  (\gamma \times r)^{-(1 + \beta / \alpha) + 1/(2 \alpha)} \asymp  \left ( \tfrac{v^{2\alpha -1}}{d^{2\alpha-1}} \right )^{-(1 + \beta/\alpha) + 1/(2\alpha)} (d^{2 \alpha -1} \times r)^{-(1 + \beta/\alpha) + 1/(2\alpha)}
    \\
    &
    \asymp 
    (d^{2 \alpha -1} \times r)^{-(1 + \beta/\alpha) + 1/(2\alpha)},  \quad \text{(see Proposition~\ref{prop:forcing_pure_point})}
    \\
    \tfrac{1}{\gamma B} \mathscr{K}_{pp}(r) 
    &
    \asymp \gamma^{-1 + 1 / (2 \alpha)} \times r^{-2 + 1/(2\alpha)} \asymp \left ( \tilde{\gamma} \times \tfrac{V^{2\alpha -1}}{d^{2\alpha-1}} \right )^{-1 + 1/(2\alpha)} \times d^{2- 1/(2\alpha)-2\alpha} \times r^{-2 + 1/(2\alpha)}
    \\
    &
    \asymp d^{2- 1/(2\alpha)-2\alpha} \times r^{-2 + 1/(2\alpha)}, \quad \text{(see Proposition~\ref{prop:kernel_asymptotic})}
    \\
    \mathscr{F}_0(r) 
    &
    \asymp (v^{1-1/(2 \alpha)} \times d^{1/(2\alpha)})^{-2 \alpha + \max\{0, 1-2 \beta\}} 
    \\
    &
    \asymp \left ( \tfrac{v^{1-1/(2 \alpha)}}{d^{1-1/(2\alpha)}} \right )^{-2 \alpha + \max\{0, 1-2\beta\}} \times d^{-2 \alpha + \max\{0, 1-2\beta\}}
    \\
    &
    \asymp  d^{-2 \alpha + \max\{0, 1-2\beta\}}. \quad \text{(see Proposition~\ref{prop:forcing_point_mass_0})}
\end{align*}

\subsubsection{Phase IV (a) and (b)}
In these cases, the approximation loss curve is given by (Proposition~\ref{prop:phase_4_loss_formula} with \eqref{eq:phase_4_order_formula})
\begin{equation} \label{eq:phase_4_max_function}
\begin{aligned}
\tilde{\mathscr{P}} ( \tfrac{\f}{d}, d ) 
&
\asymp \max \{  \mathscr{F}_{pp}( \tfrac{\f}{d}, d ), \tfrac{1}{\gamma B} \mathscr{K}_{pp}( \tfrac{\f}{d}, d ), \mathscr{F}_0( \tfrac{\f}{d}, d )\}
\\
&
\asymp \max \big \{ \big ( \tfrac{\f}{d} \big )^{-(1 + \beta / \alpha) + 1/(2 \alpha)} \times d^{(2\alpha -1)(-(1 + \beta/\alpha) + 1/(2\alpha))},
\\
& \qquad \qquad \qquad  
d^{2-1/(2\alpha)-2\alpha} \times \big ( \tfrac{\f}{d} \big )^{-2 + 1/(2\alpha)},  d^{-2 \alpha} \big \}.
\end{aligned}
\end{equation}
As one crosses the $2\alpha = 1$, line the $\mathscr{F}_{ac}$ disappears and $\mathscr{F}_{pp}$ emerges. Consequently, there leaves two possible corners where the compute-optimal value could occur at. When $\alpha$ goes below $\alpha = 1/4$, the $\mathscr{K}_{pp}$ decreases. 

The difference between IVa and IVb is simply where the compute-optimal occurs. In IVa, the tradeoff occurs between $\mathscr{K}_{pp}$ and $\mathscr{F}_{0}$, whereas in IVb, the tradeoff occurs at $\mathscr{F}_{pp}$ and $\mathscr{K}_{pp}$. 

We give the compute-optimal curve for Phase IVa.

\begin{proposition}[Phase IVa: Compute-optimal Curve] \label{prop:phase_iv_a_compute_optimal} Suppose we are in Phase IVa, that is, $2 \beta > 1$ and $\tfrac{\sqrt{2}-1}{\sqrt{2}} < \alpha < \tfrac{1}{2}$. The compute-optimal curve using $\tilde{\mathscr{P}}(\tfrac{\f}{d},d)$ in \eqref{eq:phase_4_max_function} occurs when $ d^{1-2\alpha} \mathscr{K}_{pp}(\tfrac{\f}{d}, d) = \mathscr{F}_0( \tfrac{\f}{d}, d)$. Precisely, the optimal $d^{\star}$ which minimizes $\tilde{\mathscr{P}}( \tfrac{\f}{d}, d )$ is
\[
d^{\star}_{\text{Phase IVa}}
\asymp
\f^{1/2},
\]
and the compute-optimal curve is 
\begin{align*}
    \tilde{\mathscr{P}}^{\star}_{\text{Phase IVa}} (\f) \asymp 
    \f^{-\alpha}. 
\end{align*}
\end{proposition}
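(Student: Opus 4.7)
The plan is to mirror the proof of Phase III (Prop.~\ref{prop: phase_iii_optimal_compute}) and use Lemma~\ref{lem:compute_opt_max} together with the ordering established in \eqref{eq:phase_4_order_formula}. By Proposition~\ref{prop:phase_4_loss_formula}, the curve $\tilde{\mathscr{P}}(\f/d,d)$ is dominated successively by $\mathscr{F}_{pp}$, then $\tfrac{1}{\gamma B}\mathscr{K}_{pp}$, then $\mathscr{F}_0$ as $r = \f/d$ grows; by Remark~\ref{remark:compute_opt}, the minimum in $d$ is attained at one of the two adjacent corner points, namely $\mathscr{F}_{pp}=\tfrac{1}{\gamma B}\mathscr{K}_{pp}$ (the Phase IVb tradeoff) or $\tfrac{1}{\gamma B}\mathscr{K}_{pp} = \mathscr{F}_0$ (the claimed Phase IVa tradeoff). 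It therefore suffices to compute both candidates and to verify that on the range $1-\tfrac{1}{\sqrt{2}} < \alpha < \tfrac{1}{2}$ the second has the more negative exponent in $\f$.

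Read off from \eqref{eq:phase_4_max_function} the exponents in the normal form $r^{-\gamma_i}d^{-p_i}$. For $\tfrac{1}{\gamma B}\mathscr{K}_{pp}$ one has $(\gamma,p) = \bigl(2-\tfrac{1}{2\alpha},\, 2\alpha + \tfrac{1}{2\alpha}-2\bigr)$, for $\mathscr{F}_0$ one has $(0,\,2\alpha)$, and for $\mathscr{F}_{pp}$ one has $\bigl(1+\tfrac{\beta}{\alpha}-\tfrac{1}{2\alpha},\,(1-2\alpha)(1+\tfrac{\beta}{\alpha}-\tfrac{1}{2\alpha})\bigr)$. Apply Lemma~\ref{lem:compute_opt_max} to the $\tfrac{1}{\gamma B}\mathscr{K}_{pp}=\mathscr{F}_0$ corner: a short calculation in $\alpha$ alone (the $\beta$-dependence cancels because $p_{\mathscr{F}_0}=2\alpha$ is $\beta$-free and $\gamma_{\mathscr{F}_0}=0$) yields $d^{\star}_1 \asymp \f^{1/2}$ and optimal value $\f^{-\alpha}$, matching the claim. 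For the $\mathscr{F}_{pp}=\tfrac{1}{\gamma B}\mathscr{K}_{pp}$ corner, an entirely analogous (but more involved) application of Lemma~\ref{lem:compute_opt_max} recovers the Phase IVb formulas $d^{\star}_2\asymp \f^{(\alpha-\beta)/(2\alpha\beta+\alpha-2\beta)}$ and $\tilde{\mathscr{P}}^{\star}_2 \asymp \f^{(1-2\alpha)(2\alpha+2\beta-1)/(2(2\alpha\beta+\alpha-2\beta))}$.

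The final step is the comparison, which is the main technical content. Setting the two $\f$-exponents equal, i.e. $-\alpha = (1-2\alpha)(2\alpha+2\beta-1)/(2(2\alpha\beta+\alpha-2\beta))$, and clearing denominators, the $\beta$-dependent terms group into a factor $2\alpha^2-4\alpha+1$ (times a nonzero function of $\beta$), while the remaining $\beta$-free terms also reduce to $2\alpha^2-4\alpha+1$. The equation therefore holds precisely when $2\alpha^2-4\alpha+1=0$, whose only root in $(0,\tfrac{1}{2})$ is $\alpha = 1 - \tfrac{1}{\sqrt{2}}$. A sign check at an interior point (e.g. $\alpha=0.4$, $\beta=1$) verifies that the $\mathscr{K}_{pp}=\mathscr{F}_0$ exponent $-\alpha$ is the more negative of the two on the entire Phase IVa region $1-\tfrac{1}{\sqrt{2}}<\alpha<\tfrac{1}{2}$, $2\beta>1$, completing the proof.

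The main obstacle is the algebraic identification of the boundary $\alpha = 1-\tfrac{1}{\sqrt{2}}$: the comparison is not between power-laws in $d$ (which would be immediate) but between two rather different expressions in $(\alpha,\beta)$, and it is a small but pleasant surprise that the $\beta$-dependence cancels at the boundary so that the Phase IVa$\to$IVb transition is a vertical line in the phase plane. Everything else is bookkeeping on the exponents produced by Lemma~\ref{lem:compute_opt_max}.
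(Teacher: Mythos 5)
Your proof is correct and takes essentially the same route as the paper's: using \eqref{eq:phase_4_order_formula} and Remark~\ref{remark:compute_opt} to reduce to the two adjacent corner tradeoffs, evaluating each with Lemma~\ref{lem:compute_opt_max}, and concluding that the $\tfrac{1}{\gamma B}\mathscr{K}_{pp}=\mathscr{F}_0$ corner has the more negative flop exponent on $1-\tfrac{1}{\sqrt{2}}<\alpha<\tfrac12$, $2\beta>1$; your factorization $(2\alpha^2-4\alpha+1)(1-2\beta)$ in fact supplies the comparison detail the paper merely asserts. One cosmetic slip: in the normal form $r^{-\gamma}d^{-p}$ of \eqref{eq:phase_4_max_function} the $\mathscr{F}_{pp}$ exponent is $p=(2\alpha-1)\bigl(1+\tfrac{\beta}{\alpha}-\tfrac{1}{2\alpha}\bigr)$ (negative here), not $(1-2\alpha)\bigl(\cdots\bigr)$, but this term does not enter the IVa corner and the IVb exponents you quote for the other corner are the correct ones, so the argument is unaffected.
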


\begin{proof} Using the Remark~\ref{remark:compute_opt} after Lemma~\ref{lem:compute_opt_max} and Proposition~\ref{prop:phase_4_loss_formula} with \eqref{eq:phase_4_order_formula}, we only need to check two curves: $ \mathscr{F}_{pp} =  d^{1-2\alpha} \times \mathscr{K}_{pp}$ and $ d^{1-2\alpha} \times \mathscr{K}_{pp} =  \mathscr{F}_0$. The curve which has the smallest (i.e., largest negative) exponent (i.e, steepest curve on a log-log plot) is the compute-optimal curve. 

\textit{Case 1: Consider $ \mathscr{F}_{pp}(\tfrac{\f}{d}, d) =  d^{1-2\alpha} \mathscr{K}_{pp}(\tfrac{\f}{d}, d)$}. We apply Lemma~\ref{lem:compute_opt_max} with 
\begin{gather*}
    \mathscr{C}_0 = 1, \quad \gamma_0 = 1 + \beta/\alpha - 1/(2 \alpha), \quad  p_0= (2\alpha -1)(1 + \beta/\alpha - 1/(2\alpha))\\
   \text{and} \quad  \mathscr{C}_1 = 1, \quad \gamma_1 = 2 - 1/(2\alpha), \quad p_1 = -2 + 1/(2\alpha) + 2 \alpha,
\end{gather*}
to get that the minimum is 
\[
d^{\star}_1 
\asymp 
\f^{(\alpha - \beta) / ( 2 \alpha \beta + \alpha - 2 \beta)}
\]
and the optimal value is 
\[
\tilde{\mathscr{P}}_1^{\star}(\f) = c_f c_{pp} \big ( \tfrac{c_f c_{pp}}{c_k \tilde{c}_{pp}}  \big )^{ (1-\alpha) ( 2 \alpha + 2\beta -1) / (2 \alpha \beta + \alpha - 2\beta) } \times \f^{(1-2 \alpha) (2 \alpha + 2 \beta -1) / (2 (2 \alpha \beta + \alpha - 2\beta))}.
\]
\indent \textit{Case 2: Consider $d^{1-2\alpha} \mathscr{K}_{pp}(\tfrac{\f}{d}, d) =  \mathscr{F}_0(\tfrac{\f}{d}, d)$.} We apply Lemma~\ref{lem:compute_opt_max} with 
\begin{gather*}
    \mathscr{C}_0 = 1, \quad \gamma_0 = 0, \quad  p_0= 2\alpha\\
   \text{and} \quad  \mathscr{C}_1 = 1, \quad \gamma_1 = 2 - 1/(2\alpha), \quad p_1 = -2 + 1/(2\alpha) + 2 \alpha,
\end{gather*}
to get that the minimum is 
\[
d^{\star}_1 
\asymp
\f^{1/2}
\]
and the optimal value is 
\[
\tilde{\mathscr{P}}_1^{\star}(\f)
\asymp
\f^{-\alpha}.
\]
In this region of $\alpha$'s and $\beta$'s, Case 2 has a smaller exponent on $\f$ than in Case 1. 
\end{proof}

As for Phase IVb, we have the following. 

\begin{proposition}[Phase IVb: Compute-optimal Curve] \label{prop: phase_ivb_optimal_compute} Suppose we are in Phase IVb, that is, $2 \beta > 1$ and $\tfrac{1}{4} < \alpha < \tfrac{\sqrt{2}-1}{\sqrt{2}}$. The compute-optimal curve using $\tilde{\mathscr{P}}(\tfrac{\f}{d},d)$ in \eqref{eq:phase_4_max_function} occurs when $c_k d^{1-2\alpha} \mathscr{K}_{pp}(\tfrac{\f}{d}, d) = c_f \mathscr{F}_{pp}( \tfrac{\f}{d}, d)$. Precisely, the optimal $d^{\star}$ which minimizes $\tilde{\mathscr{P}}( \tfrac{\f}{d}, d )$ is
\[
d^{\star}_{\text{Phase IVb}} \asymp
\f^{(\alpha - \beta) / ( 2 \alpha \beta + \alpha - 2 \beta)},
\]
and the compute-optimal curve is 
\begin{align*}
    \tilde{\mathscr{P}}^{\star}_{\text{Phase IVb}} (\f) 
    \asymp
    \f^{(1-2 \alpha) (2 \alpha + 2 \beta -1) / (2 (2 \alpha \beta + \alpha - 2\beta))}. 
\end{align*}
\end{proposition}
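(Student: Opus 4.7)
The plan is to follow exactly the template of Proposition~\ref{prop:phase_iv_a_compute_optimal}. Both Phase IVa and Phase IVb lie below the high-dimensional line with $2\beta>1$, and hence share the same approximate loss description \eqref{eq:phase_4_max_function} and the same ordering \eqref{eq:phase_4_order_formula}. By Remark~\ref{remark:compute_opt}, the minimizer of $\tilde{\mathscr{P}}(\tfrac{\f}{d},d)$ over $d$ is attained at one of the two adjacent corners of that ordering: (i) where $\mathscr{F}_{pp}(\tfrac{\f}{d},d) = d^{1-2\alpha}\mathscr{K}_{pp}(\tfrac{\f}{d},d)$, or (ii) where $d^{1-2\alpha}\mathscr{K}_{pp}(\tfrac{\f}{d},d) = \mathscr{F}_0(\tfrac{\f}{d},d)$.

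Applying Lemma~\ref{lem:compute_opt_max} to each candidate, using the dimension/iteration exponents read off from the asymptotics listed at the start of Sec.~\ref{sec:opt_compute_below_high_d}, reduces to the identical calculations appearing as Cases 1 and 2 in the proof of Proposition~\ref{prop:phase_iv_a_compute_optimal}. These produce
$$d^\star_{(i)} \asymp \f^{(\alpha-\beta)/(2\alpha\beta+\alpha-2\beta)}, \qquad \tilde{\mathscr{P}}^\star_{(i)}(\f) \asymp \f^{(1-2\alpha)(2\alpha+2\beta-1)/(2(2\alpha\beta+\alpha-2\beta))},$$
and $d^\star_{(ii)} \asymp \f^{1/2}$, $\tilde{\mathscr{P}}^\star_{(ii)}(\f) \asymp \f^{-\alpha}$. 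The remaining and only substantive task is to show that Case (i) has the more negative exponent on $\f$ throughout the Phase IVb region $\tfrac{1}{4}<\alpha<1-\tfrac{1}{\sqrt{2}}$, $2\beta>1$.

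The comparison is the one nontrivial algebraic step. For $2\beta>1$ and $\alpha<1/2$ the denominator $2\alpha\beta+\alpha-2\beta = \alpha(2\beta+1)-2\beta$ is strictly negative. Hence Case (i)'s exponent is smaller than Case (ii)'s iff, after clearing the (negative) denominator,
$$(1-2\alpha)(2\alpha+2\beta-1) + 2\alpha(2\alpha\beta+\alpha-2\beta) > 0.$$
Expanding and regrouping the $\beta$- and non-$\beta$-terms yields the clean factorisation
$$(2\alpha^2-4\alpha+1)(2\beta-1) > 0.$$
Because $2\beta-1>0$, the sign is controlled entirely by the quadratic $2\alpha^2-4\alpha+1$, whose root in $(0,1/2)$ is $\alpha = 1-\tfrac{1}{\sqrt{2}}$; the quadratic is positive precisely for $\alpha<1-\tfrac{1}{\sqrt{2}}$ in our range. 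Thus Case (i) is compute-optimal exactly on Phase IVb, and the IVa/IVb boundary is the \emph{$\beta$-independent} horizontal line $\alpha=1-1/\sqrt{2}$ drawn in Fig.~\ref{fig:phase_diagram}.

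The principal obstacle is this sign/inequality comparison, which is easy to mishandle because the denominator $2\alpha\beta+\alpha-2\beta$ can change sign outside the assumed parameter window; careful bookkeeping is what makes everything collapse to the clean condition $2\alpha^2-4\alpha+1>0$ and, in particular, gives a $\beta$-independent threshold. A routine consistency check, requiring no new ideas, then verifies that the optimal pair $(d^\star,\, r^\star=\f/d^\star)$ lies in the asymptotic window $1 \lesssim \gamma B r^\star \lesssim (d^\star)^{2\alpha}$ where the asymptotics of $\mathscr{F}_{pp},\mathscr{K}_{pp},\mathscr{F}_0$ from Table~\ref{table:forcing function_appendix} are valid, using only $2\alpha<1$, $2\beta>1$, and the explicit scaling $d^\star \asymp \f^{(\alpha-\beta)/(2\alpha\beta+\alpha-2\beta)}$.
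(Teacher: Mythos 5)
Your proposal is correct and follows essentially the same route as the paper: reuse the two corner computations from the Phase IVa proposition (Lemma~\ref{lem:compute_opt_max} applied to $\mathscr{F}_{pp}=d^{1-2\alpha}\mathscr{K}_{pp}$ and $d^{1-2\alpha}\mathscr{K}_{pp}=\mathscr{F}_{0}$) and then check that Case 1 has the smaller flops exponent throughout $\tfrac14<\alpha<1-\tfrac{1}{\sqrt2}$, $2\beta>1$. Your explicit sign analysis and the factorisation $(2\alpha^2-4\alpha+1)(2\beta-1)>0$ correctly supply the comparison that the paper states without detail, and it recovers the $\beta$-independent IVa/IVb boundary at $\alpha=1-1/\sqrt{2}$.
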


\begin{proof} The computations are exactly the same as in Proposition~\ref{prop:phase_iv_a_compute_optimal}. For the $\alpha$'s and $\beta$'s in this region, we see that Case 1 has the smaller exponent on $\f$ than Case 2. 
\end{proof}

\subsubsection{Phase Ib}

In this case, the approximate loss curve is given by (Proposition~\ref{prop:phase_1b_loss_formula} with \eqref{eq:phase_1b_order_formula})
\begin{equation} \label{eq:phase_5_max_function}
\begin{aligned}
\tilde{\mathscr{P}} ( \tfrac{\f}{d}, d ) 
&
\asymp \max \{ \mathscr{F}_{pp}( \tfrac{\f}{d}, d ),  \mathscr{F}_0( \tfrac{\f}{d}, d )\}
\\
&
\asymp \max \{ \big ( \tfrac{\f}{d} \big )^{-(1 + \beta / \alpha) + 1/(2 \alpha)} \times d^{(2\alpha -1)(-(1 + \beta/\alpha) + 1/(2\alpha))},  d^{-2 \alpha + 1-2 \beta} \}.
\end{aligned}
\end{equation}
Note this is true for $\alpha > 1/4$, but we expect this to hold without this extra assumption. 

With this, we give a description of the optimal compute curve. 
\begin{proposition}[Phase Ib: Compute-optimal Curve] \label{prop: phase_ib_optimal_compute} Suppose we are in Phase Ib, that is, $2 \beta < 1$, $\tfrac{1}{4} < \alpha < \tfrac{1}{2}$, and $\alpha + \beta > \tfrac{1}{2}$. The compute-optimal curve using $\tilde{\mathscr{P}}(\tfrac{\f}{d},d)$ in \eqref{eq:phase_5_max_function} occurs when $\mathscr{F}_{pp}(\tfrac{\f}{d}, d) = \mathscr{F}_0( \tfrac{\f}{d}, d)$. Precisely, the optimal $d^{\star}$ which minimizes $\tilde{\mathscr{P}}( \tfrac{\f}{d}, d )$ is
\[
d^{\star}_{\text{Phase Ib}}
\asymp
\f^{\tfrac{1}{2}},
\]
and the compute-optimal curve is 
\begin{align*}
    \tilde{\mathscr{P}}^{\star}_{\text{Phase Ib}} (\f) 
    \asymp
    \f^{\tfrac{1}{2} - \alpha - \beta} . 
\end{align*}
\end{proposition}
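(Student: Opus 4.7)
The plan is to apply the general corner-point machinery of Lemma~\ref{lem:compute_opt_max} to the simplified loss curve in Phase Ib. The key simplification is that by Proposition~\ref{prop:phase_1b_loss_formula} together with \eqref{eq:phase_1b_order_formula}, throughout Phase Ib we have $\mathscr{P}(r) \asymp \mathscr{F}_{pp}(r) + \mathscr{F}_0(r)$, so the max in \eqref{eq:phase_5_max_function} involves only two terms. Since $\mathscr{F}_{pp}$ is decreasing in $r$ (and power-law in the relevant window) while $\mathscr{F}_0$ is independent of $r$ but decreasing in $d$, there is exactly one corner to check and no case analysis is required --- in contrast to Phases II, III, or IV where a genuine comparison between competing corners had to be ruled out.

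The first step is to read off the monomial form. Because we are below the high-dimensional line, the learning rate satisfies $\gamma \asymp d^{2\alpha-1}$ (see Proposition~\ref{prop: sufficient_conditions_learning_rate} and the remark following it), so substituting into the asymptotics from Table~\ref{table:forcing function_appendix} gives, in the working window $M < \gamma B r \lesssim d^{2\alpha}$,
\[
\mathscr{F}_{pp}\!\left(\tfrac{\f}{d},d\right) \asymp \Bigl(\tfrac{\f}{d}\Bigr)^{-\gamma_0} d^{(2\alpha-1)(-\gamma_0)}, \qquad \mathscr{F}_0\!\left(\tfrac{\f}{d},d\right) \asymp d^{-(2\alpha-1+2\beta)},
\]
with $\gamma_0 := 1 + \beta/\alpha - 1/(2\alpha) = (2\alpha+2\beta-1)/(2\alpha) > 0$ by the hypothesis $\alpha + \beta > 1/2$. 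This matches the template of Lemma~\ref{lem:compute_opt_max} with exponents $(\gamma_0, p_0) = (\gamma_0, (2\alpha-1)\gamma_0)$ and $(\gamma_1, p_1) = (0, 2\alpha - 1 + 2\beta)$, and $\mathscr{C}_0, \mathscr{C}_1$ positive constants depending only on $\alpha, \beta$.

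The second step is to plug these exponents into the formulas of Lemma~\ref{lem:compute_opt_max}. A short computation, using $1 - (2\alpha-1) = 2(1-\alpha)$, collapses the denominator $\gamma_1 - p_1 - \gamma_0 + p_0$ to $-(2\alpha+2\beta-1)/\alpha$ and the numerator $-\gamma_0+\gamma_1$ to $-(2\alpha+2\beta-1)/(2\alpha)$, which yields
\[
d^{\star}_{\text{Phase Ib}} \asymp \f^{1/2}.
\]
Feeding this back into $\tilde{\mathscr{P}}^{\star}(\f) \asymp \f^{-\gamma_0}(d^{\star})^{\gamma_0 - p_0}$ and using $\gamma_0 - p_0 = 2(1-\alpha)\gamma_0$ gives the claimed compute-optimal value $\tilde{\mathscr{P}}^{\star}_{\text{Phase Ib}}(\f) \asymp \f^{1/2 - \alpha - \beta}$.

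The last step is to verify that the corner where $\mathscr{F}_{pp} = \mathscr{F}_0$ actually lies inside the validity window for the asymptotics. At $d = d^\star \asymp \f^{1/2}$ one has $\gamma B r \asymp d^{2\alpha-1}\cdot \f/d \asymp \f^{\alpha}$, and the two functions are balanced at this scale, i.e.\ at $\gamma B r$ a positive power of $\f$; this sits strictly between the burn-in $M$ (for any fixed $M$, once $\f$ is large enough) and the saturation scale $d^{2\alpha} \asymp \f^{\alpha}$, so the asymptotics used above are applicable modulo constants, which is all that the $\asymp$ notation requires. The only mild pitfall --- and the only place one has to think rather than compute --- is this consistency check with the validity window; everything else is a direct substitution into Lemma~\ref{lem:compute_opt_max}.
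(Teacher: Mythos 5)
Your proposal is correct and takes essentially the same route as the paper: the paper's proof is exactly an application of Lemma~\ref{lem:compute_opt_max} with $\gamma_0 = 1+\beta/\alpha-1/(2\alpha)$, $p_0=(2\alpha-1)\gamma_0$, $\gamma_1=0$, $p_1=2\alpha-1+2\beta$, and you supply the same exponents plus the explicit arithmetic. One small quibble with your final check: at the corner one has $\gamma B \cdot \f/d^{\star} \asymp \f^{\alpha} \asymp (d^{\star})^{2\alpha}$, so the tradeoff sits \emph{at} the saturation scale rather than strictly inside the window, but this is harmless since Proposition~\ref{prop:phase_1b_loss_formula} gives $\mathscr{P}\asymp\mathscr{F}_{pp}+\mathscr{F}_0$ for all $\gamma B r > M$ and $\tilde{\mathscr{P}}$ only needs to track the true curve up to constants.
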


\begin{proof}
We apply Lemma~\ref{lem:compute_opt_max} with 
\begin{gather*}
    \mathscr{C}_0 = 1, \quad \gamma_0 = 1 + \beta/\alpha - 1/(2 \alpha), \quad  p_0= (2\alpha -1)(1 + \beta/\alpha - 1/(2\alpha))\\
   \text{and} \quad  \mathscr{C}_1 = 1, \quad \gamma_1 = 0, \quad p_1 = 2 \alpha -1 + 2 \beta.
\end{gather*}
\end{proof}
We expect the conclusions of Prop.~\ref{prop: phase_ib_optimal_compute} to hold for the $(\alpha, \beta)$ pairs where $2\beta < 1$, $2 \alpha < 1$, and $2 (\alpha + \beta) > 1$.

\subsubsection{Phase Ic}
In this case, the approximate loss curve is given by (Proposition~\ref{prop:phase_1c_loss_formula} with \eqref{eq:phase_1c_order_formula})
\begin{equation} \label{eq:phase_1c_max_function}
\begin{aligned}
\tilde{\mathscr{P}} ( \tfrac{\f}{d}, d ) 
&
\asymp \max \{  \mathscr{F}_{pp}( \tfrac{\f}{d}, d ), \mathscr{F}_0( \tfrac{\f}{d}, d )\}
\\
&
\asymp
\max \{ \big ( \tfrac{\f}{d} \big )^{-(1 + \beta / \alpha) + 1/(2 \alpha)} \times d^{(2\alpha -1)(-(1 + \beta/\alpha) + 1/(2\alpha))}, d^{-2 \alpha} \}.
\end{aligned}
\end{equation}
Note again that this is speculative as we do not have the bounds for the kernel. However we do believe that this is correct. With this, we give a description of the compute-optimal curve. 
\begin{proposition}[Phase Ic: Compute-optimal Curve] \label{prop: phase_ic_optimal_compute} Suppose we are in Phase Ic, that is, $2 \beta > 1$ and $0 < \alpha < \tfrac{1}{4}$ and suppose \eqref{eq:phase_1c_max_function} is true. The compute-optimal curve using $\tilde{\mathscr{P}}(\tfrac{\f}{d},d)$ in \eqref{eq:phase_4_max_function} occurs when $\mathscr{F}_{pp}(\tfrac{\f}{d}, d) = \mathscr{F}_0( \tfrac{\f}{d}, d)$. Precisely, the optimal $d^{\star}$ which minimizes $\tilde{\mathscr{P}}( \tfrac{\f}{d}, d )$ is
\[
d^{\star}_{\text{Phase Ic}}
\asymp
\f^{\tfrac{1-2(\alpha + \beta)}{2(\alpha (2\beta -3) - 2\beta + 1)} },
\]
and the compute-optimal curve is 
\begin{align*}
    \tilde{\mathscr{P}}^{\star}_{\text{Phase Ic}} (\f) 
    \asymp
    \f^{ \tfrac{\alpha(2 \alpha + 2 \beta -1)}{\alpha (2\beta -3) - 2\beta + 1 } }. 
\end{align*}
\end{proposition}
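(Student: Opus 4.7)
The plan is to apply Lemma~\ref{lem:compute_opt_max} directly to the two competing terms in the maximum \eqref{eq:phase_1c_max_function}. Since in Phase Ic the approximate loss involves only $\mathscr{F}_{pp}$ and $\mathscr{F}_0$, there is a single corner where these two curves meet, so no case analysis (as in Phases II or III) is required. The entire proof is therefore algebraic, conditional on \eqref{eq:phase_1c_max_function} holding, which is flagged explicitly in the statement.

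First I would rewrite each term in the form $\mathscr{C}_i (\tfrac{\f}{d})^{-\gamma_i} d^{-p_i}$ required by Lemma~\ref{lem:compute_opt_max}. Writing the shorthand $A \defas 1+\tfrac{\beta}{\alpha}-\tfrac{1}{2\alpha} = \tfrac{2\alpha+2\beta-1}{2\alpha}$, the pure-point term is $(\tfrac{\f}{d})^{-A} d^{-(2\alpha-1)A}$ so $(\gamma_0,p_0)=(A,(2\alpha-1)A)$ with $\mathscr{C}_0$ absorbed into the $\asymp$; the limit-level term gives $(\gamma_1,p_1)=(0,2\alpha)$. Note $p_0<0$ because $2\alpha<1/2$, which is harmless for the lemma.

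Next I would apply the formula $d^\star\asymp\f^{(\gamma_1-\gamma_0)/(\gamma_1-p_1-\gamma_0+p_0)}$ and simplify. The denominator is $-(p_1+\gamma_0-p_0)=-\big(2\alpha+(2-2\alpha)A\big)$; substituting $A=(2\alpha+2\beta-1)/(2\alpha)$ and combining yields
\[
p_1+\gamma_0-p_0 \;=\; \frac{6\alpha+4\beta-2-4\alpha\beta}{2\alpha} \;=\; -\frac{\alpha(2\beta-3)-2\beta+1}{\alpha},
\]
which delivers the announced exponent $\tfrac{1-2(\alpha+\beta)}{2(\alpha(2\beta-3)-2\beta+1)}$ for $d^\star$. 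The compute-optimal value follows from $\tilde{\mathscr{P}}^\star(\f)\asymp\f^{-\gamma_0}(d^\star)^{\gamma_0-p_0}$ with $\gamma_0-p_0=(2-2\alpha)A$. The only slightly non-trivial step is verifying that the combined exponent simplifies: the numerator
\[
-2\big(\alpha(2\beta-3)-2\beta+1\big)+(2-2\alpha)(1-2\alpha-2\beta)
\]
telescopes to $4\alpha^2$, after which the exponent collapses to $\alpha(2\alpha+2\beta-1)/(\alpha(2\beta-3)-2\beta+1)$, matching the claim.

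There is no real analytic obstacle here, precisely because the proposition is stated conditionally on the (unproven) form \eqref{eq:phase_1c_max_function}; once we grant that form, the proof is a tight algebraic reduction parallel to the proofs of Propositions~\ref{prop: phase_ia_optimal_compute} and~\ref{prop: phase_ib_optimal_compute}. The one conceptual check is that the minimum in $d$ does land at the $\mathscr{F}_{pp}=\mathscr{F}_0$ corner, which is immediate: $\mathscr{F}_{pp}(\f/d,d)$ is strictly decreasing in $d$ (for fixed $\f$) while $\mathscr{F}_0(d)=d^{-2\alpha}$ is strictly increasing as $d$ decreases, so the max has its unique minimum where the two curves cross.
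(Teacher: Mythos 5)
Your proposal is correct and follows essentially the same route as the paper: the paper's proof consists precisely of applying Lemma~\ref{lem:compute_opt_max} with $\gamma_0 = 1+\beta/\alpha-1/(2\alpha)$, $p_0=(2\alpha-1)(1+\beta/\alpha-1/(2\alpha))$, $\gamma_1=0$, $p_1=2\alpha$, exactly the parameters you use, and your algebraic simplifications (including the telescoping to $4\alpha^2$) check out. The only additions you make beyond the paper are explicit verification of the exponent arithmetic and the monotonicity remark locating the corner, both of which are consistent with, and implicit in, the paper's argument.
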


\begin{proof}
We apply Lemma~\ref{lem:compute_opt_max} with 
\begin{gather*}
    \mathscr{C}_0 = 1, \quad \gamma_0 = 1 + \beta/\alpha - 1/(2 \alpha), \quad  p_0= (2\alpha -1)(1 + \beta/\alpha - 1/(2\alpha))\\
   \text{and} \quad  \mathscr{C}_1 = 1, \quad \gamma_1 = 0, \quad p_1 = 2 \alpha.
\end{gather*}
\end{proof}

\begin{figure}[t!]
    \centering
\begin{minipage}{0.4 \textwidth} \includegraphics[scale =0.25]{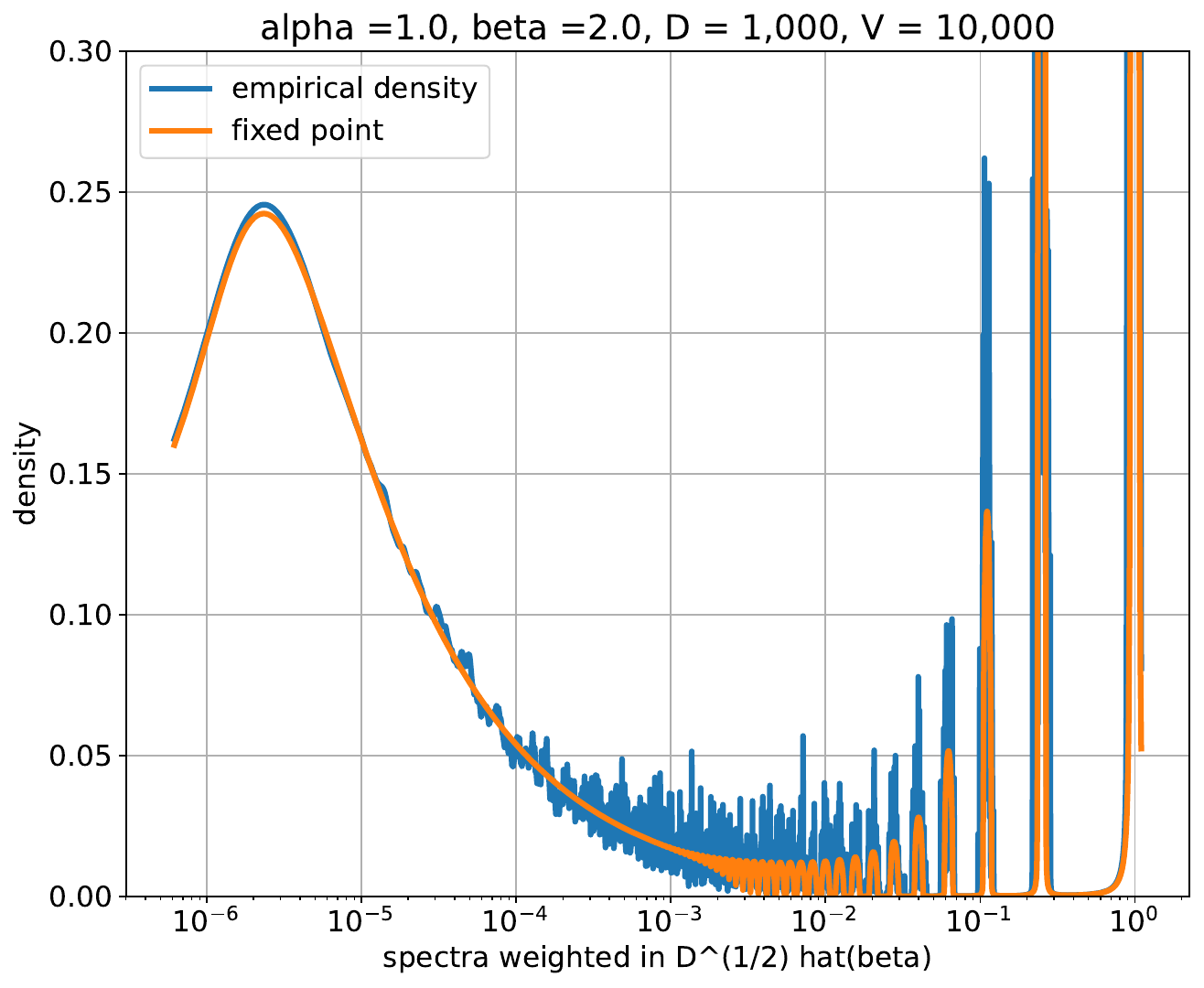}
\end{minipage} \hspace{0.5cm}
\begin{minipage}{0.4 \textwidth}
 \caption{ \textbf{Spectra of empirical and theory weighted by $D^{1/2} \hat{\beta}$.} Empirical spectra (blue) averaged over 100 randomly generated matrices $W \in \mathbb{R}^{v \times d}$. Point mass at $z = 0$ was manually removed. Theory (orange) computed using the resolvent formula \eqref{eq:resolvent_intro} and solved with Newton method ($10$ iterations for each $z$; $z$-values were spaced at $0.1 d^{-2\alpha}$ with an imaginary part at $d^{-2\alpha}$). There is a continuous part that evolves into a pure point outliers.  
     } \label{fig:spectra_parts}
\end{minipage} 
\end{figure}

\section{Spectrum of $\hat{K}$: random matrix theory} \label{sec:spectrum_K}


In this section, we analyze the spectrum of $\hat{K} = D^{1/2} W W^T D^{1/2}$. For this, we use standard tools from random matrix theory to derive a \textit{fixed point equation} for the Stieljes transform of $\hat{K}$. Indeed, by knowing the Stieljes transform of $\hat{K}$, one can recover the spectral properties. 

In particular, we will need the spectra of $\hat{K}$ decomposes into 3 parts:
\begin{enumerate}
    \item \textit{Point mass at $z = 0$:} There will be a point mass at $z = 0$ of mass $v - d$ for trivial reasons since $v \gg d$. 
    \item \textit{Pure point outliers:} There will be a set of outliers, the pure point spectra, which are at constant order and nearly equal to $j^{-2 \alpha}$ for $j = 1, 2, \hdots,$
    \item \textit{Absolutely continuous part:} The spectral bulk, the absolutely continuous part, which form a density on a shrinking window.
\end{enumerate}
In fact, we will not need to give a complete picture about all the spectra.


\subsection{Self-consistent approximation for $(\hat{K}-z)^{-1} = (D^{1/2} W W^T D^{1/2} - z)^{-1}$.}

In this section, we state the deterministic equivalent for the random matrix $(\hat{K}-z)^{-1}$ and give some properties of its ``self-consistent spectra.'' 
The starting point for this is the self-consistent equation
\begin{equation} \label{eq:self_consistent}
    m(z) = \frac{1}{ 1 + \tfrac{1}{d} \sum_{j=1}^V \frac{j^{-2 \alpha}}{j^{-2\alpha}m(z) -z} } \quad \text{where} \quad (D^{1/2} W W^T D^{1/2}-z)_{jj}^{-1} \longleftrightarrow \frac{1}{j^{-2\alpha} m(z) - z}.
\end{equation}
The identification $\longleftrightarrow$ can be made rigorous by showing that 
\[
|\tr \bigl(A (D^{1/2} W W^T D^{1/2}-z)^{-1}\bigr) - 
\tr \bigl(A \operatorname{Diag}(j^{-2\alpha} m(z) - z : j)^{-1}\bigr)| \Prto[d] 0,
\]
for deterministic sequences of test matices $\{A\}$ with bounded nuclear norm and generally with very high probability in $d$.  We note that we would need a more precise quantification of errors to be useful for establishing the scaling law for the actual random matrices.  
In Figure~\ref{fig:spectra_parts}, we solve the theoretical spectrum by solving the fixed point equation for $m(z)$ using a Newton method on a grid of complex $z$. 

The function $m$ can also be related to the trace of $(\widehat{K}-z)^{-1}$.  From the definition of $m$, we can derive the explicit representation theorem.
\begin{lemma}\label{lem:mresolvent}
    For any $v,d$ and $z \in \mathbb{C}$ with $\Im(m(z))>0$
    \[
        \sum_{j=1}^v 
        \frac{1}{j^{-2\alpha}m(z) - z}
        =
        \frac{-v + (1-m(z))d}{z}.
    \] 
\end{lemma}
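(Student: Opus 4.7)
The proof plan is essentially a direct algebraic manipulation of the fixed-point equation \eqref{eq:resolvent_intro} defining $m(z)$. The hypothesis $\Im(m(z))>0$ is used only to ensure that $j^{-2\alpha}m(z)-z \neq 0$ for all $j$ (so every reciprocal in sight is well-defined) and that division by $z$ is legitimate; no real analytic content is needed beyond that.

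The first step will be to clear the denominator in the defining relation
\[
m(z) \;=\; \frac{1}{1 + \tfrac{1}{d}\sum_{j=1}^{v}\frac{j^{-2\alpha}}{j^{-2\alpha}m(z)-z}},
\]
which, after multiplying both sides by the parenthesized quantity, rearranges to
\[
\frac{1}{d}\sum_{j=1}^{v}\frac{m(z)\,j^{-2\alpha}}{j^{-2\alpha}m(z)-z} \;=\; 1 - m(z).
\]

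The second step is the key algebraic trick: for each $j$, write
\[
\frac{m(z)\,j^{-2\alpha}}{j^{-2\alpha}m(z)-z} \;=\; \frac{(j^{-2\alpha}m(z)-z) + z}{j^{-2\alpha}m(z)-z} \;=\; 1 \;+\; \frac{z}{j^{-2\alpha}m(z)-z}.
\]
Substituting this into the previous display yields
\[
\frac{1}{d}\left( v + z\sum_{j=1}^{v}\frac{1}{j^{-2\alpha}m(z)-z}\right) \;=\; 1 - m(z),
\]
and isolating the remaining sum gives
\[
\sum_{j=1}^{v}\frac{1}{j^{-2\alpha}m(z)-z} \;=\; \frac{d(1-m(z)) - v}{z} \;=\; \frac{-v+(1-m(z))d}{z},
\]
which is exactly the claimed identity. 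There is no real obstacle here; the only nontrivial step is spotting the decomposition $\frac{m j^{-2\alpha}}{j^{-2\alpha}m-z}=1+\frac{z}{j^{-2\alpha}m-z}$, after which the calculation is automatic. I would present the argument as a short three-line computation with this decomposition highlighted.
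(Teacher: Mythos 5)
Your proposal is correct and is essentially the paper's own argument: the paper also multiplies through by $z$, adds and subtracts $j^{-2\alpha}m(z)$ in the numerator (the mirror image of your decomposition $\tfrac{mj^{-2\alpha}}{j^{-2\alpha}m-z}=1+\tfrac{z}{j^{-2\alpha}m-z}$), and then invokes the defining equation for $m$ in the form $\sum_{j=1}^v \tfrac{j^{-2\alpha}}{j^{-2\alpha}m(z)-z}=d\bigl(\tfrac{1}{m(z)}-1\bigr)$. No gap; the two write-ups differ only in the order of the algebraic steps.
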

\begin{proof}
Multiplying through by $z$ we should evaluate
\[
    \sum_{j=1}^v 
    \frac{z}{j^{-2\alpha}m(z) - z}.
\] 
Adding and subtracting $j^{-2\alpha}m(z)$
\[
    \sum_{j=1}^v
    \frac{z}{j^{-2\alpha}m(z) - z}
    =
    -v 
    +m(z)
    \sum_{j=1}^v 
    \frac{j^{-2\alpha}}{j^{-2\alpha}m(z) - z}.
\]
Using the definition of $m$,
\[
    \sum_{j=1}^v 
    \frac{j^{-2\alpha}}{j^{-2\alpha}m(z) - z}=d\biggl(\frac{1}{m(z)}-1\biggr).
\]
Substituting this in completes the proof.
\end{proof}

While an explicit solution of $m$ is unavailable, we can derive many properties of $m$, starting with:
\begin{proposition}\label{propE:uniqueM}
    Suppose $X$ is a real random variable compactly supported in $[0,\infty)$.
    For every $z \in \mathbb{H} = \{ z \in \mathbb{C} : \Im z > 0\}$,
    there is a unique solution of \eqref{eq:self_consistent} satisfying $\Im m(z) < 0$.  
    Moreover, this solution is an analytic function, and it can be solved by iterating the fixed point map 
    \[
        (m(z) : z \in \mathbb{H})
         \mapsto 
          \biggl(
             \frac{1}{ 1 + \tfrac{v}{d} 
             \Exp \left( \frac{X}{Xm(z) -z} \right)
             }
             : z \in \mathbb{H}
         \biggr)
    \]
    initialized with $m\equiv 1$.  
    Furthermore, if we consider the equation for $z \in \mathbb{H}$
    \[
        F(m;z) 
        \defas m + \frac{v}{d} \Exp \left( \frac{Xm}{Xm -z} \right)
        =1,
    \]
    then this is solved uniquely by $m(z)$ and moreover it is stable in that $\partial_m F \neq 0$ in a neighborhood of the solution.
\end{proposition}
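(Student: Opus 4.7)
My plan is to prove all four claims (existence via iteration, uniqueness, analyticity, and the nondegeneracy of $\partial_m F$) through a single Schwarz--Pick-style contraction argument for the map
\[
T(m) \defas \frac{1}{1 + \tfrac{v}{d}\,\mathbb{E}\!\left(\frac{X}{Xm-z}\right)}.
\]

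First I would check that $T$ is a holomorphic self-map of the lower half-plane $\mathbb{H}_-$ for each fixed $z \in \mathbb{H}$. If $\Im m \le 0$ and $X \ge 0$ then $\Im(Xm - z) = X\Im m - \Im z < 0$ (strictly, using $\Im z > 0$), so $\Im\!\bigl(\tfrac{X}{Xm-z}\bigr) \ge 0$ with strict positivity on $\{X > 0\}$, which has positive probability by non-triviality of $X$; consequently $\Im\!\bigl(1 + \tfrac{v}{d}\mathbb{E}(X/(Xm-z))\bigr) > 0$ and thus $\Im T(m) < 0$. This immediately justifies the iteration from $m \equiv 1$: even though the seed sits on $\partial\mathbb{H}_-$, a single application of $T$ lands strictly inside $\mathbb{H}_-$.

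Next I would establish uniqueness directly. If $m_1, m_2$ were two solutions in $\mathbb{H}_-$, subtracting the fixed-point equations and simplifying gives
\[
1 = \frac{\tfrac{v}{d}\,\mathbb{E}\!\left(\tfrac{X^2}{(Xm_1-z)(Xm_2-z)}\right)}{g(m_1)\,g(m_2)},
\qquad g(m) \defas 1 + \tfrac{v}{d}\mathbb{E}\!\left(\tfrac{X}{Xm-z}\right).
\]
A Cauchy--Schwarz bound controls the numerator by $\tfrac{v}{d}\sqrt{\mathbb{E}(X^2/|Xm_1-z|^2)\,\mathbb{E}(X^2/|Xm_2-z|^2)}$, while for each $i$ one has $|g(m_i)|^2 \ge (\Im g(m_i))^2 \ge \bigl(\tfrac{v}{d}\bigr)^{\!2}\!\bigl(\mathbb{E}\tfrac{X(\Im z - X\Im m_i)}{|Xm_i-z|^2}\bigr)^{\!2}$, which is strictly larger than $\bigl(\tfrac{v}{d}\bigr)^{\!2}\!\mathbb{E}(X^2/|Xm_i-z|^2)$ thanks to the extra $\Im z$ factor (another Cauchy--Schwarz pairing). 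Multiplying the two lower bounds yields $|g(m_1)g(m_2)| > \tfrac{v}{d}\bigl|\mathbb{E}(X^2/((Xm_1-z)(Xm_2-z)))\bigr|$, contradicting the identity unless $m_1 = m_2$.

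For existence and convergence of the iterates $m_n = T^n(1)$, I would invoke Denjoy--Wolff for the holomorphic self-map $T$ of the disc $\mathbb{H}_-$. Since $T$ is clearly not a Möbius automorphism (it is a nonlinear expectation functional), the Schwarz--Pick lemma makes $T$ a strict hyperbolic contraction as soon as any orbit stays in a compact subset of $\mathbb{H}_-$. I would obtain such a compactness bound by noting that, from Lemma~\ref{lem:mresolvent} and the identity $F(m,z)=1$, any potential limit point satisfies $|m|$ and $1/|\Im m|$ bounded in terms of $z$ and $\mathbb{E} X$. Combined with the uniqueness just proved, this forces the iterates to converge to the unique fixed point.

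Finally, analyticity and $\partial_m F \neq 0$ follow together. A direct differentiation gives
\[
\partial_m F(m,z) \;=\; 1 - \tfrac{v}{d}\,z\,\mathbb{E}\!\left(\tfrac{X}{(Xm-z)^2}\right),
\]
and the same Cauchy--Schwarz/strict-positivity computation used in the uniqueness step shows this is nonzero at $m(z)$ (indeed it is exactly the statement that $|\partial_m T(m)| < 1$ at the fixed point, rewritten in terms of $F$). The holomorphic implicit function theorem then yields analyticity of $m(z)$ on $\mathbb{H}$, and stability of the solution follows in a neighborhood. I expect the main technical obstacle to be executing the strict Cauchy--Schwarz inequality in step~2 cleanly enough that the same computation can be recycled for step~4; the delicate point is that strictness comes entirely from $\Im z > 0$, so one must be careful never to discard that term in the chain of inequalities.
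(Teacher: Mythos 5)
You are taking the classical Stieltjes--transform route (subtract the two fixed-point equations and close with Cauchy--Schwarz), whereas the paper gets uniqueness and stability in one stroke from the Schwarz--Pick lemma: $G$ is a strict holomorphic self-map of the lower half-plane, hence a hyperbolic contraction, and $|\partial_m G|<|\Im G|/|\Im m|=1$ at the fixed point, which gives $\partial_m F\neq 0$. Your route can be made to work, but the key inequality in your uniqueness step is false as stated. You claim
\[
\Bigl(\Exp\tfrac{X(\Im z - X\Im m_i)}{|Xm_i-z|^2}\Bigr)^{2} \;>\; \Exp\Bigl(\tfrac{X^2}{|Xm_i-z|^2}\Bigr),
\]
``thanks to the extra $\Im z$ factor''. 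Test it at $z=i\eta$ with $\eta$ large, where the actual solution has $m_i\to 1$ and $\Im m_i\to 0^-$: the left side is $\sim(\Exp X)^2/\eta^2$ while the right side is $\sim\Exp(X^2)/\eta^2$, and since $(\Exp X)^2\le\Exp(X^2)$ the claimed strict inequality fails for all large $\eta$ (for non-degenerate $X$ it reverses outright). No pointwise use of $\Im z$ can rescue a bound of this shape, because it nowhere uses that $m_i$ is a fixed point; the powers of $v/d$ on the two sides are also mismatched relative to what your final multiplication needs.

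The missing ingredient is the self-consistency itself. With $g(m)=1+\tfrac{v}{d}\Exp\bigl(\tfrac{X}{Xm-z}\bigr)$, the fixed-point relation $m_i=1/g(m_i)$ gives $|\Im m_i|=\Im g(m_i)/|g(m_i)|^2$, and
\[
\Im g(m_i)\;=\;\tfrac{v}{d}\,\Exp\tfrac{X\,\Im z}{|Xm_i-z|^2}\;+\;\tfrac{v}{d}\,\Exp\tfrac{X^2\,|\Im m_i|}{|Xm_i-z|^2}
\;>\;\tfrac{v}{d}\,\Exp\Bigl(\tfrac{X^2}{|Xm_i-z|^2}\Bigr)\frac{\Im g(m_i)}{|g(m_i)|^2},
\]
so dividing by $\Im g(m_i)>0$ yields the correct per-factor bound $|g(m_i)|^2>\tfrac{v}{d}\,\Exp\bigl(X^2/|Xm_i-z|^2\bigr)$ (one power of $v/d$, not two). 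Multiplying over $i=1,2$ and using your Cauchy--Schwarz bound on the numerator contradicts your identity, which proves uniqueness; the same corrected computation gives $|\partial_m T|<1$ at the fixed point, hence $\partial_m F=g\,(1-\partial_m T)\neq 0$, so your step 4 goes through (or follows in one line from Schwarz--Pick as in the paper). A further caution on step 3: ruling out a boundary Denjoy--Wolff point by saying ``any potential limit point satisfies $F(m,z)=1$ and is therefore bounded'' is circular, since if the orbit converges to the boundary there is no interior limit point to bound; you need a direct a priori estimate keeping $T^n(1)$ in a compact subset of the lower half-plane (straightforward when $\Im z$ is large, using $|Xm-z|\ge\Im z$), and then extend to all of $\mathbb{H}$ by normal families together with the uniqueness just established.
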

\begin{proof}
    Let $G$ be the mapping
    \[
    G(m; z) \defas            
    \frac{1}{ 1 + \frac{v}{d} \Exp \left( \frac{X}{Xm(z) -z} \right)}.  
    \]
    For each fixed $z \in \mathbb{H}$, this is a strict self map into the lower half-plane of $m$.  Self-maps are strict contractions in the hyperbolic metric (from the Schwarz-Pick lemma), and thus there is a unique solution of $m(z) = G(m(z);z)$.  

    We now introduce $F$ according to the formula
    \[
        F(m;z) = m + \frac{v}{d} \Exp \left( \frac{Xm}{Xm -z} \right),
    \]
    so that $F(m;z)=m/G(m;z).$  Introduce the stability operator
    \[
        \partial_m F  = 1 - \frac{v}{d} \Exp \left( \frac{Xz}{(Xm -z)^2} \right).
    \]
    Then we have
    \[
        \partial_m F = \frac{G - m\partial_m G}{G^2}.
    \]
    By the Schwarz-Pick lemma (in the half-plane version), we have
    \[
        |\partial_m G| < \frac{|\Im G(m;z)|}{|\Im m|},
    \]
    and hence in some sufficiently small neighborhood of $G(m(z))=m(z)$, we therefore have 
    \[
        |\partial_m G| < 1.
    \] 
    Hence also, in a sufficiently small neighborhood of $m(z)$
    \[
        \partial_m F = m\frac{1 - \partial_m G}{G^2} + \frac{G-m}{G^2} \neq 0.
    \]
\end{proof}
\noindent While this proposition does not state what happens on the real line, in regions of the line where $m$ has a finite, real-valued limit, it agrees with its reflection in the lower half-plane.  Hence in any open subset of $\R$ where $\lim_{\eta \to 0} m(x+i\eta)$ exists and is real, $m$ will be analytic.

From Proposition \ref{propE:uniqueM}, we can derive some explicit estimates on $m$, which will be sufficient for deriving the estimates on the forcing and kernel functions.  We summarize these properties in the following:
\begin{proposition}\label{propE:Mestimates}
    Let $X$ be any random variable with support in $(0,1]$.
    Then the following hold:
    \begin{enumerate}
        \item \emph{Near 0}:  Suppose that $a < 0$ is a real-valued solution of 
        \[
            \frac{v}{d} \mathbb{E} \biggl( \frac{Xa}{Xa-1}\biggr)=1.
        \]
        Then $m$ is analytic in a neighborhood of $z=0$ and $m(z) = az + O(z^2).$  If furthermore if for some interval $[0,z_0]$ the equation
        \[
            -za + \frac{v}{d} \mathbb{E} \biggl( \frac{Xa}{Xa-1}\biggr)=1
        \]
        is solvable for $a < 0$, then $m$ is analytic in a complex neighborhood of the whole interval $[0,z_0].$
        \item \emph{Far away}:
        Let $\varrho_0(z)$ be the distance of $z$ to $[0,1].$
        Suppose that $z$ is such that $16 \frac{v}{d}{\Exp X}\leq {\varrho_0(z)^2}$.
        Then for some absolute constant $C>0$,
        \[
         |m(z)-1| \leq 8\frac{v}{d}\frac{\Exp X}{\varrho_0(z)}.
        \]
        Moreover suppose that $\left|\frac{v}{d}\Exp \bigl(\frac{X}{X-z}\bigr)\right| \leq \epsilon < \tfrac14$, and suppose that
        \[
        \delta \defas \sup_{m : |m-1|\leq 2\epsilon} 
        \frac{v}{d}\Exp \biggl(\frac{X^2}{|Xm-z|^2}\biggr) \leq \frac{1}{4}
        \quad\text{and}\quad
        \sup_{m : |m-1|\leq 2\epsilon} 
        \frac{v}{d}\Exp \biggl(\frac{X|z|}{|Xm-z|^2}\biggr) \leq \frac{1}{4}.
        \]
        Then 
        \[
        \biggl|m(z)-1+\frac{v}{d}\Exp \biggl(\frac{X}{X-z}\biggr)\biggr|
        \leq 2\delta \biggl|\frac{v}{d}\Exp \biggl(\frac{X}{X-z}\biggr)\biggr|.
        \]

    \end{enumerate}
\end{proposition}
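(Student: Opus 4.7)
}
My plan is to handle the three claims separately. For the near-zero expansion, I would first change variables by setting $m(z) = z\,n(z)$, which turns $F(m;z)=1$ into
\[
G(n,z) \defas zn + \frac{v}{d}\Exp\left[\frac{Xn}{Xn-1}\right] - 1 = 0.
\]
At $z = 0$ this reduces to the defining equation of $a$, so $G(a,0) = 0$. A direct computation gives $\partial_n G\big|_{(a,0)} = -\frac{v}{d}\Exp\bigl[\frac{X}{(Xa-1)^2}\bigr]$, which is strictly negative since $a<0$ and $X$ has positive support in $(0,1]$. The complex analytic implicit function theorem then produces a unique analytic $n(z)$ near $z=0$ with $n(0)=a$, yielding $m(z) = zn(z) = az + O(z^2)$. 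To identify this with the solution of Proposition \ref{propE:uniqueM}, I will approach $0$ from $\mathbb{H}$ along the imaginary axis and use continuity together with the $\Im m < 0$ uniqueness. The extension to a complex neighborhood of $[0,z_0]$ is then a covering argument: solvability of $-za + \frac{v}{d}\Exp[\frac{Xa}{Xa-1}]=1$ along the interval provides a candidate curve $a(z)$ to which IFT applies pointwise (the same derivative computation shows $\partial_n G \neq 0$ along this curve), and the local analytic branches patch together.

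For the crude far-away bound I would use the fixed-point form $\Phi(m) \defas 1 - \frac{v}{d}\Exp\bigl[\frac{Xm}{Xm-z}\bigr]$. Since $X\in[0,1]$, we have $|X-z|\geq \varrho_0(z)$, hence $|\Phi(1)-1|\leq \frac{v}{d}\Exp X/\varrho_0(z)$. On the disk $\{m : |m-1|\leq \tfrac{1}{2}\varrho_0(z)\}$ one gets $|Xm-z|\geq \tfrac{1}{2}\varrho_0(z)$ and the Lipschitz constant $|\Phi'(m)|\leq \frac{v}{d}\Exp[X|z|]/(\tfrac{1}{2}\varrho_0(z))^2$ is at most $\tfrac{1}{2}$ under the hypothesis $16\frac{v}{d}\Exp X\leq \varrho_0(z)^2$ (bounding $|z|$ crudely by $1+\varrho_0(z)$ and using the hypothesis again). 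Banach contraction combined with the uniqueness from Proposition \ref{propE:uniqueM} then gives the fixed point $m(z)$ in this disk with the quantitative estimate $|m(z)-1|\leq 2|\Phi(1)-1|/(1-\tfrac{1}{2})$, which accounts for the factor $8$ in the stated inequality (with slack to spare).

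For the refined bound, the starting identity is obtained by subtracting $\frac{v}{d}\Exp[\frac{X}{X-z}]$ from both sides of the rewritten equation $(1-m)/m = \frac{v}{d}\Exp[\frac{X}{Xm-z}]$; a one-line algebraic manipulation gives
\[
\frac{1-m}{m} - \frac{v}{d}\Exp\!\left[\frac{X}{X-z}\right] = (1-m)\,\frac{v}{d}\Exp\!\left[\frac{X^2}{(Xm-z)(X-z)}\right].
\]
Solving for $u\defas 1-m$ yields $u(1+B-A) = B - u^2 A$ with $B\defas \frac{v}{d}\Exp[\frac{X}{X-z}]$ and $A\defas \frac{v}{d}\Exp[\frac{X^2}{(Xm-z)(X-z)}]$. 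The crude bound combined with the hypothesis $|B|\leq \epsilon$ gives $|u|\leq 2\epsilon$ (so that $m$ lies in the $2\epsilon$-ball where $\delta$ is defined), and $|A|\leq \delta$ by the first supremum hypothesis together with $|X-z|\asymp|Xm-z|$. Rearranging, $u - B = [B(A-B) - u^2 A]/(1+B-A)$, and inserting these bounds together with $|1+B-A|\geq \tfrac{1}{2}$ gives $|u-B|\leq 2\delta|B|$, which is the claim since $m-1+B = -(u-B)$. The main obstacle will be the bookkeeping in this last step: to recover the clean $2\delta\,\epsilon$ form (rather than a looser $C(\delta+\epsilon)\epsilon$) one must factor $B$ out of the correction instead of bounding $|u-B|$ termwise, and use both hypothesis bounds rather than just one. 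The second hypothesis (with $X|z|$) is needed to ensure the initial Lipschitz contraction for $\Phi$ holds on the full $2\epsilon$-disk so that $|m-1|\leq 2\epsilon$ is actually valid input for the refined identity.
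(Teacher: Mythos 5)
Your Part 1 is essentially the paper's own argument (the same substitution $q=m/z$ and the same non\-vanishing derivative $\partial_q F = z-\tfrac{v}{d}\Exp[X/(Xq-1)^2]$), and in Part 2 your replacement of the paper's two devices --- integrating $|\dif m/\dif z|$ along a ray to infinity for the crude bound, and the quantitative stability lemma (Lemma~\ref{lem:stability}) for the refined bound --- by Banach fixed-point arguments for $\Phi(m)=1-\tfrac{v}{d}\Exp[\tfrac{Xm}{Xm-z}]$ is a legitimate alternative. In particular, the contraction on the $2\epsilon$-disk (Lipschitz constant $\le\tfrac14$ from the second supremum hypothesis, invariance from $|\Phi(1)-1|=|B|\le\epsilon$ with $B\defas\tfrac{v}{d}\Exp[\tfrac{X}{X-z}]$) is the right way to certify $|m(z)-1|\le 2\epsilon$; your earlier suggestion that the crude bound gives $|u|\le 2\epsilon$ is not correct, since $8\tfrac{v}{d}\Exp X/\varrho_0$ bears no relation to $\epsilon$. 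Three caveats: on the disk $|m-1|\le\tfrac12\varrho_0$ your Lipschitz estimate is $4|z|\tfrac{v}{d}\Exp X/\varrho_0^2\le(1+\varrho_0)/4$, which is $\le\tfrac12$ only for $\varrho_0\le 1$, so the stated hypothesis alone does not make $\Phi$ contractive for distant $z$ (handle $\varrho_0\ge1$ separately or integrate along a path to infinity as the paper does); bound $|A|\le\delta$ by Cauchy--Schwarz over the $2\epsilon$-ball rather than by an unquantified $|X-z|\asymp|Xm-z|$; and the fixed point produced by contraction still has to be identified with the canonical $\Im m<0$ solution of Proposition~\ref{propE:uniqueM}, which is exactly what Lemma~\ref{lem:stability} tracks.

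The genuine gap is the step you defer to ``bookkeeping'': the target $2\delta|B|$ cannot be recovered from your identity, because $u-B=\bigl(B(A-B)-u^2A\bigr)/(1+B-A)$ contains the term $-B^2/(1+B-A)$, of size $|B|^2$, and $\epsilon$ (hence $|B|$) is not comparable to the second-moment quantity $\delta$. Indeed the inequality fails as literally stated: take $X\equiv 0.01$ a.s., $v/d=10$, $z=-1$; all hypotheses hold with $\epsilon=0.1$ and $\delta\approx 10^{-3}$, yet $m(z)\approx 0.9099$ and $B\approx 0.0990$, so $|m(z)-1+B|\approx 8.9\times 10^{-3}$ while $2\delta|B|\approx 2\times 10^{-4}$. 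What your algebra honestly yields is $|m(z)-1+B|\le C|B|(\delta+\epsilon)$, and what the paper's own route yields once the algebra is done carefully (the displayed numerator $X^2(1-m_0)$ should be $Xz(1-m_0)$, i.e.\ $1-F(1-B)=-B\,\tfrac{v}{d}\Exp\bigl[\tfrac{Xz}{(X-z)(X(1-B)-z)}\bigr]$) is $|m(z)-1+B|\le\tfrac43|B|\sup_{|m-1|\le2\epsilon}\tfrac{v}{d}\Exp\tfrac{X|z|}{|Xm-z|^2}$, i.e.\ the error is controlled by the second-hypothesis quantity, not by $\delta$. Aim for one of these corrected forms rather than the literal $2\delta|B|$: they are what the only downstream application (Proposition~\ref{propE:largez}) needs, since on the compact set considered there all of these quantities are $O(d^{-\min\{1,2\alpha\}})$ and the claimed $d$-rate is unaffected.
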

We need a lemma on stability of solutions.
\begin{lemma}\label{lem:stability}
    Suppose $F$ is an analytic map of $-\mathbb{H} \to \mathbb{C}$,
    Suppose there is a $c,\delta>0$ and $m_0 \in -\mathbb{H}$ so that for all $m \in -\mathbb{H}$ with $|m-m_0| \leq \delta$ and $\delta < |\Im m_0|$,
    \[
        |\partial_m F(m,z)| \geq c > 0
    \]
    If $m_0$ is an approximate solution of $F(m) = 1$ in that it satisfies
    \[
        |1-F(m_0)| < c\delta,    
    \]
    then there is an solution $m \in -\mathbb{H}$ with $F(m)=1$ so $|m - m_0| \leq \delta$.
\end{lemma}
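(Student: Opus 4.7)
The plan is to construct the desired solution $m$ by analytically continuing $F^{-1}$ along a straight-line path from $F(m_0)$ to $1$ in the target plane, or equivalently by solving a controlled ODE in $-\mathbb{H}$. Set $w_0 \defas F(m_0)$ and parameterize $w(s) \defas w_0 + s(1-w_0)$ for $s \in [0,1]$. I seek a path $m \colon [0,1] \to -\mathbb{H}$ with $m(0) = m_0$ and $F(m(s)) = w(s)$; differentiating in $s$ yields the initial value problem
\[
m'(s) = \frac{1 - F(m_0)}{\partial_m F(m(s))}, \qquad m(0) = m_0.
\]
So long as $m(s)$ remains in $B(m_0,\delta) \cap (-\mathbb{H})$, the hypothesis $|\partial_m F| \geq c$ guarantees this ODE has an analytic, bounded right-hand side, and standard ODE theory yields local existence and uniqueness.

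Next I would run an \emph{a priori} bound to show the solution never leaves the disk. Integrating directly,
\[
|m(s) - m_0| \;\leq\; \int_0^s |m'(t)| \, dt \;\leq\; \frac{s\,|1-F(m_0)|}{c} \;<\; s\,\delta \;\leq\; \delta,
\]
where I have used both the derivative lower bound $|\partial_m F| \geq c$ and the approximate-solution hypothesis $|1 - F(m_0)| < c\delta$. The strict inequality is the crucial point: it means the trajectory can never touch the boundary $|m - m_0| = \delta$, so the usual continuation-extension argument (the maximal interval of existence is open and closed in $[0,1]$) lets me extend $m(s)$ all the way to $s=1$. Moreover, from $\delta < |\Im m_0|$ and the same bound,
\[
\Im m(s) \;\leq\; \Im m_0 + |m(s)-m_0| \;<\; \Im m_0 + \delta \;<\; 0,
\]
so the path stays in $-\mathbb{H}$ throughout, confirming $F$, $\partial_m F$ are well-defined along it.

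To conclude, by construction $\frac{d}{ds} F(m(s)) = \partial_m F(m(s)) \cdot m'(s) = 1 - F(m_0)$, hence $F(m(s)) = F(m_0) + s(1-F(m_0))$, and evaluating at $s=1$ gives $F(m(1)) = 1$ with $|m(1) - m_0| \leq \delta$, proving the claim. The only genuinely subtle step is the continuation argument for the ODE, but it is made routine by the \emph{strict} inequality $|1 - F(m_0)| < c\delta$, which leaves a positive margin between the actual bound and the allowed radius $\delta$. An alternative formulation via Rouch\'e's theorem or the quantitative inverse function theorem is possible, but each of those would ostensibly require an upper bound on $\partial_m F$ or on higher derivatives (to control $F(m)-F(m_0)-\partial_m F(m_0)(m-m_0)$ on the boundary), which the hypotheses do not provide; the ODE approach uses only the lower bound and so is the most economical.
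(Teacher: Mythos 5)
Your proof is correct and is essentially the paper's argument: the paper runs the damped-Newton flow $\frac{\dif m_t}{\dif t} = -\frac{1-F(m_t)}{\partial_m F(m_t)}$, for which $1-F(m_t)=(1-F(m_0))e^{-t}$, and your linear homotopy ODE is just this flow reparameterized by $s=1-e^{-t}$, with the same a priori bound $|m-m_0|\le |1-F(m_0)|/c \le \delta$ keeping the trajectory inside the disk where $|\partial_m F|\ge c$. No gap to report.
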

\begin{proof}
We introduce an ODE (which is the continuous limit of the damped Newton's method)
\[
  \frac{\dif m_t}{\dif t} = - \frac{1-F(m_t)}{\partial_m F(m_t)},
  \quad
  \text{where}
  \quad
  m_t \vert_{t=0} = m_0.  
\]
Then we note that along this ODE 
\[
    \frac{\dif (1-F(m_t))}{\dif t} = \partial_m F( m_t) \frac{\dif m_t}{\dif t} = -(1-F(m_t,z)).
\]
Hence we have $(1-F(m_t)) = (1-F(m_0))e^{-t}$ for however long the ODE exists.  In particular, if we have that on some open set $U$ of admissible $m$ containing $m_0$ that 
\[
    |\partial_m F(m)| \geq c,
\]
then provided the ODE does not exit $U$,
\[
    |m - m_0| = |m_\infty - m_0| \leq \int_0^\infty |(1-F(m_0))|e^{-t}/c = |(1-F(m_0))|/c.
\]
Hence as there is a neighborhood of $m_0$ of size $\delta$ on which $\partial_m F(m)| \geq c$ then as $|(1-F(m_0,z))| < c\delta$ we have
\[
    |m - m_0| \leq |(1-F(m_0,z))|/c = \delta.
\]
\end{proof}

\begin{proof}[Proof of Proposition] 
    \textbf{Part 1, Near 0}:
    For the component near $0$, we consider a change of variables and look at $q(z) = m(z)/z$ which is therefore the unique solution of the fixed point equation:
    \[
     F(q(z),z) \coloneqq zq(z) + \frac{v}{d}\Exp\biggl( \frac{X q(z)}{X q(z) - 1}\biggr)  = 1.
    \]
    Then by hypothesis, we have a solution at $z=0$ given by $F(a,0)=1$.  We wish to continue this solution to a neighborhood of $(a,0)$, and so it would suffice to know that the differential equation   
    \[
        \partial_q F(q,z) \frac{\dif q}{\dif z} + \partial_z F = 0    
    \]  
    has a solution in a neighborhood of the point.  Solving for the derivative of $q$,
    \[
        \frac{\dif q}{\dif z} = \frac{-q}{\partial_q F(q,z)},
        \quad
        \text{where}
        \quad 
        \partial_q F(q,z)  = z - \frac{v}{d}\Exp\biggl( \frac{X}{(X q(z) - 1)^2}\biggr).
    \]
    We note that if we solve the equation along the real line, then $q$ stays real--valued.  Furthermore, for all $q$ with $q < 0$ we have 
     \[
         \partial_q F(q,z) 
        = z - \frac{v}{d}\Exp\biggl( \frac{X}{(X q(z) - 1)^2}\biggr).
    \]
    By analyticity, we can extend this solution into a neighborhood in the upper half plane and on an interval of the real line where this solvable.  Hence $m$ is analytic in this neighborhood and has boundary values given by $m(z)/z \to a$.

    \textbf{Part 2, Far away}:
    For the other parts, we use that $m$ is the solution of 
    \[
        F(m(z),z) \coloneqq m(z) + \frac{v}{d}\Exp\biggl( \frac{Xm(z) }{X m(z) - z}\biggr)  = 1.
    \]
    Hence we have
    \[
        \frac{\dif m}{\dif z} 
        = \frac{-\partial_z F(m,z)}{\partial_m F(m,z)}
        = \frac
        {-\frac{v}{d}\Exp\biggl( \frac{Xm(z)}{(X m(z) - z)^2}\biggr)}
        {1-\frac{v}{d}\Exp\biggl( \frac{Xz}{(X m(z) - z)^2}\biggr)}.
    \]
    Define a distance
    \[f
        \varrho(z) \coloneqq \varrho_\delta(z) \coloneqq 
        \inf\{ |Xm - z| : |m-1| \leq \delta, X \in [0,1]\}.
    \]
    Provided that $|m-1|\leq \delta$ and $\tfrac{v}{d}(\Exp X)|z| \leq \varrho(z)^2/2$ (which is trivially satisfied if $\tfrac{v}{d}(\Exp X)/\varrho(z)\leq \tfrac12$)
    \[
        \biggl|\frac{\dif m}{\dif z}\biggr|
        \leq 
        2(1+\delta)\frac{v}{d} \frac{\Exp(X)}{\varrho(z)^2}.
    \]  
    For any $z$ with $\varrho(z) > 0$, we can find a unit speed geodesic $\sigma(t)$ connecting $z$ to $\infty$ so that $\varrho(\sigma(t)) = \varrho(z) + t$ (this will just be a straight line).  Then along this geodesic, $\tfrac{v}{d}(\Exp X)|\sigma(t)| \leq \varrho(\sigma(t))^2/2$,
    since
    \[
        \tfrac{v}{d}(\Exp X)|\sigma(t)|
        \leq 
        \tfrac{v}{d}(\Exp X)(|z| + t)
        \leq 
        \tfrac{v}{d}(\Exp X)(|z| + t|z|/\varrho(z))
        \leq (\varrho(z))^2(1+t/\varrho(z))/2 \leq (\varrho(z) + t)^2/2.
    \]
    Hence along the geodesic, and provided that $|m-1| \leq \delta$, we conclude 
    \[
        \biggl|\frac{\dif m}{\dif z}(\sigma(t))\biggr|
        \leq 
        2(1+\delta)\frac{v}{d} \frac{\Exp(X)}{(\varrho(z)+t)^2}.
    \]  
    
    Integrating along this line segment from infinity, we conclude that provided the right hand side is less than $\delta$,
    \[
    \biggl|m(z)-1\biggr| 
    \leq \int_0^\infty \biggl|\frac{\dif m}{\dif z}(\sigma(t))\biggr|\dif t
    \leq 2(1+\delta)\frac{v}{d} \frac{\Exp X}{\varrho(z)}.
    \]

    For the second conclusion, we use Lemma \ref{lem:stability} with this $F$ (holding $z$ fixed).  We let $m_0 = 1 - \frac{v}{d}\Exp\bigl( \frac{X}{(X - z)}\bigr)$, and we consider an $2\epsilon$ neighborhood of $1$, $\mathcal{U}$
    By assumption, on $\mathcal{U}$
    \[
        \partial_m F(m,z)
        =
        {1-\frac{v}{d}\Exp\biggl( \frac{Xz}{(X m(z) - z)^2}\biggr)}
    \]
    satisfies $|\partial_m F | \geq \frac{3}{4}$.  We also have that 
    \[
    |1-F(m_0,z)|
    =\biggl|
        \frac{v}{d}\Exp\biggl( \frac{X^2(1-m_0)}{(X m_0(z) - z)(X-z)}\biggr)
    \biggr|.
    \]
    Hence applying Cauchy-Schwarz
    \[
        |1-F(m_0,z)| \leq |1-m_0| \delta = \epsilon \delta.
    \]
    The conclusion now follows from Lemma \ref{lem:stability}.

\end{proof}

\subsection{The region near $0$ and the spectral bulk}

We now bound the contribution of the region near $0.$
\begin{proposition}\label{prop:near0}
    The function $m(z)$ is analytic in a neighborhood of $z=0$ of radius $c(\alpha)d^{-2\alpha}$ for some $c(\alpha) > 0$.  Furthermore, $m$ is negative on $(0,cd^{-2\alpha})$, vanishes at $0$, and has $|m'(0) + \kappa(v/d)d^{2\alpha}| \leq Cd^{2\alpha-1}$ for all $d$ sufficiently large where 
    \[
        \kappa(v/d) \quad \text{ solves } \quad \int_0^{v/d} \frac{\kappa \dif x}{\kappa + x^{2\alpha}} = 1.
    \]
    Moreover, 
    we introduce $f(z ; V/d)$ where $f : \mathbb{H} \to -\mathbb{H}$ which solves
    \[
    f(z ; a)  + \int_0^{a} \frac{f(z;a)\dif x}{f(z;a) - x^{2\alpha} z} = 1.
    \]
    This extends analytically to the interval $[0,c)$.
    Then $f$ and $m$ are close in that for any compact subset $K \subset \bigl(\mathbb{C} \setminus ([c,\infty] \cup [-\infty,0])\bigr)$, we have
    \[
        |m(zd^{-2\alpha}) - f(z)| \leq C(K)/d.
    \]
    We furthermore have that in the case $2\beta < 1$
    \[
        \left|
        \sum_{j=1}^v \frac{j^{-2\alpha-2\beta}}{j^{-2\alpha}m(zd^{-2\alpha}) - zd^{-2\alpha}}
        -d^{1-2\beta}
        \int_0^{a}
        \frac{x^{-2\beta}\dif x}{f(z)- zx^{2\alpha}}
        \right| \leq C(K),
    \]
    and in the case $2\beta > 1$
    \[
        \left|
            \sum_{j=1}^v \frac{j^{-2\alpha-2\beta}}{j^{-2\alpha}m(zd^{-2\alpha}) - zd^{-2\alpha}}
            -\frac{c_\beta}{f(z)}
            \right| \leq C(K)d^{-\min\{1,2\alpha,2\beta-1\}}.
    \]
\end{proposition}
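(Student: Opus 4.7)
The plan is to invoke Proposition~\ref{propE:Mestimates} Part 1 after a rescaling. The empirical measure of $X$ is $\tfrac{1}{v}\sum_{j=1}^v \delta_{j^{-2\alpha}}$, so the hypothesis $\tfrac{v}{d}\mathbb{E}(Xa/(Xa-1)) = 1$ becomes $\tfrac{1}{d}\sum_{j=1}^v (j^{-2\alpha}a)/(j^{-2\alpha}a-1) = 1$. Substituting $a = -\tilde{a}d^{2\alpha}$ and $u_j = j/d$ recasts this as the Riemann sum $\tfrac{1}{d}\sum_j \tilde{a}/(\tilde{a}+u_j^{2\alpha}) = 1$, whose continuum limit is precisely the defining equation of $\kappa(v/d)$. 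A quantitative Riemann-sum estimate for the bounded, monotone, smooth integrand yields $|\tilde{a}-\kappa(v/d)| \leq C/d$, giving $|m'(0)+\kappa(v/d)d^{2\alpha}| \leq Cd^{2\alpha-1}$. The same argument applied to the $z$-perturbed equation $-za + \tfrac{v}{d}\mathbb{E}(Xa/(Xa-1)) = 1$ with $\tilde{z} = zd^{2\alpha} \in [0,c]$ produces a real negative solution $a(\tilde{z})$, and Proposition~\ref{propE:Mestimates} then delivers analyticity of $m$ on a complex neighborhood of $[0,cd^{-2\alpha}]$ together with negativity of $m(z) = z\cdot a(zd^{2\alpha})$ on the real interval.

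To compare $\tilde{m}(z) \defas m(zd^{-2\alpha})$ with $f(z;v/d)$, the rescaled self-consistent equation reads $F_d(\tilde{m}(z),z) = 1$, where
\begin{equation*}
F_d(w,z) \defas w + \tfrac{1}{d}\sum_{j=1}^v \frac{w}{w-z(j/d)^{2\alpha}}
\quad\text{and}\quad
F_\infty(w,z) \defas w + \int_0^{v/d}\frac{w\,\dif x}{w-zx^{2\alpha}},
\end{equation*}
and $f(z)$ solves $F_\infty(f(z),z)=1$. On any compact set $K \subset \mathbb{C}\setminus([c,\infty)\cup(-\infty,0])$, $f(z)$ stays in a compact subset of $-\overline{\mathbb{H}}$ bounded away from $0$ and from the pole curve $\{zu^{2\alpha}:u\in[0,v/d]\}$, so the Schwarz--Pick calculation from the proof of Proposition~\ref{propE:uniqueM} gives a uniform lower bound on $|\partial_w F_\infty(f(z),z)|$. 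Standard smooth-integrand Riemann-sum estimates deliver $|F_d(f(z),z)-F_\infty(f(z),z)| \leq C(K)/d$, and Lemma~\ref{lem:stability} applied to $F_d(\cdot,z)$ at the approximate root $f(z)$ produces $|\tilde{m}(z)-f(z)| \leq C(K)/d$.

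For the weighted-sum reductions, setting $u_j = j/d$ the sum factors as $d^{1-2\beta}\cdot\tfrac{1}{d}\sum_{j=1}^v u_j^{-2\beta}/(\tilde{m}(z)-zu_j^{2\alpha})$. For $2\beta<1$, the limiting integrand $u^{-2\beta}/(f(z)-zu^{2\alpha})$ has an integrable singularity at $0$; replacing $\tilde{m}\mapsto f$ costs $d^{1-2\beta}\cdot O(d^{-1})=O(d^{-2\beta})$, and the Riemann-to-integral error for $u^{-2\beta}$ is of order $d^{2\beta-1}$ which exactly cancels the $d^{1-2\beta}$ prefactor to give $O(1)$, matching the claimed $C(K)$ bound. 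For $2\beta>1$, the series concentrates at small $j$: splitting at $N=d$, one uses $\tilde{m}(z)=f(z)+O(d^{-1})$ and $zd^{-2\alpha}=O(d^{-2\alpha})$ to replace the denominator $j^{-2\alpha}\tilde{m}(z)-zd^{-2\alpha}$ by $j^{-2\alpha}f(z)$ on the small block with relative error $O(\max\{d^{-1},d^{-2\alpha}\})$, producing $\sum_{j\leq d}j^{-2\beta}/f(z)=c_\beta/f(z)-O(d^{1-2\beta})$, while the large-$j$ tail is bounded directly by $O(d^{1-2\beta})$. Combining yields the triple minimum $O(d^{-\min\{1,2\alpha,2\beta-1\}})$.

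The main obstacle will be the $2\beta>1$ bound, where three qualitatively distinct error mechanisms---the closeness $|\tilde{m}-f|=O(d^{-1})$, the residual $zd^{-2\alpha}=O(d^{-2\alpha})$ from expanding $m$ near $0$, and the tail of the summable series $\sum j^{-2\beta}=O(N^{1-2\beta})$---each dominate in different subregions of $(\alpha,\beta)$. Producing the sharp exponent $\min\{1,2\alpha,2\beta-1\}$ requires uniform lower bounds on $|f(z)|$ throughout $K$ (which follow from the stability analysis once one verifies $f$ avoids $0$), quantitative near-$0$ analyticity of $m$ so that the small-$j$ expansion is justified, and uniform control on the Riemann-sum errors including the regime $v/d\to\infty$ relevant above the high-dimensional line, where one either truncates at $v=\infty$ exploiting integrability of $j^{-2\beta}$, or carries the upper-limit dependence through explicitly.
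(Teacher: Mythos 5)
Your outline reproduces the paper's architecture almost step for step: the near-zero analysis by the rescaling $a=-\varkappa d^{2\alpha}$, $z=\mathfrak{z}d^{-2\alpha}$ together with a quantitative Riemann-sum comparison against the integral defining $\kappa(v/d)$ (this is exactly how Proposition~\ref{propE:Mestimates}, part 1, is invoked in the paper); an approximate-root-plus-stability argument to compare $m(zd^{-2\alpha})$ with $f$; and a Riemann-sum analysis for the weighted sum when $2\beta<1$ versus a small-$j$/tail splitting around the divergence $c_\beta/f$ when $2\beta>1$. The one substantive difference is the direction of the stability step: you plug the known, bounded object $f$ into the discrete self-consistent functional and apply Lemma~\ref{lem:stability} to it, whereas the paper plugs $m$ into the continuum functional, uses nonvanishing of its $f$-derivative near the solution, and must then separately prove that $m$ stays bounded along the compact set (via the differential equation for $m$ and nondegeneracy of the stability operator). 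Your direction buys you precisely the omission of that boundedness argument; what it costs is (i) transferring the lower bound on the derivative from the continuum functional to the discrete one (another Riemann-sum estimate, routine given the separation from the pole curve), and (ii) identifying the root produced by Lemma~\ref{lem:stability} with $m(zd^{-2\alpha})$, which uses uniqueness in $-\mathbb{H}$ from Proposition~\ref{propE:uniqueM} and therefore only works off the real axis. On the real segment $(0,c)$, where $f(z)$ and $m(zd^{-2\alpha})$ are real and negative, the hypothesis $\delta<|\Im m_0|$ of Lemma~\ref{lem:stability} fails, so there you need the same fallback the paper uses for its own stability claim, namely monotonicity of the self-consistent map in $w<0$ for real $z$ (or a continuity argument from the upper half-plane).

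One further inaccuracy, which does not affect the conclusion: in the $2\beta>1$ case your per-term claim that replacing $j^{-2\alpha}\tilde m(z)-zd^{-2\alpha}$ by $j^{-2\alpha}f(z)$ on all of $j\le d$ has relative error $O(\max\{d^{-1},d^{-2\alpha}\})$ is wrong for $j$ comparable to $d$: dropping the $zd^{-2\alpha}$ term costs a relative $O((j/d)^{2\alpha})$, which is order one at $j\asymp d$. Because the weight $j^{-2\beta}$ is summable, the aggregated error is $O(d^{-1})+O(d^{-2\alpha}\sum_{j\le d}j^{2\alpha-2\beta})+O(d^{1-2\beta}) = O(d^{-1}+d^{-2\alpha}+d^{1-2\beta})$, which is exactly the claimed $d^{-\min\{1,2\alpha,2\beta-1\}}$ and matches the paper's bound $c_\beta d^{-2\alpha}/\Delta + d^{1-2\beta}/|m|$ plus the $O(1/d)$ from swapping $m$ for $f$; so the exponent you state is correct, but the bookkeeping should be done at the aggregate level as the paper does, not term by term.
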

\begin{proof} 

    For the first part, we look to apply Proposition \ref{propE:Mestimates} part 1.  The equation we need to solve is
    \[
    -za  + \frac{1}{d}\sum_{j=1}^v \frac{j^{-2\alpha} a}{j^{-2\alpha} a-1} = 1,
    \]
    for $a <  0.$  We change variables by setting $a = -d^{2\alpha}\varkappa$ and $z=d^{-2\alpha}\mathfrak{z}$, in terms of which
    \begin{equation}\label{eq:may19}
        \mathfrak{z}\varkappa  + \frac{1}{d}\sum_{j=1}^v \frac{(j/d)^{-2\alpha} \varkappa}{(j/d)^{-2\alpha}\varkappa+1} = 1.
    \end{equation}
    By monotonicity, for $\varkappa$ positive,
    \[
    \int_{1/d}^{v/d}
    \frac{\varkappa \dif x}{\varkappa+x^{2\alpha}}
    \leq
    \frac{1}{d}\sum_{j=1}^v \frac{(j/d)^{-2\alpha} \varkappa}{(j/d)^{-2\alpha}\varkappa+1}
    \leq \int_0^{v/d}
    \frac{\varkappa \dif x}{\varkappa+x^{2\alpha}},
    \]
    and moreover the lower bound is only less than the upper bound by at most $\frac{1}{d}$ uniformly in $\varkappa > 0.$
    In the case that $2\alpha < 1$, we can bound for $\varkappa \in [0,1]$
    \[
    \varkappa \frac{(v/d)}{1+(v/d)^{2\alpha}}
    \leq
    \int_0^{v/d}
    \frac{\varkappa \dif x}{\varkappa+x^{2\alpha}}
    \leq \varkappa\frac{(v/d)^{1-2\alpha}}{1-2\alpha}.
    \]
    Hence there there is an interval $[0,c_0]$ (bounded solely in terms of $(v/d)$) on which \eqref{eq:may19} is solvable and is uniformly bounded away from $0$ over all $d$.  Hence, the solution of $\kappa$ of
    \(
    \int_0^{v/d}
    \frac{\kappa \dif x}{\kappa+x^{2\alpha}}
    =1
    \)
    satisfies
    \[
    \frac{1}{d}\sum_{j=1}^v \frac{(j/d)^{-2\alpha}\kappa}{(j/d)^{-2\alpha}\kappa+1} \in [1-\tfrac1d,1+\tfrac1d].
    \]
    Following the same bounds on $\int_0^{V/d}
    \frac{\dif x}{\kappa+x^{2\alpha}}$ shown above, we conclude that the true solution $\varkappa$ of \eqref{eq:may19} with $\mathfrak{z}=0$ satisfies $|\varkappa-\kappa| = O(1/d)$.  This concludes the proof when $2\alpha < 1$.
    
    In the case that $2\alpha > 1$,
    \[
    \int_0^{v/d}
    \frac{\varkappa \dif x}{\varkappa+x^{2\alpha}}
    \leq 
    \int_0^{\infty}
    \frac{\varkappa \dif x}{\varkappa+x^{2\alpha}}
    =\varkappa^{1/(2\alpha)}\int_0^\infty \frac{\dif x}{1+x^{2\alpha}}.
    \]
    On the other hand for $\varkappa \in [0,1]$
    \[
    \int_0^{v/d}
    \frac{\varkappa \dif x}{\varkappa+x^{2\alpha}}
    \geq 
    \varkappa^{1/(2\alpha)}
    \int_0^{(v/d)\varkappa^{-1/(2\alpha)}}
    \frac{\dif x}{1+x^{2\alpha}}
    \geq 
    \varkappa^{1/(2\alpha)}
    \int_0^{(v/d)}
    \frac{\dif x}{1+x^{2\alpha}}
    \]
    and hence once more there is an interval $[0,c_0]$ independent of $V/d$ on which this is solvable and moreover the conclusions now follow in the same way as in the case that $2\alpha < 1.$
    
    \paragraph{Convergence to f.}
    The existence and uniqueness of $f$ follows from Proposition \ref{propE:uniqueM}, where we define 
    \[
        \mathcal{F}(f;z) 
        \defas 
        f  + \int_0^{a} \frac{f\dif x}{f - x^{2\alpha} z}
        = 1,
    \]
    (making appropriate choices of $v/d$ and $X$).

    We further have, from the previous part, that $f$ takes negative values on an the interval $(0,c)$.  In what follows we fix a compact set $K \subset \bigl(\mathbb{C} \setminus ([c,\infty] \cup [-\infty,0])\bigr)$.  Further, we claim the stability operator 
    \[
    \partial_f \mathcal{F} = 1- \int_0^a \frac{x^{2\alpha}z\dif x}{ (f(z;a) - x^{2\alpha}z)^2}
    \]
    is nonvanishing on $K$ in a neighborhood of $f$. Off of the real line, this follows from Proposition \ref{propE:uniqueM}.  On the real line, it follows from monotonicity of $\mathcal{F}$ for $f < 0$.

    Hence it follows that on $K$, there is a constant $C(K)$ and a $\delta_0 > 0$ so that if $z \in K$ and $m$ satisfies
    \[
    |\mathcal{F}(m;z) - 1|< \delta_0, 
    \]
    then 
    \[
        |m-f| \leq C(K)|\mathcal{F}(m;z) - 1|.
    \]
    


    Define 
    \[
        S(m;z,\beta) \defas \sum_{j=1}^v \frac{j^{-2\alpha-2\beta}}{j^{-2\alpha}m - zd^{-2\alpha}}.    
    \]
    Then with $\beta=0$
    \[
        \frac{1}{d}S(m;z,0) 
        =
        \frac{1}{d}\sum_{j=1}^v \frac{j^{-2\alpha}}{j^{-2\alpha}m - zd^{-2\alpha}}.
    \]
    We will define $a=v/d$, and we will define $\Delta$ as the separation 
    \[
        \Delta \defas \min\left\{
            \min\{t \in [0,(v/d)^{2\alpha}] : |m-zt|\},
            1
            \right\}
    \]
    Then by bounding the errors in a trapezoid rule approximation
    \[
        \left|
        \frac{1}{d}S(m; z,0) 
        -\int_0^{a}
        \frac{\dif x}{m - zx^{2\alpha}}
        \right|
        \lesssim 
        \frac{1}{d\Delta^2},
    \]
    where we have used a bound on the $x$ derivative of the integrand
    \[
        \int_0^{a}
        \frac{|z|2\alpha x^{2\alpha-1}\dif x}{|m - zx^{2\alpha}|^2}
        \lesssim 
            \frac{1}{\Delta^2}
    \]
    (which relies on $z$ being bounded away from $0$ and on $z$ being bounded in modulus).
    Hence if $m(zd^{-2\alpha})$ is the solution of 
    \[
        m + \frac{m}{d}S(m;z,0) = 1,
    \]
    then we have 
    \[
        |\mathcal{F}(m;z)-1|
        =
        \biggl|
        m + \int_0^{a}
        \frac{m\dif x}{m - zx^{2\alpha}}
        -1 
        \biggr| \lesssim \frac{|m|d^{-1}}{\Delta(m)},
    \]
    provided $m$ is bounded.  

    To see that $m$ remains bounded, we let $\partial_m F$ be the stability operator of the equation 
    \[
          1=F(m;z)= m + \frac{m}{d}S(m;z,0).
    \]
    Then once more 
    \[
        \partial_m F = 1 + \frac{1}{d}S(m;z,0) + \frac{m}{d}\partial_ m S(m;z,0).
    \]
    Approximating the sum for $\partial_m S$
    \[
        |\partial_m F(m;z)- \partial_m \mathcal{F}(m;z)|
        \lesssim d^{-1}
        \left(\frac{1}{\Delta^2}+\frac{|m|}{\Delta^3}\right).
    \]
    Differentiating the fixed point equation, we have the differential equation for $m$
    \[
    \frac{\dif m}{\dif z} = \frac{m}{z} 
    \frac{1-\partial_m F(m;z)}{\partial_m F(m;z)}.   
    \]
    As the same equation holds for $f$ and is non-degenerate in a neighborhood of $f$ (as the stability operator does not vanish in a neighborhood of the solution), we conclude that 
    \[
        \left|\frac{\dif m}{\dif (zd^{-2\alpha})}\right| = O(1)
        \quad\text{and}\quad 
        |\mathcal{F}(m;z)-1|
        \lesssim \frac{d^{-1}}{\Delta(m)}
    \]
    uniformly on compact sets for all $d$ sufficiently large
    
    \paragraph{Sum formula.}

    Hence having approximated $f$, we can turn to estimating $S(m;z,\beta)$.  When $2\beta < 1$, we may repeat the Riemann sum approximation argument.  Specifically, we have 
    \[
        \left|
        d^{2\beta-1}S(m; z,\beta ) 
        -\int_0^{a}
        \frac{x^{-2\beta}\dif x}{m - zx^{2\alpha}}
        \right|
        \lesssim 
        \frac{1}{d\Delta^2},
    \]
    where to bound the errors, we now must estimate
    \[
    d^{2\beta-1}\int_1^{v} 
    \left|
    \frac{\dif}{\dif x}    
    \frac{x^{-2\alpha-2\beta}}{x^{-2\alpha}m-zd^{-2\alpha}}
    \right|
    \dif x
    \lesssim
    d^{2\beta-1}\int_1^{v} 
    \left(
    \left|
    \frac{x^{-2\alpha-2\beta-1}}{x^{-2\alpha}m-zd^{-2\alpha}}
    \right|
    +
    \left|
    \frac{x^{-4\alpha-2\beta-1}m}{(x^{-2\alpha}m-zd^{-2\alpha})^2}
    \right|
    \right)
    \dif x.
    \]
    Setting $x = wd$, we arrive at 
    \[
        \begin{aligned}
        \left|
        d^{2\beta-1}S(m; z,\beta ) 
        -\int_0^{a}
        \frac{x^{-2\beta}\dif x}{m - zx^{2\alpha}}
        \right|
        &\lesssim
        d^{-1}\int_{(1/d)}^{a} 
        \left(
        \left|
        \frac{w^{-2\beta-1}}{m-zw^{2\alpha}}
        \right|
        +
        \left|
        \frac{w^{-2\beta-1}m}{(m-zw^{2\alpha})^2}
        \right|
        \right)
        \dif x \\
        &\lesssim d^{2\beta-1}
        \left(
            \frac{1}{\Delta} + \frac{|m|}{\Delta^2}
        \right).
        \end{aligned}
    \]
    We may subsequently replace in this expression $m$ by $f$.

    In the case that $2\beta > 1$, we subtract from $S$ the divergence $c_\beta/m$ and then express
    \[
        S(m;z,\beta)-1/m\sum_{j=1}^v j^{-2\beta} 
        \sum_{j=1}^v 
        \left(
            \frac{j^{-2\alpha-2\beta}}{j^{-2\alpha}m - zd^{-2\alpha}}
            -\frac{j^{-2\alpha-2\beta}}{j^{-2\alpha}m}            
        \right).
    \]
    Bounding the difference leads to
    \[
        |S(m;z,\beta)-c_\beta/m|
        \lesssim 
        c_\beta d^{-2\alpha}/\Delta + d^{1-2\beta}/|m|.
    \]

\end{proof}

\begin{remark}
There is an exactly solvable case where even more can be said.  Note that when $2\alpha > 1$ and $v/d=\infty$, the equation for $f$ becomes
\[
f + \int_0^\infty \frac{f \dif x}{f-x^{2\alpha}z} = 1.
\]
Changing variables (which requires a contour deformation which restricts the branches considered) by letting $x^{2\alpha}z = -f y^{2\alpha}$, so that $x = (-f/z)^{1/(2\alpha)}y$.
Then
\[
f + (-f/z)^{1/(2\alpha)} \int_0^\infty \frac{\dif x}{1+x^{2\alpha}} = 1
\]
Hence with $c_\alpha = \int_0^\infty \frac{\dif x}{1+x^{2\alpha}}$, we have that $f$ is the solution of 
\[
f + (-f/z)^{1/(2\alpha)} c_\alpha = 1.
\]
If for example $\alpha=1$, then with $g=(-f)^{1/2}$ we have $g$ satisfies the quadratic equation
\[
-g^2 + g z^{-1/2} c_1 = 1,
\]
or solving
\[
g = z^{-1/2} \tfrac{c_1}{2}
\pm \sqrt{c_1^2z^{-1}/4-1},
\]
with $\pm$ chosen so that $\Im g \geq 0$ and $\Re g > 0$.  We note that $c_1 = \frac{\pi}{2}$ and conclude that
\[
f = -
\frac{1}{z}
\left(
\tfrac{\pi}{4}
\pm\sqrt{(\pi/4)^2-z}
\right)^2,
\]
with the branch chosen to ensure $\Im f < 0$ when $\Im z > 0$.
\end{remark}

\subsection{The mesoscopic region}


We will need the following technical estimate on sums over lattice points.
\begin{lemma}\label{lem:cot}
    Suppose that $z$ and $w$ are complex numbers and $-z/w \not \in \mathbb{Z}$ 
    \begin{equation}\label{eq:pvsum}
        \operatorname{pv} \sum_{n} \frac{1}{wn + z}=
        -\frac{\pi}{w} \cot(\pi z/w).
    \end{equation}
    Moreover, if we suppose $|\Im (z/w)| \geq |\Re(z/w)|$ then there is an aboslute constant $C>0$ so that for any $N \in \N$
    \[
        \biggl|
            \operatorname{pv} \sum_{n} \frac{1}{wn + z}
        -
        \sum_{n=-N}^N \frac{1}{wn + z}
        \biggr|\leq \frac{C|z|}{|w|^2N}
    \]
\end{lemma}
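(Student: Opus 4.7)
The plan is to reduce both claims to the classical partial-fractions expansion of cotangent by the change of variables $\zeta = z/w$. In these coordinates the identity \eqref{eq:pvsum} is equivalent to
\[
\operatorname{pv}\sum_{n\in\mathbb{Z}} \frac{1}{n+\zeta} = -\pi \cot(\pi\zeta), \qquad \zeta \notin \mathbb{Z},
\]
which is the Mittag--Leffler expansion of cotangent. I would invoke this as a known identity; if a self-contained derivation is desired, the standard route is to pair $n$ with $-n$, write the symmetric partial sum as $\frac{1}{\zeta}+\sum_{n=1}^N \frac{2\zeta}{\zeta^2-n^2}$, and verify that the difference between its limit and $-\pi\cot(\pi\zeta)$ is an entire, bounded, periodic function of $\zeta$ that vanishes at $\zeta = i/2$, hence is identically zero.

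For the truncation estimate, I use the same symmetric pairing to rewrite the tail as
\[
\operatorname{pv}\sum_{n\in\mathbb{Z}} \frac{1}{wn+z} - \sum_{n=-N}^N \frac{1}{wn+z} = \frac{1}{w}\sum_{n > N} \frac{2\zeta}{\zeta^2 - n^2},
\]
where $\zeta = z/w$. The hypothesis $|\Im\zeta| \geq |\Re\zeta|$ gives the key geometric input: writing $\zeta = a + ib$ with $b^2 \geq a^2$, we have $a^2 - b^2 - n^2 \leq -n^2$, so
\[
|\zeta^2 - n^2|^2 = (a^2 - b^2 - n^2)^2 + 4a^2 b^2 \geq n^4,
\]
and hence $|\zeta^2 - n^2| \geq n^2$ for every $n \geq 1$.

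Combining these ingredients, the tail is controlled by
\[
\left|\sum_{n > N} \frac{2\zeta}{\zeta^2 - n^2}\right| \leq 2|\zeta|\sum_{n > N}\frac{1}{n^2} \leq \frac{C|\zeta|}{N},
\]
and multiplying by $|1/w|$ and using $|z| = |w||\zeta|$ yields exactly the claimed bound $C|z|/(|w|^2 N)$. There is no real obstacle here; the only subtlety is keeping track of the principal-value convention (the symmetric limit) so that the pairing of $n$ with $-n$ is valid term by term. The geometric hypothesis on $\zeta$ is used in exactly one place, namely to eliminate the possibility that $\zeta^2$ is close to some positive integer $n^2$, which would otherwise prevent a uniform $1/n^2$ bound on the summand.
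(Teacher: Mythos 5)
Your truncation argument is essentially the paper's own: pair $n$ with $-n$ so the tail becomes $\tfrac{1}{w}\sum_{n>N}\tfrac{2\zeta}{\zeta^2-n^2}$ with $\zeta=z/w$, then use $|\Im\zeta|\ge|\Re\zeta|$, i.e.\ $\Re(\zeta^2)\le 0$, to get $|\zeta^2-n^2|\ge n^2$ and a tail of order $|\zeta|/N$; your explicit estimate $|\zeta^2-n^2|^2=(a^2-b^2-n^2)^2+4a^2b^2\ge n^4$ is exactly the step the paper leaves implicit behind ``$\Re(z/w)^2<0$, and hence the claim follows,'' and the resulting bound $C|z|/(|w|^2N)$ is correct.

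One caveat concerns the identity \eqref{eq:pvsum}. The classical Mittag--Leffler expansion is $\operatorname{pv}\sum_{n}\tfrac{1}{n+\zeta}=+\pi\cot(\pi\zeta)$, not $-\pi\cot(\pi\zeta)$, and the Herglotz-style derivation you sketch (symmetric partial sums $\tfrac1\zeta+\sum_{n=1}^N\tfrac{2\zeta}{\zeta^2-n^2}$, then periodicity and boundedness) likewise produces the plus sign. Carried through your change of variables it yields $\operatorname{pv}\sum_n\tfrac{1}{wn+z}=+\tfrac{\pi}{w}\cot(\pi z/w)$, which conflicts with the minus sign in \eqref{eq:pvsum} as stated; so you cannot simultaneously call the minus-sign identity ``the Mittag--Leffler expansion'' and prove it by the argument you describe. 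For what it is worth, the paper's own proof contains the compensating slip: the residue computation applied to $\pi\cot(\pi s)\tfrac{1}{s-y}$ gives $\operatorname{pv}\sum_n\tfrac{1}{n-y}=-\pi\cot(\pi y)$, not $+\pi\cot(\pi y)$ as displayed there, so the discrepancy traces to the sign in the statement rather than to your method. This has no bearing on the truncation estimate (which is the quantitatively used part), but in your write-up you should either correct the sign in the conclusion or flag the convention, rather than cite the minus-sign version as the classical identity.
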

\begin{proof}
    Note that we may remove a factor $\frac{1}{w}$ from all statements and instead look at the case (with $y=-z/w$)
    \[
        \operatorname{pv} \sum_{n} \frac{1}{wn + z} = \frac{1}{w} \operatorname{pv} \sum_{n} \frac{1}{n - y}.
    \] 
    Then by a residue computation (applied to the function $\pi \cot(\pi z)\frac{1}{z-y}$)
    \[
        \frac{1}{w} \operatorname{pv} \sum_{n} \frac{1}{n - y}
        =\frac{1}{w} \pi \cot(\pi y)
        =-\frac{\pi}{w} \cot(\pi z/w),
    \]
    where we have used that $\cot$ is odd.

    Now by pairing terms, we have
    \[
        \biggl|
            \operatorname{pv} \sum_{n} \frac{1}{wn + z}
            -
            \sum_{n=-N}^N \frac{1}{wn + z}
        \biggr|
        \leq
        \sum_{n=N+1}^\infty 
        \biggl|
        \frac{1}{wn + z}
        +\frac{1}{-wn + z}
        \biggr|.
    \]
    Making a common fraction, we have
    \[
        \begin{aligned}
        \biggl|
            \operatorname{pv} \sum_{n} \frac{1}{wn + z}
            -
            \sum_{n=-N}^N \frac{1}{wn + z}
        \biggr|
        &\leq
        \sum_{n=N+1}^\infty 
        \biggl|
        \frac{2z}{-(wn)^2 + z^2}
        \biggr| \\
        &\leq
        \frac{|2z|}{|w|^2}
        \sum_{n=N+1}^\infty 
        \biggl|
        \frac{1}{-n^2 + (z/w)^2}
        \biggr|.
        \end{aligned}
    \]
    Now $\Re (z/w)^2 < 0$, and hence the claim follows.
\end{proof}

\begin{proposition}\label{prop:mastersummation}
    Let $\alpha,\beta \geq 0$ with neither equal to $\tfrac 12.$
    We further assume $2\alpha+\beta \neq \tfrac 12$.
    For $u,\eta,a,b \geq 0$ consider with $m=a-ib$
    \[
    A+iB = \sum_{j=1}^v \frac{j^{-2\alpha-2\beta}}{-u - i \eta+j^{-2\alpha}m}  
    \]
    for real $A,B$.  
    We suppose that the $a,b,u,\eta$ and $\epsilon$ satisfy 
    \begin{enumerate}
        \item $|1-a| \leq \tfrac 12,$
        \item $\eta + ub < \epsilon^4 u,$
        \item $ 0 \leq b,$
        \item $\log(1/\epsilon) u^{1+1/(2\alpha)} \leq c \eta,$
        \item $\eta \leq \epsilon u,$
        \item $0 < u < c.$
    \end{enumerate}

    Then there is an $\epsilon_0 > 0$, $c>0$, and $C>0$ so that if $\epsilon \in (0,\epsilon_0)$ such that 
    \[
        \begin{aligned}
        \biggl|B
        &-\frac{\pi (u/a)^{\beta/\alpha-1/(2\alpha)}}{2\alpha a}
        -c_\beta \frac{b}{a^2} 
        -o_{\alpha+\beta}\frac{(\eta+ub) v^{1-2\alpha-2\beta}}{u^2(1-2\alpha-2\beta)}
        \biggr| \\
        &\leq C\left(
            \epsilon u^{\beta/\alpha-1/(2\alpha)} 
            + c_\beta (\eta+b)u\log(1/u)
            + \epsilon o_{\alpha+\beta} \frac{(\eta+ub) v^{1-2\alpha-2\beta}}{u^2}
            \right),
        \end{aligned}
    \]
    where $c_\beta = \sum_{j=1}^\infty j^{-2\beta}$ (if $\beta > \tfrac12$) or $c_\beta=0$ otherwise
    and where $o_{\alpha+\beta}$ is the indicator function of $\alpha+\beta < \tfrac 12$.
    Furthermore, let $\mathscr{A}=\mathscr{A}(u+i\eta)$ be the same sum with $m\to1$. Then
    \[
        |A-\mathscr{A}| \leq C|1-a|\left(
            \frac{1}{\epsilon} u^{\beta/\alpha-1/(2\alpha)} 
            + c_\beta 
            + o_{\alpha+\beta} \epsilon\frac{v^{1-2\alpha-2\beta}}{u}
            \right),
    \]
    and moreover
    \[
        |\mathscr{A}| \leq C\left(
            u^{\beta/\alpha-1/(2\alpha)} 
            + c_\beta 
            + o_{\alpha+\beta}\frac{v^{1-2\alpha-2\beta}}{u}
            \right).
    \]
\end{proposition}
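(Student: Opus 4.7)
\medskip

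\noindent\textbf{Proof plan for Proposition \ref{prop:mastersummation}.}  The summand is nearly singular at the index $j_\star = (a/u)^{1/(2\alpha)}$, where $j^{-2\alpha}\cdot a = u$.  The plan is to split the sum into three regimes --- a head $\{j \ll j_\star\}$, a middle $\{j \asymp j_\star\}$, and a tail $\{j \gg j_\star\}$ --- and use a different approximation in each.  In the middle regime I linearize the denominator:  since $j^{-2\alpha}-(j+1)^{-2\alpha}\approx 2\alpha j_\star^{-2\alpha-1} = 2\alpha(u/a)^{1+1/(2\alpha)}$, the denominators form (to leading order) an arithmetic progression $w n + z$ with $w = a$ times the spacing and $z = -i(\eta + ub/a)$.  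Applying Lemma \ref{lem:cot} and noting that the ratio $|z|/|w|$ is forced to be at least $\asymp\log(1/\epsilon)$ by hypothesis (4), the cotangent evaluates to $i$ up to an error of size $e^{-2\pi\log(1/\epsilon)} = \epsilon^{2\pi}$.  Multiplying by the slowly varying factor $j^{-2\alpha-2\beta} \approx (u/a)^{1+\beta/\alpha}$ produces the main imaginary contribution $\pi (u/a)^{\beta/\alpha - 1/(2\alpha)}/(2\alpha a)$.

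In the head regime, the denominator is dominated by $j^{-2\alpha}m$, so the ratio $j^{-2\alpha-2\beta}/(j^{-2\alpha}m - u - i\eta)$ is close to $j^{-2\beta}/m$; when $2\beta > 1$ this sums to $c_\beta/m$, whose imaginary part is $c_\beta b/(a^2+b^2) = c_\beta b/a^2 + O(b^3)$, while when $2\beta<1$ the head contribution is absorbed into the middle calculation.  In the tail regime, where $j^{-2\alpha} a \ll u$, I expand
\[
    \frac{1}{-u - i\eta + j^{-2\alpha}m} = \frac{-1}{u+i\eta} - \frac{j^{-2\alpha}m}{(u+i\eta)^2} + O\Big(\tfrac{j^{-4\alpha}}{u^3}\Big).
\]
The zeroth-order term contributes $\eta/u^2$ times $\sum j^{-2\alpha-2\beta}$, which when $2\alpha+2\beta<1$ diverges like $v^{1-2\alpha-2\beta}/(1-2\alpha-2\beta)$; the first-order term contributes $b u /u^2$ times the same sum (the imaginary part of $-m/(u+i\eta)^2$ picks up $bu$ after rationalization).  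Together they produce the $o_{\alpha+\beta}(\eta+ub) v^{1-2\alpha-2\beta}/(u^2(1-2\alpha-2\beta))$ term.  All three regimes are glued together via Riemann-sum approximations on the regions away from $j_\star$, controlled by the derivative bounds used in Proposition~\ref{prop:near0}; the assumptions $\alpha,\beta \neq 1/2$ and $2\alpha+\beta \neq 1/2$ are what prevent these Riemann sums from producing logarithmic corrections.

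For $|A-\mathscr{A}|$, I differentiate the summand in $m$ and integrate along the segment from $m=1$ to $m=a-ib$:
\[
    A + iB - \mathscr{A}(u+i\eta) = -\int_{1}^{m}\sum_{j=1}^{v}\frac{j^{-4\alpha-2\beta}}{(-u-i\eta + j^{-2\alpha}s)^2}\,\dif s.
\]
The integrand is a sum of the same type with $\beta$ replaced by $\beta+\alpha$ and with a squared denominator; an analogous three-region split bounds its modulus by the same three quantities (with the appropriate shifts), multiplied by $|1-a|$ because the segment has length $|m-1|$ and $|1-a|$ controls $|m-1|$ under the hypothesis $b \leq \epsilon^3$ (implicit in hypothesis (2)).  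Finally the bound on $|\mathscr{A}|$ is the $m=1$ specialization, which again decomposes into the same three bounds by the same argument with $a=1,b=0$.

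The main obstacle will be the middle-regime calculation: one must verify that the non-arithmetic deviations of the linearized denominator (from higher-order Taylor corrections in $j - j_\star$) do not spoil the cotangent-summation formula, and that the approximation $\cot(\cdot) \approx i$ is valid over the full width of the middle region.  The super-polynomial separation $\log(1/\epsilon)u^{1+1/(2\alpha)}\leq c\eta$ gives enough room to close this, but carefully distributing the error budget across the three regions so that they combine into the stated bound requires bookkeeping that is sensitive to whether $\beta > 1/2$ or $\alpha+\beta > 1/2$.
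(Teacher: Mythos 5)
Your treatment of the imaginary part $B$ is essentially the paper's own argument: the same three-regime split around $j_\star \asymp (a/u)^{1/(2\alpha)}$, the same linearization of the denominator in the transition window followed by Lemma~\ref{lem:cot} (with hypothesis (4) forcing $|\Im(z/w)|\gtrsim \log(1/\epsilon)$), the head contributing $c_\beta\,\Im(1/m)\approx c_\beta b/a^2$, and the tail expansion producing the $o_{\alpha+\beta}$ divergence. Two small inaccuracies there: the $ub$ in the numerator of the divergent term comes from bounding $j^{-2\alpha}b\lesssim ub$ inside the exact imaginary part $\frac{\eta+j^{-2\alpha}b}{(\,\cdot\,)^2+(\,\cdot\,)^2}$, not from rationalizing $-m/(u+i\eta)^2$ (that first-order term involves $\sum_j j^{-4\alpha-2\beta}$ and is absorbed into the $\epsilon$-error); and the stated $c_\beta(\eta+b)u\log(1/u)$ error only emerges from the dyadic-scale bookkeeping in the head region, which you defer but which is where the case distinctions in $\beta$ actually bite.

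The genuine gap is in the second half. You propose to bound $|A-\mathscr{A}|$ by writing the difference as $-\int_1^m\sum_j j^{-4\alpha-2\beta}\,(j^{-2\alpha}s-u-i\eta)^{-2}\,\dif s$ and then bounding the \emph{modulus} of the integrand by a three-regime estimate. That step fails quantitatively. Along essentially the whole segment from $1$ to $m$ the interpolant $s$ has tiny imaginary part, so near the resonance $j^{-2\alpha}\Re s\approx u$ the squared denominators have size $\approx(\eta+u|\Im s|)^2\approx\eta^2$; summing moduli over the $\asymp \eta\,u^{-1-1/(2\alpha)}$ resonant indices (plus the $|j-j_\star|^{-2}$ tails) gives an integrand of modulus $\asymp u^{1+\beta/\alpha-1/(2\alpha)}/\eta$, hence a bound of order $|1-a|\,u^{1+\beta/\alpha-1/(2\alpha)}/\eta$. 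Since hypothesis (4) only guarantees $\eta\gtrsim \log(1/\epsilon)\,u^{1+1/(2\alpha)}$, this exceeds the claimed $\tfrac{|1-a|}{\epsilon}u^{\beta/\alpha-1/(2\alpha)}$ by a factor of order $\epsilon\,u^{-1/(2\alpha)}/\log(1/\epsilon)$, which is unbounded in the regime where the proposition is used ($u$ down to $\asymp d^{-2\alpha}$ in Proposition~\ref{prop:meso}). The paper avoids this precisely because the comparison is made only between \emph{real} parts: in the innermost window around $j_0$ the real part of the summand is nearly odd in $j-j_0$, so that window is discarded from $A$ and $\mathscr{A}$ separately at cost $O(\epsilon u^{\beta/\alpha-1/(2\alpha)})$ each (pairing $j_0+r$ with $j_0-r$), and the term-by-term difference, which carries the factor $|1-m|$, is taken only outside the window, where the denominators grow like $|j-j_0|\,u^{1+1/(2\alpha)}$ rather than sit at $\eta$. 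To salvage your route you would need to keep this cancellation inside the integral, e.g.\ a $\csc^2$ analogue of Lemma~\ref{lem:cot} showing the resonant block of the $m$-derivative is exponentially small rather than bounding moduli, at which point you have essentially reproduced the paper's direct comparison. Relatedly, your assertion that $|m-1|$ is controlled by $|1-a|$ does not follow from hypothesis (2) (which only gives $b<\epsilon^4$), so even the prefactor $|1-a|$ in your bound is not justified as stated; it has to be extracted from the explicit difference of the two summands, as in the paper.
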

\begin{proof}
    We look to estimate the expression
    \[
        A+iB = \sum_{j=1}^v \frac{j^{-2\alpha-2\beta}}{-u - i \eta+j^{-2\alpha}m},
    \]
    on the regime considered, where $m = a-ib$.
    The dominant contribution of the sum will either come from $j^{-2\alpha}a \approx u$ or possibly, when $2\beta > 1$, from small $j$.  So the analysis will be done by separately considering windows around the transition window $j^{-2\alpha}a \approx u$, and another analysis for large/small $j$.  We use the notation for $I \subset \{1,2,\ldots, v\}$ that $A_I$ and $B_I$ are the restrictions of this sum to the range of $j\in I.$

    \paragraph{The transition window.}
    We begin by setting $j_0$ to be the integer which minimizes $|j_0^{-2\alpha}a-u|$.
    We can estimate this difference, noting that 
    \begin{equation}\label{eqE:j0}
        |j_0^{-2\alpha}a-u| 
        \leq \max\{ |j_0^{-2\alpha}-(j_0\pm1)^{-2\alpha}|a\}
        \leq C(\alpha)j_0^{-2\alpha-1}a \leq C(\alpha)' u^{1+1/(2\alpha)}.
    \end{equation}

    We can estimate $j^{-2\alpha}$ by Taylor approximation, giving
    \[
    j^{-2\alpha} = j_0^{-2\alpha} - 2\alpha(j-j_0) j_0^{-2\alpha-1}  + O( (j-j_0)^2 j_0^{-2\alpha-2}).   
    \]
    Now we divide $j$ according to whether
    \[
        (j_0^{-2\alpha}a - j^{-2\alpha}a)^2 \leq M(\eta+ j_0^{-2\alpha}b)^2
    \]
    or if not, for a large $M=M(\epsilon) \asymp 1/\epsilon^2$.  Let $I$ the largest possible symmetric interval of $j$ around $j_0$ that satisfies the above display. 
    
    On this interval, we would like to justify that the Taylor approximation holds. For this, we shall require that $\sqrt{M}(\eta + j_0^{-2\alpha}b)j_0^{2\alpha} \leq \epsilon$.  
    Note under this condition
    \[
        |j-j_0|/j_0
        + O( (j-j_0)^2 j_0^{-2})
        \asymp
        |j_0^{-2\alpha} - j^{-2\alpha}|j_0^{2\alpha}
        \leq \sqrt{M}(\eta + j_0^{-2\alpha}b)j_0^{2\alpha}
        \leq \epsilon.
    \]
    Thus the largest difference of $|j-j_0|$ on $I$ is bounded above, up to constants, by $\sqrt{M} (\eta + ub)u^{-1-1/(2\alpha)}$.
    Hence the Taylor approximation is justified in that on $I$
    \[
        |j^{-2\alpha} - j_0^{-2\alpha}| = (2\alpha)|j-j_0|u^{1+1/(2\alpha)}(1+O(\epsilon)),    
    \]
    with the implied constants bounded in terms of $|1-a|$ and $c$.  It follows that for terms outside of $I$, we have $ (j_0^{-2\alpha}a - j^{-2\alpha}a)^2 > c'M(\eta+ j_0^{-2\alpha}b)^2$ for some absolute $c'$ (provided $c(\alpha)$ was picked sufficiently small).

    The contribution of $I$ terms now follows the same path as was done in the first case:
\[
    \sum_{j\in I} \frac{j^{-2\alpha-2\beta}}{(a-ib)j^{-2\alpha}-(u+i\eta)}
    =
    \sum_{j\in I} \frac{ j^{-2\beta} }{(a-ib)-(u+i\eta)( j_0^{2\alpha}+ (j-j_0)j_0^{2{\alpha}-1}2\alpha)}
    +\xi_1.
\]
The error terms $\xi_1$ are bounded by 
\begin{align*}
    |\xi_1| \lesssim
    j_0^{-2\beta}\sum_{j\in I} \frac{ 
        |u+i\eta|j_0^{2\alpha-2}|j-j_0|^2
        }{ (b+j_0^{2\alpha}\eta)^2}
        &
    \lesssim
    \frac{M^{3/2}u^{b/\alpha+1/\alpha}u^{-3-3/(2\alpha)}(\eta+ub)^3}{(b+j_0^{2\alpha}\eta)^2}
    \\
    &
    \lesssim
    M^{3/2}(\eta+ub) u^{\beta/\alpha-1-1/(2\alpha)}.
\end{align*}
We then do a second replacement, freezing the $j^{-2\beta}$ in the numerator, and so we need to estimate
\[
    \xi_2 \coloneqq
    \sum_{j\in I} \frac{ j^{-2\beta}-j_0^{-2\beta}}{(a-ib)-(u+i\eta)( j_0^{2\alpha}+ (j-j_0)j_0^{2{\alpha}-1}2\alpha)},
\]
which we do simply by 
\[
|\xi_2| \lesssim j_0^{-2\beta-1} \max_{j \in I}\frac{|j-j_0|^2}{b+\eta/u} \lesssim M(\eta+ub) u^{\beta/\alpha-1-1/(2\alpha)}.
\]
Thus with $M\asymp 1/\epsilon^2$ and using the second assumption of the lemma, we get 
\[
|\xi_1|+|\xi_2| \leq \epsilon u^{\beta/\alpha-1/(2\alpha)},
\]
where we have expressed
\[
    \sum_{j\in I} \frac{j^{-2\alpha-2\beta}}{(a-ib)j^{-2\alpha}-(u+i\eta)}
    = 
    \sum_{j \in  I} 
    \frac{
        j_0^{-2\beta} 
    }{
        z 
        +w(j-j_0)
    }
    +\xi_1+\xi_2
    \quad
    \text{where}
    \quad
    \left\{
        \begin{aligned}
            &z =  a-ib - (u+i\eta)j_0^{2\alpha}, \\
            &w = ( u+i\eta )j_0^{2\alpha-1}2\alpha \\
        \end{aligned} 
    \right.
\]

The sum we can now evaluate using Lemma \ref{lem:cot}.  Note this makes $z$ nearly $-i(b+\eta/u)$ and $w$ nearly $-u^{1/(2\alpha)}(2\alpha)$, and hence $z/w$ is almost purely imaginary.  Thus the error estimate in the Lemma applies and we have (using $|(\eta+bu)u^{-1-1/(2\alpha)}| \gtrsim \log(1/\epsilon)$ and $\eta < \epsilon u$)
\[
    \biggl|
        \sum_{j\in I} \frac{j^{-2\alpha-2\beta}}{(a-ib)j^{-2\alpha}-(u+i\eta)}
        -\frac{i \pi (u/a)^{\beta/\alpha-1/(2\alpha)}}{2\alpha}
    \biggr|
    \leq C \epsilon {u^{\beta/\alpha-1/(2\alpha)}}.
\]

\paragraph{The small $j$ regime, imaginary part.}
Recall the terms of small $j$, which is to say those with $j$ less than those in $I$, are denoted $S.$  For these terms, we have $j^{-2\alpha}a - u \geq c\sqrt{M}(\eta + ub)$.  For the real and imaginary parts of the sum we have
\[
   A_S+iB_S 
   = 
   \sum_{j \in S} 
   \frac{j^{-2\alpha-2\beta}}{-u +j^{-2\alpha}a- i (\eta + j^{-2\alpha}b)}
   =
   \sum_{j \in S} 
   \frac{j^{-2\alpha-2\beta}
   (-u +j^{-2\alpha}a + i (\eta + j^{-2\alpha}b))}
   { (-u +j^{-2\alpha}a)^2 +(\eta + j^{-2\alpha}b)^2}.
\]
We shall focus on the imaginary part first.
We introduce an approximation for this sum, coming from approximating the denominator by $j^{-4\alpha}a^2$. Thus we introduce 
\[
    iB_S'
    \defas
    \frac{1}{a^2}
    \sum_{j \in S} 
    {j^{2\alpha-2\beta}
    (i (\eta + j^{-2\alpha}b))}.
\]
Let $c_\beta$ be as in the statement of the Proposition.  Then
\[
    |iB_S' - c_\beta(ib/a^2)|
    \lesssim 
    j_0^{1+2\alpha-2\beta}|\eta|
    +j_0^{1-2\beta}(b) 
    \lesssim \epsilon u^{\beta/\alpha-1/2\alpha}.
\]

We turn to estimating the difference of $B_S-iB_S'$.
Using that $(j^{-2\alpha}a)^2 - (-u +j^{-2\alpha}a)^2 \leq 2u(j^{-2\alpha}a)$, we can estimate
\[
|B_S-B_S'|
\lesssim
\sum_{j \in S} 
\frac{u j^{-2\beta}(\eta + j^{-2\alpha}b)}
{(-u +j^{-2\alpha}a)^2}.
\]
To estimate these differences, we break these sums into scales.  We let $S_k$ to be those $j$ for which
\[
S_k = \bigl\{ (\eta + u b)2^{k-1} \leq j^{-2\alpha}a - u \leq (\eta + u b)2^k \bigr\}.
\]
Then we can estimate the number of terms in each of these $k$ by 
\[
|S_k| \leq C(\alpha)(\eta+ u b)2^{k} ( u + (\eta + ub)2^k)^{-1-1/(2\alpha)}.    
\]
For small $k$, i.e. those for which $(\eta+ub)2^k \leq u$, we can estimate $|S_k| \leq C(\alpha)(\eta+ u b)2^k u^{-1-1/(2\alpha)}$.  Call the small $k$ terms $S'$ and the remainder $S''$.  Then for larger $S''$ terms,
\[
    |S_k| \leq C(\alpha)((\eta + ub)2^k)^{-1/(2\alpha)}.  
\]

For the difference of the imaginary parts on small $k$, we may bound $j^{-2\beta}$ as a multiple of $j_0^{-2\beta}$ and so we arrive at
\[
    |B_{S'}-B'_{S'}|
    \lesssim 
    (u^{\beta/\alpha+1})
    \sum_k 
    \frac{|S_k|
    (\eta + ub)}
    {2^{2k}(\eta + ub)^2}
    \lesssim
    (u^{\beta/\alpha-1/(2\alpha)})
    \sum_k
    \frac{1}{2^k}
    \lesssim 
    \epsilon u^{\beta/\alpha-1/(2\alpha)}.
\]
Then for the difference of the imaginary parts on large $k$
\[
    |B_{S''}-B'_{S''}|
    \lesssim 
    \sum_k 
    u 
    \frac{|S_k|
    (\eta ((\eta + ub)2^k)^{\beta/\alpha} 
    + b((\eta + ub)2^k)^{\beta/\alpha+1} 
    )}
    {2^{2k}(\eta + ub)^2}.
\]
This we further estimate 
\begin{align*} 
    |B_{S''}-B'_{S''}|
    \lesssim 
    u 
    \sum_k 
    \eta ((\eta + ub)2^k)^{\beta/\alpha-2-1/(2\alpha)} 
    + b((\eta + ub)2^k)^{\beta/\alpha-1-1/(2\alpha)} 
    .
\end{align*} 
In the event that the exponents are non-negative, which can only occur when $2\beta > 1$, we may lose a factor which is boundable by the largest $k$ term (which is constant order) or by a logarithm in the case the exponent is $0$.  If either exponent is negative, the expression is dominated by its smallest $k$ term, for which $(\eta + ub)2^k \asymp u$.
In all we have
\[
    |B_S-B_S'|
    \lesssim \epsilon u^{\beta/\alpha-1/(2\alpha)}
    + (\eta+ub)u^{\beta/\alpha-1-1/(2\alpha)} + c_\beta (\eta + b)u\log(1/u).
\]

\paragraph{The large $j$ regime, imaginary part.}
We break the sum into two parts $L'$ and $L''$, those with $j < 1.1j_0$ and those with $j \geq 1.1j_0.$  
For the terms in $L'$ we again break into scales, much like in the small $j$ regime.  We let $L_k$ to be those $j$ for which
\[
    L_k = \bigl\{ (\eta + u b)2^{k-1} \leq u - j^{-2\alpha}a \leq (\eta + u b)2^k \bigr\}.
\]
Then we can estimate the number of terms in each of these $k$ by 
\[
    |L_k| \leq C(\alpha)(\eta+ u b)2^{k} (u)^{-1-1/(2\alpha)}.    
\]
Then for the imaginary part
\[
    \begin{aligned} 
    |B_{L'}|
    \lesssim 
    &\sum_k 
    \frac{u^{\beta/\alpha+1}|L_k|
    (\eta+ub)}
    {2^{2k}(\eta + ub)^2} \\
    \lesssim
    &\sum_k 
    \frac{u^{\beta/\alpha-1/(2\alpha)}}
    {2^{k}}.    
\end{aligned}
\]
This sum is always dominated by the smallest $k$, and so we have
\[
    |B_{L'}|
    \lesssim
    \epsilon u^{\beta/\alpha-1/(2\alpha)}.
\]
As for larger $j$, we first remove a potentially divergent term, and so define
\[
    B'_{L''} = \sum_{j \in L''} \frac{j^{-2\alpha-2\beta}(\eta+ub)}{u^2}.
\]
In the case that $\alpha+\beta < 1/2$, we have that (comparing to an integral and using monotonicity)
\begin{align*}
    |B'_{L''} - \frac{(\eta+ub) V^{1-2\alpha-2\beta}}{u^2(1-2\alpha-2\beta)}| 
    &
    \lesssim 
    (\eta + u b)
    \biggl(\frac{j_0^{-2\alpha-2\beta}}{u^2} + \frac{j_0^{1-2\alpha-2\beta}}{u^2} \biggr)
    \lesssim (\eta + u b) u^{-1+\beta/\alpha-1/(2\alpha)}
    \\
    &
    \lesssim \epsilon u^{\beta/\alpha-1/(2\alpha)}.
\end{align*}
Otherwise,
\[
    |B'_{L''}| \lesssim (\eta + u b) u^{-1+\beta/\alpha-1/(2\alpha)} \lesssim \epsilon u^{\beta/\alpha-1/(2\alpha)}.
\]
As for comparing the this divergence with the sum, we have
\[
    |B_{L''}-B'_{L''}|
    \lesssim
    \sum_{j \in L''} 
    \frac{j^{-2\alpha-2\beta}
    (j^{-2\alpha}(\eta/u + b))}
    { (-u +j^{-2\alpha}a)^2}
    \lesssim
    \sum_{j \in L''} 
    \frac{j^{-4\alpha-2\beta}((\eta + ub))}{u^3}.
\]
Then if $2\alpha+\beta > 1/2$, this leaves 
\[
    |B_{L''}|
    \lesssim (\eta + ub) u^{\beta/\alpha-1-1/(2\alpha)}
    \lesssim \epsilon u^{\beta/\alpha-1/(2\alpha)}
\]
or in the case $2\alpha+\beta < 1/2$
\[
    |B_{L''}|
    \lesssim (\eta + ub) u^{-3} v^{1-4\alpha-2\beta}
    \lesssim (\eta + ub)\bigl(v^{-2\alpha}/u\bigr) u^{-2} v^{1-2\alpha-2\beta}.
\]

\paragraph{The real part.}

For the real part, we shall prove a comparison with 
\[
\mathscr{A} = \Re \left(
\sum_j \frac{j^{-2\alpha-2\beta}}{j^{-2\alpha} - u -i\eta }
\right),
\]
which we note is a special case of $A$ with $a-ib = 1.$    The arguments are now very similar in all regimes to the imaginary parts, and so we just give a summary of the arguments.

The main difference is for $j \approx j_0$.  Note that using the previous bounds on the transition window, we may discard an interval of $|j-j_0| \leq \sqrt{M} \eta j_0^{1+2\alpha}$ from $\mathscr{A}$ and incur an error of only $\epsilon u^{\beta/\alpha-1/(2\alpha)}$.  On a larger interval, $J$, given by those $j$ with 
\[
    \eta j_0^{1+2\alpha} \leq |j-j_0| \leq \epsilon j_0,
\]
by pairing $j_0+r$ with $j_0-r$, we can bound 
\[
    |\mathscr{A}_J| + |A_J| \lesssim \epsilon j_0^{1-2\beta} \lesssim \epsilon u^{\beta/\alpha-1/(2\alpha)}.
\]
Moreover, the difference we can bound by 
\[
    | \mathscr{A}_J-A_J | \lesssim |1-a| \sum_{j \in J } \frac{j^{-4\alpha-2\beta}}{|(j^{-2\alpha}a-u)(j^{-2\alpha}-u)|}   \lesssim \frac{|1-a|}{\epsilon} u^{\beta/\alpha-1/(2\alpha)}.
\]

For small $j$, where we redefine $S$ as those $j$ smaller than those in $J$, we further divide to hose $j$ with $|j-j_0| \leq j_0/2$ and those $S'$ which are further from $j_0$.
\[
    | \mathscr{A}_S-A_S | \lesssim \frac{|1-a|}{\epsilon} j_0^{1-2\beta} + |1-a| \sum_{j \in S'} j^{-2\beta}.
\]
Hence we arrive at 
\[
    | \mathscr{A}_S-A_S | \lesssim \epsilon u^{\beta/\alpha-1/(2\alpha)} + c_\beta |1-a|.
\]
For the large $j$ terms, we redefine $L$ as those $j$ larger than those in $J$.  Again dividing to those with $|j-j_0| \leq j_0/2$ and otherwise, we arrive at 
\[
    | \mathscr{A}_L-A_L | \lesssim \frac{|1-a|}{\epsilon} j_0^{1-2\beta} + |1-a| \sum_{j \in L'} \frac{j^{-4\beta - 2\beta}}{u^2}.
\]
This, as in the large terms for the imaginary part, leads to 
\[
    | \mathscr{A}_L-A_L | \lesssim \frac{|1-a|}{\epsilon} j_0^{1-2\beta} + |1-a| \epsilon u^{\beta/\alpha-1/(2\alpha)} + |1-a| o_{\alpha+\beta} \epsilon \frac{v^{1-2\alpha-2\beta}}{u}.
\]

Finally, we observe that $\mathscr{A}$ satisfies an estimate of the form
\[
|\mathscr{A}| \lesssim u^{\beta/\alpha-1/(2\alpha)} + c_\beta + o_{\alpha+\beta} \frac{v^{1-2\alpha-2\beta}}{u},
\]
which arise from the transitionary region, the small $j$ region and the large $j$ region.

\end{proof}

\begin{proposition}
    \label{prop:meso}
    Assume $\alpha \neq \frac{1}{4}$ and $\alpha \neq \frac{1}{2}.$
    With $z=u+i\eta(u)$, with 
    \[
        \eta = (\log(1/\epsilon)/c)
        \max\left\{ u^{1+1/(2\alpha)}, \frac{\pi}{2\alpha} \frac{u^{1-1/(2\alpha)}}{d} 
        \right\},
    \] 
    there is a $c>0$ and an $\epsilon_0$ so that for all $\epsilon \in (0,\epsilon_0)$ there is a $c_\epsilon>0$ so for all $u \in [d^{-2\alpha}/c_\epsilon, c_\epsilon]$ (with $\mathcal{A}$ as in Proposition \ref{prop:mastersummation})
    \[
     |m(z(u))-1+d^{-1}\mathcal{A}(u+i\eta) + i \frac{\pi}{2\alpha} u^{-1/(2\alpha)} d^{-1}| 
     \leq C(\alpha)\epsilon u^{-1/(2\alpha)} d^{-1}.
    \]
\end{proposition}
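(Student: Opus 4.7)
The plan is to apply the stability result of Lemma~\ref{lem:stability} to the fixed point equation
\[
F(m;z) \defas m + \frac{1}{d}\sum_{j=1}^v \frac{j^{-2\alpha}m}{j^{-2\alpha}m-z}=1,
\]
with the explicit candidate approximate solution
\[
m_0 \defas 1 - d^{-1}\mathcal{A}(u+i\eta) - i\frac{\pi}{2\alpha}u^{-1/(2\alpha)}d^{-1},
\]
which is exactly the quantity whose difference from $m(z(u))$ we are asked to bound. The content of the proposition is then essentially (i) checking that $m_0$ is indeed a near-solution in the sense $|F(m_0;z)-1|$ is small, and (ii) checking that $\partial_m F$ is quantitatively bounded away from $0$ in a neighborhood of $m_0$ of the right size.

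For step (i), I would write $F(m_0;z)-1 = (m_0-1) + \tfrac1d\sum_j \tfrac{j^{-2\alpha}m_0}{j^{-2\alpha}m_0-z}$ and invoke Proposition~\ref{prop:mastersummation} in the case $\beta=0$ with $a-ib=m_0$ to evaluate the sum. Writing $m_0 = a-ib$, we have $a = 1-d^{-1}\Re\mathcal{A}(u+i\eta)$ and $b = d^{-1}\Im \mathcal{A}(u+i\eta) + \tfrac{\pi}{2\alpha}u^{-1/(2\alpha)}d^{-1}$. The choice of $\eta$ in the proposition is engineered precisely so that assumptions (2)--(5) of Proposition~\ref{prop:mastersummation} are satisfied for $u\in[d^{-2\alpha}/c_\epsilon,c_\epsilon]$: assumption (4) $\log(1/\epsilon)u^{1+1/(2\alpha)} \lesssim \eta$ forces the first branch of the max in $\eta$, while the second branch $\tfrac{\pi}{2\alpha} u^{1-1/(2\alpha)}/d$ comes from requiring $\eta + ub \lesssim \epsilon^4 u$ (since $ub \asymp u^{1-1/(2\alpha)}d^{-1}$). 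The imaginary-part bound of Proposition~\ref{prop:mastersummation} with $\beta=0$ yields $\tfrac1d\Im\sum = \tfrac{\pi u^{-1/(2\alpha)}}{2\alpha a d} + d^{-1}\Im\mathcal{A} + O(\epsilon u^{-1/(2\alpha)}d^{-1})$, which, after replacing $1/a = 1+O(d^{-1}|\mathcal{A}|)$, cancels the imaginary part of $m_0-1$ up to the advertised error. The real-part comparison bound $|\tfrac1d\Re\sum - d^{-1}\mathcal{A}| \lesssim d^{-1}|1-a|(\epsilon^{-1}u^{-1/(2\alpha)}+\cdots)$ cancels the real part of $m_0-1$, again with error $O(\epsilon u^{-1/(2\alpha)}d^{-1})$ once one uses the bound $|\mathcal{A}|\lesssim u^{-1/(2\alpha)}$.

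For step (ii), I would compute
\[
\partial_m F(m;z) = 1 - \frac{1}{d}\sum_{j=1}^v \frac{j^{-2\alpha}z}{(j^{-2\alpha}m-z)^2}
\]
and show $|\partial_m F(m;z)|\geq c(\alpha) > 0$ uniformly for $m$ in a disc of radius $\delta \asymp u^{-1/(2\alpha)}d^{-1}/c$ around $m_0$. The dangerous term is the transitional window $j\approx j_0$ with $j_0^{-2\alpha}a\approx u$; here the same scale analysis used in the proof of Proposition~\ref{prop:mastersummation} applied to the $(j^{-2\alpha}m-z)^{-2}$ kernel shows the transitional sum is of order $u^{1-1/(2\alpha)}/(d\eta)$, which by the choice of $\eta$ is a small multiple of $1/\log(1/\epsilon)$, hence $\partial_m F = 1 + o(1)$.

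Combining (i) and (ii) via Lemma~\ref{lem:stability} gives $|m(z(u))-m_0|\leq C(\alpha)\epsilon u^{-1/(2\alpha)}d^{-1}$, which is the conclusion. The main obstacle I anticipate is the careful bookkeeping in step (i): Proposition~\ref{prop:mastersummation} gives asymmetric estimates in the real versus imaginary parts (the real part only compares $A$ to $\mathcal{A}$ in terms of $|1-a|$, while the imaginary part has an explicit leading term), and one must check that the self-consistency of the ansatz $m_0$ is preserved to the order of accuracy claimed, in particular that the $1/a$ factor in the imaginary-part formula contributes only to higher order and that the various residual errors (e.g.\ $c_\beta(\eta+b)u\log(1/u)$-type terms, which vanish since $\beta=0$ and $c_0$ is interpreted correctly) really do fit inside $\epsilon u^{-1/(2\alpha)}d^{-1}$ on the full range $u\in[d^{-2\alpha}/c_\epsilon, c_\epsilon]$.
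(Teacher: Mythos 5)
Your proposal follows essentially the same route as the paper's proof: the same ansatz $m_0 = 1 - d^{-1}\mathcal{A} - i\tfrac{\pi}{2\alpha}u^{-1/(2\alpha)}d^{-1}$, verification that it nearly solves the fixed-point equation via Proposition~\ref{prop:mastersummation} with $\beta=0$, a region-by-region bound on $1-\partial_m F$ dominated by the transition window (controlled because $\eta$ is a large multiple of $ub$), and conclusion via Lemma~\ref{lem:stability}. The only slight imprecision is attributing the second branch of the $\eta$-max to the constraint $\eta+ub<\epsilon^4 u$ (which instead restricts the admissible range of $u$); its real role is the one you use in step (ii), keeping the transition-window contribution to the stability operator small.
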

\begin{proof}
    We claim that $m$ is approximately equal to 
    \[
        m_0 \defas 1-\frac{\mathcal{A}(u+i\eta)}{d} - i\frac{\pi u^{-1/(2\alpha)}}{2\alpha d},
    \]
    where $m$ is the solution of 
    \[
        F(m(z),z) \coloneqq m(z) + \frac{v}{d}\Exp\biggl( \frac{Xm(z) }{X m(z) - z}\biggr)  = 1,       
    \]
    with $\Im m < 0$.  Hence the result boils down to checking:
    \[
        |F(m_0,z)-1| \leq C\epsilon \frac{u^{-1/(2\alpha)}}{d}
    \]
    and secondly that 
    \[
        |1-\partial_m(F)| \leq \tfrac 12
    \]
    in a neighborhood of $m_0$, using Lemma \ref{lem:stability}.

    For showing that $|F(m_0,z)-1|$ we first observe that on the contour selected, if $\alpha < \tfrac 12$ and $\epsilon_0,c_\epsilon$ is chosen sufficiently small 
    \[
         \frac{v^{1-2\alpha}}{u^2d} (\eta+ub) \leq \epsilon \frac{\pi u^{-1/(2\alpha)}}{d}.
    \]
    Moreover the claimed estimates on $1-F$ now follow directly from Proposition \ref{prop:mastersummation}.

    For the stability, we have that
    \[
        1-\partial_m F 
        = 
        \frac{1}{d}\sum_{j=1}^v \frac{j^{-2\alpha}z}{(j^{-2\alpha}m - z)^2}.
    \]
    Taking modulus, we have
    \[
        |1-\partial_m F|
        \leq
        \frac{1}{d}\sum_{j=1}^v \frac{j^{-2\alpha}(u+\eta)}{(j^{-2\alpha}a - u)^2 + (j^{-2\alpha}b + \eta)^2}
        \eqqcolon X
    \]
    Now we break the estimation of the sum into regions of $j$, as in the proof of Proposition \ref{prop:mastersummation}.  We let $j_0$ be the integer which minimizes $(j_0^{-2\alpha} a - u)^2$.  We define $S,L,I,J$ to be the sets where
    \[
    j < \delta j_0,
    \quad
    j > j_0/\delta, 
    \quad 
    (j^{-2\alpha} a - u)^2 \leq (ub+\eta)^2,
    \]
    and the rest in $J$, we let $X_A$ be the restriction of the sum $X$ to the set of indices $A$.
    For the terms in $S$,
    \[
    X_S \lesssim  \frac{1}{d}\sum_{j \in S} \frac{j^{2\alpha}(u+\eta)}{a^2(1-O(\delta))}
    \lesssim \frac{u^{-1/2\alpha}}{d} \delta^{1+2\alpha},
    \]
    with the final sum holding for all $\delta > 0$ sufficiently small.
    For the terms in $L$,
    \[
    X_L \lesssim   \frac{1}{d}\sum_{j \in L} \frac{j^{-2\alpha}(u+\eta)}{u^2(1-O(\delta))}
    \lesssim \frac{u^{-1/2\alpha}}{d} \delta^{1+2\alpha} + o_\alpha \frac{v^{1-2\alpha}}{d} \frac{u}{u^2+\eta^2},
    \]
    where $o_\alpha$ is the indicator of $2\alpha < 1$.  
    For the terms in $I$ we have
    \[
        X_I \lesssim   \frac{1}{d}\sum_{j \in I} \frac{j^{-2\alpha}(u+\eta)}{(ub+\eta)^2}
        \lesssim \frac{1}{d} \frac{u^{-1/(2\alpha)}(u+\eta)}{(ub+\eta)},
    \]
    where we have used that the number of terms in this regions is on order of $j_0^{2\alpha+1}(ub+\eta)$.  Now taking $\eta$ a sufficiently large multiple of $u\beta$, we conclude that the terms in $X_I \leq \tfrac{1}{8}.$  
    For the terms in $J$
    \[
        X_J 
        \lesssim   \frac{1}{d}\sum_{j \in J} \frac{j^{-2\alpha}(u+\eta)}{(j^{-2\alpha} a - u)^2}
        \lesssim  C(\delta)\frac{1}{d}\sum_{r} \frac{j_0^{1}(u+\eta)}{j_0^{-2\alpha-1}(r)^2};
    \]
    here the range of $r$ is such that at its smallest value $j_0^{-2\alpha-1}(r) \asymp (ub+\eta)$, and so we arrive at 
    \[
        X_J \leq C(\delta) \frac{1}{d} \frac{u^{-1/(2\alpha)}(u+\eta)}{(ub+\eta)}.
    \]
    Hence picking $\delta$ sufficiently small that $X_S,X_L$ are both less than $\tfrac{1}{8}$, and subsequently increasing the lower bound on $\eta/(ub)$ sufficiently far, we conclude that all four components can be made less than $\tfrac{1}{8}$ and hence that 
    \[    
        |1-\partial_m F| \leq \tfrac12.
    \]

\end{proof}

\subsection{The large $z$ region}

\begin{proposition}\label{propE:largez}
    For any compact set $U\subset \mathbb{C}$ of distance at least $\delta > 0$ from $[0,1]$ and any $\alpha \neq 1/2$ there is a $C(\alpha)$ such that
    \[
        |m(z)-1| \leq \frac{C(\alpha)}{\delta \min\{d,d^{2\alpha}\} }
    \]
    and such that 
    \[
        \left|
            m(z)-1
            +\frac{1}{d}\sum_{j=1}^v \frac{j^{-2\alpha}}{j^{-2\alpha}-z}
        \right|
        \leq \frac{C(\alpha)}{\delta \min\{d^{2},d^{4\alpha}\} }
    \]
    Furthermore, on the same set 
    \[
        \left|\sum_{j=1}^v \frac{j^{-2\alpha-2\beta}}{j^{-2\alpha}m - (u+i\eta)}
        -
        \sum_{j=1}^v \frac{j^{-2\alpha-2\beta}}{j^{-2\alpha} - (u+i\eta)}
        \right| \leq \frac{C(\alpha)}{\delta \min\{d,d^{2\alpha}\} }.
    \]
\end{proposition}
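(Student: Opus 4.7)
The plan is to reduce all three estimates to applications of Proposition~\ref{propE:Mestimates} part 2 (``Far away''), after carefully estimating the moments of the empirical measure $X$ supported uniformly on $\{j^{-2\alpha}\}_{j=1}^v$. The basic inputs are
\[
    \tfrac{v}{d}\Exp(X) = \tfrac{1}{d}\sum_{j=1}^v j^{-2\alpha}
    \quad\text{and}\quad
    \tfrac{v}{d}\Exp(X^2) = \tfrac{1}{d}\sum_{j=1}^v j^{-4\alpha},
\]
which I will split into cases according to whether $2\alpha$ (resp. $4\alpha$) exceeds $1$. Using $v/d \to r \in (1,\infty)$ (or $v$ potentially $\infty$ in the $2\alpha>1$ regime), elementary sum estimates yield
\[
\tfrac{v}{d}\Exp(X) \leq \tfrac{C(\alpha)}{\min\{d,d^{2\alpha}\}}
\quad\text{and}\quad
\tfrac{v}{d}\Exp(X^2) \leq \tfrac{C(\alpha)}{\min\{d,d^{4\alpha}\}}.
\]

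\emph{First estimate on $|m(z)-1|$.} For $z\in U$ at distance at least $\delta$ from $[0,1]$, we have $\varrho_0(z)\geq \delta$. The hypothesis $16\tfrac{v}{d}\Exp X\leq \varrho_0(z)^2$ of Proposition~\ref{propE:Mestimates}(2) holds for $d$ large (since the left side is $O(1/\min\{d,d^{2\alpha}\})$). The first conclusion there then gives $|m(z)-1| \leq 8\tfrac{v}{d}\Exp(X)/\varrho_0(z)\leq C(\alpha)/(\delta \min\{d,d^{2\alpha}\})$.

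\emph{Second estimate (refined).} Using the first estimate, we take $\epsilon = O(1/\min\{d,d^{2\alpha}\})$ and check the hypotheses of the second conclusion of Proposition~\ref{propE:Mestimates}(2). Since $|m-1|\leq 2\epsilon$ implies $|Xm-z|\geq \varrho_0(z)/2\geq \delta/2$ for $d$ large, the stability quantity satisfies
\[
\delta_\star \defas \sup_{|m-1|\leq 2\epsilon}\tfrac{v}{d}\Exp\!\left(\tfrac{X^2}{|Xm-z|^2}\right)
\leq \tfrac{4}{\delta^2}\tfrac{v}{d}\Exp(X^2) \leq \tfrac{C(\alpha)}{\delta^2 \min\{d,d^{4\alpha}\}},
\]
which is $\leq \tfrac14$ for $d$ large. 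Combined with $|\tfrac{v}{d}\Exp(X/(X-z))|\leq C/(\delta\min\{d,d^{2\alpha}\})$, the second conclusion of Proposition~\ref{propE:Mestimates}(2) yields
\[
\left|m(z)-1+\tfrac{1}{d}\sum_{j=1}^v \tfrac{j^{-2\alpha}}{j^{-2\alpha}-z}\right|
\leq 2\delta_\star \cdot \tfrac{C(\alpha)}{\delta\min\{d,d^{2\alpha}\}}
\leq \tfrac{C(\alpha)}{\delta^3\min\{d^2,d^{4\alpha}\}},
\]
which we absorb into a $C(\alpha)/(\delta\min\{d^2,d^{4\alpha}\})$ on the compact set $U$ (the additional $1/\delta^2$ is harmless as $\delta$ is bounded below).

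\emph{Third estimate.} The key algebraic identity is
\[
\sum_{j=1}^v \frac{j^{-2\alpha-2\beta}}{j^{-2\alpha}m-z}
-
\sum_{j=1}^v \frac{j^{-2\alpha-2\beta}}{j^{-2\alpha}-z}
=
(1-m)\sum_{j=1}^v \frac{j^{-4\alpha-2\beta}}{(j^{-2\alpha}m-z)(j^{-2\alpha}-z)}.
\]
The two denominator factors are each bounded below in modulus by $\delta/2$ on $U$ (again using $|m-1|$ small), so the modulus is bounded by $|1-m|\cdot \tfrac{4}{\delta^2}\sum_{j=1}^v j^{-4\alpha-2\beta}$. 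Plugging the first estimate for $|1-m|$ and combining with the sum bound (which is $O(1)$ when $4\alpha+2\beta>1$ and otherwise $O(v^{1-4\alpha-2\beta})$, both absorbable into $C(\alpha)$ on the compact set --- or really into $C(\alpha,\beta)$, which the statement presumably means) yields the claim.

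\emph{Main obstacle.} None of the three steps requires deep analysis beyond Proposition~\ref{propE:Mestimates}, so the real labor is bookkeeping: (i) carefully checking the two hypotheses $16\tfrac{v}{d}\Exp X\leq \varrho_0^2$ and the smallness of $\delta_\star$ required by Proposition~\ref{propE:Mestimates}(2); and (ii) tracking how sums $\sum j^{-s}$ behave under the change of regime $s\lessgtr 1$ so the final bounds cleanly read $\min\{d^k,d^{2k\alpha}\}$. The excluded value $\alpha=\tfrac12$ arises because $\sum j^{-2\alpha}\sim \log v$ there, spoiling the neat $\min\{d,d^{2\alpha}\}$ form; this case should simply be listed as requiring a logarithmic correction but otherwise proceeds identically.
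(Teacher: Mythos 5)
Your proposal is correct and follows essentially the same route as the paper: both estimates on $m$ come from Proposition~\ref{propE:Mestimates} part 2 combined with the moment bounds $\tfrac{v}{d}\Exp X \lesssim 1/\min\{d,d^{2\alpha}\}$ and $\tfrac{v}{d}\Exp(X^2) \lesssim 1/\min\{d,d^{4\alpha}\}$, and your algebraic difference identity for the $\beta$-weighted sum is just the explicit form of the paper's bound on $\partial_m S$ being uniformly bounded near $m=1$. The only caveat is your parenthetical that a divergent $\sum_j j^{-4\alpha-2\beta} \asymp v^{1-4\alpha-2\beta}$ could be absorbed into a constant --- it cannot, but that regime never arises since the standing assumption $2\alpha+2\beta>1$ forces $4\alpha+2\beta>1$.
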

\begin{proof}
    We apply Proposition \ref{propE:Mestimates} part 2.  
    We have
    \[
    \frac{v}{d}\Exp X = \frac{1}{d}\sum_{j=1}^v j^{-2\alpha} \lesssim \frac{1}{\min\{d,d^{2\alpha}\} },
    \]
    and the result follows directly from Proposition \ref{propE:Mestimates}.

    We turn to evaluating the sum 
    \[
    S(m,u+i\eta) \defas \sum_{j=1}^v \frac{j^{-2\alpha-2\beta}}{j^{-2\alpha}m - (u+i\eta)}.    
    \]
    Then taking the partial derivative in $m$,
    \[
    \partial_m S(m,u+i\eta) = \sum_{j=1}^v \frac{j^{-4\alpha-2\beta}}{(j^{-2\alpha}m - (u+i\eta))^2},  
    \]
    which is uniformly bounded on $U$ and on the set $m$ so $|m-1|< \delta/2$.  It follows that on $U$
    \[
        |S(m,u+i\eta) - S(1,u+i\eta)| \lesssim \frac{1}{\min\{d,d^{2\alpha}\} }.
    \]

    For the second part,
    we start by observing that we can estimate
    \[
        \left| \frac{v}{d}\Exp \left(\frac{ X  } {X-z}  \right)\right|
        =    
        \frac{1}{d}\left| \sum_{j=1}^v \left(\frac{ j^{-2\alpha}}{j^{-2\alpha}-u-i\eta}\right)\right|
        \lesssim \frac{1}{\delta \min\{d,d^{2\alpha}\} }.
    \]
    We further have 
    \[
        \left| \frac{v}{d}\Exp \left(\frac{ X^2  } {(Xm-z)^2}  \right)\right|
        \lesssim    
        \frac{1}{d}\left| \sum_{j=1}^v j^{-4\alpha}/\delta^2 \right|
        \lesssim \frac{1}{\delta^2 \min\{d,d^{4\alpha}\} }.
    \]
    Hence, combining all these errors we conclude the claim.
\end{proof}

\section{Approximation of the forcing function} \label{sec:forcing_function}

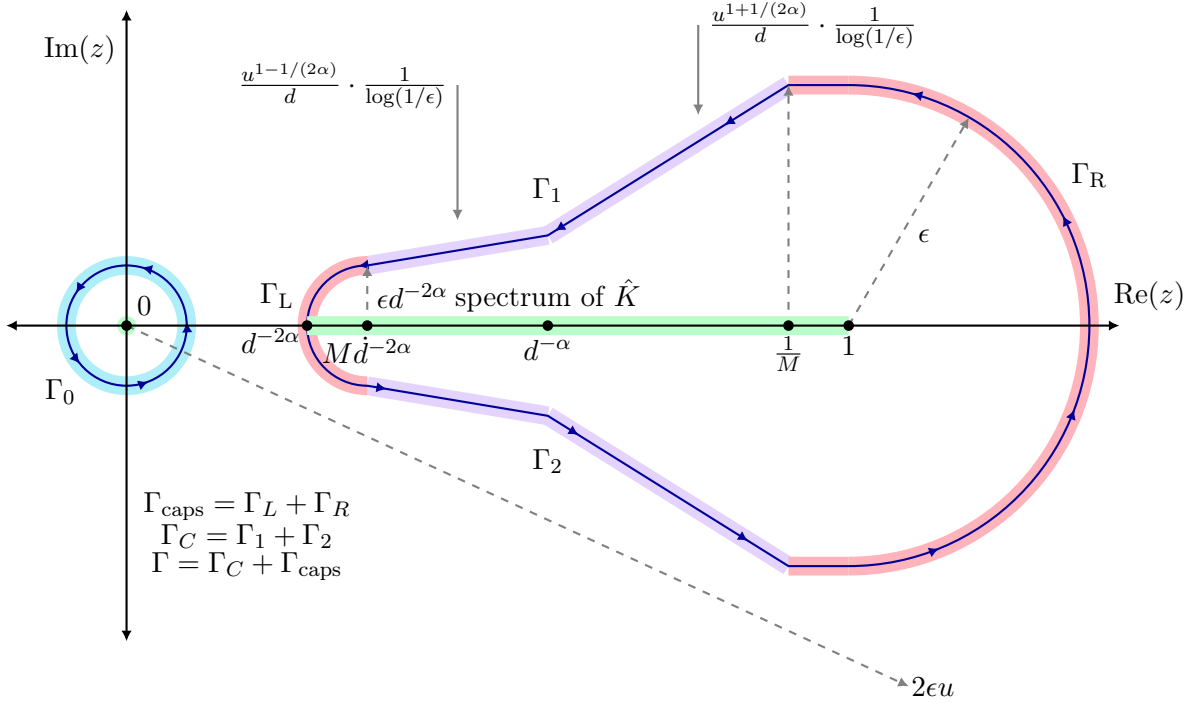
\begin{figure}[t!]
\begin{center}
\begin{tikzpicture}[thick, scale = 0.8]
  \def\xr{15} \def\yr{4.25}
  \def\redge{12}
  
\draw[line width=0.25cm, region1c] (4,-1) to (7,-1.5);

\draw[line width=0.25cm, region1c] (7,-1.5) to (\redge-1, -4);

\draw[line width=0.25cm, region1c] (4,1) to (7,1.5);

\draw[line width=0.25cm, region1c] (7,1.5) to (\redge-1, 4);

\draw[region4b, line width = 0.25cm] (\redge,-4) arc (-90:90:4) (\redge,4);

\draw[region4b, line width = 0.25cm] (\redge-1,-4) to (\redge,-4);

\draw[region4b, line width = 0.25cm] (\redge-1,4) to (\redge,4);

\draw[region4b, line width = 0.25cm] (4,1) arc (-270:-90:1) (4,-1);

\draw[region2, line width = 0.25cm] (0,0) circle (1);

\draw[fill, region3] (0,0) circle (4pt);


\draw[blue!60!black,decoration={markings,mark=between positions 0.01 and 1 step 0.1 with \arrow{>}},postaction={decorate}]  (4,-1) to (7, -1.5) to (\redge-1, -4) to (\redge,-4) arc (-90:90:4) (\redge,4) to (\redge-1, 4) to (7, 1.5) to (4,1) arc (-270:-90:1) (4,-1);

\draw[blue!60!black,decoration={markings,mark=between positions 0.01 and 1 step 0.2 with \arrow{>}},postaction={decorate}] (0,0) circle (1);

\draw[->, gray, dashed, thick] (\redge,0) -- (\redge+2, 3.46);

\draw[->, gray, dashed, thick] (\redge-1,0) -- (\redge-1, 4);

\draw[->, gray, dashed, thick] (4,0) -- (4, 1);

\draw[->, gray, dashed, thick] (0,0) -- (\redge+1, -6);

\node (line) at (\redge+1.4, -6) {$2\epsilon u$};

\draw[<-, gray, thick] (5.5,1.75) -- (5.5, 4) node [left, black] {$\frac{u^{1-1/(2\alpha)}}{d} \cdot \frac{1}{\log(1/\epsilon)}$};

\draw[<-, gray, thick] (9.5,3.5) -- (9.5, 5) node [right, black] {$\frac{u^{1+1/(2\alpha)}}{d} \cdot \frac{1}{\log(1/\epsilon)}$};

\node (contour) at (2, -3) {$\Gamma_{\text{caps}} = \Gamma_{L} + \Gamma_{R}$};
\node (contour) at (2, -3.5)
{$\Gamma_C = \Gamma_1 + \Gamma_2$};
\node (contour) at (2, -4)
{$\Gamma = \Gamma_C + \Gamma_{\text{caps}}$};

\draw[region3, line width = 0.25cm] (3,0) -- (\redge,0);

\draw (4.75, 0.5) node {$\epsilon d^{-2\alpha}$};

\draw (-1.1, -1.1) node {$\Gamma_0$};

\draw (7, 2.25) node {$\Gamma_1$};

\draw (7, -2.25) node {$\Gamma_2$};

\draw (13.25, 1.5) node {$\epsilon$};

\draw (16, 2.5) node {$\Gamma_{\text{R}}$};

\draw (2.5, 0.5) node {$\Gamma_{\text{L}}$};

 \node (poles) at (7,0.5) {spectrum of $\hat{K}$};
 
  \draw [<->] (-2,0) -- (\xr+1.5,0);
  \node (real) at (\xr+2,0.5) {$\Re(z)$};
  \draw [<->] (0,-\yr-1) -- (0,\yr+1) node [below left=0.2 and 0] {$\Im(z)$};
  
  \draw[fill] (3,0) circle (2pt);
  \node (left_edge) at (2.4, -0.25)  {$d^{-2\alpha}$};
  
  \draw[fill] (4,0) circle (2pt) node [below] {$M \dot d^{-2\alpha}$};
  
    \draw[fill] (\redge,0) circle (2pt) node [below] {$1$};
    \draw[fill] (\redge-1,0) circle (2pt) node [below] {$\frac{1}{M}$};
    
      \draw[fill] (7,0) circle (2pt) node [below] {$d^{-\alpha}$};
    
  \draw[fill] (0,0) circle (2pt) node [above right] {0};
 
\end{tikzpicture}
\end{center}
\caption{\textbf{Contour of $\Gamma + \Gamma_0$.} This is used to estimate the $m$ and derive expressions for the forcing function and kernel function. The important part of the contour is $\Gamma_0$, which contains the point mass at $0$ (blue) and $\Gamma_C$ (purple) which contains the bulk of the spectrum of deterministic equivalent of $\hat{K}$. There is a left spectral gap which occurs at $d^{-2\alpha}$. Moreover we have a change of behavior at $d^{-\alpha}$ in the contour to account for the change of behavior from pure point to absolutely continuous bulk part of the spectrum.}
\label{fig:contour}
\end{figure}

We now apply the technical estimate to find good approximations for the function $\mathscr{F}$.  Recall
\begin{gather*}
 \mathscr{F}(r) \defas \frac{-1}{2 \pi i} \oint_{\Gamma + \Gamma_0} \ip{ \mathscr{R}(z), (D^{1/2} b)^{\otimes 2}} ( 1- 2 \gamma B z + \gamma^2 B(B+1) z^2 )^r \, \dif z,
\\
\text{where} \quad \mathscr{R}(z) = \operatorname{Diag}\biggl(\frac{j^{-2\alpha}}{-z + j^{-2 \alpha} m(z)} : 1 \leq  j \leq v \biggr) \quad \text{and} \quad m(z) = \frac{1}{1 + \tfrac{1}{d} \sum_{j=1}^v \frac{j^{-2\alpha}}{j^{-2\alpha} m(z) - z}}.
\end{gather*} 
We decompose the forcing function into a sum of three functions
\[
\mathscr{F}(r) = \mathscr{F}_{pp}(r) + \mathscr{F}_{ac}(r) + \mathscr{F}_0(r) + \text{errors}, 
\]
which will be introduced in the course of the approximation.

The contour we select will come in three parts.  The contour $\Gamma_0$ is an arbitrarily small contour enclosing $0$. The contour $\Gamma$ will be in three parts which is symmetric under the reflection $z\mapsto -z$.
The main part will be $\Gamma_{C}$ parameterized by $z=u + i \eta(u)$ with $\eta(u)$ as in Proposition \ref{prop:meso} for $u \in [u_0,u_1]$ where $u_0=u_0(d) = Cd^{-2\alpha}$ for some large $C>0$ and $u_1$ is a small positive constant.
This is connected by two curves, one which is a smooth curve $\Gamma_L$ which is on scale $d^{-2\alpha}$ and which is reflection symmetric, connects $u_0+i\eta(u_0)$ to its conjugates and crosses the imaginary axis on $[0,cd^{-2\alpha}]$ (with $c$ as in Proposition \ref{prop:near0}).  The other $\Gamma_R$ connects $u_1+i\eta(u_1)$ to its conjugate by a smooth curve which avoids an $\epsilon$ neighborhood of $[0,1]$.

For $\Gamma_0$, using Proposition \ref{prop:near0}, we have
\begin{equation} \label{eq:F_0_new}
    \mathscr{F}_0(r) \defas \frac{-1}{2 \pi i} \oint_{\Gamma_0} \ip{ \mathscr{R}(z), (D^{1/2}b)^{\otimes 2}} ( 1- 2 \gamma B z + \gamma^2 B(B+1) z^2 )^r \, \dif z.
\end{equation}
This can be evaluated explicitly in terms of a residue at $0$.  
\begin{proposition}\label{prop:F0}
    The function $\mathscr{F}_0(r)$ is constant and 
    \[
        \biggl|\mathscr{F}_0(0) - \sum_{j=1}^v \frac{j^{-2\alpha-2\beta}}{1+j^{-2\alpha}d^{2\alpha}\kappa(v/d)}\biggr|
        \leq Cd^{-2\alpha+(2\beta-1)_+ - 1}.
    \]
\end{proposition}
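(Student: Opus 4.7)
The plan is to reduce the contour integral defining $\mathscr{F}_0(r)$ to a residue computation at $z=0$. By Proposition \ref{prop:near0}, the function $m$ extends analytically to a complex disk of radius $c(\alpha)\, d^{-2\alpha}$ around $z=0$, with $m(0)=0$ and $m'(0) = -\kappa(v/d)\, d^{2\alpha} + O(d^{2\alpha-1})$. Consequently each diagonal entry $\frac{j^{-2\alpha-2\beta}}{j^{-2\alpha} m(z) - z}$ extends to a meromorphic function on this disk whose sole singularity is a simple pole at $z=0$, and the polynomial factor $P(z) \defas 1 - 2\gamma B z + \gamma^2 B(B+1) z^2$ is entire. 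I would deform $\Gamma_0$ to a small circle inside this disk of analyticity.

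The first observation is that $P(0) = 1$ and $P(z)^r$ is analytic, so at a simple pole of the form $A/z + (\text{analytic})$, the residue of $P(z)^r$ times the polar part is simply $A \cdot P(0)^r = A$. Hence $\mathscr{F}_0(r)$ is independent of $r$ and equals $\mathscr{F}_0(0)$. To evaluate it, expand near $z=0$:
\begin{equation*}
j^{-2\alpha} m(z) - z = z\bigl(j^{-2\alpha} m'(0) - 1\bigr) + O(z^2),
\end{equation*}
so the residue of $\frac{j^{-2\alpha-2\beta}}{j^{-2\alpha}m(z)-z}$ at $0$ equals $\frac{j^{-2\alpha-2\beta}}{j^{-2\alpha}m'(0) - 1}$. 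Combined with the prefactor $-\frac{1}{2\pi i}\oint_{\Gamma_0}$ and orientation of $\Gamma_0$, this gives the exact identity
\begin{equation*}
\mathscr{F}_0(0) = \sum_{j=1}^v \frac{j^{-2\alpha-2\beta}}{1 - j^{-2\alpha} m'(0)}.
\end{equation*}

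It remains to substitute $m'(0) = -\kappa d^{2\alpha} + O(d^{2\alpha-1})$ and control the accumulated error. Since $m'(0)$ is real and negative (by the real-axis analysis in Proposition \ref{prop:near0}), the two relevant denominators are comparable, and the standard identity yields the pointwise bound
\begin{equation*}
\left| \frac{1}{1 - j^{-2\alpha}m'(0)} - \frac{1}{1 + j^{-2\alpha}\kappa d^{2\alpha}}\right| \leq \frac{C\, j^{-2\alpha}\, d^{2\alpha-1}}{\bigl(1+j^{-2\alpha}\kappa d^{2\alpha}\bigr)^2}.
\end{equation*}
Multiplying by $j^{-2\alpha-2\beta}$ and summing over $j$, the estimate splits naturally at the crossover $j \asymp d$ where $j^{-2\alpha}\kappa d^{2\alpha}$ transitions between $\gg 1$ and $\ll 1$: for $j \lesssim d$ the denominator is $\asymp j^{-4\alpha}d^{4\alpha}$ so the summand is $\asymp j^{-2\beta}\, d^{-2\alpha-1}$, while for $j \gtrsim d$ the summand is $\asymp j^{-4\alpha-2\beta}\, d^{2\alpha-1}$. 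The main obstacle is performing these summations sharply enough to match the stated exponent $-2\alpha + (2\beta-1)_+ - 1$: one needs the convergence of $\sum j^{-2\beta}$ when $2\beta > 1$ for the small-$j$ regime, and in the large-$j$ regime the summability of $\sum j^{-4\alpha-2\beta}$ (using $v \geq Cd$ and $2\alpha > 0$); one may need to sharpen the naive triangle-inequality bound via Riemann-sum cancellations beyond the pointwise comparison of $m'(0)$ with $-\kappa d^{2\alpha}$ to recover the claimed exponent, especially in the critical regime $2\beta < 1$.
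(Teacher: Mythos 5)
Your route is the paper's own proof, just written out: the paper's argument is precisely ``use the analyticity of $m$ near $0$ from Proposition~\ref{prop:near0}, apply the residue formula on $\Gamma_0$, and bound the resulting sum.'' Your two key observations are correct and are the intended ones: since $P(0)=1$, the residue at the simple pole is unchanged by the factor $P(z)^r$, so $\mathscr{F}_0$ is constant in $r$; and expanding $j^{-2\alpha}m(z)-z=z\bigl(j^{-2\alpha}m'(0)-1\bigr)+O(z^2)$ gives the exact identity $\mathscr{F}_0(0)=\sum_{j=1}^v j^{-2\alpha-2\beta}\big/\bigl(1-j^{-2\alpha}m'(0)\bigr)$, after which one substitutes $|m'(0)+\kappa(v/d)d^{2\alpha}|\leq Cd^{2\alpha-1}$.

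Your closing hedge, however, can be dropped: no ``Riemann-sum cancellation'' is available or needed. Writing $\delta=m'(0)+\kappa d^{2\alpha}$ (a single real number), every term of the error sum is $j^{-4\alpha-2\beta}\delta$ divided by two positive, mutually comparable denominators, so the error has modulus exactly of order $|\delta|\sum_j j^{-4\alpha-2\beta}\bigl(1+j^{-2\alpha}\kappa d^{2\alpha}\bigr)^{-2}\asymp |\delta|\,d^{-4\alpha+(1-2\beta)_+}$, and Proposition~\ref{prop:near0} gives nothing better than $|\delta|\lesssim d^{2\alpha-1}$ (the Riemann-sum error there is genuinely of that order). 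Hence your split at $j\asymp d$ yields the sharp bound $Cd^{-2\alpha-1+(1-2\beta)_+}$: for $2\beta>1$ this is $d^{-2\alpha-1}$, which is \emph{stronger} than the displayed bound, and for $2\beta<1$ it is $d^{-2\alpha-2\beta}$, i.e.\ a relative error $O(d^{-1})$ against the main term $d^{-2\alpha+1-2\beta}$, exactly as recorded in \eqref{eq:0_appendix} and as used in Section~\ref{sec:asymptotic}. The exponent $(2\beta-1)_+$ in the statement appears to be a slip for $(1-2\beta)_+$ (they agree only at $2\beta=1$); no refinement of your summation could produce $d^{-2\alpha-1}$ when $2\beta<1$, because the limitation sits in $|\delta|\asymp d^{2\alpha-1}$ itself, not in how the sum is estimated. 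With that understanding, your proof is complete and is the same argument the paper intends.
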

\begin{proof}
    From Proposition \ref{prop:near0}, we can apply the residue formula.  Evaluating the residue and bounding the sum produces the statement.
\end{proof}
The contours $\Gamma_R$ and $\Gamma_L$ both contribute error terms.  Define the sum of the two as 
\[
    \mathscr{F}_{caps} 
    \defas \frac{-1}{2 \pi i} \oint_{\Gamma_R + \Gamma_L} \ip{ \mathscr{R}(z), (D^{1/2} b)^{\otimes 2}} ( 1- 2 \gamma B z + \gamma^2 B(B+1) z^2 )^r \, \dif z.
\]
\begin{proposition}\label{prop:Fcaps}
    There are positive functions $f(r)$ and $g(r)$ satisfying $f(r) \leq C\exp(-c\gamma B r d^{-2\alpha})$ and $g(r) \leq C\exp(-c \gamma B r)$
    so that 
    \[
        \biggl|\mathscr{F}_{caps}(r)\biggr|
        \leq Cf(r)d^{-2\alpha+(1-2\beta)_+} + Cg(r).
    \]
    Furthermore, for any $M > 1$ we can choose $u_0 = Td^{-2\alpha}$ and $u_1=1/T$ with $T$ sufficiently large that $\mathscr{F}_{caps}(r)$ satisfies for $\gamma B r \leq M$ and $\gamma B r \geq M d^{2\alpha}$ and some other $C > 0$
    \[
        \mathscr{F}_{caps}(r)
        \geq 
        f(Cr)d^{-2\alpha+(1-2\beta)_+}/C + g(r)/C.
    \]
\end{proposition}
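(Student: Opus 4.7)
\textbf{Proof proposal for Proposition \ref{prop:Fcaps}.}
The plan is to decompose $\mathscr{F}_{caps} = \mathscr{F}_L + \mathscr{F}_R$ according to the two pieces $\Gamma_L$ and $\Gamma_R$ of the contour, and to bound each separately using the two regimes of the deterministic equivalent $m$ already established: Proposition \ref{propE:largez} for $\Gamma_R$ and Proposition \ref{prop:near0} for $\Gamma_L$. First, on $\Gamma_R$ the contour stays at distance $\geq \epsilon > 0$ from the spectrum $[0,1]$, so Proposition \ref{propE:largez} yields $|m(z)-1| \lesssim 1/\min\{d,d^{2\alpha}\}$, and consequently the weighted resolvent $\langle \mathscr{R}(z),(D^{1/2}b)^{\otimes 2}\rangle = \sum_j j^{-2\alpha-2\beta}/(j^{-2\alpha}m(z)-z)$ is uniformly $O(1)$ in $d$. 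Moreover $\Re z \geq u_1 - \epsilon$ on $\Gamma_R$, and by the convergence hypothesis $\gamma(B+1) < 2$ the damping factor satisfies $|1-2\gamma B z + \gamma^2 B(B+1) z^2| \leq 1 - c\gamma B$ uniformly along $\Gamma_R$, so the $r$-th power is at most $e^{-c\gamma B r}$. Since $\Gamma_R$ has bounded length, this gives $|\mathscr{F}_R(r)| \leq C g(r)$ with $g(r) = C e^{-c\gamma B r}$.

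Second, on $\Gamma_L$ we change variables $z = w d^{-2\alpha}$ so that $w$ traces a $d$-independent contour $\widetilde\Gamma_L$. By Proposition \ref{prop:near0} the rescaled $m$ converges to the analytic function $f(w)$ with $O(1/d)$ error, and the weighted resolvent sum is approximated by $d^{1-2\beta}\int_0^{v/d} x^{-2\beta}/(f(w)-w x^{2\alpha})\,dx$ when $2\beta<1$, which is $O(d^{1-2\beta})$, or by $c_\beta/f(w)$ when $2\beta>1$, which is $O(1)$; in both cases the size is $O(d^{(1-2\beta)_+})$. Taking modulus inside the integral and using $dz = d^{-2\alpha}\,dw$ absorbs a factor $d^{-2\alpha}$. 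The damping factor satisfies $|1 - 2\gamma B w d^{-2\alpha} + O(d^{-4\alpha})|^r \leq \exp(-2\gamma B r d^{-2\alpha} \Re w (1+o(1)))$, so picking $\widetilde\Gamma_L$ to have its rightmost point at fixed positive $\Re w = T$ gives the upper bound $|\mathscr{F}_L(r)| \leq C d^{-2\alpha+(1-2\beta)_+} f(r)$ with $f(r) = C e^{-c\gamma B r d^{-2\alpha}}$ for an appropriate $c > 0$.

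For the lower bound, consider separately the two regimes. When $\gamma B r \leq M$ the damping factor on $\Gamma_L$ is bounded above by $1$ and below by $e^{-2M} \gtrsim 1$, so the integral reduces essentially to the $r=0$ residue-like quantity, which by Proposition \ref{prop:near0} (taking real and imaginary parts, and choosing the rescaled contour $\widetilde\Gamma_L$ so that the limiting integrand $x^{-2\beta}/(f(w)-wx^{2\alpha})$ has a consistent sign, e.g., by passing through the region where $w$ is near the negative real axis where $f$ is real and negative) is bounded below by $c d^{-2\alpha+(1-2\beta)_+}$; meanwhile $g(r) \leq 1$ so $g(r)/C$ is absorbed into the same bound by enlarging $C$. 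When $\gamma B r \geq M d^{2\alpha}$, both $f(r)$ and $g(r)$ are exponentially small in $M$, and the claimed lower bound becomes vacuous up to a constant since both sides are $O(e^{-cM})$.

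The main obstacle will be the lower bound on $\mathscr{F}_L$ in the small-$r$ regime: we must design the rescaled contour $\widetilde\Gamma_L$ to be a steepest-descent-like path ensuring that the integrand $\langle \mathscr{R}(z),(D^{1/2}b)^{\otimes 2}\rangle \cdot (1-2\gamma B z + \cdots)^r$ does not experience destructive cancellation around its closed loop. The analysis of Proposition \ref{prop:near0} gives the limiting integrand as an explicit analytic function with a pole at $w=0$, so the argument reduces to a residue/contour-deformation calculation in the $d$-independent limit, which should produce the required strictly positive constant $1/C$. The other steps (large-$z$ bound on $\Gamma_R$ and the upper bound on $\Gamma_L$) are direct applications of the already-established resolvent estimates combined with elementary bounds on the $r$-th power of the damping polynomial.
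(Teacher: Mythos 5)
Your decomposition into $\Gamma_L$ and $\Gamma_R$ and your two upper bounds are essentially the paper's argument: Proposition~\ref{propE:largez} controls $m$ on $\Gamma_R$ (giving an $e^{-c\gamma B r}$ term), and Proposition~\ref{prop:near0} after the rescaling $z=wd^{-2\alpha}$ gives the $d^{-2\alpha+(1-2\beta)_+}e^{-c\gamma B r d^{-2\alpha}}$ term on $\Gamma_L$, with the same bookkeeping of the prefactors $d^{1-2\beta}$ vs.\ $c_\beta$. The one structural difference on the upper bound is that the paper does not merely bound the integrand on $\Gamma_R$: it closes $\Gamma_R$ with a vertical segment $\Gamma_V$ and applies the residue theorem, converting the cap into the explicit positive sum $\sum_{j\le j_0} j^{-2\alpha-2\beta}(1-2\gamma Bj^{-2\alpha}+\gamma^2B(B+1)j^{-4\alpha})^r$ with $j_0\asymp u_1^{-1/(2\alpha)}$, plus controllable errors. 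That extra step is not cosmetic; it is exactly what makes the lower bound provable.

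The genuine gap is in your lower bound, and it is twofold. First, for $\gamma B r \le M$ the binding term on the right-hand side is $g(r)/C$, which is of constant order, while $f(Cr)d^{-2\alpha+(1-2\beta)_+}$ tends to $0$ with $d$ (since $\alpha+\beta>\tfrac12$); so you cannot ``absorb $g(r)/C$ by enlarging $C$'' into a bound of size $d^{-2\alpha+(1-2\beta)_+}$. In this regime you must lower-bound the $\Gamma_R$ contribution itself, which your argument never does (you only bound it above); in the paper this comes for free from the residue sum above, whose $j=1$ term alone is $\gtrsim e^{-cM}$. Second, for $\gamma B r \ge Md^{2\alpha}$ the claim is not vacuous: there $g(r)\le e^{-c\gamma Br}$ is super-exponentially small in $d$ but $f(Cr)d^{-2\alpha+(1-2\beta)_+}$ is not, so you must show the $\Gamma_L$ contribution is genuinely positive and decays no faster than $e^{-c'\gamma Brd^{-2\alpha}}$ times $d^{-2\alpha+(1-2\beta)_+}$. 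The paper gets positivity and the matching rate by deforming the rescaled $\Gamma_L$ contour onto the real axis, where the integral becomes an integral of the (nonnegative) spectral density $\mathfrak{f}$ of the limiting measure against $\exp(-2\gamma B r u d^{-2\alpha})$, with the leftover vertical segment decaying strictly faster. You flag this contour-deformation/positivity step as ``the main obstacle'' but do not carry it out, and your assignment of which term dominates in which regime is reversed, so as written the lower-bound half of the proposition is not established.
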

\noindent Hence this will appear as essentially constant on the loss curves.  When combined with $\mathscr{F}_0$ we have that $\mathscr{F}_0(r)+\mathscr{F}_{caps}(r)$ is bounded above and below by constants times $d^{-2\alpha+(2\beta-1)_+ - 1}$.
\begin{proof}
    Both the contributions from $\Gamma_L$ and $\Gamma_R$ give exponentially decaying errors, albeit at much different scales and lead to the $f$ and $g$ terms respectively.  For the $\Gamma_R$ terms, we simply bound, using Proposition \ref{propE:largez},
    \[
      |m(z)-1| \lesssim  d^{-\min\{2\alpha,1\}}.
    \]
    On the $\Gamma_R$ contour, having picked the contour sufficiently close to $[0,1]$ (independent of $v,d$), we have for some $\delta>0$
    \[
        \bigl|( 1- 2 \gamma B z + \gamma^2 B(B+1) z^2 )^r\bigr| \leq e^{-2\gamma B \Re z r(1-\delta)}.
    \]
    Then we have
    \[
        \begin{aligned}
        &\left|\frac{-1}{2 \pi i} \oint_{\Gamma_R} 
        \left(
            \ip{ \mathscr{R}(z), (D^{1/2} b)^{\otimes 2}}             
            -\sum_{j=1}^v \frac{j^{-2\alpha-2\beta}}{j^{-2\alpha} - z}
        \right)
        ( 1- 2 \gamma B z + \gamma^2 B(B+1) z^2 )^r \, \dif z  \right| \\
        &\lesssim 
        d^{-\min\{2\alpha,1\}}
        \times
        \left|\oint_{\Gamma_R} 
        e^{-2\gamma B \Re z r(1-\delta)} \, |\dif z|  \right|.
        \end{aligned}
    \]
    Hence this decays exponentially. 
    
    By construction of the $\Gamma_R$ contour, we can close the contour with an additional (nearly vertical) segment $\Gamma_V$ with real part $u$ and height $\epsilon u$.  Moreover this can be chosen to evenly divide two poles $\{j^{-2\alpha}\}$, by adding small horizontal segments.  Then we can estimate on $\Gamma_V$ (essentially by Proposition \ref{prop:mastersummation}, with an extension for very small imaginary part when we split two poles)
    \[
        \left|\sum_{j=1}^v \frac{j^{-2\alpha-2\beta}}{j^{-2\alpha} - z}
        \right|
        \lesssim u_1^{\beta/\alpha -1/(2\alpha)}.
    \]
    Then integrating over $\Gamma_V$, we get 
    \[
        \left|\frac{-1}{2 \pi i} \oint_{\Gamma_V} 
        \left(
            \sum_{j=1}^v \frac{j^{-2\alpha-2\beta}}{j^{-2\alpha} - z}
        \right)
        ( 1- 2 \gamma B z + \gamma^2 B(B+1) z^2 )^r \, \dif z  \right|
        \lesssim \epsilon u_1^{1+\beta/\alpha -1/(2\alpha)}
        e^{-2\gamma B u_1 r(1-\delta)}.
    \]
    Having enclosed the poles, we can apply the residue formula, and we have
    \begin{align*}
        \frac{-1}{2 \pi i} \oint_{\Gamma_V+\Gamma_R}  
        &
        \left(
            \sum_{j=1}^v \frac{j^{-2\alpha-2\beta}}{j^{-2\alpha} - z}
        \right)
        ( 1- 2 \gamma B z + \gamma^2 B(B+1) z^2 )^r \, \dif z \\
        &
        =
        \sum_{j=1}^{j_0} 
        j^{-2\alpha-2\beta}
        ( 1- 2 \gamma B j^{-2\alpha} + \gamma^2 B(B+1) j^{-4\alpha} )^r,
    \end{align*}
    for some $j_0$ with $j_0 \asymp u_1^{-1/(2\alpha)}$.  Hence both contributions of $\Gamma_V$ and $\Gamma_R$ decay like $g(r)$ for an appropriate choice of $\delta,C$.

    For $\Gamma_L$, we use similar arguments.
    We use Proposition \ref{prop:near0} to replace the summation
    by a $d$-independent quantity, which also requires rescaling the contour
    by $d^{-2\alpha}$.
    Then we have 
    \[
        \begin{aligned}
        &\left|\frac{-1}{2 \pi i} \oint_{\Gamma_L} \! \!
        \left(
            \ip{ \mathscr{R}(z), (D^{1/2} b)^{\otimes 2}}             
            \!- \!d^{1-2\beta} \! \!
            \int_0^{a} \!\!\!\!
            \frac{x^{-2\beta}\dif x}{f(zd^{2\alpha})- zd^{2\alpha}x^{2\alpha}}
        \right)
        ( 1- 2 \gamma B z + \gamma^2 B(B+1) z^2 )^r \, \dif z  \right| \\
        &\lesssim 
        d^{-2\alpha}
        \left|\oint_{\Gamma_Ld^{2\alpha}} 
        e^{-2\gamma B cd^{-2\alpha} r} \, |dz| \right|.
        \end{aligned}
    \]
    Hence we are left with a dominant contribution of 
    \[
        \begin{aligned}
            d^{1-2\beta-2\alpha}
            \frac{-1}{2 \pi i} \oint_{\Gamma_Ld^{2\alpha}} 
            \left(     
                \int_0^{a}
                \frac{x^{-2\beta}\dif x}{f(z)- z x^{2\alpha}}
            \right)
            \exp(- 2 \gamma B r zd^{-2\alpha })
            \dif z 
        \end{aligned}
    \]
    In the case that $2\beta > 1$ we instead are left with 
    \[
        \begin{aligned}
            c_\beta d^{-2\alpha}
            \frac{-1}{2 \pi i} \oint_{\Gamma_Ld^{2\alpha}} 
            \left(     
                \frac{1}{f(z)}
            \right)
            \exp(- 2 \gamma B r zd^{-2\alpha })
            \dif z.
        \end{aligned}
    \]
    As the spectral support of $f$ has a left edge, these decay exponentially.
    In either case, we can then deform the contour to run twice along the real axis and then vertically to the ends of the $\Gamma_L$ contour.  The component along the vertical portion can be estimated by 
    \[
        O(d^{(1-2\beta)_+}(u_0^{1-1/(2\alpha)}/d))\exp(-(2-\delta)\gamma B r u_0)
    \]
    (and using the boundedness of $f,1/f$).  This can be made to decay faster than the contribution from $f$.

\end{proof}

Finally, the dominant contributions arise from the contour $\Gamma_C$.  
We define:
\begin{equation}\label{eq:FppFac}
    \begin{aligned}
    &\mathscr{F}_{pp}(r) \defas \frac{1}{2\alpha}\int_0^{1} u^{(2\beta-1)/(2\alpha)}\exp(-2\gamma B r u)\dif u, \\
    &\mathscr{F}_{ac}(r) \defas \frac{c_\beta}{2\alpha}\int_{d^{-2\alpha}}^{1} u^{-1/(2\alpha)}d^{-1} \exp(-2\gamma B r u)\dif u,
    \end{aligned}
\end{equation}
where $c_\beta$ is as in Proposition \ref{prop:meso}.
Then this gives us the principal contribution to the limit:
\begin{proposition}\label{prop:FC}
Set for $r \geq 0$
\[
    \mathscr{F}_{C}(r)
    \defas \frac{-1}{2 \pi i} \oint_{\Gamma_C} \ip{ \mathscr{R}(z), (D^{1/2} b)^{\otimes 2}} ( 1- 2 \gamma B z + \gamma^2 B(B+1) z^2 )^r \, \dif z.
\]
Suppose that $\beta < 1 + 2\alpha$, that $\alpha,\beta$ are not $\tfrac12$, that $\alpha+\beta > \tfrac12$, and that $2\alpha + \beta \neq \tfrac 12$.
Then $\mathscr{F}_C(r)$ is real-valued and satisfies for some constant $C$ independent of $u_1,u_0,\alpha,\beta$
\[
    |\mathscr{F}_{C}(r)|
    \leq 
    C(\mathscr{F}_{pp}(r)+\mathscr{F}_{ac}(r)).
\]
Moreover, there is an $M=M(u_0,u_1) > 0$ and a positive bounded function $C(r)$ so that if
$\gamma B r \in [M,d^{2\alpha}/M]$
then
\[
    \frac{1}{C(r)}(\mathscr{F}_{pp}(r)+\mathscr{F}_{ac}(r))
    \leq
    \mathscr{F}_{C}(r)
    \leq 
    C(r)(\mathscr{F}_{pp}(r)+\mathscr{F}_{ac}(r))
\]
Furthermore, for any $\epsilon > 0$ there is a $M(\epsilon,u_0,u_1)$ large enough that $C(r) \leq 1+\epsilon$ for $\gamma B r\in [M,d^{2\alpha}/M]$.
\end{proposition}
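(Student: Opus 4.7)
The plan is to exploit Schwarz reflection symmetry of the integrand and the contour to reduce the computation of $\mathscr{F}_C(r)$ to a single real integral of the imaginary part of the resolvent inner product along the upper branch of $\Gamma_C$, then substitute the asymptotic expansions from Propositions~\ref{prop:meso} and \ref{prop:mastersummation} and compare to $\mathscr{F}_{pp}$ and $\mathscr{F}_{ac}$.

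First I would parameterize the upper half of $\Gamma_C$ by $z(u) = u + i\eta(u)$ for $u\in[u_0,u_1]$ with $\eta(u)$ as in Proposition~\ref{prop:meso}, and use that both $m$ and the function $z\mapsto \ip{\mathscr{R}(z),(D^{1/2}b)^{\otimes 2}}$ satisfy $f(\bar z)=\overline{f(z)}$. Together with the fact that $(1-2\gamma B z + \gamma^2 B(B+1)z^2)^r$ is real-on-real, this converts $\oint_{\Gamma_C}$ into $\tfrac{1}{\pi}\int_{u_0}^{u_1} \Im\bigl[\ip{\mathscr{R}(z(u)),(D^{1/2}b)^{\otimes 2}}(1-2\gamma B z(u)+\gamma^2B(B+1)z(u)^2)^r(1+i\eta'(u))\bigr]\,\dif u$. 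Next, Proposition~\ref{prop:meso} supplies $a=\Re m(z(u))\approx 1$ and $b=-\Im m(z(u))\approx \tfrac{\pi}{2\alpha d}u^{-1/(2\alpha)}$ to leading order on $\Gamma_C$, which are exactly the inputs needed by Proposition~\ref{prop:mastersummation}; one must check that the hypotheses of that proposition hold on the whole contour for $\epsilon$ small and $d$ large, which is the reason for the specific choice of $\eta$. Substituting the expansion for $B$ from Proposition~\ref{prop:mastersummation} identifies the two leading-order pieces: $\frac{\pi u^{\beta/\alpha -1/(2\alpha)}}{2\alpha}$ and $c_\beta b/a^2 \approx \frac{\pi c_\beta}{2\alpha d}u^{-1/(2\alpha)}$, while the assumption $\alpha+\beta>\tfrac12$ kills the $o_{\alpha+\beta}$ term.

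To finish the upper bound, I would estimate the real part of $(1-2\gamma Bz+\gamma^2B(B+1)z^2)^r$ by $\exp(-2\gamma B u r)\cdot(1+O(\gamma B\eta(u)r\log u))$, using that $\eta(u)$ is chosen so $\gamma B r\eta(u)$ is controlled on the regime $\gamma B r\le d^{2\alpha}/M$. Combining with the expansion of $B$, the contour integral reproduces $\mathscr{F}_{pp}(r)+\mathscr{F}_{ac}(r)$ up to errors of the form $\epsilon(\mathscr{F}_{pp}+\mathscr{F}_{ac})$, and the contributions from the real part $A$ multiplied by $\Im$-factors (which come via $\eta'(u)$) are controlled similarly; the assumption $\beta<1+2\alpha$ enters when bounding the error term of Proposition~\ref{prop:mastersummation} against $\mathscr{F}_{pp}$, and $2\alpha+\beta\ne\tfrac12$ ensures no log corrections appear. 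The lower bound follows from the same expansion: both leading integrands are non-negative and, after restricting the range of integration in the definitions of $\mathscr{F}_{pp},\mathscr{F}_{ac}$ to the window $[u_0,u_1]$, one recovers the full $\mathscr{F}_{pp}+\mathscr{F}_{ac}$ up to a factor $\to 1$ as $u_0 \gamma B r\to 0$ and $u_1 \gamma B r\to \infty$, which is exactly the regime $\gamma B r\in[M,d^{2\alpha}/M]$ for $M$ large.

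The main obstacle will be simultaneously controlling the real-part contribution $A$ (which is large but oscillates with $\eta'(u)$) and the tail of the $B$-expansion with enough precision to deliver the sharp constant $C(r)\le 1+\epsilon$; this requires showing that the $A\cdot\eta'$ contribution is lower-order compared to $B$ in every sub-phase, which in turn uses the comparison $|A-\mathscr{A}|\lesssim|1-a|(\cdots)$ from Proposition~\ref{prop:mastersummation} together with $|1-a|=O(d^{-1})$ from Proposition~\ref{prop:meso}. A secondary nuisance is the boundary terms at $u=u_0$ and $u=u_1$ coming from deforming the contour; those are absorbed into $\mathscr{F}_{caps}$ (Proposition~\ref{prop:Fcaps}) and are exponentially small on the claimed regime.
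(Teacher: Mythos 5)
Your plan is essentially the paper's own argument: the paper's proof (itself only a sketch) likewise uses Proposition~\ref{prop:meso} to approximate $m$ on $\Gamma_C$, feeds the resulting $a\approx 1$, $b\approx \tfrac{\pi}{2\alpha d}u^{-1/(2\alpha)}$ into Proposition~\ref{prop:mastersummation}, identifies the two imaginary leading terms with $\mathscr{F}_{pp}$ and $\mathscr{F}_{ac}$ (using $\alpha+\beta>\tfrac12$ and $\beta<1+2\alpha$ exactly as you do), and dismisses the real part $\mathcal{A}$ because the contour's imaginary part is smaller than its real part by a factor of $\epsilon$. Your additional bookkeeping (reflection symmetry, the $\eta'(u)$ factor, the exponential replacement of the polynomial multiplier, and matching the window $[u_0,u_1]$ against the regime $\gamma B r\in[M,d^{2\alpha}/M]$) is consistent with, and somewhat more explicit than, what the paper records.
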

\begin{proof}
    These follow in a similar way to the earlier Propositions, and so we do not enter the details.  Instead, we give a brief overview, using the estimates given in Proposition \ref{prop:meso} and Proposition \ref{prop:mastersummation}.
    
    Along $\mathscr{F}_C$, we can approximate $m$ uniformly by
    \[
        \left|
        m(z(u)) - 
        \left(
        1 - (c(u) + i)\frac{\pi}{2\alpha} u^{-1/(2\alpha)}d^{-1}
        \right) 
        \right|
        \leq \epsilon u^{-1/(2\alpha)}d^{-1},
    \]
    where $c$ is real-valued and bounded and $M=M(\epsilon).$
    Hence using Proposition \ref{prop:mastersummation},
    and the hypothesis that $\beta < 1 + 2\alpha$,
    \begin{align*}
     \sum_{j=1}^v \frac{j^{-2\alpha-2\beta}}{-u-i\eta +j^{-2\alpha}m(z(u))}
      & = (1+O(\epsilon))\mathcal{A}(u)
      + i(1+O(\epsilon))\frac{\pi u^{\beta/\alpha-1/(2\alpha)}}{2\alpha}
      \\
      & \qquad 
      +i (1+O(\epsilon))c_\beta \frac{\pi}{2\alpha} u^{-1/(2\alpha)}d^{-1}
    \end{align*}
    for real valued $\mathcal{A}$ and $c_{\beta} = \sum_{j=1}^{\infty} j^{-2\beta}$ if $\beta > 1/2$ or $0$ otherwise as in Proposition~\ref{prop:mastersummation}.  The condition $\beta < 1 + 2\alpha$ ensures that the $u\log(1/u)$ term is negligible.
    Integrating each of these imaginary terms over $\Gamma_C$ produces $\mathscr{F}_{pp}$ and $\mathscr{F}_{ac}$ respectively. The real part is negligible, as the contour is close to the real axis (in particular the imaginary part of the contour is smaller than the real part by a factor of $\epsilon$).
\end{proof}

Combining all of these propositions, we have the following conclusion
\begin{corollary}\label{cor:F}
    For any $\alpha,\beta$ with $\alpha, \beta \neq 1/2$ and $\alpha+\beta > \tfrac 12$ there is a function $C(r)$ bounded above for all $r$ so that 
    \[
    \tfrac{1}{C(r)}(\mathscr{F}_{pp}(r) + \mathscr{F}_{ac}(r) + \mathscr{F}_{0}(r))
    \leq
    \mathscr{F}(r) \leq C(r)(\mathscr{F}_{pp}(r) + \mathscr{F}_{ac}(r) + \mathscr{F}_{0}(r)),
    \]
    Moreover, for any $\epsilon > 0$ there is a $M(\epsilon)$ large enough that $C(r) \leq 1+\epsilon$ for $\gamma B r\in [M,d^{2\alpha}/M]$
    and for $\gamma Br > Md^{2\alpha}$.
\end{corollary}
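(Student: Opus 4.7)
The plan is to combine the three contour-piece estimates just established. By construction the contour $\Gamma + \Gamma_0$ decomposes as $\Gamma_0 \cup \Gamma_L \cup \Gamma_C \cup \Gamma_R$, and since the integrand is holomorphic away from the spectrum of $\hat{K}$ and from $0$, this decomposition yields the exact identity
\begin{equation*}
\mathscr{F}(r) \;=\; \mathscr{F}_{0}(r) + \mathscr{F}_{caps}(r) + \mathscr{F}_{C}(r),
\end{equation*}
with $\mathscr{F}_{caps}$ supplied by Proposition~\ref{prop:Fcaps} and $\mathscr{F}_C$ by Proposition~\ref{prop:FC}. All three summands are real-valued by the contours' reflection symmetry about the real axis, and $\mathscr{F}_0, \mathscr{F}_C \geq 0$ in the relevant sense (the signs will follow from the representations in the cited propositions).

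For the upper bound, I would plug in directly: Proposition~\ref{prop:FC} gives $|\mathscr{F}_C(r)| \leq C(\mathscr{F}_{pp}(r)+\mathscr{F}_{ac}(r))$; Proposition~\ref{prop:Fcaps} gives $|\mathscr{F}_{caps}(r)| \leq C f(r) d^{-2\alpha + (1-2\beta)_+} + Cg(r)$, where $f,g$ decay exponentially on the two natural time scales. Since Proposition~\ref{prop:F0} shows $\mathscr{F}_0(r)$ is a positive constant of order $d^{-2\alpha + (1-2\beta)_+}$ (reading off the dominant term of the sum in its statement, using $\alpha+\beta > \tfrac12$), the $f(r)$ piece of $\mathscr{F}_{caps}$ is absorbed into $\mathscr{F}_0$ up to a constant, and the $g(r)$ piece is dominated by $\mathscr{F}_{pp}(r)$ for $\gamma B r \lesssim 1$ (where $\mathscr{F}_{pp}$ is of constant order) and is exponentially smaller than both $\mathscr{F}_{pp}$ and $\mathscr{F}_{ac}$ elsewhere. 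This establishes the upper bound with some function $C(r)$ bounded above uniformly in $r$.

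For the lower bound I would split into three regimes. In the mesoscopic window $\gamma B r \in [M, d^{2\alpha}/M]$, Proposition~\ref{prop:FC} already gives $\mathscr{F}_C(r) \geq (1-\epsilon)(\mathscr{F}_{pp}(r)+\mathscr{F}_{ac}(r))$, while $\mathscr{F}_0(r)$ is nonnegative and $|\mathscr{F}_{caps}(r)|$ is negligible relative to $\mathscr{F}_{pp} + \mathscr{F}_{ac} + \mathscr{F}_0$ there, since the $f(r)$ factor decays like $\exp(-c\gamma B r d^{-2\alpha})$ (much smaller than $\mathscr{F}_0$ on this range once $M$ is fixed) and $g(r)$ decays like $\exp(-c\gamma B r)$ (much smaller than $\mathscr{F}_{pp}$). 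In the small-$r$ regime $\gamma B r \leq M$ the lower half of Proposition~\ref{prop:Fcaps} gives $\mathscr{F}_{caps}(r) \gtrsim g(r) \gtrsim 1$, while $\mathscr{F}_{pp}(r) + \mathscr{F}_{ac}(r) + \mathscr{F}_0(r) = O(1)$, so a constant lower bound suffices. In the large-$r$ regime $\gamma B r > M d^{2\alpha}$ the same Proposition gives $\mathscr{F}_{caps}(r) \gtrsim f(Cr) d^{-2\alpha + (1-2\beta)_+}$, and combined with $\mathscr{F}_0$ this dominates $\mathscr{F}_{pp} + \mathscr{F}_{ac}$ (both of which are exponentially small past $\gamma B r = d^{2\alpha}$). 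The $(1+\epsilon)$ tightness in the mesoscopic window follows directly from the $\epsilon$-tightness in Proposition~\ref{prop:FC}, and in the large-$r$ window it follows from the fact that $\mathscr{F}_0$ dominates and the other pieces decay superpolynomially faster.

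The main obstacle, and the place where care is required, is the transition at the boundary $\gamma B r \asymp d^{2\alpha}$: here $\mathscr{F}_{ac}$ and $\mathscr{F}_0$ are of comparable order, and the $f(r)$ piece of $\mathscr{F}_{caps}$ is only subexponentially smaller than $\mathscr{F}_0$. One has to verify that the constants in Proposition~\ref{prop:Fcaps} match with those in Proposition~\ref{prop:F0} so that $\mathscr{F}_0 + \mathscr{F}_{caps}$ remains nonnegative and of order $d^{-2\alpha + (1-2\beta)_+}$ uniformly across this transition; this is essentially a bookkeeping check on the residue computation at $z=0$ against the $\Gamma_L$-contour integral, using that $\mathscr{F}_{caps}$ is written in the cited proposition as an absolute value bound, so the combined quantity $\mathscr{F}_0 + \mathscr{F}_{caps}$ inherits a two-sided bound by $\mathscr{F}_0$ up to universal constants.
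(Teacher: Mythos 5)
Your decomposition $\mathscr{F}=\mathscr{F}_0+\mathscr{F}_{caps}+\mathscr{F}_C$ and the use of Propositions~\ref{prop:F0}, \ref{prop:Fcaps} and \ref{prop:FC} is exactly the paper's route, and your upper bound and the tight $(1+\epsilon)$ control in the window $\gamma B r\in[M,d^{2\alpha}/M]$ are fine. The genuine gap is in the lower bound on the two ``gap windows'' where the approximations do not overlap: $\gamma B r$ between a fixed constant and $M$, and $\gamma B r$ of order $d^{2\alpha}$ (between $d^{2\alpha}/M$ and $Md^{2\alpha}$). There Proposition~\ref{prop:FC} controls $\mathscr{F}_C$ only in absolute value, by $C(\mathscr{F}_{pp}+\mathscr{F}_{ac})$ with an unspecified constant, so $\mathscr{F}_C$ may a priori be negative and of the same order as the quantity you are trying to bound from below; nothing in your argument rules out cancellation against $\mathscr{F}_0+\mathscr{F}_{caps}$. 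Your substitute, ``$\mathscr{F}_{caps}(r)\gtrsim g(r)\gtrsim 1$ for $\gamma Br\leq M$,'' is not supported by Proposition~\ref{prop:Fcaps}: the proposition only states \emph{upper} bounds $f(r)\leq Ce^{-c\gamma Brd^{-2\alpha}}$, $g(r)\leq Ce^{-c\gamma Br}$, and its lower bound $\mathscr{F}_{caps}\geq f(Cr)d^{-2\alpha+(1-2\beta)_+}/C+g(r)/C$ gives nothing quantitative without a lower bound on $f,g$, and in any case holds only for $\gamma Br\leq M$ or $\gamma Br\geq Md^{2\alpha}$, not in the interior gaps. The same problem recurs at $\gamma Br\asymp d^{2\alpha}$, where ``bookkeeping of constants'' between $\mathscr{F}_0$ and $\mathscr{F}_{caps}$ cannot neutralize a possibly negative $\mathscr{F}_C$ of comparable size.

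The paper closes these windows with one extra ingredient you did not use: monotonicity of $r\mapsto\mathscr{F}(r)$. Since $\mathscr{F}$ is nonincreasing, for $\gamma Br$ in a gap window one sandwiches $\mathscr{F}(r)$ between its values at the two ends of the window, where the two-sided estimates do hold, and notes that $\mathscr{F}_{pp}+\mathscr{F}_{ac}+\mathscr{F}_0$ itself only changes by a bounded factor across each such window; this yields the lower (and upper) bound there with a constant $C(r)$ depending on $M$ but not on $d$. Adding this monotonicity step, and dropping the unjustified claims that $\mathscr{F}_C\geq 0$ and $g\gtrsim 1$, would make your argument match the intended proof.
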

\begin{proof}
    This follows directly from Proposition \ref{prop:FC}, \ref{prop:Fcaps} and \ref{prop:F0}, and needs that the $\mathscr{F}$ curve is monotone to fill the gaps on which the approximations are made. ( There are potentially two windows on which the various approximations do not overlap: when $\gamma B r$ is a large constant and when it is on order $d^{2\alpha}$)
\end{proof}

\section{Estimation of Kernel function} \label{sec:kernel_function}

We can now give the approximation of the kernel function, which is represented by
\begin{gather*}
    \mathscr{K}(r) \defas \frac{-\gamma^2 B}{2 \pi i} \oint_{\Gamma + \Gamma_0} 
    z^2\tr(\mathscr{R}(z))
    ( 1- 2 \gamma B z + \gamma^2 B(B+1) z^2 )^r \, \dif z,
\end{gather*}
with the same contours that were used for the forcing function.

Using Lemma \ref{lem:mresolvent}, we therefore can represent the kernel function as
\begin{gather*}
    \mathscr{K}(r) \defas \frac{-\gamma^2 B}{2 \pi i} \oint_{\Gamma+ \Gamma_0} 
    z(-v+(1-m(z))d)
    ( 1- 2 \gamma B z + \gamma^2 B(B+1) z^2 )^r \, \dif z.
\end{gather*}
By the residue theorem, the contribution from $\Gamma_0$ disappears, as does the $Vz$ term.  Hence we are left with the representations
\[
    \mathscr{K}(r) \defas \frac{-\gamma^2 B d}{2 \pi i} \oint_{\Gamma} 
    z(1-m(z))
    ( 1- 2 \gamma B z + \gamma^2 B(B+1) z^2 )^r \, \dif z.
\]
When $\alpha > \tfrac 14$, the dominant contribution comes once more from the contour $\Gamma_C$ for which we get
\begin{equation} \label{eq:K_pp_def}
\mathscr{K}_{pp}(r) \defas \frac{\gamma^2B}{2\alpha} \int_{0}^1 u^{1-1/(2\alpha)} \exp( -2\gamma B u r) \dif u.  
\end{equation}

It now follows swiftly from the estimates on $m$:
\begin{proposition}\label{prop:K}
    Suppose $\alpha > \tfrac 14$.
    There is a positive function $C(r)$ so that 
    \[
        \tfrac{1}{C(r)} \mathscr{K}_{pp}(r) 
        \leq 
        \mathscr{K}(r) 
        \leq C(r) \mathscr{K}_{pp}(r),
    \] 
    and $C(r)$ is bounded independent of $d$ by a function of $M$ for all $r\gamma B < d^{2\alpha}M$.  Moreover for any $\epsilon > 0$ there is an $M$ sufficiently large so that for $r\gamma B \in [M,d^{2\alpha}/M]$, $C(r) < 1+\epsilon$.
\end{proposition}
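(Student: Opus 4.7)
The plan is to mirror the contour analysis that produced Corollary~\ref{cor:F} for the forcing function, but applied to the simpler representation
\[
\mathscr{K}(r) = \frac{-\gamma^2 B d}{2\pi i}\oint_{\Gamma} z(1-m(z))\bigl(1-2\gamma B z+\gamma^2 B(B+1)z^2\bigr)^r\,dz
\]
obtained from Lemma~\ref{lem:mresolvent} (the $\Gamma_0$ and $v z$ residues vanish). Split $\Gamma=\Gamma_C\cup\Gamma_L\cup\Gamma_R$ and write $\mathscr{K}=\mathscr{K}_C+\mathscr{K}_L+\mathscr{K}_R$ accordingly. The goal is to show $\mathscr{K}_C$ produces $\mathscr{K}_{pp}$ up to a multiplicative factor tending to $1$, and that $\mathscr{K}_L,\mathscr{K}_R$ are negligible relative to $\mathscr{K}_{pp}$ in the stated range of $r$.

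On the main contour $\Gamma_C$, parameterized as $z(u)=u+i\eta(u)$ for $u\in[u_0,u_1]$, invoke Proposition~\ref{prop:meso} to write
\[
d(1-m(z(u))) = \mathcal{A}(u+i\eta) + i\frac{\pi}{2\alpha}u^{-1/(2\alpha)} + O(\epsilon\, u^{-1/(2\alpha)}),
\]
with $\mathcal{A}$ real-valued and bounded appropriately. The real part $\mathcal{A}$ contributes a purely real integrand; since $\Gamma_C$ is traversed up one side of $\R$ and back on the conjugate side, and since $\eta(u)/u=O(\epsilon)$, the real part cancels up to an $\epsilon$ fraction of the imaginary contribution. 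The imaginary part, integrated against $z\approx u$ and $(1-2\gamma B z+\gamma^2B(B+1)z^2)^r\approx\exp(-2\gamma B u r)$, yields precisely
\[
\mathscr{K}_C(r) = (1+O(\epsilon))\cdot\frac{\gamma^2 B}{2\alpha}\int_{u_0}^{u_1}u^{1-1/(2\alpha)}\exp(-2\gamma B u r)\,du = (1+O(\epsilon))\,\mathscr{K}_{pp}(r)
\]
on the range $M\le \gamma B r\le d^{2\alpha}/M$, after absorbing the tails $[0,u_0]\cup[u_1,1]$ into $\mathscr{K}_{pp}$. The hypothesis $\alpha>1/4$ is used here to guarantee $\int_0^1 u^{1-1/(2\alpha)}\,du<\infty$, so that the truncation to $[u_0,u_1]$ costs only a constant factor.

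For the cap contours, on $\Gamma_R$ use Proposition~\ref{propE:largez} to bound $|1-m(z)|\lesssim d^{-\min\{2\alpha,1\}}$ and $|z|\le C$, while $|(1-2\gamma B z+\gamma^2 B(B+1)z^2)^r|\le e^{-c\gamma B r}$ on that contour for some fixed $c>0$. This gives $|\mathscr{K}_R(r)|\lesssim \gamma^2 B d^{1-\min\{2\alpha,1\}} e^{-c\gamma B r}$, which is dominated by $\mathscr{K}_{pp}(r)\asymp \gamma^2 B(\gamma B r)^{-2+1/(2\alpha)}$ for $\gamma B r$ in the stated range (after, if needed, a slight deformation as in the proof of Proposition~\ref{prop:Fcaps} to close around the poles $\{j^{-2\alpha}\}$ near the right edge). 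On $\Gamma_L$, rescale by $d^{-2\alpha}$ and apply the $f$-approximation of Proposition~\ref{prop:near0}: $|z||1-m(z)|\lesssim d^{-2\alpha}$ on a contour of length $O(d^{-2\alpha})$, and the exponential factor contributes $\exp(-c\gamma B r d^{-2\alpha})$; this is again bounded by $\mathscr{K}_{pp}(r)$ up to a constant for $\gamma B r\le d^{2\alpha}/M$, once $M$ is large.

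Combining these three bounds and using monotonicity of $\mathscr{K}$ to fill any small window of $r$ not covered by the overlapping approximations (exactly as in Corollary~\ref{cor:F}) produces the two-sided inequality with a function $C(r)$ uniformly bounded in $d$ for $\gamma B r\le d^{2\alpha}M$, and $C(r)\to 1$ as $M\to\infty$ on the middle range. The main obstacle is the cancellation step on $\Gamma_C$: one must verify that the $\mathcal{A}$-term genuinely cancels under the up-down traversal to the precision needed (roughly $O(\epsilon u^{-1/(2\alpha)})$ pointwise), which requires pairing points $z(u)$ with $\overline{z(u)}$ and exploiting that $\mathcal{A}$ is the real part of a sum with Hilbert-transform-type symmetry; this is where the explicit contour parameterization $\eta(u)$ from Proposition~\ref{prop:meso} is crucial.
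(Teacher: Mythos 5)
Your overall route is the same as the paper's: the representation $\mathscr{K}(r)=\frac{-\gamma^2Bd}{2\pi i}\oint_\Gamma z(1-m(z))(1-2\gamma Bz+\gamma^2B(B+1)z^2)^r\,\dif z$ via Lemma~\ref{lem:mresolvent}, the split $\Gamma=\Gamma_L+\Gamma_C+\Gamma_R$, Proposition~\ref{prop:meso} on $\Gamma_C$ (with the real part suppressed because $\eta(u)\ll u$), Proposition~\ref{propE:largez} on $\Gamma_R$ and the rescaled $f$-approximation on $\Gamma_L$. The $\Gamma_C$ analysis is essentially the paper's argument and is fine.

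The genuine gap is in your treatment of the caps, chiefly $\Gamma_R$, which you discard by an absolute-value bound. First, for $\tfrac14<\alpha<\tfrac12$ your bound $|\mathscr{K}_R(r)|\lesssim\gamma^2 B\,d^{1-2\alpha}e^{-c\gamma Br}$ is \emph{not} dominated by $\mathscr{K}_{pp}(r)\asymp\gamma^2B(\gamma Br)^{-2+1/(2\alpha)}$ on the stated range: at $\gamma Br=M$ fixed, the left side carries the divergent factor $d^{1-2\alpha}$ (here $\sum_j\frac{j^{-2\alpha}}{j^{-2\alpha}-z}\asymp v^{1-2\alpha}$ on the cap, with no internal cancellation) while the exponential is only $e^{-cM}$, so neither the boundedness of $C(r)$ nor the $1+\epsilon$ claim follows from this estimate; the apparent largeness is removed only by the oscillation of the contour integral itself. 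Second, for small $\gamma Br$ (which the claim ``$C(r)$ bounded by a function of $M$ for all $\gamma Br<Md^{2\alpha}$'' includes), the right cap is not negligible at all: when $\gamma Br\lesssim 1$ the integral $\int_{u_1}^1 u^{1-1/(2\alpha)}e^{-2\gamma Bru}\dif u$ is the \emph{dominant} portion of $\mathscr{K}_{pp}(r)$, and it is exactly $\Gamma_R$ that must produce it, so it cannot be absorbed as an error. The repair is the step you mention only parenthetically and never execute: close $\Gamma_R$ with a vertical segment $\Gamma_V$, bound the $\Gamma_V$ piece (where the summand exhibits the cancellation, giving $O(u_1^{-1/(2\alpha)})$ rather than $O(d^{1-2\alpha})$), and evaluate the enclosed residues to get $\gamma^2B\sum_{j\le j_0}j^{-4\alpha}(1-2\gamma Bj^{-2\alpha}+\gamma^2B(B+1)j^{-4\alpha})^r$ — convergent precisely because $4\alpha>1$ — which a Riemann-sum comparison identifies with $\frac{\gamma^2B}{2\alpha}\int_{u_1}^1u^{1-1/(2\alpha)}e^{-2\gamma Bru}\dif u$ for $2\gamma Br<1/u_1$ and which is exponentially small only for larger $r$. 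A parallel refinement on $\Gamma_L$ (the paper shows $\mathscr{K}_L(r)\asymp\gamma^2B\int_{d^{-2\alpha}}^{u_0}u^{1-1/(2\alpha)}e^{-2\gamma Bru}\dif u$ using the spectral density of $f$) is what cleanly secures the two-sided bound when $\gamma Br$ approaches $Md^{2\alpha}$, although there your crude bound is less damaging since it is at worst off by an $M$-dependent constant.
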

\begin{proof}
    By reflection symmetry, the real part of contour integral for $\mathscr{K}(r)$ vanishes.  Also, for a given contour $\Gamma_A$, we will define 
    \[
    \mathscr{K}_A(r) =   
    \frac{-\gamma^2 B}{\pi } \Im \oint_{\Gamma_A} 
    z
        (1-m(z))d 
    ( 1- 2 \gamma B z + \gamma^2 B(B+1) z^2 )^r \, \dif z,  
    \]
    and we will estimate each piece of $\Gamma = \Gamma_R+\Gamma_C + \Gamma_L$ separately. 

We begin with the contributions from $\Gamma_R$.  Using Proposition \ref{propE:largez}, we have that on $\Gamma_R$
\[
    \begin{aligned}
    &\left|
    \frac{-\gamma^2 B}{2 \pi i} \oint_{\Gamma_R} 
    z
    \left( 
        (1-m(z))d 
        -\sum_{j=1}^V \frac{j^{-2\alpha}}{j^{-2\alpha}-z}
    \right)
    ( 1- 2 \gamma B z + \gamma^2 B(B+1) z^2 )^r \, \dif z
    \right| \\
    &\lesssim d^{-\min\{1,4\alpha-1\}} \gamma^2B \exp(-\delta \gamma B r).
    \end{aligned}
\]
Following the same steps as in the proof of Proposition \ref{prop:Fcaps}, we can extend the $\Gamma_R$ contour by a straight line $\Gamma_V$ to enclose some residues, which leads to 
\begin{align*}
    \frac{-\gamma^2 B}{2 \pi i} &\oint\limits_{\Gamma_R+\Gamma_V} 
    z
    \left( 
        \sum_{j=1}^v \frac{j^{-2\alpha}}{j^{-2\alpha}-z}
    \right)
    ( 1- 2 \gamma B z + \gamma^2 B(B+1) z^2 )^r \, \dif z
    \\
    &
    =
    \gamma^2 B
    \sum_{j=1}^{j_0} j^{-4\alpha} 
    ( 1- 2 \gamma B j^{-2\alpha} + \gamma^2 B(B+1) j^{-4\alpha} )^r,
\end{align*}
with $j_0 \asymp u_1^{-1/(2\alpha)}.$
Moreover, the contribution of the $\Gamma_V$ contour can be estimated (for some $\delta > 0$ which can be made small by increasing $u_1$) by
\[
    O( u_1^{2-1/(2\alpha)}\exp(-2\gamma B u_1 r(1-\delta)) ).    
\]
Meanwhile making a Riemann sum approximation (and changing variables by $j^{-2\alpha}=u$)
\[
    \sum_{j=1}^{j_0} j^{-4\alpha} 
    ( 1- 2 \gamma B j^{-2\alpha} + \gamma^2 B(B+1) j^{-4\alpha} )^r
    \asymp
    \frac{1}{2\alpha}
    \int_{u_1}^1
    u^{1-1/(2\alpha)}\exp(-2\gamma B r u) \dif u
\]
for $2\gamma B r < \frac{1}{u_1}.$  Hence by taking $u_1$ sufficiently small, we conclude that for $2\gamma B r < \frac{1}{u_1},$ 
\[
\mathscr{K}_R(r)
\asymp 
\frac{\gamma^2 B}{2\alpha}
\int_{u_1}^1
u^{1-1/(2\alpha)}\exp(-2\gamma B r u) \dif u,
\]
and also that $|\mathscr{K}_R(r)| \lesssim e^{-2\gamma B r u_1(1-\delta)}$ for larger $(2\gamma B r)$.

The contributions from $\Gamma_C$ give, in a similar way for 
$2\gamma B r > \frac{1}{u_1}$ and $2\gamma B r < \frac{1}{u_1}$
\[
    \mathscr{K}_C(r) \asymp \gamma^2 B\int_{u_0}^{u_1} 
    u^{1-1/(2\alpha)}\exp(-2\gamma B r u) \dif u
\]
Moreover for any $\epsilon > 0$ there is an $M > 0$ sufficiently large that when $(2\gamma B r)$ is in $[M, d^{2\alpha}/M]$,
\[
    1-\epsilon <
    \frac{\gamma^2 B}{2\alpha \mathscr{K}_C(r)}
    \int_{u_0}^{u_1} 
    u^{1-1/(2\alpha)}\exp(-2\gamma B r u) \dif u
    < 1+\epsilon,
\]
by first choosing $\epsilon>0$, then choosing the contour as in Proposition \ref{prop:meso} sufficiently far, and then possibly shrinking $[u_0,u_1]$.  For larger $r$, it further satisfies an estimate that for some $\delta>0$, which can be made smaller by increasing $u_0$, $\mathscr{K}_L(r) \lesssim e^{-2\gamma B r u_0(1-\delta)}$.

Finally, the contributions from $\Gamma_L$, we have after changing variables
\[
    \mathscr{K}_L(r) =   
    \frac{-\gamma^2 B}{ \pi } 
    d^{1-4\alpha}
    \Im \oint_{\Gamma_L d^{2\alpha}} 
    z
    (1-m(zd^{2\alpha}))
    ( 1- 2 \gamma B z d^{-2\alpha} + \gamma^2 B(B+1) z^2d^{-4\alpha} )^r \, \dif z.
\]
This can be compared to the same expression with $m(zd^{2\alpha}) \to f(z)$ and replacing $( 1- 2 \gamma B z d^{-2\alpha} + \gamma^2 B(B+1) z^2d^{-4\alpha} )^r \to \exp(-2 \gamma B r (\Re z) d^{-2\alpha})$.
This gives for $2\gamma B r \lesssim d^{2\alpha}$ 
\[
    \mathscr{K}_L(r) \asymp 
    \gamma^2 B 
    d^{1-4\alpha}
    \int_0^{u_0 d^{2\alpha}}
    u \mathfrak{f}(u)
    \exp(-2 \gamma B r u d^{-2\alpha})
    \dif u,
\] 
where $\mathfrak{f}(u) = \frac{-1}{\pi}\lim_{\epsilon \to 0} f(u+i\epsilon)$ is the spectral density corresponding to $f$.  Hence it follows that for $2\gamma B r \lesssim d^{2\alpha}$,
\[
    \mathscr{K}_L(r) \asymp 
    \gamma^2 B
    \int_{d^{-2\alpha}}^{u_0}
    u^{1-1/(2\alpha)} 
    \exp(-2 \gamma B r u)
    \dif u.
\]
\end{proof}

\begin{remark}
In contrast, when $\alpha<\tfrac 14$, the dominant contribution to $\mathscr{K}$ is from $\mathscr{K}_L$, so that
\[
\mathscr{K}(r)
\asymp
\gamma^2 B d^{1-4\alpha}
\int_0^\infty u \mathfrak{f}(u)
\exp(-2\gamma Br u d^{-2\alpha})
\dif u,
\]
and moreover the density $\mathfrak{f}(u) \lesssim u^{-1/(2\alpha)}$ so that the integral is convergent (which is implicit in Proposition \ref{prop:meso})
\end{remark}

We conclude with noting that for ther norm of $\mathscr{K}$ we can directly evaluate it using a contour integral.  Summing the contour integral expression
\[
\|\mathscr{K}\|=\sum_{r=0}^\infty \mathscr{K}(r)
=
\frac{-\gamma^2 B d}{2 \pi i} \oint_{\Gamma} 
\frac{z(1-m(z))}{2\gamma B z - \gamma^2 B(B+1) z^2} \, \dif z
=
\frac{-\gamma d}{4 \pi i} \oint_{\Gamma} 
\frac{(1-m(z))}{1 - \tfrac{1}{2}\gamma (B+1) z} \, \dif z.
\]
We additionally can more generally evaluate a partial norm 
\[
    \sum_{s=r}^\infty \mathscr{K}(s)
    =
    \frac{-\gamma d}{4 \pi i} \oint_{\Gamma} 
    \frac{(1-m(z))}{1 - \tfrac{1}{2}\gamma (B+1) z} 
    ( 1- 2 \gamma B z + \gamma^2 B(B+1) z^2 )^r\, \dif z.
\]
Combining this with Proposition \ref{propE:largez}, this leads directly to tight estimates for the kernel norm.
\begin{corollary}\label{cor:Knorm}
    When $2\alpha > 1$ and $\gamma (B+1) < 2$, 
    \[
        \|\mathscr{K}\| = \frac{\gamma}{2}\sum_{j=1}^\infty \frac{j^{-2\alpha}}{1-\tfrac{1}{2}j^{-2\alpha}\gamma (B+1)}(1+o(1))
    \]
    When $2\alpha < 1$ (and recalling that we take $\gamma$ on the order $d^{2\alpha-1}$ in this case),
    \[
        \|\mathscr{K}\| 
        = \frac{\gamma}{2}\frac{v^{1-2\alpha}}{1-2\alpha}(1+o(1)).
    \]
    Furthermore, for any $\epsilon > 0$ there is an $M > 0$ so that if $\gamma B r  >  M$ then 
    \[
        \frac{1}{\|\mathscr{K}\|}
        \sum_{s=r}^\infty \mathscr{K}(s)
        < \epsilon.
    \]
\end{corollary}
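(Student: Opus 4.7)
\textbf{Proof plan for Corollary \ref{cor:Knorm}.} The strategy is to start from the closed-form contour representation
\[
    \|\mathscr{K}\| = \frac{-\gamma d}{4\pi i}\oint_{\Gamma}\frac{(1-m(z))}{1-\tfrac{1}{2}\gamma(B+1)z}\dif z
\]
obtained just before the corollary statement, and evaluate it by residues after replacing $d(1-m(z))$ with the explicit sum $\sum_{j=1}^v\frac{j^{-2\alpha}}{j^{-2\alpha}-z}$. By Proposition~\ref{propE:largez}, on any contour of fixed positive distance $\delta$ from $[0,1]$ this replacement introduces a total error of order $1/\min\{d,d^{4\alpha-1}\}$, which is negligible in both regimes considered. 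The denominator $1-\tfrac12\gamma(B+1)z$ has its unique pole at $z_\star = 2/(\gamma(B+1)) > 1$ by the hypothesis $\gamma(B+1)<2$, hence outside $\Gamma$. Thus I may choose $\Gamma$ as, say, a circle of radius $\tfrac12(1+z_\star)$ and apply the residue theorem to pick up only the simple poles at $z=j^{-2\alpha}$, each contributing
\[
\frac{\gamma}{2}\cdot\frac{j^{-2\alpha}}{1-\tfrac12\gamma(B+1)j^{-2\alpha}}.
\]
Summing yields the leading expression in the first displayed claim; the $(1+o(1))$ absorbs both the $m$-approximation error and the truncation from $v$ to $\infty$ (which in the trace-class case $2\alpha>1$ contributes a geometric-type tail of size $v^{1-2\alpha}\to0$).

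For the below-the-high-dimensional-line case $2\alpha<1$, the same residue evaluation is valid (all $j^{-2\alpha}$ for $j\le v$ remain inside $\Gamma$ because eigenvalues are bounded by $1$), so
\[
\|\mathscr{K}\| = \frac{\gamma}{2}\sum_{j=1}^v\frac{j^{-2\alpha}}{1-\tfrac12\gamma(B+1)j^{-2\alpha}}(1+o(1)).
\]
Since $\gamma\asymp v^{2\alpha-1}\to0$ and $j^{-2\alpha}\le 1$, the denominator is $1+o(1)$ uniformly in $j$, so the sum simplifies to $\tfrac{\gamma}{2}\sum_{j=1}^v j^{-2\alpha}$; a standard Riemann-sum/integral comparison gives $\sum_{j=1}^v j^{-2\alpha} = \frac{v^{1-2\alpha}}{1-2\alpha}(1+o(1))$, yielding the second claim.

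For the partial-norm claim, I use the analogous identity
\[
    \sum_{s=r}^\infty \mathscr{K}(s) = \frac{-\gamma d}{4\pi i}\oint_{\Gamma}\frac{(1-m(z))\,(1-2\gamma Bz+\gamma^2B(B+1)z^2)^r}{1-\tfrac12\gamma(B+1)z}\dif z
\]
and carry out the same residue evaluation. The residue at $z=j^{-2\alpha}$ now carries the extra factor $(1-2\gamma Bj^{-2\alpha}+\gamma^2B(B+1)j^{-4\alpha})^r \in [0,1]$. Writing the ratio $\sum_{s\ge r}\mathscr{K}(s)/\|\mathscr{K}\|$ as a convex combination (up to $1+o(1)$ factors) of these exponential factors, indexed by $j$, I obtain smallness by splitting at a cutoff $j_0=j_0(\epsilon)$: for $j\le j_0$ the factor is bounded by $(1-c(\epsilon)\gamma B)^r\le e^{-c(\epsilon)\gamma B r}$, which is $<\epsilon/2$ once $\gamma Br > M(\epsilon)$; for $j>j_0$ the weight itself contributes $<\epsilon/2$ of the total, by choosing $j_0$ large enough using convergence of $\|\mathscr{K}\|$ in the $2\alpha>1$ case, or by a direct Riemann-sum estimate in the $2\alpha<1$ case where the weight $\gamma j^{-2\alpha}$ sums like $\gamma v^{1-2\alpha}$ and the tail from $j_0$ to $v$ is a controllable fraction once $j_0/v$ is taken small.

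\textbf{Main obstacle.} The delicate step is justifying the tail-smallness uniformly in $(v,d)$ in the subcritical regime $2\alpha<1$. There the individual exponential-decay rates $\gamma Bj^{-2\alpha}$ range over a wide dynamic span (from $\gamma B$ down to $\gamma Bv^{-2\alpha}$), so the two-scale split above needs the learning-rate scaling $\gamma\asymp v^{2\alpha-1}$ to guarantee that the slowest modes ($j\asymp v$) still decay on a timescale $\gamma Br\gg1$, i.e.\ that the $M(\epsilon)$ can be taken independent of $d$. This is exactly the same uniform tail control that underlies the second statement of Proposition~\ref{prop:K}, which I will invoke as a shortcut for the regime $\alpha>\tfrac14$; for $\alpha\le\tfrac14$ I will handle the tail directly from the residue formula as above.
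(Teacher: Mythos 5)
Your handling of the two norm asymptotics is correct and essentially the paper's route. Below the high-dimensional line the paper does exactly what you propose: deform to a contour at fixed distance from $[0,1]$ (possible since $z_\star=2/(\gamma(B+1))\to\infty$ there), invoke Proposition~\ref{propE:largez}, and read off residues. Above the line your global deformation is also legitimate, because the integrand $\tfrac{1-m(z)}{1-\frac12\gamma(B+1)z}$ carries no $r$-dependence, $m$ is analytic in the region swept out (Propositions~\ref{prop:near0} and \ref{propE:largez} cover the real crossings), and $z_\star$ lies outside your circle of radius $\tfrac12(1+z_\star)$; the replacement error $\gamma d\cdot O\bigl(1/\min\{d^2,d^{4\alpha}\}\bigr)$ is negligible against $\gamma$ (resp.\ $\gamma v^{1-2\alpha}$). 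The paper instead keeps the contour hugging the spectrum for $2\alpha>1$, applies Proposition~\ref{propE:largez} only on $\Gamma_R$ closed by a vertical segment, and controls $\Gamma_C,\Gamma_L$ via Propositions~\ref{prop:meso} and~\ref{prop:near0}; your shortcut avoids that bookkeeping for the $r$-free integrand and is fine.

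The gap is in the third claim. You cannot ``carry out the same residue evaluation'' on a far contour: the extra factor $(1-2\gamma Bz+\gamma^2B(B+1)z^2)^r$ is not bounded by $1$ at fixed distance from $[0,1]$ (already for $\Re z<0$ its modulus exceeds $1$), so the Proposition~\ref{propE:largez} replacement error is multiplied by a factor growing exponentially in $r$ and the bound is vacuous exactly for the arbitrarily large $r$ the claim concerns. The region $\{\,|1-2\gamma Bz+\gamma^2B(B+1)z^2|\le 1\,\}$ pinches at $z=0$, so any contour on which the $r$-dependent integrand stays controlled must hug the spectrum near the origin, where Proposition~\ref{propE:largez} is unavailable and the near-zero and mesoscopic control of $m$ (Propositions~\ref{prop:near0}, \ref{prop:meso}) is what does the work --- this is what the paper's one-line argument on the original contour implicitly relies on.

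Moreover, your split-at-$j_0$ bookkeeping below the high-dimensional line is backwards: for $2\alpha<1$ the weights $\gamma j^{-2\alpha}$ are dominated by the \emph{largest} indices, so the block $j\in(j_0,v]$ with $j_0=o(v)$ carries essentially all of $\|\mathscr{K}\|$, not an $\epsilon$-fraction; those modes contract by only $1-O(1/v)$ per step and have not decayed at all when $\gamma Br$ is a large constant (that means $r\asymp Mv^{1-2\alpha}\ll v$), and the learning-rate scaling does not restore uniformity in $d$. The same applies to the bulk at spectral scale $d^{-2\alpha}$, invisible to your pure-point residue sum but carrying an order-one share of the norm when $2\alpha<1$: summing the paper's own asymptotic $\mathscr{K}(s)\asymp\gamma^2B(\gamma Bs)^{-2+1/(2\alpha)}$ over $\gamma Bs\asymp d^{2\alpha}$ already yields $\asymp\gamma d^{1-2\alpha}\asymp\|\mathscr{K}\|$. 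So below the high-dimensional line the tail can only be made an $\epsilon$-fraction on the time scale $\gamma Br\gtrsim d^{2\alpha}$ (equivalently, the uniform smallness really pertains to the part of the norm carried by the $O(1)$ portion of the spectrum), and neither the Proposition~\ref{prop:K} shortcut (which in any case needs $\alpha>\tfrac14$) nor your ``residue formula as above'' fallback for $\alpha\le\tfrac14$ closes this. The obstacle you flag in your final paragraph is the genuine sticking point, not a technicality, and your proposal does not resolve it.
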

\begin{proof}
    For the first case with $2\alpha > 1$, Proposition \ref{propE:largez} gives on $\Gamma_R$
    \[
    1-m = \frac{1}{d}\sum_{j=1}^v \frac{j^{-2\alpha}}{j^{-2\alpha}-z} + o(1/d)
    \]
    Using this, and completing the $\Gamma_R$ contour via a vertical line, we get a residue contribution which matches the claim, up to some number of terms $j_0$ (which can be made as large as desired).  Proposition \ref{prop:meso} and \ref{prop:near0} can be used to control the parts of the contour near $0$ and in the middle.

    For the second case $2\alpha < 1$, since $\gamma$ is small, we may deform the contour to be at a fixed distance from $[0,1]$, and then once more we can use Proposition \ref{propE:largez} which gives in $1$ step that 
    \[
        \|\mathscr{K}\| 
        =
        \frac{\gamma}{2}\sum_{j=1}^v \frac{j^{-2\alpha}}{1-\tfrac{1}{2}j^{-2\alpha}\gamma (B+1)}
        + O(\gamma d^{1-4\alpha}).
    \]

    For the final statement, under the conditions given on $r$ 
    \[
        |( 1- 2 \gamma B z +  \gamma^2 B(B+1) z^2 )^r| < \epsilon
    \]
    uniformly over the contours, and the estimate follows directly.
\end{proof}

From here, we can derive the ``sub-exponential'' property of $\mathscr{K}$. 
\begin{proposition}\label{prop:subexponential}
    Suppose $\alpha > \tfrac 14$.
    For any $\epsilon > 0$, there is an $M$ sufficiently large so that for $\gamma B r \in [M, d^{2\alpha}/M]$
    \[
        \sum_{s=0}^r 
        \mathscr{K}(s)
        \mathscr{K}(r-s)
        \leq (2+\epsilon)\|\mathscr{K}\|\mathscr{K}(r)
    \]
\end{proposition}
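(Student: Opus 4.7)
The plan is to prove this subexponentiality bound by a standard renewal-theory argument that exploits three ingredients already established: the power-law asymptotic $\mathscr{K}(r) \asymp (\gamma B r)^{-2+1/(2\alpha)}$ from Proposition~\ref{prop:K}, valid uniformly on the window $\gamma B r \in [M,d^{2\alpha}/M]$; the tail-mass estimate $\sum_{s>N}\mathscr{K}(s)/\|\mathscr{K}\| \to 0$ as $\gamma B N \to \infty$ from Corollary~\ref{cor:Knorm}; and the monotonicity of $\mathscr{K}$, which follows from its spectral representation $\mathscr{K}(s) = \gamma^2 B\sum_j \lambda_j^2 \mu_j^s$ with $\mu_j = 1 - 2\gamma B\lambda_j + \gamma^2 B(B+1)\lambda_j^2 \in (0,1)$ under the convergence-threshold assumption.

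First, I would split using the symmetry $s \leftrightarrow r-s$:
\[
\sum_{s=0}^r \mathscr{K}(s)\mathscr{K}(r-s) = 2\sum_{s=0}^{\lfloor r/2\rfloor}\mathscr{K}(s)\mathscr{K}(r-s) + O\bigl(\mathscr{K}(\lfloor r/2\rfloor)^2\bigr),
\]
and note that the middle correction is $O(\mathscr{K}(r)^2) = o(\mathscr{K}(r)\|\mathscr{K}\|)$ by the power law. Next, fix $\epsilon>0$ and set $C_0 = 2^{2-1/(2\alpha)}$. Choose $N=N(\epsilon)$ independent of $d$ so that Corollary~\ref{cor:Knorm} yields $\sum_{s>N}\mathscr{K}(s) \le \epsilon\|\mathscr{K}\|/(8C_0)$, and then enlarge $M$ so that Proposition~\ref{prop:K} gives $(1+\epsilon/8)$-sharp asymptotics throughout the window; this delivers the uniform bounds
\[
\max_{0 \le s \le N}\frac{\mathscr{K}(r-s)}{\mathscr{K}(r)} \le 1+\tfrac{\epsilon}{8}, \qquad \frac{\mathscr{K}(\lfloor r/2\rfloor)}{\mathscr{K}(r)} \le C_0+\tfrac{\epsilon}{8}.
\]

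Then I would split the half-sum at the cutoff $s=N$. For the short-range part, the first uniform bound gives
\[
\sum_{s=0}^N \mathscr{K}(s)\mathscr{K}(r-s) \le (1+\tfrac{\epsilon}{8})\|\mathscr{K}\|\mathscr{K}(r),
\]
and for the long-range part, monotonicity $\mathscr{K}(r-s) \le \mathscr{K}(\lfloor r/2\rfloor)$ together with the second uniform bound and the tail estimate on $\mathscr{K}$ gives
\[
\sum_{s=N+1}^{\lfloor r/2\rfloor}\mathscr{K}(s)\mathscr{K}(r-s) \le (C_0+\tfrac{\epsilon}{8})\mathscr{K}(r)\sum_{s>N}\mathscr{K}(s) \le \tfrac{\epsilon}{4}\|\mathscr{K}\|\mathscr{K}(r).
\]
Doubling the half-sum and absorbing the negligible middle correction into the $\epsilon$-slack yields the desired $(2+\epsilon)\|\mathscr{K}\|\mathscr{K}(r)$ bound.

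The main obstacle is ensuring that the power-law window for $\mathscr{K}(r-s)$ lines up with that of $\mathscr{K}(r)$: invoking Proposition~\ref{prop:K} on $\mathscr{K}(r-s)$ requires $\gamma B(r-s) \ge M_0$ for the threshold $M_0$ of the asymptotic, but since $N$ is $d$-independent and $\gamma B$ is bounded, this holds once $M \ge M_0 + \gamma B N$, an enlargement that doesn't interfere with the tail choice of $N$. A minor subtlety is that when $\alpha \in (\tfrac14, \tfrac12)$ the kernel norm is finite only because of the exponential cutoff at $\gamma B r \sim d^{2\alpha}$; since we restrict to $\gamma Br \le d^{2\alpha}/M$ and the tail $\sum_{s>N}\mathscr{K}(s)$ incorporates both the power-law tail and this cutoff regime, Corollary~\ref{cor:Knorm} still gives the required control uniformly in $d$.
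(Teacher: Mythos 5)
Your argument is essentially the paper's own proof: the paper also splits the convolution into a short-range piece (where $\mathscr{K}(r-s)$ is within a factor $1+O(\epsilon)$ of $\mathscr{K}(r)$ by the power-law asymptotic of Proposition~\ref{prop:K}) and a long-range piece controlled by the smallness of the tail mass $\sum_{s \ge \cdot}\mathscr{K}(s)$ relative to $\|\mathscr{K}\|$ (Corollary~\ref{cor:Knorm}), together with $\mathscr{K}(r/2)\asymp\mathscr{K}(r)$; the only cosmetic difference is that the paper cuts at $s=\delta r$ and $s=(1-\delta)r$ rather than symmetrizing at $r/2$ and cutting at a fixed $N$.

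One bookkeeping point needs repair: you cannot take the cutoff $N$ independent of $d$. Below the high-dimensional line ($\tfrac14<\alpha<\tfrac12$, which is allowed here) one has $\gamma \asymp d^{2\alpha-1}\to 0$, so for any fixed $N$ the head mass $\sum_{s\le N}\mathscr{K}(s)\lesssim N\gamma^2 B\to 0$ while $\|\mathscr{K}\|$ stays of constant order; hence $\sum_{s>N}\mathscr{K}(s)$ is \emph{not} a small fraction of $\|\mathscr{K}\|$ and your tail condition fails. The correct choice is to cut at the natural time scale, i.e.\ take $N$ with $\gamma B N = M'(\epsilon)$ a $d$-independent constant (equivalently $N\asymp d^{1-2\alpha}$), or at $s=\delta r$ as the paper does. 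With that modification everything else in your argument goes through unchanged: the window-alignment step only uses that $\gamma B N$ is bounded and $M\gg M'$, the short-range ratio bound becomes $\bigl(\tfrac{\gamma Br}{\gamma Br-M'}\bigr)^{2-1/(2\alpha)}\le 1+\tfrac{\epsilon}{8}$ for $M$ large, and the long-range piece is handled exactly as you wrote (monotonicity of the deterministic kernel holds because it is the Laplace-type transform of a positive spectral measure, mirroring your eigenvalue argument).
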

\begin{proof}
    We note that in the range of $r$ given, we can conclude that for any $\delta, \epsilon > 0$, by increasing $M$
    \[
        \sum_{r\delta}^\infty    
        \mathscr{K}(s)
        < \epsilon \|\mathscr{K}\|,
    \]
    furthermore that for $s > r/2$
    \[
        (1-\epsilon)
        \mathscr{K}_{pp}(s)
        <
        {\mathscr{K}(s)}
        <
        (1+\epsilon)
        \mathscr{K}_{pp}(s),
    \]
    and that finally for $s > r/2$
    \[
    (1-\epsilon)
    \mathscr{K}_{pp}(s)
    < \frac{\gamma^2 B}{2\alpha} \Gamma(2-(1/(2\alpha))) (2\gamma B s)^{-2+(1/(2\alpha))}
    <(1+\epsilon)
    \mathscr{K}_{pp}(s),    
    \]
    where the final estimate follows by estimating
    \[
        \int_{0}^1 u^{1-1/(2\alpha)} \exp( -2\gamma B u r) \dif u
        \asymp 
        \int_{0}^\infty u^{1-1/(2\alpha)} \exp( -2\gamma B u r) \dif u,
    \]
    once $\gamma B r > M$, and moreover their ratio tends to $1$ as $M \to \infty$.

    With these estimates in place, we can break the estimate up as 
    \[
        \sum_{s=0}^r  
        \mathscr{K}(s)
        \mathscr{K}(r-s)
        <
        2
        \sum_{s=0}^{r\delta}  
        \mathscr{K}(s)
        \mathscr{K}(r-s)
        +
        \sum_{s=r\delta}^{r(1-\delta)}
        \mathscr{K}(s)
        \mathscr{K}(r-s). 
    \]
    The final sum is bounded by 
    \[
        \sum_{r\delta}^{r(1-\delta)}
        \mathscr{K}(s)
        \mathscr{K}(r-s)
        \lesssim \epsilon(1+\epsilon) \|\mathscr{K}\|\mathscr{K}_{pp}(r/2).
    \]
    Meanwhile, for the first sum, 
    \[
        2
        \sum_{s=0}^{r\delta}  
        \mathscr{K}(s)
        \mathscr{K}(r-s)
        \leq 2(1+O(\epsilon))(1+O(\delta))\|\mathscr{K}\|\mathscr{K}_{pp}(r),
    \]
    where we have used that $\mathscr{K}_{pp}(r(1-\delta)) \leq \frac{1+\epsilon}{1-\epsilon} (1+O(\delta))\mathscr{K}_{pp}(r)$.
\end{proof}

\section{Asymptotics of forcing function and kernel function} \label{sec:asymptotic}

With this, we now analyze the asymptotics of each of these terms individually. These asymptotics often rely on a result about how close a Riemann sum is to its integral. We state below the main result of this nature that we used:

\begin{proposition}[Trapezoidal Rule, \cite{cruz2003elementary}] \label{prop:trapezoid_rule} If $f$ is continuous, then for each integer $n > 0$, the integral of $f$ on $[a,b]$ is approximated by 
\[
T_n(f) \defas \frac{b-a}{2n} \big ( f(x_0) + 2 f(x_1) + \hdots + 2 f(x_{n-1}) + f(x_n) \big )
\]
where $x_i = a + i (b-a) / n$, $0 \le i \le n$. Define the error in the trapezoid rule
\[
E_n^T(f) \defas \left | T_n(f) - \int_a^b \, f(t) \, \dif t \right |. 
\]
If $f$ has an integrable first derivative as an improper integral, thens
\[
E_n^T(f) \le \frac{b-a}{n} \int_a^b |f'(t)| \, \dif t. 
\]

\end{proposition}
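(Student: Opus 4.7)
My plan is to reduce the global error to a sum of local errors on each of the $n$ subintervals $[x_{i-1}, x_i]$ of length $h = (b-a)/n$ and prove the bound locally. Note that the trapezoidal approximation on $[x_{i-1}, x_i]$, namely $\tfrac{h}{2}(f(x_{i-1}) + f(x_i))$, is exactly $\int_{x_{i-1}}^{x_i} L_i(t)\,\dif t$ where $L_i$ is the affine interpolant of $f$ at the endpoints, so the local error is $\int_{x_{i-1}}^{x_i}(L_i(t) - f(t))\,\dif t$. The global error then satisfies $E_n^T(f) \le \sum_i |\int_{x_{i-1}}^{x_i}(L_i(t) - f(t))\,\dif t|$, and the whole task becomes bounding each local error in terms of $\int_{x_{i-1}}^{x_i} |f'(t)|\,\dif t$.

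The key local identity is obtained via integration by parts against the shifted linear function $t - m_i$, where $m_i = (x_{i-1}+x_i)/2$ is the midpoint. Integrating by parts,
\begin{equation*}
\int_{x_{i-1}}^{x_i} (t - m_i) f'(t)\,\dif t
= \bigl[(t-m_i) f(t)\bigr]_{x_{i-1}}^{x_i} - \int_{x_{i-1}}^{x_i} f(t)\,\dif t
= \tfrac{h}{2}\bigl(f(x_{i-1}) + f(x_i)\bigr) - \int_{x_{i-1}}^{x_i} f(t)\,\dif t,
\end{equation*}
since $(t-m_i)\big|_{x_{i-1}}^{x_i} = h/2 - (-h/2)$ with the right signs. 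This expresses the local trapezoidal error exactly as $\int_{x_{i-1}}^{x_i}(t-m_i)f'(t)\,\dif t$.

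Using $|t - m_i| \le h/2$ on $[x_{i-1}, x_i]$, the local error is bounded by $\tfrac{h}{2}\int_{x_{i-1}}^{x_i}|f'(t)|\,\dif t$. Summing over $i = 1, \ldots, n$ and telescoping the integrals yields
\begin{equation*}
E_n^T(f) \le \tfrac{h}{2} \int_a^b |f'(t)|\,\dif t = \tfrac{b-a}{2n} \int_a^b |f'(t)|\,\dif t,
\end{equation*}
which is in fact a factor of $2$ sharper than the claimed bound $\tfrac{b-a}{n}\int_a^b|f'(t)|\,\dif t$ and hence implies it.

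The only subtlety, which I expect to be the main obstacle, is that $f'$ is assumed only to be integrable as an \emph{improper} integral. If $f'$ blows up near endpoints of a subinterval but $\int_{x_{i-1}}^{x_i}|f'|$ is finite, the integration by parts identity needs to be justified by a limiting argument: apply it on $[x_{i-1}+\epsilon, x_i-\epsilon]$, where $f$ is absolutely continuous and the boundary evaluations are well-defined, then send $\epsilon \to 0$. The boundary term $(t-m_i)f(t)\big|_{x_{i-1}+\epsilon}^{x_i-\epsilon}$ converges to $\tfrac{h}{2}(f(x_{i-1})+f(x_i))$ because $f$ extends continuously (being the antiderivative of an improperly integrable function), and the two integrals converge by dominated convergence using $|f'| \in L^1(x_{i-1},x_i)$. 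Once this technicality is dispatched, the rest of the argument goes through verbatim.
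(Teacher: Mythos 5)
Your argument is correct, and it is worth noting that the paper itself does not prove this proposition at all: it is quoted with a citation to \cite{cruz2003elementary} and used as a black box in the asymptotic estimates (e.g.\ for $\mathscr{F}_0$). Your proof is the standard Peano-kernel/integration-by-parts derivation: writing the local error on $[x_{i-1},x_i]$ exactly as $\int_{x_{i-1}}^{x_i}(t-m_i)f'(t)\,\dif t$ and bounding $|t-m_i|\le h/2$ gives $E_n^T(f)\le \tfrac{b-a}{2n}\int_a^b|f'|$, which indeed implies (and improves by a factor of $2$) the bound as stated in the paper; your handling of the improper-integrability hypothesis by truncating near the singularities and passing to the limit is the right device, since the boundary terms converge by continuity of $f$ and the integrals converge by the assumed integrability of $|f'|$. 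One small point of phrasing: the singularities of $f'$ need not sit at subinterval endpoints — they can lie in the interior of some $[x_{i-1},x_i]$ — but the same excision argument applies there (integrate by parts on either side of the singular point; the two boundary contributions at the excised point cancel in the limit because $f$ is continuous and $(t-m_i)$ is bounded), so this does not affect the validity of the proof. In short, what the paper buys by citing the reference, you obtain self-contained, with a slightly sharper constant.
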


\subsection{Pure point forcing term, $\mathscr{F}_{pp}(r)$}

In this section, we prove an asymptotic for the pure point forcing term, see \eqref{eq:FppFac}, 
\[
\mathscr{F}_{pp}(r) = \frac{1}{2\alpha} \int_0^1 u^{(2\beta -1)/(2\alpha)} \exp( - 2\gamma B r u) \, \dif u. 
\]

\begin{proposition}[Pure point forcing term] \label{prop:forcing_pure_point} Suppose $2 \alpha + 2 \beta > 1$. For any $\epsilon > 0$, there is an $M > 0$ so that for $\gamma B r \ge M$, 
\[
|\mathscr{F}_{pp}(r)-g(r)| \le \epsilon \times g(r)
\]
where 
\[
g(r) \defas (2\alpha)^{-1} (2 \gamma B)^{1/(2 \alpha) - \beta/\alpha - 1} \times  \Gamma \big (\tfrac{\beta}{\alpha} - \tfrac{1}{2 \alpha} + 1\big ) \times r^{-(1 + \beta/\alpha) + 1/(2 \alpha)}.
\]
Furthermore, for any $\tilde{M} > 0$, there exists some constants $C, \tilde{C}, c > 0$ independent of $d$ so that
\[
c \le \mathscr{F}_{pp}(r) \le C \quad \text{if $\gamma B r < \tilde{M}$,}
\]
and if $r > \tilde{M} d^{2\alpha}$, 
\[
\mathscr{F}_{pp}(r) \le \tilde{C} \times \mathscr{F}_{0}(r). 
\]
\end{proposition}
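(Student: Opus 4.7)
The strategy is to perform the change of variables $s = 2\gamma B r \cdot u$ in the integral defining $\mathscr{F}_{pp}(r)$, which turns it into a lower incomplete Gamma function. Explicitly,
\[
\mathscr{F}_{pp}(r) = \frac{(2\gamma B r)^{1/(2\alpha) - \beta/\alpha - 1}}{2\alpha} \int_{0}^{2\gamma B r} s^{(2\beta - 1)/(2\alpha)} e^{-s}\,\dif s.
\]
The hypothesis $2\alpha + 2\beta > 1$ is exactly what guarantees that the exponent $(2\beta-1)/(2\alpha) > -1$, so the integrand is integrable at $0$ and the complete Gamma $\Gamma(\tfrac{\beta}{\alpha} - \tfrac{1}{2\alpha} + 1)$ is finite.

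For the main asymptotic, I would fix $\epsilon > 0$ and note that the difference between the incomplete and complete Gamma integrals is $\int_{2\gamma B r}^\infty s^{(2\beta-1)/(2\alpha)} e^{-s}\,\dif s$, which decays superpolynomially in $\gamma B r$ (standard tail estimate for the Gamma function, e.g.\ by comparison with $e^{-\gamma B r}\cdot \int_0^\infty s^{(2\beta-1)/(2\alpha)} e^{-s/2}\,\dif s$ once $\gamma B r$ is large). Choosing $M$ large enough that this tail is a factor $\epsilon$ of $\Gamma(\tfrac{\beta}{\alpha} - \tfrac{1}{2\alpha} + 1)$ yields the multiplicative $1\pm\epsilon$ bound against $g(r)$.

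For the bounded-$\gamma B r$ regime, I would observe that on $\gamma B r \in (0, \tilde M]$ and $u \in [0,1]$ we have $e^{-2\gamma B r u} \in [e^{-2\tilde M}, 1]$, so
\[
\mathscr{F}_{pp}(r) \asymp \frac{1}{2\alpha}\int_0^1 u^{(2\beta-1)/(2\alpha)}\,\dif u = \frac{1}{2\alpha + 2\beta - 1},
\]
yielding the claimed two-sided constant bound (with constants depending only on $\alpha,\beta,\tilde M$).

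The third statement requires a short comparison: since $\mathscr{F}_{pp}$ is monotone decreasing in $r$, and (in the high-dimensional regime where $\gamma$ is order $1$) $\gamma B r \gtrsim d^{2\alpha}$ is large enough to apply the asymptotic from step one, we get $\mathscr{F}_{pp}(r) \lesssim r^{-(1+\beta/\alpha) + 1/(2\alpha)} \lesssim d^{-2\alpha(1+\beta/\alpha) + 1} = d^{-2\alpha - 2\beta + 1}$ once $r \ge \tilde M d^{2\alpha}$. Comparing to $\mathscr{F}_0(r) \asymp d^{-2\alpha + \max\{0, 1-2\beta\}}$ (from Proposition~\ref{prop:F0}) in the two cases $2\beta \lessgtr 1$ gives $\mathscr{F}_{pp}(r) \lesssim \mathscr{F}_0(r)$ as claimed. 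The only subtlety I anticipate is the below-high-dimensional-line case where $\gamma \sim d^{2\alpha-1}$ with $2\alpha < 1$: here $\gamma B r \gtrsim d^{4\alpha-1}$ is still large when $\alpha > 1/4$, so the asymptotic applies, and the same exponent comparison goes through after substituting the scaled learning rate.
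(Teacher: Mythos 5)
Your treatment of the first two claims coincides with the paper's own proof: the substitution $w = 2\gamma B r u$ turns $\mathscr{F}_{pp}$ into an incomplete Gamma integral, the hypothesis $2\alpha + 2\beta > 1$ is what makes $\Gamma\bigl(\tfrac{\beta}{\alpha}-\tfrac{1}{2\alpha}+1\bigr)$ finite, the tail $\int_{2\gamma B r}^{\infty}$ is made smaller than $\epsilon$ times that Gamma value by taking $M$ large, and for $\gamma B r \le \tilde M$ the two-sided constant bound follows by sandwiching the exponential between $e^{-2\tilde M}$ and $1$. The third claim also starts exactly as in the paper: monotonicity of $\mathscr{F}_{pp}$ (equivalently of $g$) plus evaluation of the asymptotic at the threshold, giving the exponent $d^{1-2\alpha-2\beta} \lesssim d^{-2\alpha + \max\{0,1-2\beta\}} \asymp \mathscr{F}_0$, which covers both signs of $2\beta-1$.

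The gap is in your final remark about the regime below the high-dimensional line. If you literally take the threshold to be $r \ge \tilde M d^{2\alpha}$ in iteration count while $\gamma \asymp d^{2\alpha-1}$, then $\gamma B r$ is only of order $d^{4\alpha-1}$, and the comparison does \emph{not} go through: $\mathscr{F}_{pp}(r) \asymp (\gamma B r)^{(1-2\alpha-2\beta)/(2\alpha)} \asymp d^{(4\alpha-1)(1-2\alpha-2\beta)/(2\alpha)}$, which for, say, $\alpha = 0.3$, $\beta = 0.6$ is $d^{-4/15}$, much larger than $\mathscr{F}_0 \asymp d^{-2\alpha} = d^{-3/5}$. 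So the sentence ``the same exponent comparison goes through after substituting the scaled learning rate'' is false as stated. The resolution is that the relevant time variable throughout the paper is $\gamma B r$: the paper's proof of this proposition (and every later use of it, e.g.\ in the phase propositions) assumes $\gamma B r \ge \tilde M d^{2\alpha}$ rather than $r \ge \tilde M d^{2\alpha}$, and under that reading no learning-rate substitution is needed --- your evaluation at the threshold $\gamma B r = \tilde M d^{2\alpha}$ gives $d^{1-2\alpha-2\beta} \lesssim \mathscr{F}_0$ uniformly across phases. Either adopt that reading or drop the below-the-line claim; as written, that step would fail.
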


\begin{proof} First, a simple computation shows that 
\[
g(r) = (2\alpha)^{-1} (2\gamma B r)^{-(1+\beta/\alpha) + 1/(2\alpha)} \int_0^{\infty} w^{(2\beta -1)/(2\alpha)} \exp(-w) \, \dif w. 
\]

Let $\rho = -(1 + \beta/\alpha) + 1/(2 \alpha)$. A simple computation, using the change of variables $w = 2 \gamma B r u$, yields
\begin{align*}
    \mathscr{F}_{pp}(r) = (2\alpha)^{-1} (2\gamma B r)^{\rho} \times \int_0^{2\gamma B r} w^{(2\beta -1)/(2\alpha)} \exp(-w) \, \dif v.
\end{align*}
Then we have that 
\begin{align*}
|\mathscr{F}_{pp}(r) - g(r)| 
&
\le
(2\alpha)^{-1} (2\gamma B r)^{\rho} \left ( \int_{2\gamma Br}^{\infty} w^{(2\beta -1)/(2\alpha)} \exp(-w) \, \dif w \right )
\\
&
\le (2\alpha)^{-1} (2\gamma Br)^{\rho} \int_{2M}^{\infty} w^{(2\beta -1)/(2\alpha)} \exp(-w) \, \dif w.
\end{align*}
Since $\int_0^{\infty} w^{(2\beta -1)/(2\alpha)} \exp(-w) \, \dif w$, there exists a $M$ large so that
\[
\int_{2M}^{\infty} w^{(2\beta -1)/(2\alpha)} \exp(-w) \, \dif w < \epsilon.
\]
Thus the first result is shown. 

If $\gamma B r < \tilde{M}$, then 
\begin{align*}
    \mathscr{F}_{pp}(r) \le (2\alpha)^{-1} \int_0^1 u^{(2\beta-1)/(2\alpha)} \, \dif u \le C.  
\end{align*}
Moreover, we have that $\exp(-2\gamma B r u) \ge \exp(-2 \tilde{M})$. Therefore, we get that
\[
\mathscr{F}_{pp}(r) \ge \frac{\exp(-2\tilde{M})}{2\alpha} \int_0^1 u^{(2\beta -1)/(2\alpha)} \, \dif u = c. 
\]

Now suppose $\gamma B r > \tilde{M} d^{2\alpha}$. By the previous part, we know that $\mathscr{F}_{pp}(r) \le (1 + \epsilon) g(r).$ Moreover, we see that $g$ is decreasing. As a result, we see that up to constants
\[
g(r) \le g(\tilde{M} d^{2\alpha}) = C \times d^{-2\alpha - 2\beta + 1} \le \tilde{C} \times \mathscr{F}_0(r). 
\]
for some constants $C, \tilde{C} > 0$. 
 Hence the result is shown. 

\end{proof}

\subsection{Model misspecification, point mass at $0$, $F_0(r)$}
Recall, from Proposition~\ref{prop:F0}, the forcing function point mass at $0$, satisfies 
\begin{equation} \label{eq:0_appendix}
\mathscr{F}_0(r) = \sum_{j=1}^v \frac{j^{-2\alpha-2\beta}}{1+j^{-2\alpha}d^{2\alpha}\kappa(v/d)} \big (1 + \mathcal{O}(d^{-1}) \big ) \, \, \text{where $\kappa(v/d)$ solves} \, \, 1= \int_0^{v/d} \frac{\kappa}{\kappa + u^{2\alpha}} \, \dif u.
\end{equation}
In this section, we provide an asymptotic for $\mathscr{F}_0(r)$ (see Proposition~\ref{prop:forcing_point_mass_0}) which represents the limiting value the loss obtains as $r \to \infty$. Unlike the pure point process above, this asymptotic depends on whether $2 \beta > 1$. 

We begin by showing that the $\kappa$ defined implicitly in \eqref{eq:0_appendix} is uniquely determined and dimensionless.

\begin{lemma} Suppose $v$ and $d$ are admissible such that the ratio $\tfrac{v}{d} > 1$. Then the equation
\[
1= \int_0^{v/d} \frac{\kappa}{\kappa + u^{2\alpha}} \, \dif u 
\]
has a unique solution $\kappa$ such that $0 < \kappa < \infty$.
\end{lemma}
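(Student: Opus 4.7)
The plan is to show the map $\kappa \mapsto G(\kappa) := \int_0^{v/d} \frac{\kappa}{\kappa+u^{2\alpha}}\,du$ is a continuous, strictly increasing bijection from $(0,\infty)$ onto $(0, v/d)$, and then invoke the intermediate value theorem together with the assumption $v/d > 1$.

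First I would check the boundary behavior. As $\kappa \downarrow 0$, the integrand $\frac{\kappa}{\kappa+u^{2\alpha}}$ is bounded by $1$ and converges pointwise to $0$ on $(0, v/d]$, so dominated convergence gives $G(\kappa) \to 0$. As $\kappa \uparrow \infty$, the integrand increases pointwise to $1$ (monotone convergence applies), so $G(\kappa) \to v/d$. Since $v/d > 1$ by assumption, the value $1$ lies strictly inside the range $(0, v/d)$.

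Next, strict monotonicity and continuity follow from differentiation under the integral: for each fixed $u > 0$,
\[
\frac{\partial}{\partial\kappa}\frac{\kappa}{\kappa+u^{2\alpha}} = \frac{u^{2\alpha}}{(\kappa+u^{2\alpha})^2} > 0,
\]
and this derivative is bounded on any compact subset of $(0,\infty) \times (0, v/d]$. Hence $G$ is continuously differentiable with $G'(\kappa) > 0$, so $G$ is strictly increasing on $(0,\infty)$.

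Combining continuity, strict monotonicity, and the limits $0$ and $v/d$, the intermediate value theorem yields a unique $\kappa \in (0,\infty)$ such that $G(\kappa) = 1$. There is no real obstacle here; the only point requiring a line of care is the application of dominated convergence at $\kappa \downarrow 0$, which is handled by the uniform bound $\frac{\kappa}{\kappa+u^{2\alpha}} \le 1$ together with the finiteness of the interval of integration.
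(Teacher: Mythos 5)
Your proposal is correct and follows essentially the same route as the paper: both establish the boundary limits $G(\kappa)\to 0$ as $\kappa\downarrow 0$ and $G(\kappa)\to v/d>1$ as $\kappa\uparrow\infty$, prove strict monotonicity of $G$, and conclude existence and uniqueness by continuity. The only cosmetic difference is that you obtain strict monotonicity by differentiating under the integral, whereas the paper rewrites the integrand as $1-\tfrac{u^{2\alpha}}{\kappa+u^{2\alpha}}$ and uses monotonicity in $\kappa$ directly; both are valid.
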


\begin{proof} Let $w \defas \kappa$ and $F(w) \defas \int_0^{v/d} \tfrac{1}{w + u^{2\alpha}} \, \dif u$ and set $G(w) = w F(w)$. To solve the fixed point equation, we want to find $G(w) = 1$. First, it is clear that $\lim_{w \to 0} G(w) = 0$. Second, we see that as $\lim_{w \to \infty} G(w) = \tfrac{v}{d} > 1$. As $G(w)$ is continuous, it follows that there exists a solution $\kappa$ to $G(\kappa) = 1$. 

To show that $\kappa$ is unique, amounts to showing that $G(w)$ is strictly increasing for $w \ge 0$. First, we see that
\[
G(w) = \int_0^{v/d} \frac{w}{w + u^{2\alpha}} \, \dif u = \int_0^{v/d} 1 - \frac{u^{2\alpha}}{w + u^{2\alpha}} \, \dif u. 
\]
We note that $w \mapsto \frac{u^{2\alpha}}{w + u^{2\alpha}}$ is strictly decreasing in $w$. So $w \mapsto 1 - \frac{u^{2\alpha}}{w + u^{2\alpha}}$ is strictly increasing in $w$. Hence $G(v)$ is strictly increasing and there is a unique solution to $G(\kappa) = 1$. 
\end{proof}

Now we give an asymptotic for $\mathscr{F}_0$. 

\begin{proposition}[Asymptotic for $\mathscr{F}_0$] \label{prop:forcing_point_mass_0} Suppose $v$ and $d$ are admissible such that the ratio $v/d > 1$ and suppose $2\alpha + 2\beta > 1$. Let $0 < \kappa(v/d) < \infty$ be the unique solution to
\begin{align*}
1= \int_0^{v/d} \frac{\kappa}{\kappa + u^{2\alpha}} \, \dif u.
\end{align*}
Then as $d \to \infty$
\[
\mathscr{F}_0(r) \sim \begin{cases}
\frac{d^{-2 \alpha}}{\kappa} \left (\sum_{j=1}^v j^{-2 \beta} \right ), & \text{if $2 \beta > 1$}\\
d^{1 - 2(\alpha + \beta)} \int_0^{v/d} \frac{u^{-2 \beta}}{\kappa + u^{2 \alpha}} \, \dif u, & \text{if $2 \beta < 1$}.
\end{cases}
\]
\end{proposition}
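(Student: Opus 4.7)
\textbf{Proof proposal for Proposition~\ref{prop:forcing_point_mass_0}.}

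The plan is to start from the representation in \eqref{eq:0_appendix}, which after pulling $j^{-2\alpha}d^{2\alpha}\kappa$ into the denominator can be rewritten as
\[
\mathscr{F}_0(r) = (1+O(d^{-1}))\sum_{j=1}^v \frac{j^{-2\beta}}{j^{2\alpha}+d^{2\alpha}\kappa}.
\]
The two cases correspond to whether the tail $\sum_{j} j^{-2\beta}$ converges or not, and the analysis bifurcates accordingly.

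\textbf{Case $2\beta>1$.} I would factor $d^{2\alpha}\kappa$ out of the denominator to obtain
\[
\sum_{j=1}^v \frac{j^{-2\beta}}{j^{2\alpha}+d^{2\alpha}\kappa}
= \frac{1}{d^{2\alpha}\kappa}\sum_{j=1}^v \frac{j^{-2\beta}}{1+(j/d)^{2\alpha}/\kappa}.
\]
Given $\epsilon>0$, fix $j_0$ large enough that $\sum_{j>j_0}j^{-2\beta}<\epsilon$. For $j\leq j_0$ the denominator tends to $1$ uniformly as $d\to\infty$, so the head converges to $\sum_{j\leq j_0}j^{-2\beta}$; the tail is dominated by $\sum_{j>j_0}j^{-2\beta}<\epsilon$. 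Letting $j_0\to\infty$ after $d\to\infty$ gives $\sum_{j=1}^\infty j^{-2\beta}=c_\beta$, and since $v\to\infty$ with $d$ we may replace this by $\sum_{j=1}^v j^{-2\beta}$ at the cost of an $O(v^{1-2\beta})$ correction absorbed into the $\sim$.

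\textbf{Case $2\beta<1$.} I would write $j^{-2\beta}=d^{-2\beta}(j/d)^{-2\beta}$ and $j^{2\alpha}+d^{2\alpha}\kappa = d^{2\alpha}\bigl(\kappa+(j/d)^{2\alpha}\bigr)$, giving
\[
\sum_{j=1}^v \frac{j^{-2\beta}}{j^{2\alpha}+d^{2\alpha}\kappa}
= d^{1-2\alpha-2\beta}\cdot\frac{1}{d}\sum_{j=1}^v f(j/d),
\qquad f(u)\defas\frac{u^{-2\beta}}{\kappa+u^{2\alpha}}.
\]
Away from $0$, $f$ is smooth with integrable derivative, so the Riemann–sum/trapezoidal error (Proposition~\ref{prop:trapezoid_rule}) on $[\delta,v/d]$ is $O(1/d)$ relative to $\int_\delta^{v/d}f$. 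Near $0$ one handles the integrable singularity by hand: both $\int_0^{\delta}f(u)\,\dif u$ and $\frac{1}{d}\sum_{j\leq \delta d}f(j/d)$ are of order $\delta^{1-2\beta}$, so choosing $\delta = \delta(\epsilon)$ small one shows the error on $[0,\delta]$ is $\leq \epsilon\int_0^{v/d}f$. The $j=1$ term in particular contributes $d^{-1}f(1/d)\asymp d^{2\beta-1}$, matching $\int_0^{1/d}f$, and is thus negligible relative to the leading integral. Summing, $\frac{1}{d}\sum_j f(j/d)\to\int_0^{v/d}f(u)\,\dif u$, yielding the second asymptotic.

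\textbf{Main obstacle.} The routine case is $2\beta>1$; the technical work is all in the Riemann–sum justification of Case~2, specifically uniformity of the trapezoidal error as $v/d$ varies in $(1,\infty]$ and the careful bookkeeping of the $u^{-2\beta}$ singularity at the origin. One must also verify that the limiting integral $\int_0^{v/d}u^{-2\beta}/(\kappa+u^{2\alpha})\,\dif u$ is indeed of constant order (bounded away from $0$ and $\infty$) uniformly in $v/d$, which uses the assumption $2\alpha+2\beta>1$ at the upper endpoint when $v/d\to\infty$ and $2\beta<1$ at the lower endpoint; this is what makes the $d^{1-2\alpha-2\beta}$ factor the genuine asymptotic rate and not an artifact.
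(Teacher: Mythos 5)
Your proposal is correct and follows essentially the same route as the paper: for $2\beta>1$ you factor out $d^{2\alpha}\kappa$ and split the sum at a large $j_0$ (tail small by convergence of $\sum_j j^{-2\beta}$, head handled by $d\to\infty$), and for $2\beta<1$ you rescale to a Riemann sum of $u^{-2\beta}/(\kappa+u^{2\alpha})$ and control it with the trapezoidal-rule bound of Proposition~\ref{prop:trapezoid_rule} plus a separate estimate of the $u^{-2\beta}$ singularity near the origin, exactly as in the paper (which cuts at $1/d$ where you cut at a fixed $\delta$). Your added remark on uniformity of the limiting integral in $v/d$ via $2\alpha+2\beta>1$ is a sensible, compatible refinement rather than a different argument.
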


\begin{proof} We consider 2 cases. Let $\kappa = \kappa(v/d)$. \\

\noindent \textit{Case 1: Suppose $2 \beta > 1$:} Let $\tilde{C} \defas \sum_{j=1}^v j^{-2\beta}$, which is finite as $2 \beta > 1$. Consider the following
\begin{align*}
    \mathscr{E}_1(r) 
    &\defas
    \frac{ \left | \sum_{j=1}^v \frac{j^{-2(\alpha + \beta)}}{j^{-2 \alpha} \kappa d^{2 \alpha} +1 } - \frac{d^{-2\alpha}}{\kappa} \sum_{j=1}^v j^{-2 \beta} \right |}{ \frac{d^{-2 \alpha} \tilde{C} }{\kappa} }
    =
    \frac{ \sum_{j=1}^v \frac{j^{-2\beta}}{j^{-2\alpha} \kappa d^{2 \alpha} +1} } {\tilde{C}}. 
\end{align*}
To handle the large $j$ values, we see that there exists a $j_0$ large so that 
\[
\frac{ \sum_{j=j_0}^v \frac{j^{-2\beta}}{j^{-2\alpha} \kappa d^{2 \alpha} +1} } {\tilde{C}} \le \frac{1}{\tilde{C}} \sum_{j \ge j_0} j^{-2\beta} < \epsilon,
\]
where we used that $j^{-2\alpha} \kappa d^{2\alpha} + 1 > 1$. 
For the small $j$, we use that $d$ can be large. Hence,
\[
\sum_{j =1}^{j_0} \frac{j^{-2\beta}}{j^{-2\alpha} \kappa d^{2 \alpha} + 1} \le \sum_{j =1}^{j_0} \frac{j^{-2 \beta}}{j_0^{-2 \alpha} \kappa d^{2 \alpha} + 1} \le \frac{j_0}{j_0^{-2 \alpha} \kappa d^{2 \alpha} + 1}. 
\]
For sufficiently large $d$, we can make the right-hand-side small. Therefore, $\mathscr{E}_1(r)$ is small for sufficiently large $d$ and hence, the result holds. \\

\noindent \textit{Case 2: Suppose $2 \beta < 1$:}  To show this case, we define the following errors
\begin{align*}
\mathscr{E}_{21}(r) 
&
\defas \frac{ \left | \sum_{j=1}^v \frac{j^{-2(\alpha + \beta)}}{j^{-2\alpha} \kappa d^{2 \alpha} + 1} - d^{1-2(\alpha + \beta)} \int_{1/d}^{v/d} \frac{u^{-2  \beta} \, \dif u}{ \kappa + u^{2 \alpha}} \right | }{d^{1-2(\alpha + \beta)} \int_{0}^{v/d} \frac{u^{-2  \beta} \, \dif u}{ \kappa + u^{2 \alpha}}}
\\
\mathscr{E}_{22}(r)
&
\defas 
\frac{ \left | d^{1-2(\alpha + \beta)} \int_{1/d}^{v/d} \frac{u^{-2  \beta} \, \dif u}{ \kappa + u^{2 \alpha}} - d^{1-2(\alpha + \beta)} \int_{0}^{v/d} \frac{u^{-2  \beta} \, \dif u}{ \kappa + u^{2 \alpha}} \right | }{d^{1-2(\alpha + \beta)} \int_{0}^{v/d} \frac{u^{-2  \beta} \, \dif u}{ \kappa + u^{2 \alpha}}}.
\end{align*}
It is clear, for sufficiently large $d$, $\mathscr{E}_{22}(r)$ is small. 

For the first error term, we use a Riemann sum approximation, that is, 
\begin{align*}
    \sum_{j=1}^v \frac{j^{-2(\alpha + \beta)}}{j^{-2\alpha} \kappa d^{2 \alpha} + 1} = d^{1-2(\alpha + \beta)} \times \frac{1}{d} \sum_{j=1}^v \frac{(j/d)^{-2(\alpha + \beta)}}{(j/d)^{-2\alpha} \kappa + 1}. 
\end{align*}
Letting $a = 1/d$, $b = v/d$, $n = v-1$, $x_j = 1/d + j/d$, and $f(x) = \frac{x^{-2(\alpha + \beta)}}{x^{-2\alpha} \kappa + 1}$, we can approximate the summation with an integral. Using Prop.~\ref{prop:trapezoid_rule},
\begin{align*}
    \mathscr{E}_{21}(r) \le \frac{ \frac{1}{d} \times \int_{1/d}^{v/d} |f'(x)| \, \dif x}{\int_{0}^{v/d} \frac{u^{-2  \beta} \, \dif u}{ \kappa + u^{2 \alpha}}  }.
\end{align*}
One can check that $\frac{ \int_{1/d}^{v/d} |f'(x)| \, \dif x}{\int_{0}^{v/d} \frac{u^{-2  \beta} \, \dif u}{ \kappa + u^{2 \alpha}}  } < C$ where $C$ is independent of $d$. For sufficiently large $d$, 
\begin{align*}
    \mathscr{E}_{21}(r) \le \frac{ \frac{1}{d} \times \int_{1/d}^{v/d} |f'(x)| \, \dif x}{\int_{0}^{v/d} \frac{u^{-2  \beta} \, \dif u}{ \kappa + u^{2 \alpha}}  } < C \times \frac{1}{d} < \epsilon.
\end{align*}
Hence, Case 2 is shown.
\end{proof}

\subsection{Absolutely continuous forcing function, $\mathscr{F}_{ac}(r)$}
We now turn to the absolutely continuous forcing function, defined as
\[
\mathscr{F}_{ac}(r) = \displaystyle \frac{c_\beta}{2\alpha}\int_{d^{-2\alpha}}^{1}  u^{-1/(2\alpha)}d^{-1} \exp(-2\gamma B r u)\dif u, 
\]
where $c_{\beta} = \sum_{j=1}^v j^{-2\beta}$ if $2 \beta > 1$ and $0$ otherwise. From this, we derive a simple asymptotic formula. 

\begin{proposition}\label{prop:forcing_absolutely_continuous} 
There exists a constant $C(\alpha, \beta) > 0$ such that 
\begin{equation}
    \mathscr{F}_{ac}(r) \le \begin{cases}
    C \times \mathscr{F}_0(r), & \text{if  $2\beta > 1$, $2\alpha < 1$}\\
    0, & \text{if $2 \beta < 1$}.
    \end{cases}
\end{equation}
Suppose now $2\alpha > 1$ and $2\beta > 1$. For any $\epsilon > 0$, there is an $M > 0$ so that for $\gamma B r \in [M, d^{2\alpha}/M],$ 
\begin{equation}
\label{eq:f_ac_asymptotic}
\begin{gathered}
|\mathscr{F}_{ac}(r) - g(r)| \le \epsilon \times  g(r)  \\
\text{where} \quad  g(r)  \defas \big ( \sum_{j=1}^v j^{-2 \beta} \big ) (2\gamma B)^{-1 + 1/(2\alpha)} (2\alpha)^{-1} \Gamma \big ( 1 - \tfrac{1}{2\alpha} \big ) \times r^{-1 + 1/(2 \alpha)} \times d^{-1}.
\end{gathered}
\end{equation}
Furthermore, for any $\tilde{M} > 0$, these exists some constants $C, c >0 $ independent of $d$ so that 
\[
\mathscr{F}_{ac}(r) \le \begin{cases}
C \times d^{-1}, & \text{if $\gamma B r \le \tilde{M}$}\\
c \times \mathscr{F}_0(r), & \text{if $\gamma B r \ge \tilde{M} d^{2\alpha}$}.
\end{cases}
\]


\end{proposition}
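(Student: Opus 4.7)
The plan is to treat the four claims separately, in each case reducing the defining integral to an elementary form and then invoking either crude bounds or the standard Gamma-function approximation.

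First, when $2\beta<1$ we have $c_\beta=0$ by definition, so $\mathscr{F}_{ac}\equiv 0$ and there is nothing to prove. For the case $2\beta>1$, $2\alpha<1$, I would simply bound $\exp(-2\gamma B r u)\le 1$ and evaluate the remaining integral: with $\rho=1/(2\alpha)>1$,
\begin{equation*}
\int_{d^{-2\alpha}}^{1} u^{-\rho}\,\dif u = \frac{d^{2\alpha-1 +1}-1}{\rho-1}= \frac{d^{1-2\alpha}-1}{\rho-1}\lesssim d^{1-2\alpha}.
\end{equation*}
Hence $\mathscr{F}_{ac}(r)\lesssim c_\beta d^{-2\alpha}$, which matches the size of $\mathscr{F}_0(r)$ in the regime $2\beta>1$ by Proposition~\ref{prop:forcing_point_mass_0}.

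The main content is the asymptotic \eqref{eq:f_ac_asymptotic}, which I would obtain by the change of variables $w=2\gamma B r u$:
\begin{equation*}
\mathscr{F}_{ac}(r)=\frac{c_\beta}{2\alpha}\,d^{-1}(2\gamma B r)^{-1+1/(2\alpha)}\int_{2\gamma B r d^{-2\alpha}}^{2\gamma B r} w^{-1/(2\alpha)}\,e^{-w}\,\dif w.
\end{equation*}
Since $2\alpha>1$, the integrand $w^{-1/(2\alpha)}e^{-w}$ is integrable at both endpoints of $(0,\infty)$ and integrates to $\Gamma(1-1/(2\alpha))$. In the window $\gamma B r\in[M,d^{2\alpha}/M]$ the lower limit is at most $2/M$ and the upper limit is at least $2M$; choosing $M$ large makes both tail contributions smaller than $\epsilon\,\Gamma(1-1/(2\alpha))$, giving the claimed $(1\pm\epsilon)$-approximation by $g(r)$.

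For the two remaining bounds, when $\gamma B r\le\tilde M$ I would again drop the exponential and use that $\int_{d^{-2\alpha}}^1 u^{-1/(2\alpha)}\dif u$ is bounded independent of $d$ under $2\alpha>1$, yielding $\mathscr{F}_{ac}(r)\le C d^{-1}$. When $\gamma B r\ge \tilde M d^{2\alpha}$, I would instead use $u^{-1/(2\alpha)}\le d$ on the integration range, reducing to a pure exponential integral:
\begin{equation*}
\mathscr{F}_{ac}(r)\le \frac{c_\beta}{2\alpha}\int_{d^{-2\alpha}}^{1}e^{-2\gamma B r u}\,\dif u\le \frac{c_\beta}{4\alpha\gamma B r}\,e^{-2\gamma B r d^{-2\alpha}}\lesssim c_\beta d^{-2\alpha},
\end{equation*}
which is of the order of $\mathscr{F}_0(r)$ by Proposition~\ref{prop:forcing_point_mass_0}. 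The only genuinely delicate step is controlling the tail errors in the Gamma-integral approximation uniformly in the admissible window; all other bounds are elementary once the change of variables is in place.
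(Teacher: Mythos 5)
Your proposal is correct and follows essentially the same route as the paper: the $2\beta<1$ and $2\beta>1,\,2\alpha<1$ cases by dropping the exponential, the main asymptotic by comparing with the full Gamma integral (your change of variables $w=2\gamma B r u$ versus the paper's direct comparison with $\int_0^\infty$ is only cosmetic, with the same tail bounds from $\gamma B r\ge M$ and $\gamma B r\le d^{2\alpha}/M$), and the small-/large-$r$ bounds exactly as in the paper (your large-$r$ estimate via $u^{-1/(2\alpha)}\le d$ and $\tfrac{1}{2\gamma Br}\le\tfrac{1}{2\tilde M d^{2\alpha}}$ is in fact written more cleanly than the paper's display). The only blemish is the typo in the intermediate exponent $d^{2\alpha-1+1}$, whose corrected value $d^{1-2\alpha}$ you state immediately after.
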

\begin{proof} We proceed by cases. The case $2\beta < 1$ is immediate as $c_{\beta}$ is only non-zero for $2 \beta > 1$.

\textbf{Case: $2 \beta > 1$ and $2 \alpha < 1$.} In this case, we just bound directly bound $\mathscr{F}_{ac}(r)$. Dropping the exponential, we get
\begin{align*}
    \mathscr{F}_{ac}(r) 
    &
    \le \frac{c_{\beta}}{2 \alpha} \int_{d^{-2\alpha}}^{1} u^{-1/(2\alpha)} d^{-1} \, \dif u
    = \frac{c_{\beta}}{2 \alpha} d^{-1} \left (\frac{d^{1-2\alpha} - d^{1/2 - \alpha}}{\tfrac{1}{2\alpha} - 1 }  \right ) \le \frac{c_{\beta}}{2\alpha (\tfrac{1}{2\alpha} - 1)} \times  d^{-2\alpha}. 
\end{align*}
We know that $\mathscr{F}_0(r) \asymp d^{-2\alpha + \max\{0, 1-2\beta\}}$ and thus the result is shown. 

Next we show \eqref{eq:f_ac_asymptotic}.

\textbf{Case: $2\beta > 1$ and $2 \alpha > 1$.} First, we make the following observation. The integral is
\[
\frac{c_{\beta}}{2 \alpha} \int_0^{\infty} u^{-1/(2\alpha)} d^{-1} \exp(-2 \gamma Bru) \, \dif u = g(r).
\]

Define $C = \frac{c_{\beta}}{2 \alpha}$. Let us consider the following
  \[
  \mathcal{E} \defas \bigg | C \int_{d^{-2\alpha}}^{d^{-\alpha}} e^{-2 \gamma B u r} u^{-1/(2 \alpha)} d^{-1} \, \dif u - C \int_0^\infty  e^{-2 \gamma B u r} u^{-1/(2 \alpha)} d^{-1} \, \dif u \bigg |.
  \]
  First, we see
  \begin{align*}
      \mathcal{E}
      &
      \le \underbrace{\left | C\int_{0}^{d^{-2\alpha}}e^{-2 \gamma B u r} u^{-1/(2 \alpha)} d^{-1} \, \dif u  \right |}_{\mathcal{E}_1} + \underbrace{\left | C\int_{1}^{\infty}e^{-2 \gamma B u r} u^{-1/(2 \alpha)} d^{-1} \, \dif u \right |}_{\mathcal{E}_2}. 
  \end{align*}
  
\textit{Case: $\mathcal{E}_1$.} Suppose $\gamma B r \le 1/M d^{2\alpha}$. Here we can just use directly the $u$ and disregard the exponential: 
\begin{align*}
    \int_{0}^{d^{-2\alpha}} e^{-2 \gamma Bu r} u^{-1/(2\alpha)} d^{-1} \, \dif u 
    &
    \le 
    \int_0^{d^{-2\alpha}} u^{-1/(2\alpha)} d^{-1} \, \dif u
    = \tilde{c} \times d^{-2 \alpha}.
\end{align*}
for some constant $\tilde{c}$. Now we have that
\begin{align*}
    \frac{d^{-2\alpha}}{d^{-1} (\gamma B r)^{-1 + 1/(2\alpha)}} = d^{-2 \alpha +1} (\gamma B r)^{1-1/(2\alpha)} \le d^{-2\alpha + 1} (1/M)^{1-1/(2\alpha)} d^{2\alpha - 1} = M^{-1 + 1/(2\alpha)}
\end{align*}
By choosing $M$ large, this can be made small. 

\textit{Case: $\mathcal{E}_2$.} Suppose $\gamma B r \ge M$. Let us consider
\begin{align*}
\mathcal{E}_2 \le C  \int_{1}^{\infty} d^{-1} e^{-2 \gamma B u r} \, \dif u \le d^{-1} (\gamma Br)^{-1} \exp( -2 \gamma Br ).
\end{align*}
It follows that 
\begin{align*}
    \frac{ d^{-1}(\gamma Br)^{-1} \exp( -2 \gamma Br ) }{ d^{-1} (\gamma B r)^{-1 + 1/(2\alpha)} } 
    &
    = (\gamma B r)^{-1/(2\alpha)} \exp( -2 \gamma B rd^{-\alpha} )\\
    &
    \le  \exp( -2 M ) (M )^{-1/(2\alpha)} = \exp(-2M) M^{-1/(2\alpha)}.
\end{align*}
Therefore, by choosing $M$ large, we have that this can be small. This proves \eqref{eq:f_ac_asymptotic}.

To finish the proof, let us first suppose that $\gamma B r \le \tilde{M}$. Then we have that
\begin{align*}
    \mathscr{F}_{ac}(r) \le \frac{c_{\beta}}{2\alpha} \int_{d^{-2\alpha}}^{1}  u^{-1/(2\alpha)} d^{-1} \, \dif u \lesssim d^{-1}. 
\end{align*}
When $\gamma B r \ge \tilde{M} d^{2\alpha}$, we have that
\begin{align*}
\mathscr{F}_{ac}(r) \lesssim \int_{d^{-2\alpha}}^{1} \exp(-2 \gamma Br u) \, \dif u \le d^{-2\alpha} \exp(-2 \gamma B \tilde{M} d^{2\alpha} d^{-2\alpha} ) \lesssim \mathscr{F}_0(r). 
\end{align*}
\end{proof}

\subsection{Kernel function asymptotic.}

We recall the term $\mathscr{K}_{pp}$ defined as 
\[
\mathscr{K}_{pp}(r) = \frac{\gamma^2B}{2\alpha} \int_{0}^1 u^{1-1/(2\alpha)} \exp( -2\gamma B u r) \dif u
\]
We now give an asymptotic for such a function.

\begin{proposition}[$\mathscr{K}_{pp}$ asymptotic] \label{prop:kernel_asymptotic}
Suppose $\alpha > 1/4$. For any $\epsilon > 0$, there is an $M > 0$ so that for $\gamma B r \ge M$, 
\begin{equation} \label{eq:K_pp_asyptotic}
|\mathscr{K}_{pp}(r)-g(r)| \le \epsilon \times g(r)
\end{equation}
where 
\[
g(r) \defas (2\alpha)^{-1} \gamma^2 B (2\gamma B)^{-2 + 1/(2\alpha)} \times \Gamma \big (2-\tfrac{1}{2\alpha} \big ) \times r^{-2 + 1/(2\alpha)}.
\]
Moreover, for any $\tilde{M} > 0$, there exists constants $c, C, \hat{C} > 0$, such that when $2 \alpha > 1$,
\[
c \le \mathscr{K}_{pp}(r) \le C, \quad \text{if $\gamma B r \le \tilde{M}$} 
\]
and when $2\alpha < 1$, 
\[
\mathscr{K}_{pp}(r) \le \hat{C} \times d^{2\alpha -1}, \quad \text{if $\gamma B r \le \tilde{M}$}. 
\]
Furthermore, for any $\tilde{M} > 0$, there exist a constant $\tilde{C} > 0$, such that 
\[
\mathscr{K}_{pp}(r) \le \tilde{C} \times \mathscr{F}_0(r), \quad \text{ if $\gamma B r \ge \tilde{M} d^{2\alpha}$. }
\]

\end{proposition}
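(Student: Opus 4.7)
\medskip

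\noindent\textbf{Proof Proposal for Proposition~\ref{prop:kernel_asymptotic}.} The plan is to mimic the proof of Proposition~\ref{prop:forcing_pure_point}, working with the defining integral
\[
\mathscr{K}_{pp}(r) = \frac{\gamma^2 B}{2\alpha}\int_0^1 u^{1-1/(2\alpha)} \exp(-2\gamma B r u)\, \dif u
\]
and making the change of variables $w = 2\gamma B r u$, which produces
\[
\mathscr{K}_{pp}(r) = \frac{\gamma^2 B}{2\alpha}\,(2\gamma B r)^{-2 + 1/(2\alpha)} \int_0^{2\gamma B r} w^{1-1/(2\alpha)} e^{-w}\, \dif w.
\]
The condition $\alpha > 1/4$ is exactly what is needed for $\int_0^{\infty} w^{1-1/(2\alpha)} e^{-w}\, \dif w = \Gamma(2 - \tfrac{1}{2\alpha})$ to converge (we need $1 - 1/(2\alpha) > -1$), and this identifies the proposed limit $g(r)$.

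For the first claim \eqref{eq:K_pp_asyptotic}, I would simply estimate the tail: given $\epsilon > 0$, pick $M$ so large that $\int_{2M}^{\infty} w^{1-1/(2\alpha)} e^{-w}\, \dif w \le \epsilon\, \Gamma(2-\tfrac{1}{2\alpha})$. Then for all $\gamma B r \ge M$,
\[
|\mathscr{K}_{pp}(r) - g(r)| \le \frac{\gamma^2 B}{2\alpha}(2\gamma B r)^{-2+1/(2\alpha)} \!\!\int_{2\gamma B r}^{\infty}\!\! w^{1-1/(2\alpha)} e^{-w}\, \dif w \le \epsilon\, g(r),
\]
which is the desired relative bound.

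For the small-$r$ bounds with $\gamma B r \le \tilde M$, I would drop the exponential from above (it is $\le 1$) and bound it from below by $e^{-2\tilde M}$, so that
\[
\frac{\gamma^2 B\, e^{-2\tilde M}}{2\alpha(2 - 1/(2\alpha))} \le \mathscr{K}_{pp}(r) \le \frac{\gamma^2 B}{2\alpha(2 - 1/(2\alpha))}.
\]
When $2\alpha > 1$, the learning rate $\gamma$ is constant, so both sides are constants and give the two-sided bound. When $2\alpha < 1$, $\gamma \sim d^{2\alpha-1}$ so $\gamma^2 \sim d^{4\alpha-2}$; since $4\alpha - 2 \le 2\alpha - 1$ for $\alpha \le 1/2$, this is bounded by $\hat C\, d^{2\alpha-1}$, as claimed (the bound is not tight, but it suffices).

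The main obstacle is the last claim: comparing $\mathscr{K}_{pp}(r)$ to $\mathscr{F}_0(r) \asymp d^{-2\alpha + \max\{0, 1-2\beta\}}$ (from Proposition~\ref{prop:forcing_point_mass_0}) when $\gamma B r \ge \tilde M d^{2\alpha}$, since the correct scaling must be verified across all four sub-cases ($2\alpha$ above/below the high-dim line, $2\beta$ above/below $1$). Using monotonicity of $\mathscr{K}_{pp}$ in $r$ together with the asymptotic already proved, evaluating at $r = \tilde M d^{2\alpha}/(\gamma B)$ yields
\[
\mathscr{K}_{pp}(r) \lesssim \gamma^{1/(2\alpha)} B^{-1 + 1/(2\alpha)}\, d^{-4\alpha + 1}.
\]
In the high-dimensional regime ($2\alpha > 1$) with $\gamma$ constant, the bound reduces to $d^{-4\alpha+1} \le d^{-2\alpha + \max\{0,1-2\beta\}}$, which is equivalent to $\max\{0, 1-2\beta\} \ge 1-2\alpha$ and holds because $\alpha > 1/2$; in Phase Ia the stronger inequality $2\alpha \ge 2\beta$ suffices. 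Below the high-dimensional line, substituting $\gamma \sim d^{2\alpha-1}$ gives $\gamma^{1/(2\alpha)} \sim d^{1 - 1/(2\alpha)}$ and the needed comparison reduces to $(2\alpha - 1)^2 \ge 0$ after algebra, so it always holds. Compiling these elementary case checks produces the constant $\tilde C > 0$ independent of $d$.
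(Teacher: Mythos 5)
Your proposal is correct and takes essentially the same route as the paper, which proves this proposition by invoking verbatim the argument for $\mathscr{F}_{pp}$ (Proposition~\ref{prop:forcing_pure_point}): the change of variables $w = 2\gamma B r u$ with a tail estimate for the Gamma integral, dropping/lower-bounding the exponential when $\gamma B r \le \tilde M$, and monotonicity of $g$ together with the scaling of $\gamma$ for the comparison with $\mathscr{F}_0$ when $\gamma B r \ge \tilde M d^{2\alpha}$. One minor slip: evaluating $g$ at $\gamma B r = \tilde M d^{2\alpha}$ actually gives $\asymp \gamma^2 B\, d^{-4\alpha+1}$ rather than $\gamma^{1/(2\alpha)} B^{-1+1/(2\alpha)} d^{-4\alpha+1}$, but since $\gamma^2 B \lesssim \gamma^{1/(2\alpha)} B^{-1+1/(2\alpha)}$ for $\gamma \le 1$, $\alpha > 1/4$ and $B$ of order one, your displayed bound remains valid and the case checks go through (indeed more easily with the sharper prefactor, which yields $d^{-1}$ below the high-dimensional line).
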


\begin{proof} The first part of the argument, \eqref{eq:K_pp_asyptotic}, follows immediately from the proof of $\mathscr{F}_{pp}$, Prop.~\ref{prop:forcing_pure_point}. 

If $2\alpha > 1$, then we always have $\gamma^2 B$ is constant order. Therefore, using the same argument as $\mathscr{F}_{pp}$ (see Prop. ~\ref{prop:forcing_pure_point}), we get that there exists constant $c, C > 0$ such that $c \le \mathscr{K}_{pp}(r) \le C$ for $\gamma B r \le \tilde{M}$.

If $2 \alpha < 1$, then $\gamma \asymp d^{2\alpha -1}$. Therefore, for $\gamma B r \le \tilde{M}$, we have that
\begin{align*}
    \frac{\gamma^2B}{2\alpha} \int_{0}^1 u^{1-1/(2\alpha)} \exp( -2\gamma B u r) \dif u \asymp d^{2\alpha -1} \frac{\gamma B}{2\alpha} \int_{0}^1 u^{1-1/(2\alpha)} \exp( -2\gamma B u r) \dif u \le d^{2\alpha -1} C. 
\end{align*}
The later inequality follows using the same bounding argument as $\mathscr{F}_{pp}(r)$. 

As $\gamma^2 B \le C$, then the same argument in $\mathscr{F}_{pp}(r)$ shows for $\gamma B r \ge \tilde{M} d^{2\alpha}$ that $\mathscr{K}_{pp}(r) \le \tilde{C} \mathscr{F}_0(r). $
\end{proof}

We now turn to the last quantity that appears in the Volterra equation. 

\begin{proposition}[Forcing function norm] \label{prop:forcing_function_norm} Provided $2 \beta > 1$, we have that 
\begin{equation} \label{eq:force_norm_1}
\sum_{s=0}^{M d^{2\alpha} / (\gamma B)} \mathscr{F}(s) \lesssim \frac{1}{\gamma B},
\end{equation}
for some constant $M > 0$. 

Next, suppose $2\beta < 1$. Then there exists an $\tilde{M}, M > 0$ such that 
\begin{equation} \label{eq:forcing_norm_2}
\mathscr{K}(r) \times \sum_{s = \tilde{M} / (\gamma B)}^{M d^{2\alpha} / (\gamma B)} \mathscr{F}(s) \le \mathscr{F}(r) \quad \text{for all $\tilde{M} \le \gamma B r \le M  d^{2\alpha}$},
\end{equation}
and it follows for all $ \gamma B r \le Md^{2\alpha}$,
\begin{equation} \label{eq:forcing_norm_3}
\mathscr{K}(r) \times \sum_{s=0}^{Md^{2\alpha}/(\gamma B)} \mathscr{F}(s) \lesssim \mathscr{F}(r) + \frac{1}{\gamma B} \times \mathscr{K}(r).
\end{equation}
\end{proposition}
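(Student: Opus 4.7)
The plan is to reduce everything to the decomposition $\mathscr{F}(s) \asymp \mathscr{F}_{pp}(s) + \mathscr{F}_{ac}(s) + \mathscr{F}_0(s)$ supplied by Corollary~\ref{cor:F} (valid uniformly for $\gamma B s \ge M$ after absorbing a finite initial segment into a constant) and then evaluate each of the three resulting sums using the asymptotic formulas of Propositions~\ref{prop:forcing_pure_point}, \ref{prop:forcing_absolutely_continuous}, and \ref{prop:forcing_point_mass_0}, together with Riemann-sum-to-integral comparisons (Proposition~\ref{prop:trapezoid_rule}). The finitely many ``small $s$'' terms where $\gamma B s \le M$ contribute at most $\mathscr{F}(0) \cdot M/(\gamma B) \lesssim 1/(\gamma B)$ in every case, since $\mathscr{F}(0) = \sum j^{-2\alpha-2\beta}$ is bounded under the standing assumption $2\alpha+2\beta>1$.

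For \eqref{eq:force_norm_1} (the case $2\beta>1$), I would compute each piece separately. The pure-point exponent $q \defas (1-2\alpha-2\beta)/(2\alpha)$ satisfies $q+1 = (1-2\beta)/(2\alpha) < 0$, so $\int_0^\infty u^q\,du$ converges and yields $\sum \mathscr{F}_{pp}(s) \lesssim 1/(\gamma B)$. For $\mathscr{F}_{ac}$ (nontrivial only when $2\alpha > 1$), the Riemann-sum $\sum_{s\lesssim Md^{2\alpha}/(\gamma B)} d^{-1}(\gamma B s)^{-1+1/(2\alpha)}$ equals $\tfrac{1}{\gamma B}d^{-1}(Md^{2\alpha})^{1/(2\alpha)} \asymp 1/(\gamma B)$, the factor of $d$ from integration at the upper limit cancelling the $d^{-1}$. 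Finally $\mathscr{F}_0 \asymp d^{-2\alpha}$ is constant in $s$, so multiplying by the number of terms $Md^{2\alpha}/(\gamma B)$ again gives $O(1/(\gamma B))$. Summing the three contributions proves \eqref{eq:force_norm_1}.

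For \eqref{eq:forcing_norm_2} (the case $2\beta<1$), now $\mathscr{F}_{ac}\equiv 0$, but $q+1=(1-2\beta)/(2\alpha) > 0$ so both $\sum \mathscr{F}_{pp}$ and $\sum \mathscr{F}_0$ are dominated by the upper limit of the sum and each equals $\asymp d^{1-2\beta}/(\gamma B)$. I would then write $\mathscr{K}(r)\cdot\sum\mathscr{F}(s)\asymp \gamma (\gamma B r)^{-2+1/(2\alpha)} d^{1-2\beta}$ and match it against $\mathscr{F}(r)$ in two sub-regimes: when $\gamma B r \ge d^{2\alpha}$ (up to a constant) so that $\mathscr{F}(r)\asymp \mathscr{F}_0(r)\asymp d^{1-2\alpha-2\beta}$, the ratio collapses to $\gamma d^{1-2\alpha}\to 0$ using $\gamma$ constant when $2\alpha>1$ and $\gamma\asymp d^{2\alpha-1}$ otherwise; when $\gamma B r < d^{2\alpha}$ so that $\mathscr{F}(r)\asymp \mathscr{F}_{pp}(r)\asymp (\gamma B r)^{(1-2\alpha-2\beta)/(2\alpha)}$, the ratio becomes $\gamma d^{1-2\beta}(\gamma B r)^{-1+\beta/\alpha}$, which is decreasing in $\gamma B r$ when $\beta<\alpha$ and is bounded once $\gamma B r \ge \tilde M$ is chosen large enough relative to the scaling of $\gamma$. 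The corresponding lower bound is trivial: monotonicity of $\mathscr{F}$ gives $\sum \mathscr{F}_{pp}(s) \ge \sum\mathscr{F}_{pp}$ in the upper-limit window, and a matching integral estimate finishes the claim. Finally \eqref{eq:forcing_norm_3} is immediate from \eqref{eq:forcing_norm_2}: split the sum at $\tilde M/(\gamma B)$; the initial segment contributes at most $\mathscr{F}(0) \cdot \tilde M/(\gamma B) \lesssim 1/(\gamma B)$, which multiplied by $\mathscr{K}(r)$ produces the $\mathscr{K}(r)/(\gamma B)$ term on the right-hand side, while the tail is bounded by $\mathscr{F}(r)$ via \eqref{eq:forcing_norm_2}.

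The main obstacle is the $2\beta<1$ sub-regime near the lower end of the range $\gamma B r \approx \tilde M$: there the apparently divergent factor $d^{1-2\beta}$ in $\sum \mathscr{F}(s)$ must be absorbed by $\mathscr{F}(r)$, and this is only possible by exploiting the \emph{product} structure of $\gamma (\gamma B r)^{-1+\beta/\alpha}d^{1-2\beta}$ together with the appropriate scaling of $\gamma$ relative to $d$ on each side of the high-dimensional line. The constants $\tilde M$ and $M$ therefore need to be chosen in a way that also reflects whether we are above or below the line $2\alpha=1$, and care is needed to check that the implicit constants from Corollary~\ref{cor:F}'s approximation (and the associated error terms $\text{error}_{\mathscr{F}}$) do not overwhelm the bounds.
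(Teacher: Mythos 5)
Your handling of the first claim \eqref{eq:force_norm_1} is essentially the paper's argument (term-by-term summation of $\mathscr{F}_{pp}$, $\mathscr{F}_{ac}$, $\mathscr{F}_0$ with Riemann-sum comparisons, absorbing the initial segment $\gamma B s \le \tilde M$ by boundedness of $\mathscr{F}$), and that part is fine. The gap is in your argument for \eqref{eq:forcing_norm_2}. You evaluate the \emph{full} sum up to $M d^{2\alpha}/(\gamma B)$, obtaining $\sum_s \mathscr{F}(s) \asymp d^{1-2\beta}/(\gamma B)$, and then must show $\gamma\, d^{1-2\beta}(\gamma B r)^{-1+\beta/\alpha} \lesssim 1$ uniformly down to $\gamma B r = \tilde M$. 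That inequality is false there: above the high-dimensional line ($\gamma$ constant, $\beta<\tfrac12<\alpha$) the left side at $\gamma B r=\tilde M$ is $\asymp d^{1-2\beta}\,\tilde M^{\beta/\alpha-1}\to\infty$, and below the line ($\gamma\asymp d^{2\alpha-1}$) it is $\asymp d^{2\alpha-2\beta}\,\tilde M^{\beta/\alpha-1}$, which still diverges whenever $\alpha>\beta$. You correctly flag this regime as "the main obstacle," but your proposed cure — choosing the constants $\tilde M, M$ suitably — cannot work, because the obstruction is a positive power of $d$, not a constant; no $d$-independent threshold absorbs it. (Indeed, with the sum taken all the way to $Md^{2\alpha}/(\gamma B)$, the displayed inequality simply fails for $r$ near the lower end of the window, so this route cannot be completed as stated.)

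What the paper actually proves — and what is used downstream in the proof of Theorem~\ref{thm:approx_sol_Volterra_appendix}, where the convolution is split at $r/2$ and only $\sum_{s\le r/2}\mathscr{F}(s)$ ever appears — is the \emph{partial-sum} bound: $\mathscr{K}(r)\sum_{s=\tilde M/(\gamma B)}^{r}\mathscr{F}(s)\lesssim \mathscr{F}_{pp}(r)$ for $\tilde M\le \gamma B r\le M d^{2\alpha}$. With the upper limit $r$ instead of $Md^{2\alpha}/(\gamma B)$, one gets $\sum_{s\le r}\mathscr{F}_{pp}(s)\lesssim \tfrac{1}{\gamma B}(\gamma B r)^{(1-2\beta)/(2\alpha)}$ and $\sum_{s\le r}\mathscr{F}_0(s)\lesssim \tfrac{\gamma B r}{\gamma B}\, d^{1-2\alpha-2\beta}$, and after multiplying by $\mathscr{K}(r)\asymp \gamma^2 B(\gamma B r)^{-2+1/(2\alpha)}$ both products are dominated by $\mathscr{F}_{pp}(r)$ exactly because $\gamma B r\lesssim d^{2\alpha}$ (using $\gamma\asymp d^{2\alpha-1}$ below the line and $\gamma$ constant above it). Your derivation of \eqref{eq:forcing_norm_3} from \eqref{eq:forcing_norm_2} by splitting off the initial segment is the paper's step and is fine, but it inherits the gap: you need the partial-sum version of \eqref{eq:forcing_norm_2}, not the full-sum computation you set up.
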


\begin{proof} \underline{Suppose $2\beta > 1$.} First note that in this region $\sum_{s =0}^{Md^{2\alpha}/(\gamma B)} \mathscr{F}_0(r) \lesssim \frac{1}{\gamma B}$ for any fixed $M > 0$, Proposition~\ref{prop:forcing_point_mass_0}.

Next, let us consider the pure point part of $\mathscr{F}$, i.e., $\mathscr{F}_{pp}$. Choose $M$ and $\tilde{M}$ so that $\mathscr{F}_{pp}$ is behaving like the asymptotic in Proposition~\ref{prop:forcing_pure_point}, i.e., for $\gamma B r \ge \tilde{M}$,
\[
\mathscr{F}_{pp}(r) \asymp (\gamma B r)^{-(1 + \beta/\alpha) + 1/(2\alpha)}. 
\]
and for $\gamma B r \le \tilde{M}$, $\mathscr{F}_{pp}(r) \le C$ for some constant $C > 0$ independent of $d$. It follows then that
\[
\sum_{r=0}^{\tilde{M} / (\gamma B)} \mathscr{F}_{pp}(r) \lesssim \frac{1}{\gamma B}. 
\]
To handle the rest of the sum, we see that 
\begin{align*}
\sum_{r =\tilde{M}/(\gamma B) }^{M d^{2\alpha}/(\gamma B)} \mathscr{F}_{pp}(r)
&
\asymp
\sum_{r =\tilde{M}/(\gamma B) }^{M d^{2\alpha}/(\gamma B)} (\gamma B r)^{-(1+ \beta / \alpha) + 1/(2 \alpha) } 
\asymp \frac{1}{\gamma B}\sum_{r= \tilde{M}}^{M d^{2\alpha} } r^{-(1 + \beta/\alpha) + 1/(2\alpha)}\\
&
= \frac{ (d^{2\alpha})^{-\beta / \alpha + 1/(2\alpha)}}{\gamma B} \times \frac{1}{d^{2\alpha}} \sum_{r= \tilde{M}}^{M d^{2\alpha}} \big ( \frac{r}{d^{2\alpha}} \big )^{-(1 + \beta / \alpha) + 1/(2\alpha)}
\\
&
\le \frac{(d^{2\alpha})^{-\beta / \alpha + 1/(2\alpha)} M}{\gamma B} \int_{\tilde{M} / d^{2\alpha}}^{M + \tilde{M}/d^{2\alpha}} x^{-(1 + \beta/\alpha) + 1/(2\alpha)} \, \dif x
\\
&
=
 \frac{(d^{2\alpha})^{-\beta / \alpha + 1/(2\alpha)} M 2 \alpha}{\gamma B} \times  \frac{x^{-\beta / \alpha + 1/(2\alpha)}}{1-2\beta} \bigg |_{\tilde{M}/d^{2\alpha}}^{M + \tilde{M}/d^{2\alpha}}
 \\
 &
 \lesssim
 \frac{1}{\gamma B}. 
\end{align*}
Here we use that the Riemann sum approximation with $a = \tfrac{\tilde{M}}{d^{2\alpha}}$, $b = M + \tfrac{\tilde{M}}{d^{2\alpha}}$, $n = Md^{2\alpha} $ and $f(x) = x^{-(1 + \beta/\alpha) + 1/(2\alpha)}$. 

Using a similar argument for $\mathscr{F}_{ac}(r)$ (Proposition~\ref{prop:forcing_absolutely_continuous})when $2\alpha > 1$ (otherwise we do not need to worry about $\mathscr{F}_{ac}$), we have that 
\begin{align*}
    \sum_{r = \tilde{M}/(\gamma B)}^{M d^{2\alpha} / (\gamma B)} \mathscr{F}_{ac}(r) &
    \asymp \frac{d^{-1}}{\gamma B} \sum_{r = \tilde{M}}^{M d^{2\alpha}} s^{-1 + 1/(2\alpha)} = \frac{M}{\gamma B} d^{-2\alpha} \sum_{\tilde{M}}^{M d^{2\alpha}} \bigg ( \frac{s}{d^{2\alpha}} \bigg )^{-1 + 1/(2\alpha)}\\
    &
    \lesssim \frac{M}{\gamma B} \int_0^{M} x^{-1 + 1/(2\alpha)} \, \dif x \asymp \frac{1}{\gamma B}. 
\end{align*}
When $r \le \tilde{M}/(\gamma B)$, we have that $\mathscr{F}_{ac}(r) \lesssim d^{-1}$. Hence, $\sum_{r=0}^{\tilde{M}/(\gamma B)} \mathscr{F}_{ac}(r) \lesssim \frac{1}{\gamma B}$. 

The first result, \eqref{eq:force_norm_1}, then follows from Corollary~\ref{cor:F}.\\

\underline{Consider $2\beta < 1$ and $2 \alpha < 1$.} We do not need to worry about $\mathscr{F}_{ac}$ in this region because it does not exist in this region. Choose $M$ and $\tilde{M}$ so that both $\mathscr{K}_{pp}$ and $\mathscr{F}_{pp}$ are in their asymptotic regions and, using Proposition~\ref{prop:K},  $\mathscr{K}(r) \asymp \mathscr{K}_{pp}(r)$. Using a similar argument as above, we estimate the summation of $\mathscr{F}_{pp}$ as an integral. For any $r \in [\tilde{M}/(\gamma B), Md^{2\alpha}/(\gamma B)]$
\begin{align*}
    \sum_{s = \tilde{M}/(\gamma B)}^{r} \mathscr{F}_{pp}(s) 
    &
    \lesssim  \frac{(d^{2\alpha})^{-\beta / \alpha + 1/(2\alpha)} M 2 \alpha}{\gamma B} \times  \frac{x^{-\beta / \alpha + 1/(2\alpha)}}{1-2\beta} \bigg |_{\tilde{M}/d^{2\alpha}}^{r \gamma B /d^{2\alpha} + \tilde{M}/d^{2\alpha}}
    \\
    &
    \lesssim \frac{1}{\gamma B} ( \gamma B r)^{-\beta / \alpha + 1/(2\alpha)}.
\end{align*}
Using the asymptotic for $\mathscr{K}_{pp}$ (Proposition~\ref{prop:kernel_asymptotic}), 
\begin{align*}
    \mathscr{K}(r) \times \sum_{s = \tilde{M}/(\gamma B)}^{r} \mathscr{F}_{pp}(s) \lesssim \gamma \times (\gamma B r)^{-\beta/\alpha + 1/(2\alpha)} \times (\gamma B r)^{-2 + 1/(2\alpha)}.
\end{align*}
We will show that this is less than $\mathscr{F}_{pp}(r)$. Using the asymptotic for $\mathscr{F}_{pp}(r)$ (Proposition~\ref{prop:forcing_pure_point}), let us suppose
\begin{align*}
    \gamma \times (\gamma B r)^{-\beta/\alpha + 1/(2\alpha)} \times (\gamma B r)^{-2 + 1/(2\alpha)} 
    &
    \le
    (\gamma B r)^{-1 - \beta/\alpha + 1/(2\alpha)}\\
    \Leftrightarrow \quad \gamma 
    &
    \le (\gamma B r)^{1-1/(2\alpha)}.
\end{align*}
In this region, the learning rate is $\gamma \asymp d^{2\alpha -1}$. Thus, we see that
\begin{align*}
    d^{2\alpha -1} 
    &
    \le
    (\gamma B r)^{(2\alpha-1)/(2\alpha)}\\
    \Leftrightarrow \quad d^{(2\alpha -1) \tfrac{2\alpha}{2\alpha -1}} 
    &
    \ge (\gamma B r)^{\tfrac{2\alpha-1}{2\alpha} \cdot \tfrac{2\alpha}{2\alpha -1}  }
    \\
    \Leftrightarrow \quad d^{2\alpha} 
    &
    \ge (\gamma B r).
\end{align*}
This is true and so we have that
\[
\mathscr{K}(r) \times \sum_{s = \tilde{M}/\gamma B}^{r} \mathscr{F}_{pp}(r) \lesssim \mathscr{F}_{pp}(r), \qquad \text{for all $r \in [ \tilde{M}/(\gamma B), M d^{2\alpha} / (\gamma B)]$.}
\]
For $\mathscr{F}_0$, with $r \in [\tilde{M}/(\gamma B), M d^{2\alpha} /(\gamma B)]$
\[
\sum_{s=\tilde{M}/(\gamma B)}^{r} \mathscr{F}_0 \lesssim (\gamma B r) \times d^{1-2\beta - 2\alpha} \times \frac{1}{\gamma B }
\]
Therefore, we get that 
\begin{align*}
    \mathscr{K}(r) \times \sum_{s=\tilde{M}/(\gamma B)}^{r} \mathscr{F}_0(r) 
    &
    \lesssim (\gamma B r)\times  d^{1-2\beta - 2\alpha} \times \gamma \times (\gamma B r)^{-2 + 1/(2\alpha)}
    \\
    &
    \lesssim d^{-2\beta} (\gamma B r)^{-1 + 1/(2\alpha)}.
\end{align*}
We will show that this is less than $\mathscr{F}_{pp}$. For this, we see 
\begin{align*}
    d^{-2\beta} (\gamma Br)^{-1 + 1/(2\alpha)} 
    &
    \lesssim (\gamma r B)^{-\beta/\alpha - 1 + 1/(2\alpha)} \\
    \Leftrightarrow \quad d^{-2\beta}
    &
    \lesssim (\gamma B r)^{-\beta /\alpha}
    \\
    \Leftrightarrow \quad (\gamma Br)^{\beta / \alpha} 
    &
    \lesssim d^{2
    \beta}
    \\
    \Leftrightarrow \quad (\gamma B r)
    &
    \lesssim d^{2\alpha}.
\end{align*}
Hence, we have that
\[
\mathscr{K}(r) \times \sum_{s=\tilde{M}/(\gamma B)}^{r} \mathscr{F}_0 \le \mathscr{F}_{pp}(r), \quad 
\text{for all $r \in [\tilde{M}/(\gamma B), Md^{2\alpha} / (\gamma B)]$.}
\]
Since there is no $\mathscr{F}_{ac}$ in this region, we immediately get from Corollary~\ref{cor:F}
\[
\mathscr{K}(r) \times \sum_{s = \tilde{M}/(\gamma B)}^r \mathscr{F}(r) \lesssim \mathscr{F}_{pp}(r), \quad \text{for all $r \in [\tilde{M}/(\gamma B), Md^{2\alpha} / (\gamma B)]$.}
\]
For $s \in [0, \tilde{M}/(\gamma B)]$, we have that $\mathscr{F}(s) \lesssim C$. Thus we immediately get that
\[
\mathscr{K}(r) \times \sum_{s=0}^{\tilde{M}/(\gamma B)} \mathscr{F}(s) \lesssim \mathscr{K}(r) \times \frac{1}{\gamma B},
\]
for all $r$. This proves the result for $2\beta < 1$ and $2\alpha < 1$.

\underline{Consider $2\beta < 1$ and $2 \alpha > 1$.} As in the previous case, we do not need to consider $\mathscr{F}_{ac}$ as it does not exist here. The proof will be similar to the previous case. Choose $M$ and $\tilde{M}$ so that both $\mathscr{K}_{pp}$ and $\mathscr{F}_{pp}$ are in their asymptotic regions and, using Proposition~\ref{prop:K},  $\mathscr{K}(r) \asymp \mathscr{K}_{pp}(r)$.

First, by the same argument as in $2\beta <1$ and $2 \alpha > 1$, we immediately have for $s \in [0, \tilde{M}/(\gamma B)]$, we have that $\mathscr{F}(s) \lesssim C$, 
\[
\mathscr{K}(r) \times \sum_{s=0}^{\tilde{M}/(\gamma B)} \mathscr{F}(s) \lesssim \mathscr{K}(r) \times \frac{1}{\gamma B},
\]
for all $r$. 

As before, we have for any $r \in [\tilde{M}/(\gamma B), Md^{2\alpha}/(\gamma B)]$
\begin{align*}
     \mathscr{K}(r) \times \sum_{s = \tilde{M}/(\gamma B)}^{r} \mathscr{F}_{pp}(s) \lesssim \gamma \times (\gamma B r)^{-\beta/\alpha + 1/(2\alpha)} \times (\gamma B r)^{-2 + 1/(2\alpha)}.
\end{align*}
Note here that $\gamma$ is constant. We will show that this is less than $\mathscr{F}_{pp}(r)$. Using the asymptotic for $\mathscr{F}_{pp}(r)$ (Proposition~\ref{prop:forcing_pure_point}), let us suppose
\begin{align*}
   (\gamma B r)^{-\beta/\alpha + 1/(2\alpha)} \times (\gamma B r)^{-2 + 1/(2\alpha)} 
    &
    \le
    (\gamma B r)^{-1 - \beta/\alpha + 1/(2\alpha)}\\
    \Leftrightarrow \quad 0
    &
    \le (\gamma B r)^{1-1/(2\alpha)}.
\end{align*}
Hence, we have that 
\[
\mathscr{K}(r) \times \sum_{s = \tilde{M}/\gamma B}^{r} \mathscr{F}_{pp}(r) \lesssim \mathscr{F}_{pp}(r), \qquad \text{for all $r \in [ \tilde{M}/(\gamma B), M d^{2\alpha} / (\gamma B)]$.}
\]
For $\mathscr{F}_0$, with $r \in [\tilde{M}/(\gamma B), M d^{2\alpha} /(\gamma B)]$
\[
\sum_{s=\tilde{M}/(\gamma B)}^{r} \mathscr{F}_0 \lesssim (\gamma B r) \times d^{1-2\beta - 2\alpha} \times \frac{1}{\gamma B }.
\]
Therefore, we get that 
\begin{align*}
    \mathscr{K}(r) \times \sum_{s=\tilde{M}/(\gamma B)}^{r} \mathscr{F}_0(r) 
    &
    \lesssim (\gamma B r)\times  d^{1-2\beta - 2\alpha} \times (\gamma B r)^{-2 + 1/(2\alpha)}
    \\
    &
    \lesssim d^{-2\alpha + 1 -2\beta} (\gamma B r)^{-1 + 1/(2\alpha)}.
\end{align*}
We will show that this is less than $\mathscr{F}_{pp}$. For this, we see 
\begin{align*}
    d^{-2\alpha + 1-2\beta} (\gamma Br)^{-1 + 1/(2\alpha)} 
    &
    \lesssim (\gamma r B)^{-\beta/\alpha - 1 + 1/(2\alpha)} \\
    \Leftrightarrow \quad d^{-2\alpha+1-2\beta}
    &
    \lesssim (\gamma B r)^{-\beta /\alpha}
    \\
    \Leftrightarrow \quad (\gamma Br)^{\beta / \alpha} 
    &
    \lesssim d^{2
    \beta + 2\alpha -1}.
\end{align*}
Now we see that $(\gamma B r)^{\beta/\alpha} \lesssim d^{2\beta} \lesssim d^{2\beta + 2\alpha -1}$. 
Hence, we have that
\[
\mathscr{K}(r) \times \sum_{s=\tilde{M}/(\gamma B)}^{r} \mathscr{F}_0 \le \mathscr{F}_{pp}(r), \quad 
\text{for all $r \in [\tilde{M}/(\gamma B), Md^{2\alpha} / (\gamma B)]$.}
\]
The result is thus shown in this case. 
\end{proof}

\section{Optimizing over batch and learning rate} \label{sec:optimal_batch_learning_rate}

The previous sections use batch size $B = 1$ and the maximal learning rate allowed. In this section, we consider optimizing compute-optimal curves with respect to batch size and learning rate, i.e., find $d^{\star}, \gamma^{\star}, B^{\star}$ such that 
\begin{equation}\label{eq:optimal_gamma_loss}
(d^{\star}, \gamma^{\star}, B^{\star}) \in \argmin_{d,\gamma, B} \in \argmin \mathscr{P}(\tfrac{\f}{d B}, d, \gamma) \quad \text{s.t. $\gamma B < 1$ and $\|\mathscr{K}_{pp}\| < 1$.}  
\end{equation}
\subsection{Optimal batch size} 
We see that batch essentially scales out of the problem and therefore the batch has no effect on the compute-optimal curves. To see this, we observe from Table~\ref{table:forcing function_appendix} that
\begin{equation}
\begin{aligned} \label{eq:lr_batch_functions}
    & 
    \mathscr{F}_{pp}(r) \asymp (\gamma B r)^{-(1 + \beta/\alpha) + 1/(2\alpha)} \quad \Rightarrow \quad \mathscr{F}_{pp}(\tfrac{\f}{d B}) \asymp ( \tfrac{\gamma \f}{d})^{-(1+\beta/\alpha) + 1/(2\alpha)} \\
    &
    \mathscr{F}_{ac}(r) \asymp (\gamma B r)^{-1 + 1/(2\alpha)} \times d^{-1} \quad \Rightarrow \quad \mathscr{F}_{ac}(\tfrac{\f}{dB}) \asymp ( \tfrac{\gamma \f}{d} )^{-1 + 1/(2\alpha)} \times d^{-1}\\
    &
    \tfrac{1}{\gamma B} \mathscr{K}_{pp}(r) \asymp \gamma ( \gamma B r)^{-2 + 1/(2\alpha)} \quad \Rightarrow \quad \tfrac{1}{\gamma B} \mathscr{K}_{pp}( \tfrac{\f}{d B} ) \asymp \gamma ( \tfrac{\gamma \f}{d})^{-2 + 1/(2\alpha)}.
\end{aligned}
\end{equation}
It immediately follows that the batch size has no effect on the compute-optimal curves. Therefore any batch size (e.g., $B = 1$) that satisfies the necessary and sufficient condition for convergence (Prop.~\ref{prop: sufficient_conditions_learning_rate}), that is, $\gamma (B+1) < 2$, will yield the same compute-optimal curves. 

This is not necessarily true for the learning rate as we will see in the next section.

\subsection{Optimal learning rate}

Without loss of generality, we let the batch size $B = 1$. From the expressions in \eqref{eq:lr_batch_functions}, we see that $\mathscr{F}_{pp}$ and $\mathscr{F}_{ac}$ are monotonically decreasing in learning rate $\gamma$. Moreover in Phase III, $\tfrac{1}{\gamma B} \mathscr{K}_{pp}$, is also monotonically decreasing in the learning rate. Therefore, in Phases I, II, and III, the optimal learning rate choice is to choose $\gamma$ maximally. In the cases of Phase Ia, II, and III, this would mean $\gamma$ constant (see Prop~\ref{prop: sufficient_conditions_learning_rate}) and in Phase Ib, Ic, $\gamma \sim d^{2\alpha - 1}$. It remains to understand the effect of the learning rate in Phase IV. 

From Proposition~\ref{prop:phase_4_loss_formula}, we know that the loss is given by 
\begin{gather*}
    \mathscr{P}(\tfrac{\f}{d}, d, \gamma) \asymp \mathscr{F}_0(\tfrac{\f}{d}) + \mathscr{F}_{pp}(\tfrac{\f}{d}) + \tfrac{1}{\gamma B} \mathscr{K}_{pp}(\tfrac{\f}{d}) \asymp d^{-2\alpha} + (\tfrac{\gamma \f}{d})^{\rho} + \gamma ( \tfrac{\gamma \f}{d} )^{-2 + 1/(2\alpha)},
    \\  
    \text{where $\rho \defas \tfrac{1}{2\alpha} - \tfrac{\beta}{\alpha} - 1$.}
\end{gather*}
By taking derivatives, we see that \[
\gamma^{\star} \asymp (\tfrac{\f}{d^{\star}})^{\alpha/\beta -1} \quad \text{and} \quad d^{\star}(\f) \asymp \f^{\rho/(\rho - 2\beta)}. 
\]
We need to check that $\gamma^{\star}$ is feasible, i.e., $\gamma^{\star} < 1$ (which it is) and $\gamma^{\star} < d^{2\alpha -1}$. For the later, a simple check shows that 
\[
\gamma^{\star} \asymp d^{\tfrac{4\alpha^2 - 4\alpha \beta}{2\alpha + 2\beta -1} } < d^{2\alpha -1}
\]
when $\alpha > 1/4$ and $2\beta > 1$, i.e., precisely Phase IV. The compute-optimal curve in Phase IV with optimal stepsize is the following. 

\begin{proposition}[Phase IV, optimal $\gamma$, compute-optimal curve] \label{prop:optimal_learning_rate}  Suppose $1/4 < \alpha < 1/2$ and $2\beta > 1$. Then 
\begin{align*}
    \gamma^{\star} \asymp \f^{\tfrac{4\alpha (\alpha - \beta)}{4\alpha \beta + 2\alpha + 2\beta -1}}, \quad d^{\star}(\f) \asymp \f^{\tfrac{2\alpha+2\beta -1}{4\alpha \beta + 2\alpha + 2 \beta -1}}, \quad \text{and} \quad \mathscr{P}^{\star}(\f) \asymp \f^{\tfrac{-2\alpha(2\alpha+2\beta -1)}{4\alpha \beta + 2\alpha + 2 \beta -1} }. 
\end{align*}
The trade off occurring where $\tfrac{1}{\gamma B} \mathscr{K}_{pp} = \mathscr{F}_0$.
\end{proposition}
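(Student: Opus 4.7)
By Proposition~\ref{prop:phase_4_loss_formula} applied in Phase IV, and by the asymptotics collected in Table~\ref{table:forcing function_appendix}, the loss admits (up to constants depending only on $\alpha,\beta$) the three-term representation
\[
\mathscr{P}\!\left(\tfrac{\f}{d},d,\gamma\right)\asymp d^{-2\alpha}+\left(\tfrac{\gamma\f}{d}\right)^{\rho}+\gamma\left(\tfrac{\gamma\f}{d}\right)^{-2+1/(2\alpha)},\qquad \rho\defas \tfrac{1}{2\alpha}-\tfrac{\beta}{\alpha}-1,
\]
valid in the iteration window where the asymptotics apply (which in terms of $(\gamma,d,\f)$ is $M\leq \gamma\f/d\leq d^{2\alpha}/M$). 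The plan is to optimize this expression over $(\gamma,d)$ subject to the feasibility constraint $\gamma\lesssim d^{2\alpha-1}$ (Prop.~\ref{prop: sufficient_conditions_learning_rate}), show the optimum is interior, and verify the resulting trade off is $\tfrac{1}{\gamma B}\mathscr{K}_{pp}=\mathscr{F}_0$.

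First I would introduce the single variable $u\defas \gamma\f/d$ (so $\gamma=ud/\f$) and rewrite
\[
\mathscr{P}\asymp d^{-2\alpha}+u^{\rho}+u^{-1+1/(2\alpha)}\,d/\f.
\]
Since $\rho<0$ and $-1+1/(2\alpha)>0$ (as $2\alpha<1$), the minimum in $(u,d)$ is interior. The first-order conditions read
\[
\partial_d\mathscr{P}=0:\quad 2\alpha\, d^{-2\alpha-1}\asymp u^{-1+1/(2\alpha)}/\f,\qquad
\partial_u\mathscr{P}=0:\quad |\rho|\,u^{\rho-1}\asymp \tfrac{1-2\alpha}{2\alpha}\,u^{-2+1/(2\alpha)}\,d/\f.
\]
The first FOC says precisely $\mathscr{F}_0\asymp \tfrac{1}{\gamma B}\mathscr{K}_{pp}$, while the second says $\mathscr{F}_{pp}\asymp \tfrac{1}{\gamma B}\mathscr{K}_{pp}$; thus at the optimum all three terms are balanced, and in particular the tradeoff stated in the proposition holds.

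Next I would solve the FOC system. The first gives $u^{(1-2\alpha)/(2\alpha)}\asymp d^{-2\alpha-1}\f$ and the second simplifies (using $\rho+1-1/(2\alpha)=-\beta/\alpha$) to $u\asymp (\f/d)^{\alpha/\beta}$. Substituting the second into the first and solving for $d$ in terms of $\f$ yields
\[
d^{\star}\asymp \f^{(2\alpha+2\beta-1)/(4\alpha\beta+2\alpha+2\beta-1)},
\]
and then $\gamma^{\star}=u^{\star}d^{\star}/\f\asymp (\f/d^{\star})^{\alpha/\beta-1}$ simplifies to the claimed exponent $4\alpha(\alpha-\beta)/(4\alpha\beta+2\alpha+2\beta-1)$. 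The compute-optimal value is obtained from $\mathscr{P}^{\star}\asymp (d^{\star})^{-2\alpha}$, giving the stated $\f$-exponent.

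Finally, the main (minor) obstacle is verifying feasibility: one must check that the minimizer lies in the regime where the asymptotic representation of $\mathscr{P}$ is valid, and that $\gamma^{\star}$ satisfies the stability bound $\gamma^{\star}\lesssim (d^{\star})^{2\alpha-1}$. A direct calculation (already carried out in the discussion preceding the proposition) shows $\gamma^{\star}\asymp d^{\,4\alpha(\alpha-\beta)/(2\alpha+2\beta-1)}$, and since $\alpha>1/4$ and $2\beta>1$, a short algebraic manipulation gives $4\alpha(\alpha-\beta)/(2\alpha+2\beta-1)<2\alpha-1$, so feasibility holds. One must also check $\gamma^{\star}\f/d^{\star}\in [M,(d^{\star})^{2\alpha}/M]$ for large $\f$; this follows by a polynomial comparison since $\gamma^{\star}\f/d^{\star}\asymp (d^{\star})^{4\alpha\beta/(2\alpha+2\beta-1)}$ which lies strictly between constants and $(d^{\star})^{2\alpha}$ under the Phase IV parameter restrictions. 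With feasibility established, the balancing at the optimum yields the stated $d^{\star}$, $\gamma^{\star}$, and $\mathscr{P}^{\star}$.
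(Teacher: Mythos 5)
Your proposal is correct and follows essentially the same route as the paper: it optimizes the three-term surrogate $d^{-2\alpha}+(\gamma\f/d)^{\rho}+\gamma(\gamma\f/d)^{-2+1/(2\alpha)}$ by first-order conditions in $(\gamma,d)$ (your substitution $u=\gamma\f/d$ is only a cosmetic reparametrization), solves the resulting balance equations to get $d^{\star}$, $\gamma^{\star}$, $\mathscr{P}^{\star}\asymp (d^{\star})^{-2\alpha}$, and verifies $\gamma^{\star}\lesssim (d^{\star})^{2\alpha-1}$ exactly as in the discussion preceding the proposition, with the additional (correct) observation that at the interior optimum all three terms balance, so in particular $\tfrac{1}{\gamma B}\mathscr{K}_{pp}=\mathscr{F}_0$. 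One small slip in your window check: $\gamma^{\star}\f/d^{\star}=u^{\star}\asymp(\f/d^{\star})^{\alpha/\beta}\asymp (d^{\star})^{4\alpha^{2}/(2\alpha+2\beta-1)}$, not $(d^{\star})^{4\alpha\beta/(2\alpha+2\beta-1)}$; with the corrected exponent the conclusion still holds, since $4\alpha^{2}/(2\alpha+2\beta-1)<2\alpha$ is equivalent to $2\beta>1$ (whereas your stated exponent would require $2\alpha>1$, which fails in Phase IV).
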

We note that there is only one Phase IV (and no sub-phases).

\begin{figure}[t!]
     \centering
     \begin{subfigure}[h]{0.45\textwidth}
         \centering
                 \includegraphics[width=\textwidth]{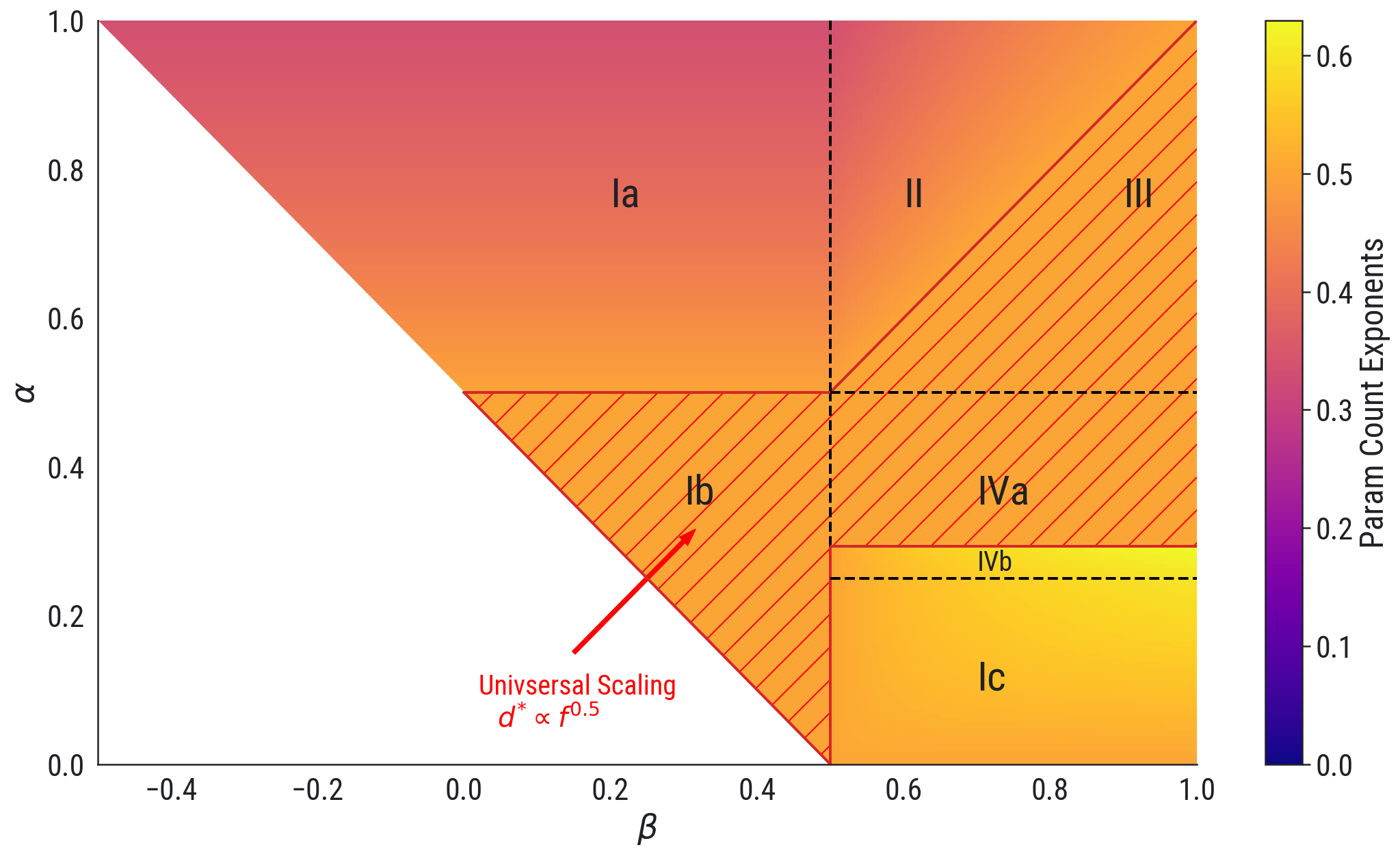}
         \caption{Parameter Count Exponents ($\pce$)}
         \label{fig:appendix-params-1}
     \end{subfigure}
     \hfill
     \begin{subfigure}[h]{0.45\textwidth}
         \centering
          \includegraphics[width=\textwidth]{figures/Oct7th/Scaling_law_exponents.png}
         \caption{Scaling Law Exponents ($\sle$)}
         \label{fig:SL-exponents-appendix}
     \end{subfigure}
     \hfill
        \caption{Theoretical predictions of parameter count and scaling law exponents.}
        \label{fig:parasm-sl-exponents-1201}
\end{figure}

\section{Experimental Results}
\label{sec:experimental_results}

To measure the exponents of the scaling law and parameter count, we follow approach\footnote{We did not use approach 3 in \cite{hoffmann2022chinchilla}, which is more subtle than the other two; see \cite{besiroglu2024chinchilla}.} 1 and 2 from \cite{hoffmann2022chinchilla}. We explain the method below using $(\alpha, \beta)=(0.5, 0.7)$ as an example. 
The theoretical prediction of the scaling law and parameter count exponents for this example are $\sle=0.5$ and $\pce=0.5$, resp. (see Table~\ref{table:phases_intro}).  
We then repeat this procedure for total of 32 pairs of $(\alpha, \beta)$ in the phase diagram; see Fig.~\ref{fig:appendix-compute-optimal-front-all-phases} and Fig.~\ref{fig:appendix-optimal-params-exponents-all-phases}. The theoretical predictions of these two exponents are shown in the heatmaps Fig.~\ref{fig:parasm-sl-exponents-1201}.

First, we run SGD for parameter counts
\[d\in [200, 300, 400, 600, 800, 1200, 1600, 2400, 3200, 4800, 6400,
        9600, 12800].\]
The SGD learning curves for $(\alpha, \beta) = (0.5, 0.7)$ with parameters $d\in [800, 1600, 3200, 6400, 12800]$ are shown in Fig.~\ref{fig:iso-flop-1}.

\begin{figure}[b!]
     \centering
     \begin{subfigure}[h]{0.45\textwidth}
         \centering
         \includegraphics[width=\textwidth]{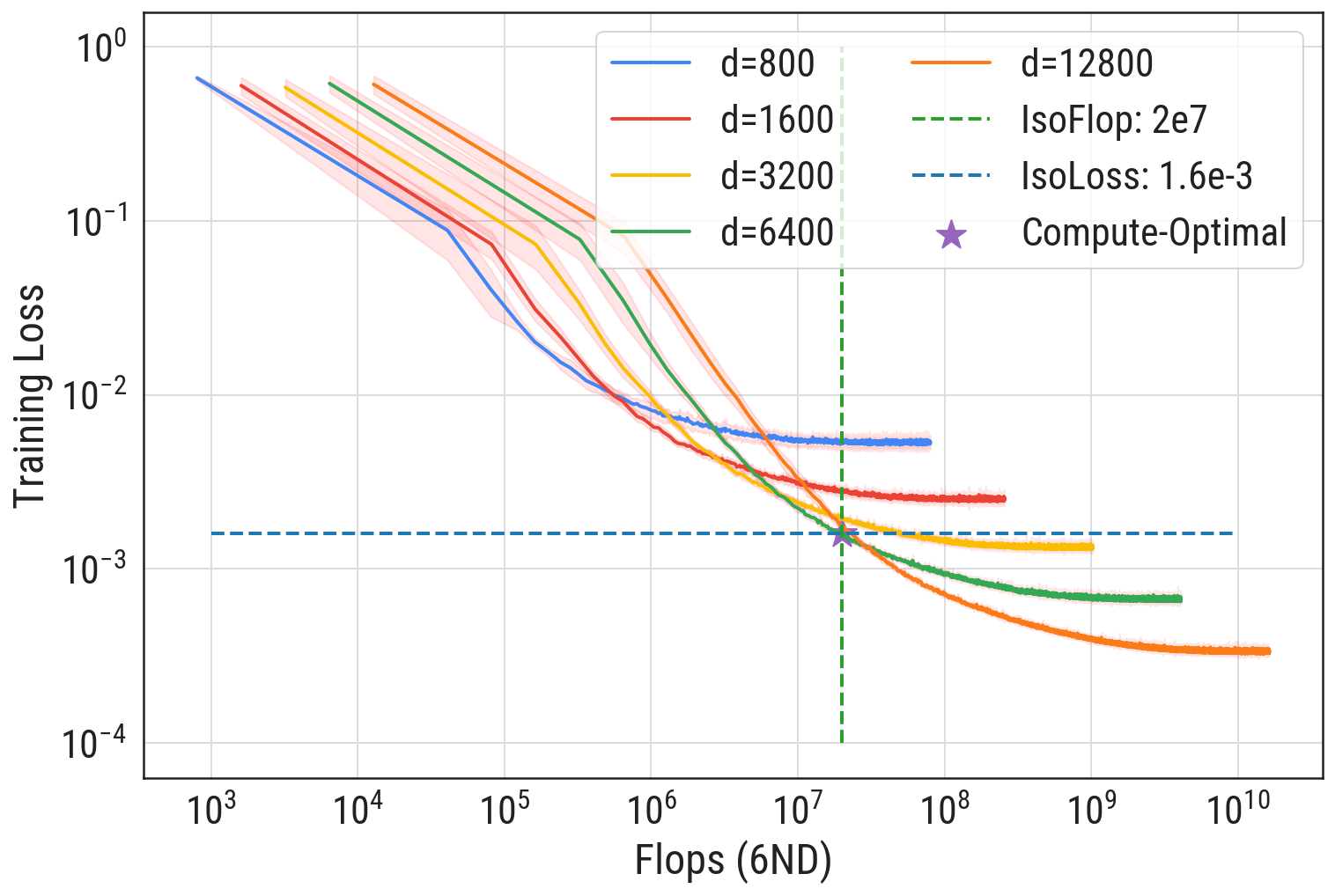}
         \caption{IsoFLOP}
         \label{fig:iso-flop-1}
     \end{subfigure}
     \hfill
     \begin{subfigure}[h]{0.45\textwidth}
         \centering
         \includegraphics[width=\textwidth]{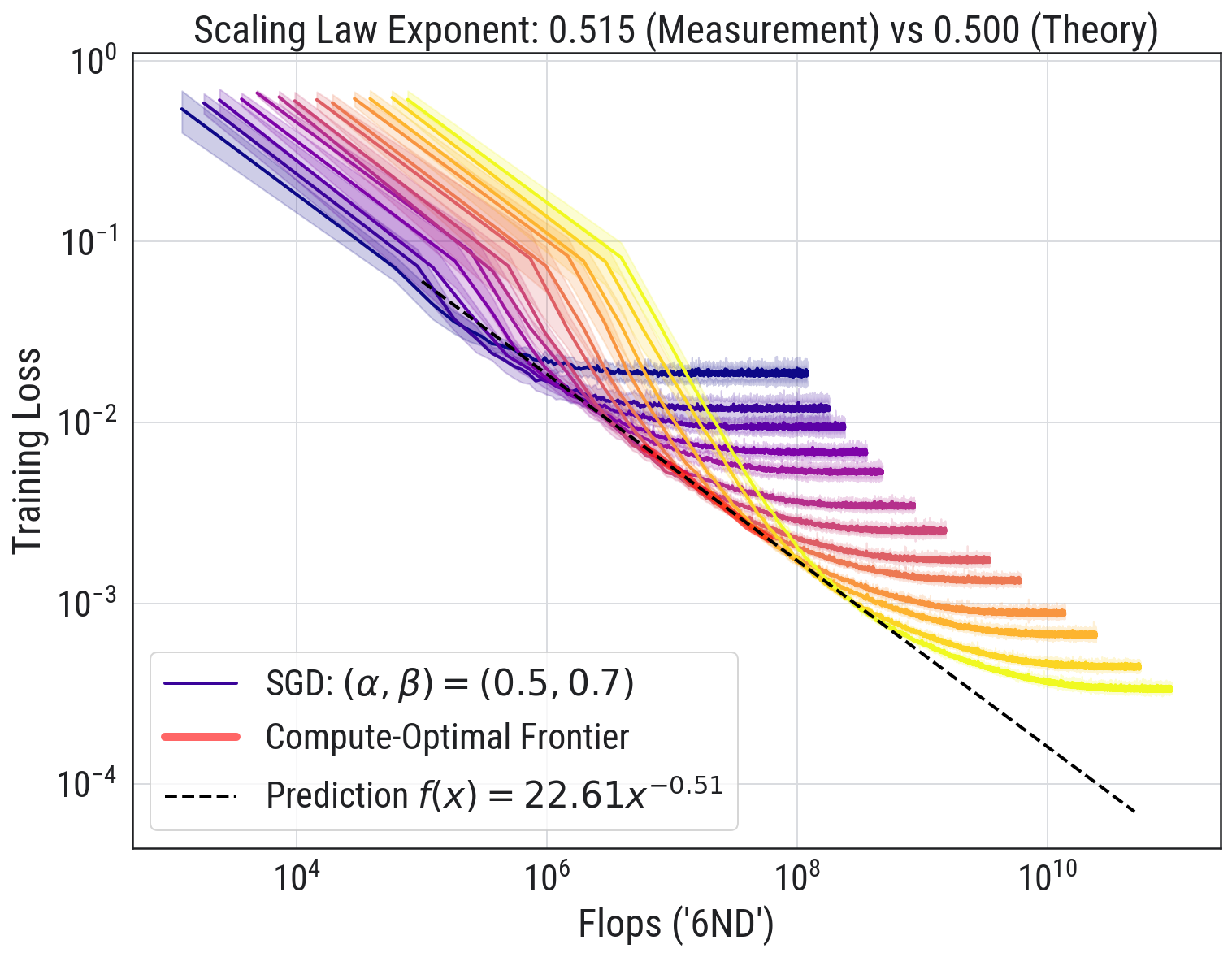}
         \caption{Compute-Optimal Frontier}
         \label{fig:compute-optimal-front-fit-appendix}
     \end{subfigure}
     \hfill
             \caption{ Measuring the scaling law exponent for $(\alpha, \beta) = (0.5, 0.7)$.
        }
        \label{fig:compute-optimal-1201}
\end{figure}

\subsection{Measuring the Scaling Law Exponent}
We follow Sec 3.1 in \cite{hoffmann2022chinchilla}. First, we choose an IsoFLOP window $[\f_{\text{min}}, \f_{\text{max}}] $ and construct $\f_{j}$'s using a geometric spacing between $\f_{1} = \f_{\text{min}}$ and $\f_n = \f_{\text{max}}$. For each IsoFLOP slice, $\f_j$, (e.g., $\f_j=2e7$ is the vertical line in Fig.~\ref{fig:iso-flop-1}), we find the minimum loss across all $d$. We denote this minimum value by $\CMscr{P}^\star(\f_j)$ and the associated optimal parameter by $d^{\star}(\f_j)$. As an example, in Fig.~\ref{fig:iso-flop-1}, $\CMscr{P}^{\star}(\f_j) = 1.6e-3$ and the associated optimal parameter $d^{\star} (\f_j) = 6400$.

We obtain the compute-optional frontier (highlighted in red in Fig.~\ref{fig:compute-optimal-front-fit-appendix}) by plotting
\begin{align}
    [(\f_j, \CMscr{P}^\star(\f_j)]_{1\leq j\leq n},
\end{align}
and the optimal parameter count 
\begin{align}\label{eq:scatter-params-count}
    [(\f_j, d^\star(\f_j)]_{1\leq j\leq n}.
\end{align}
We then fit a power-law curve $\CMscr{P}^\star(\f) = a \times \f^{-\hat \sle}$ to predict the relationship between the compute $\f$ and optimal loss $\CMscr{P}^\star$. This is shown as the dashed line in Fig.~\ref{fig:compute-optimal-front-fit-appendix}. For $(\alpha, \beta) = (0.5, 0.7)$, this gives 
$$
\CMscr{P}^\star(\f) = 22.61 \times  \f^{- 0.515}, \
$$
whereas our theoretical result predicts 
$$
\mathscr{P}^\star(\f) \asymp  \f^{- 0.5} \,.
$$

\subsection{Measuring Parameter Count Exponent: Approach 0}
One benefit of our theoretical framework is that the solution of the Volterra equation (eq.~\ref{eq:volterra_equation}) is deterministic. As such, precise numerical evaluation can determine the instantaneous slope of the compute-optimal curves using a new approach that is not necessarily feasible when dealing with noisy SGD curves. Specifically, we search for the unique tangent line that intersects the loss-versus-flops curves for two adjacent values of $d$, i.e. we numerically solve the following system for $f_1$ and $f_2$:
\begin{equation}
    P_1'(f_1) = P_2'(f_2) = \frac{P_2(f_2) - P_1(f_1)}{f_2 - f_1}\,,
\end{equation}
where $P_1$ is the loss curve for $d=d_1$ and $P_2$ is the loss curve for $d=d_2$. When $d_1$ and $d_2$ are close, we obtain an accurate estimate of the parameter count exponent by measuring the discrete logarithmic derivative, $(\log(d_2) - \log(d_1))/(\log(f_2^*) - \log(f_1^*))$.
\subsection{Measuring Parameter Count Exponent: Approach 1}

To predict the optimal parameter count exponent, we fit the function $d^{\star} = a \times \f^{b}$, $a,b$ constants, to the measurements in \eqref{eq:scatter-params-count} (see e.g., Fig.~\ref{fig:appendix-approach1}). For the example $(\alpha, \beta) = (0.5, 0.7)$ (Fig.~\ref{fig:appendix-approach1}), this approach gives
\begin{align}
    d^\star = .25 \times \f^{0.551}. 
\end{align}
Note that the fit of the exponent is very sensitive to the choice of IsoFLOP window. When we change the window from $[1e6, 1e8]$ to $[2e6, 0.5e8]$, the parameter count exponent changes from $0.51$ to $0.58$, as shown in Fig.\ref{fig:window2-appendix}. The theoretical prediction of this exponent is $0.5$.

\begin{figure}[t!]
     \centering
     \begin{subfigure}[h]{0.45\textwidth}
         \centering
         \includegraphics[width=\textwidth]{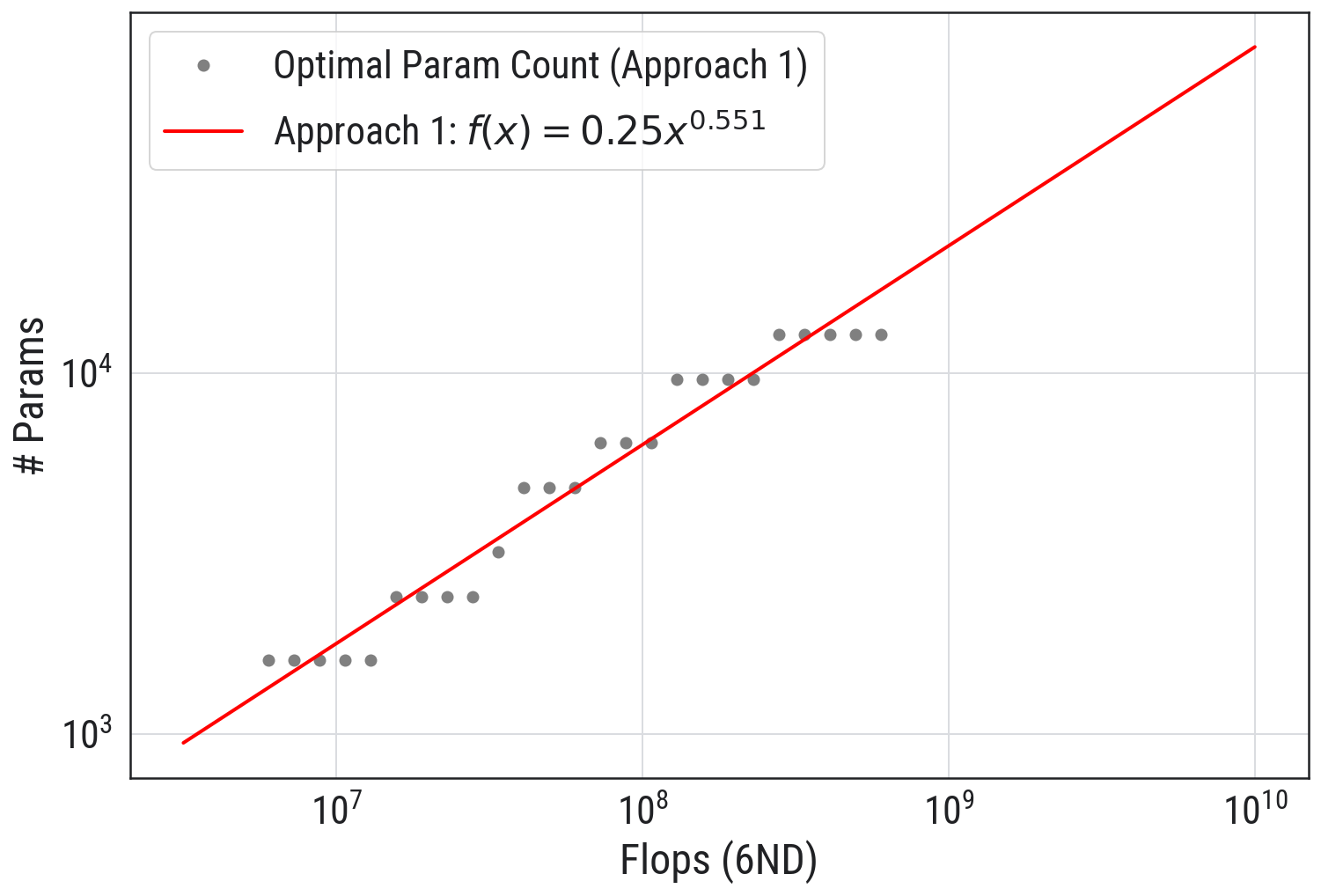}
         \caption{IsoFLOP Window [1e6, 1e8]}
         \label{fig:appendix-approach1}
     \end{subfigure}
     \hfill
     \begin{subfigure}[h]{0.45\textwidth}
         \centering
         \includegraphics[width=\textwidth]{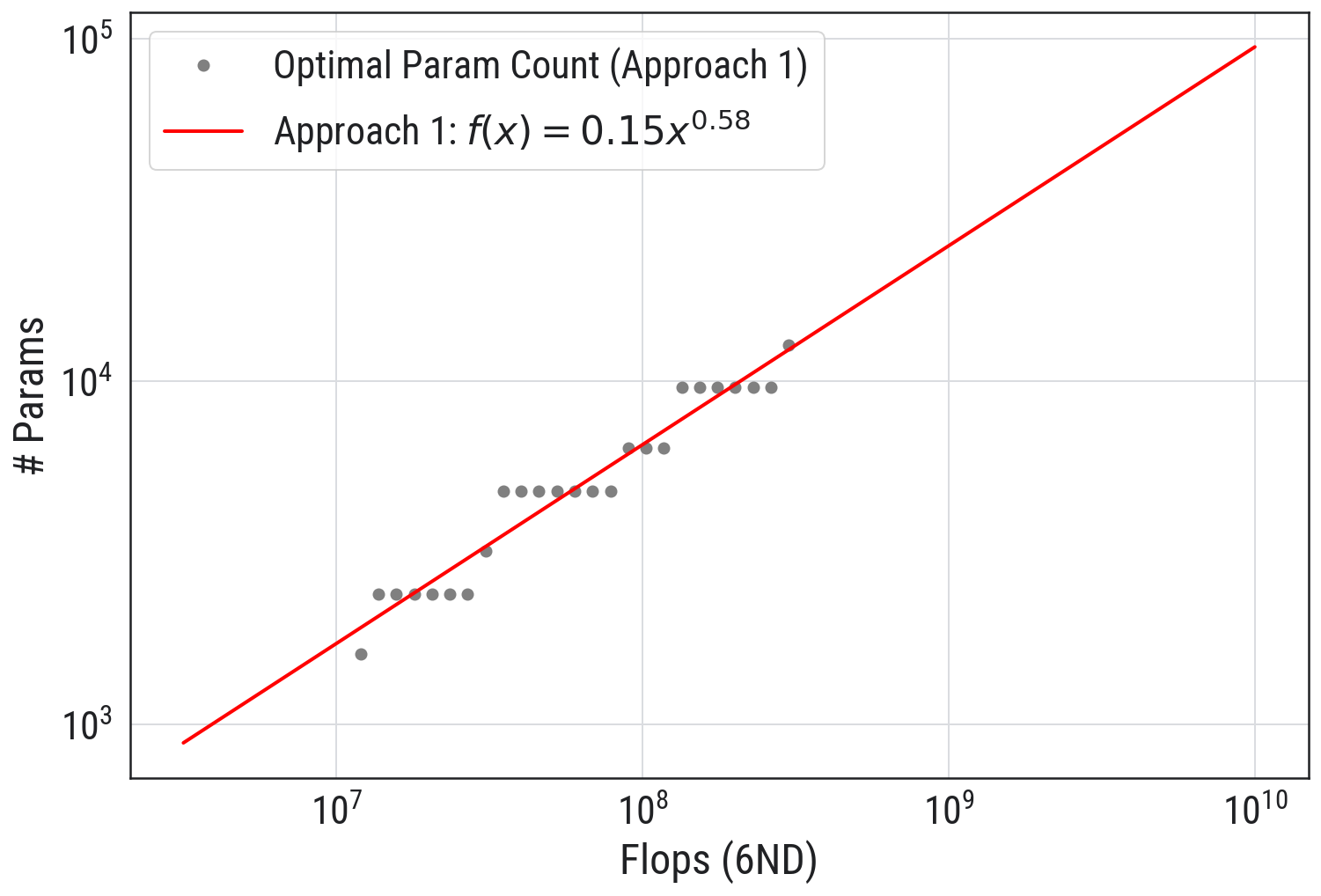}
         \caption{ IsoFLOP Window [2e6, 0.5e8]}
         \label{fig:window2-appendix}
     \end{subfigure}
     \hfill
        \caption{2 different IsoFLOP windows for measuring the parameter count exponent with Approach 1 for $(\alpha, \beta) = (0.5, 0.7)$. 
        }
        \label{fig:compute-optimal-1202}
\end{figure}

\begin{figure}[b!]
     \centering
     \begin{subfigure}[h]{0.45\textwidth}
         \centering
         \includegraphics[width=\textwidth]{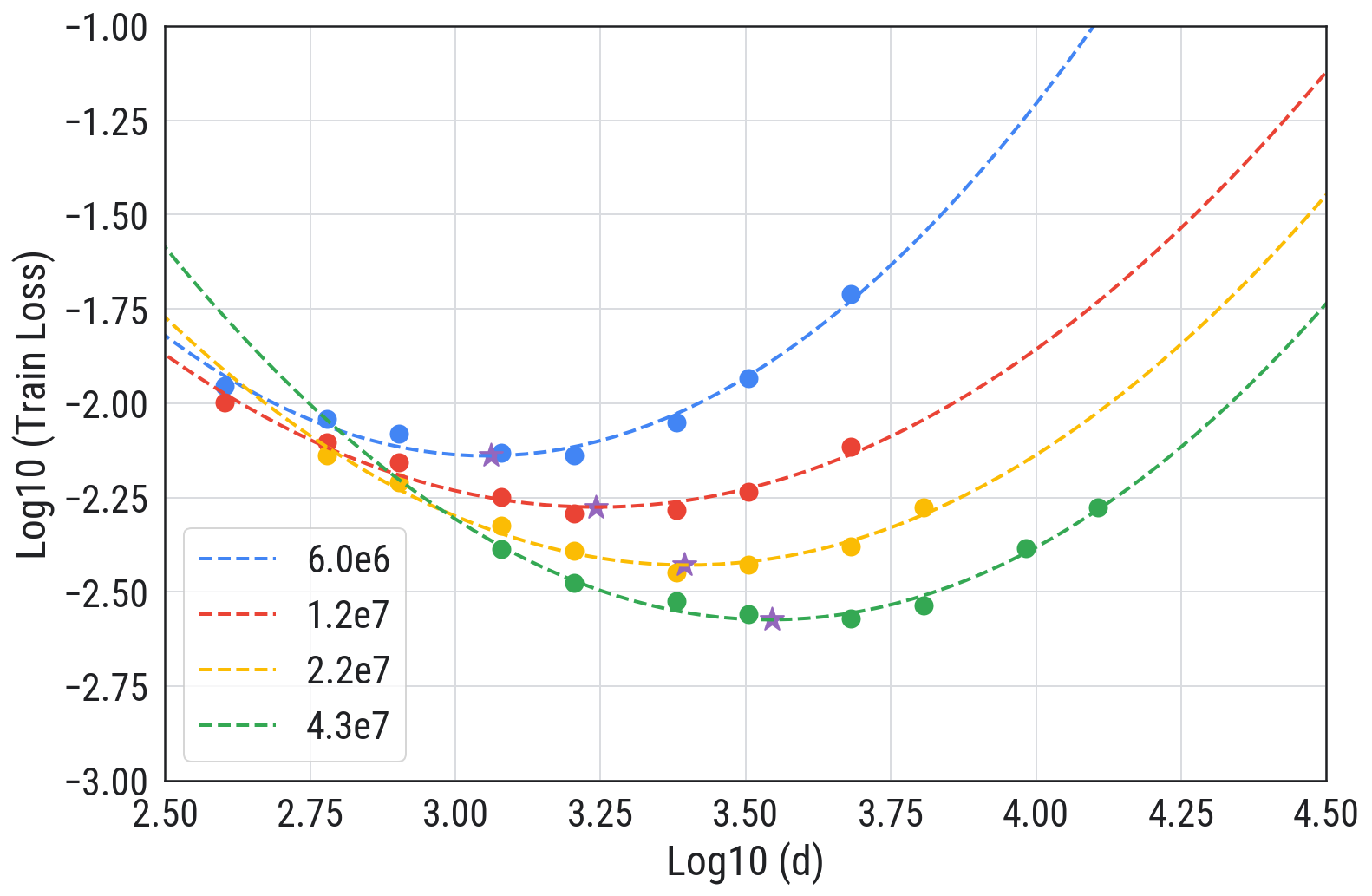}
         \caption{IsoFLOP Quadratic fit}
         \label{fig:IsoFlop Quadratic fit}
     \end{subfigure}
     \hfill
     \begin{subfigure}[h]{0.45\textwidth}
         \centering
         \includegraphics[width=\textwidth]{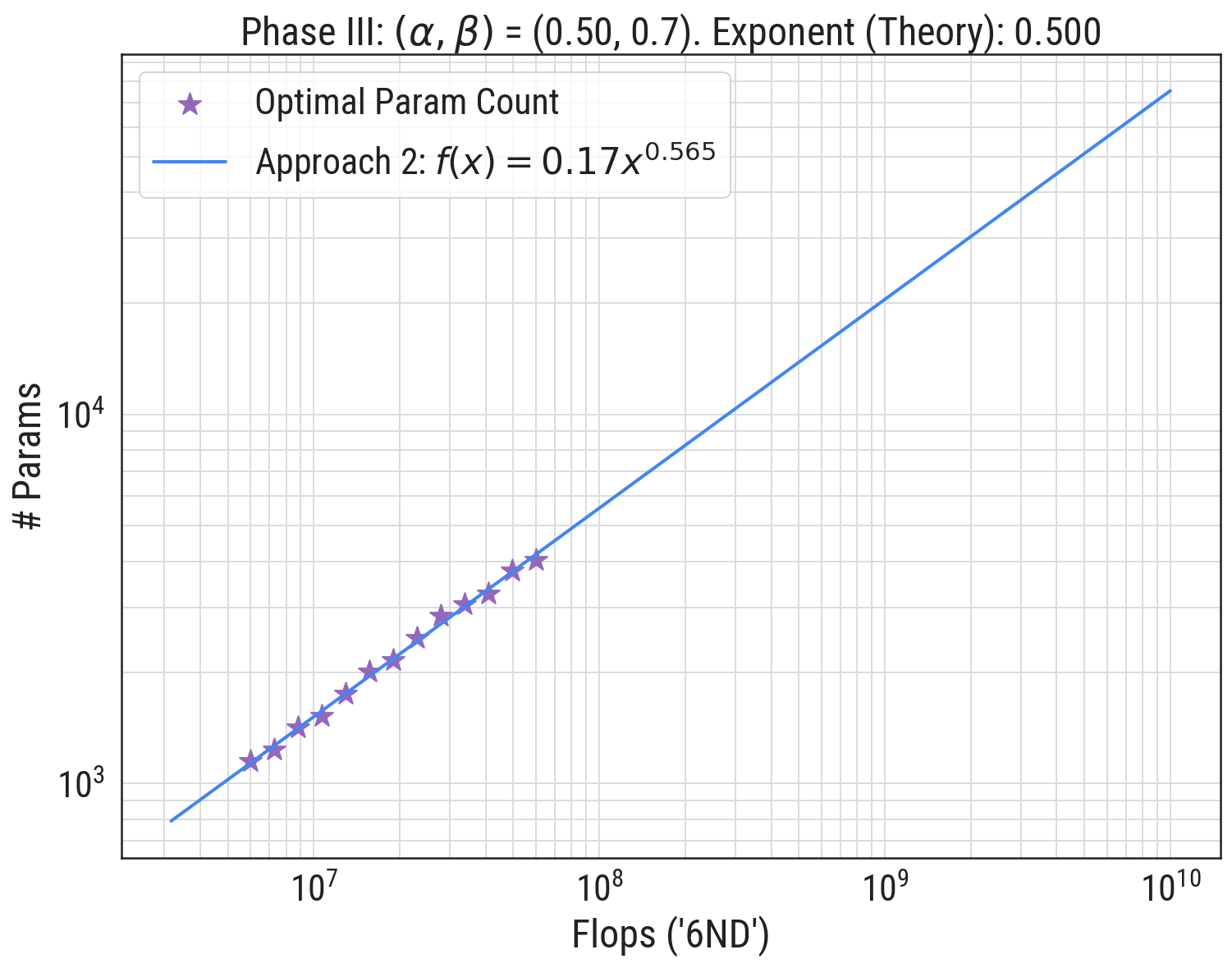}
         \caption{Approach 2}
         \label{fig:approach2-appendix}
     \end{subfigure}
     \hfill
        \caption{Measuring parameter count exponent with Approach 2 for $(\alpha, \beta) = (0.5, 0.7)$.
        }
        \label{fig:compute-optimal-1203}
\end{figure}

\subsection{Measuring Parameter Count Exponent: Approach 2}
For each IsoFLOP slice, $\f_j$, we obtain a set of training loss values depending on $d$, $\{\CMscr{P}(\f_j, d_i)\}_{1\leq i\leq m}$. In our running example, $d_1=200$ and $d_m=12800$ (see Fig.~\ref{fig:iso-flop-1}). We then fit a parabola (quadratic function) to $\{(\log \CMscr{P}(\f_j, d_i), \log d_i)\}_{1\leq i\leq m}$, i.e., we find $(a, b, c)$ such that 
$$
\log \CMscr{P}(\f_j, d_i) = a \log^2 d_i + b\log d_i + c.
$$
This is shown in Fig.~\ref{fig:IsoFlop Quadratic fit}. After solving for $(a, b, c)$, we find the $d^\star(f_j)$ that minimizes $ a^2 \log^2 d + b\log d + c$. Repeating this procedure for all $\f_j$'s gives a set of pairs $\{(\f_j, d^{\star}_j)\}_{1\leq j\leq n}$. In the final step, we power-law fit this set. For the example $(\alpha, \beta) = (0.5, 0.7)$ (see Fig.~\ref{fig:approach2-appendix}), this gives
$$
d^\star = 0.17 \times \f^{0.565}.
$$

\subsection{Exponents comparison: Theory vs Measurement}
We compare the empirical measurements of the exponents against their theoretical predictions in Fig.~\ref{fig:compute-optimal-1204}. We chose three slices across the phase diagram 
\begin{enumerate}
    \item $\alpha=0.7$ Slice (Fig.~\ref{fig:Top Slice}), in which $(\alpha, \beta)$ goes from Phase Ia, II and III.
    \item $\alpha=0.27$ Slice (Fig.~\ref{fig:bottom_slice}), in which $(\alpha, \beta)$ goes from Phase Ib to Phase IVb.
    \item $\beta=0.7$ Slice 
    (Fig.~\ref{fig:right_slice}),, in which $(\alpha, \beta)$ goes from Phase Ic, IVb, IVa, III and to II. 
\end{enumerate}
For the scaling law exponents, the empirical measurement agrees with the theoretical prediction quite well.  
For the parameter count, the agreement is good but not as good as that of the scaling law exponents. Noticeably, there is disagreement between Approach 1 and Approach 2. Such disagreement is not surprising, as empirical measurements are sensitive to the choice of the IsoFLOP windows and we use the {\it same} IsoFLOP window $[1e6, 5e8]$ for all $(\alpha, \beta)$. This is clearly suboptimal. We briefly discuss this in the next subsection.

\begin{figure}[t!]
     \centering
     \begin{subfigure}[h]{0.3\textwidth}
         \centering
         \includegraphics[width=\textwidth]{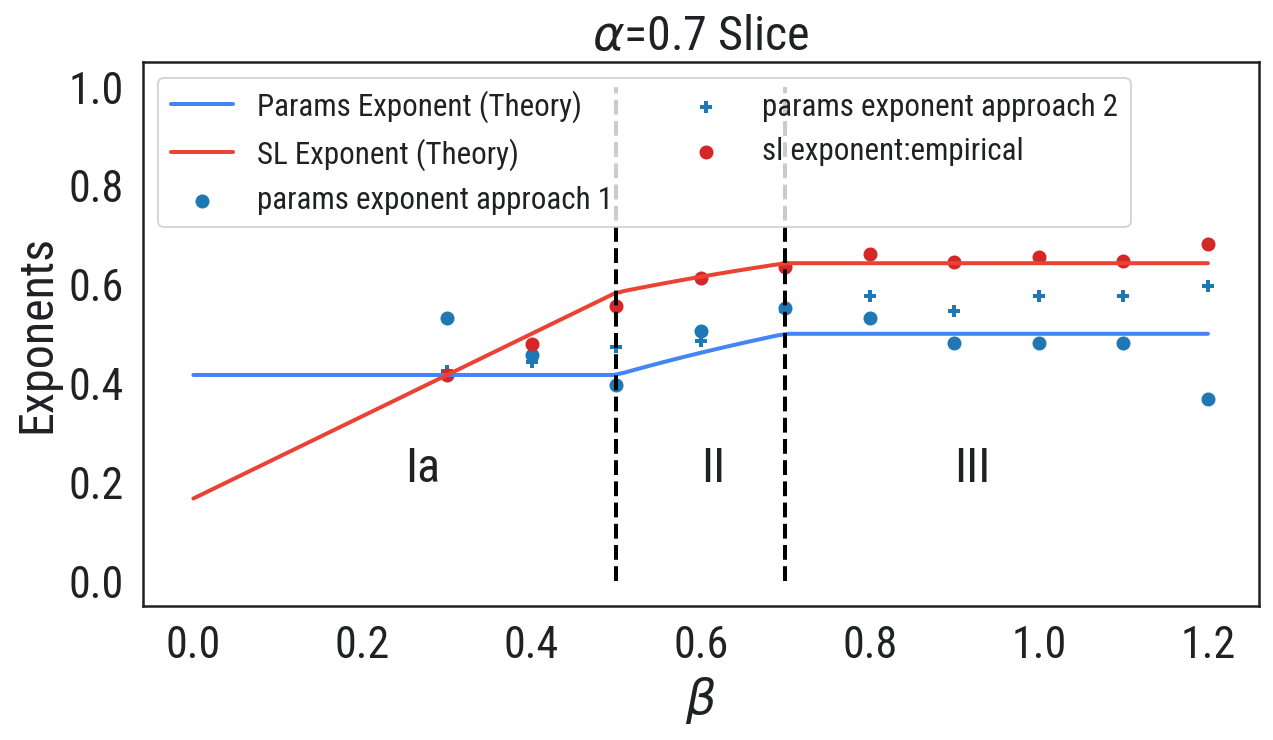}
         \caption{IsoFLOP Window
         $[1e6, 5e8]$}
         \label{fig:Top Slice}
     \end{subfigure}
     \hfill
     \begin{subfigure}[h]{0.3\textwidth}
         \centering
         \includegraphics[width=\textwidth]{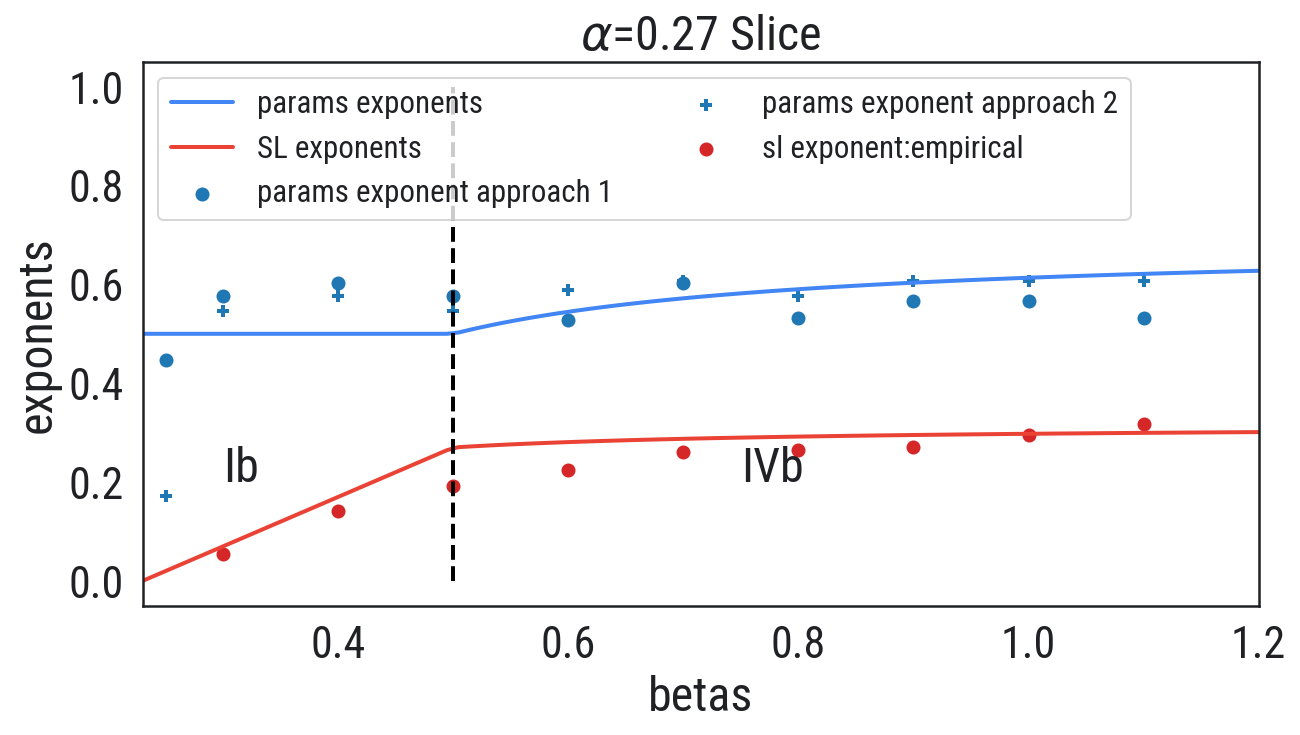}
         \caption{IsoFLOP Window
         $[1e6, 5e8]$}
         \label{fig:bottom_slice}
     \end{subfigure}
     \hfill
     \begin{subfigure}[h]{0.3\textwidth}
         \centering
         \includegraphics[width=\textwidth]{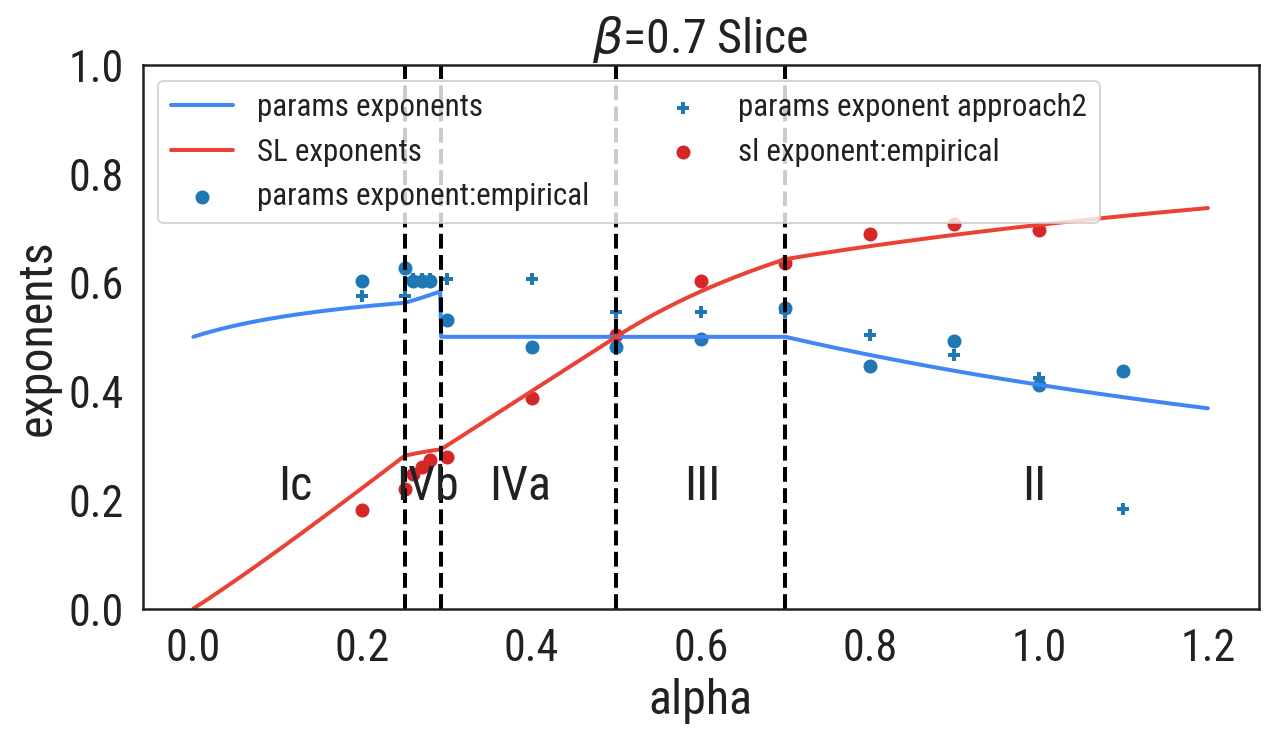}
         \caption{IsoFLOP Window
         $[1e6, 5e8]$}
         \label{fig:right_slice}
     \end{subfigure}
     \hfill
     \\
     \centering
     \begin{subfigure}[h]{0.3\textwidth}
         \centering
         \includegraphics[width=\textwidth]{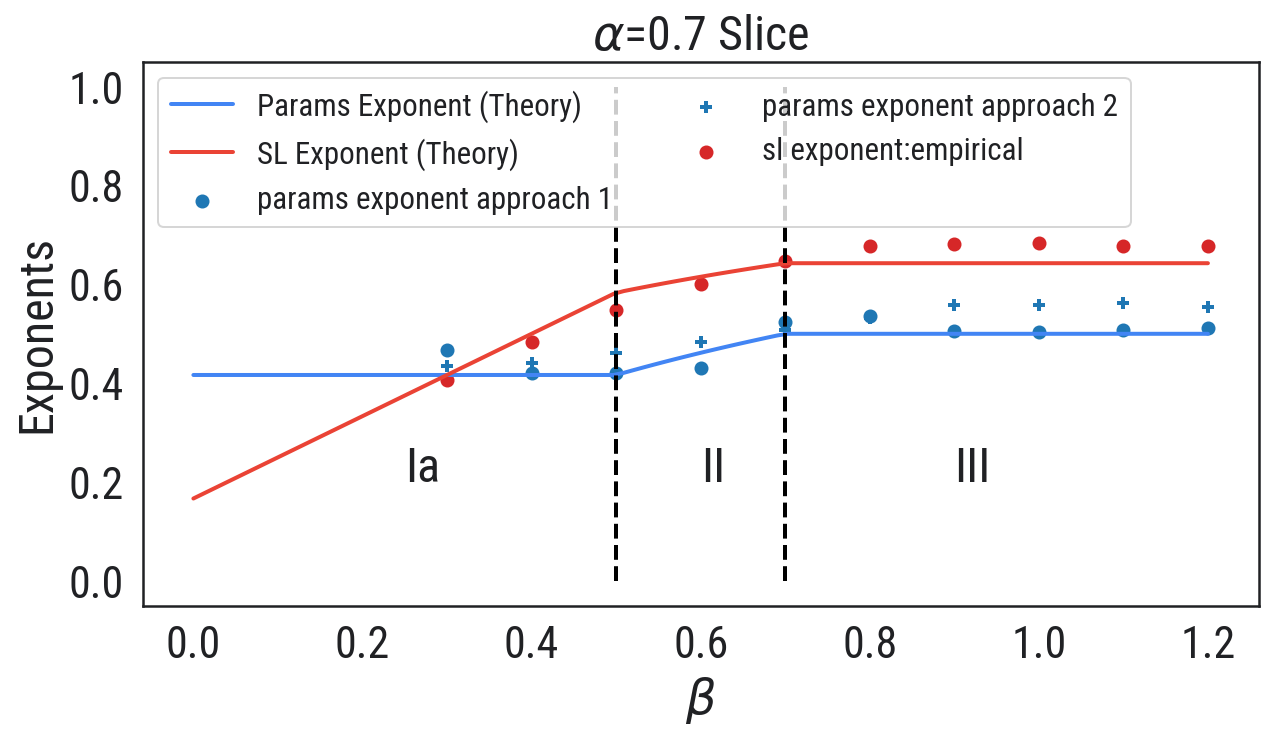}
         \caption{IsoFLOP Window
         $[1e6, 1e8]$}
         \label{fig:Top Slice_1}
     \end{subfigure}
     \hfill
     \begin{subfigure}[h]{0.3\textwidth}
         \centering
         \includegraphics[width=\textwidth]{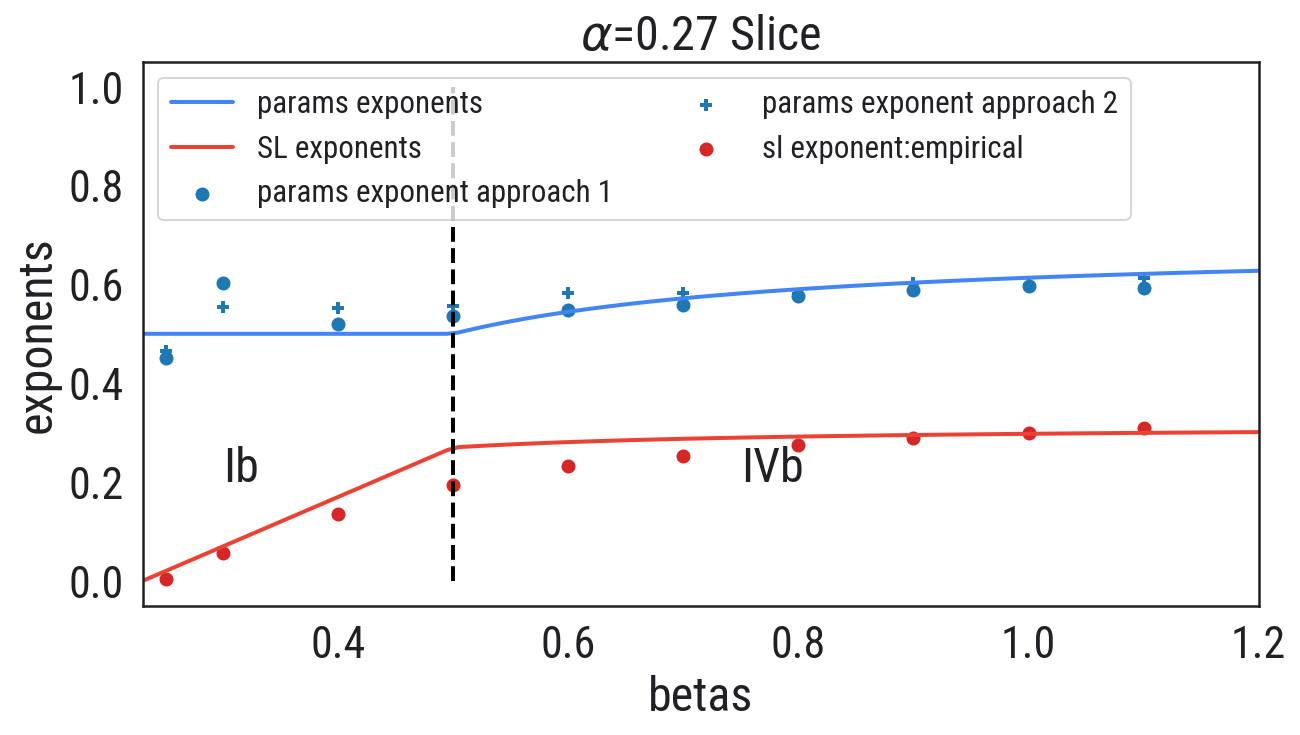}
         \caption{
         IsoFLOP Window
         $[1e6, 1e8]$}
         \label{fig:bottom_slice_1}
     \end{subfigure}
     \hfill
     \begin{subfigure}[h]{0.3\textwidth}
         \centering
         \includegraphics[width=\textwidth]{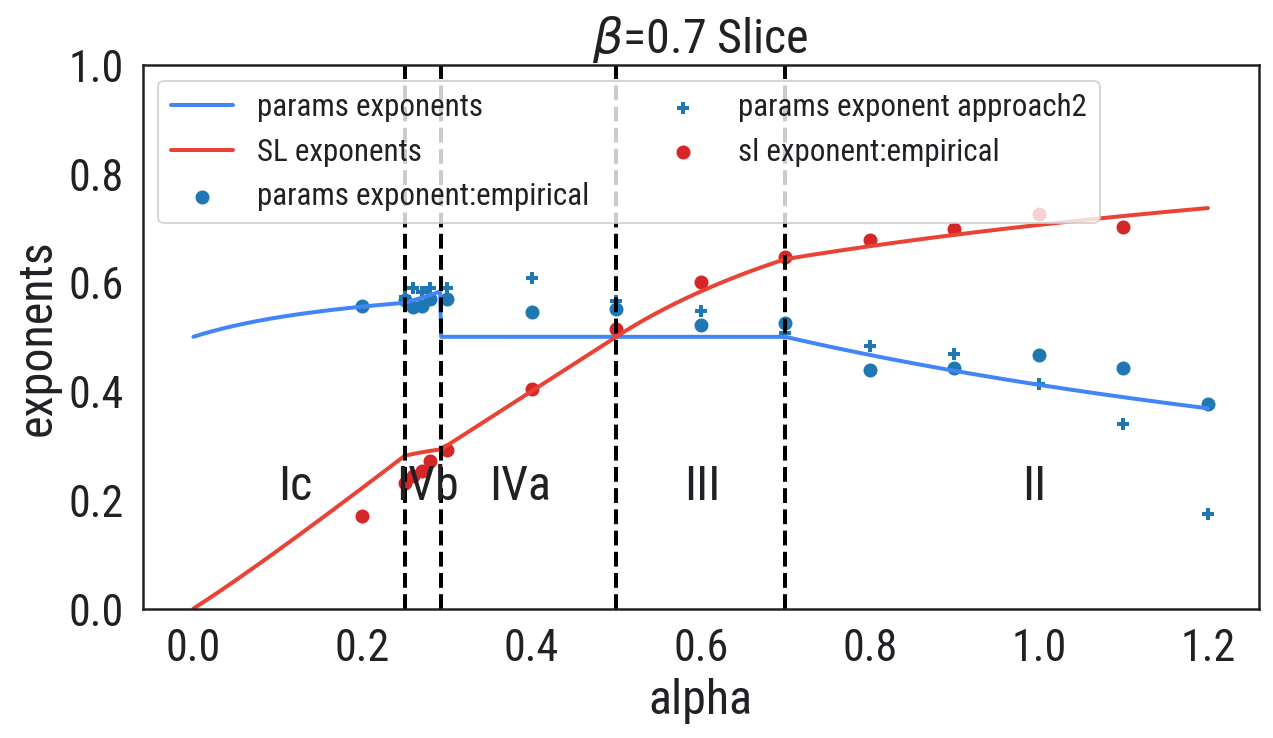}
         \caption{
         IsoFLOP Window
         $[1e6, 1e8]$}
         \label{fig:right_slice_1}
     \end{subfigure}
     \hfill
        \caption{
        The empirical measurements of the scaling law exponent and parameter count exponent are sensitive to the choice of the IsoFLOP windows. {\bf Top:} larger IsoFLOP window $[1e6, 5e8]$ {\bf Bottom:} smaller IsoFLOP window $[1e6, 1e8]$. 
        }
        \label{fig:compute-optimal-1204}
\end{figure}

\subsection{Instantaneous slope}
In this section, we demonstrate that there can be strong finite-size $d$ effects in the measurements of the scaling law and parameter count exponents. We measure the instantaneous slope as a function of parameter count $d$ for the Volterra equation \eqref{eq:volterra_equation}. See Fig.~\ref{fig:slope-change}. To do so, we generate the Volterra solutions for a geometrically spaced sequence of $d$'s with ratio $1.05$. We then apply Approach 1 with a very dense IsoFLOP window (100 IsoFLOP Slices between $[2e4, 2e7]$). We then slide a smaller IsoFLOP window (20 IsoFLOP slices) from left to right to generate a sequence of scaling law (parameter count) exponent, as shown in the middle (right) plot in Fig.~\ref{fig:slope-change}. These exponents varying slowly when the window slides from a small flops regime to a large flops regime. For example, the scaling law exponent $\sle$ changes from $\sle=0.440$ to $\sle=0.413$ from left to right, while the parameter count exponent changes from $\pce=0.450\to 0.575$. Using the global window (100 IsoFLOP) to measure these exponents, we have $\sle=0.42$ and $\pce=0.52$ which are very close to their average over all small windows: $\sle=0.418$ and $\pce=0.526$.  

\begin{figure}[!t]
    \centering
    \includegraphics[width=1.\textwidth]{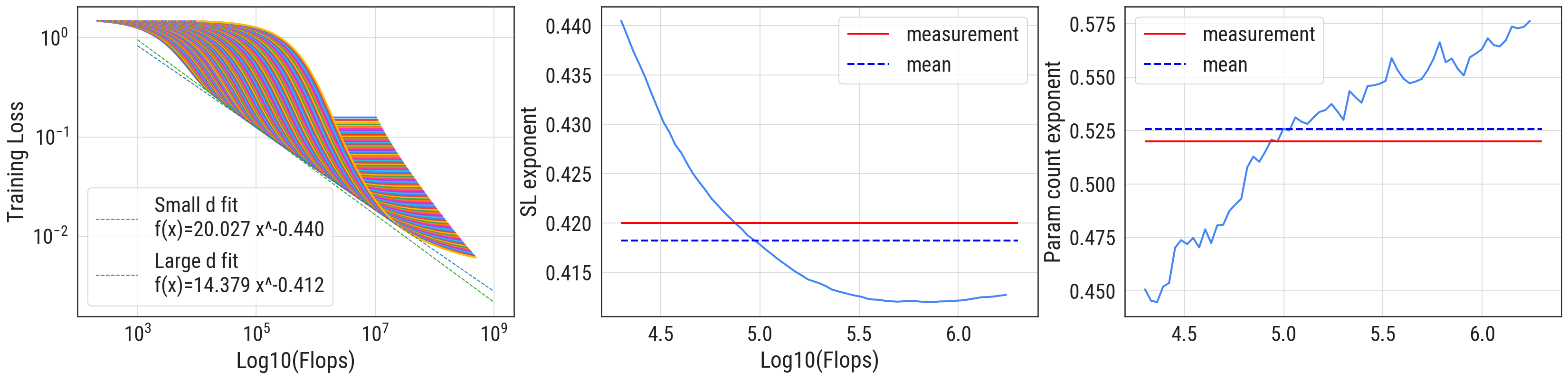}
    \caption{{\bf Instantaneous Slope.} {\bf (Left)} Volterra equation \eqref{eq:volterra_equation} dynamics for a highly dense grid of $d$. We also plot the compute-optimal front obtained from using the left small window (small flops regime) and the right window (larger flops regime). 
    {\bf (Middle)} Shows the evolving measurements of scaling exponents when the flops increases. 
    {\bf (Right)}
    Shows the evolving measurements of parameter count exponents when the flops increases. 
    }
    \label{fig:slope-change}
\end{figure}

\subsection{Negative $\beta$.} 
In Fig.~\ref{fig:appendix-negative-beta}, we show that our theoretical results work well for $\beta < 0$.
\begin{figure}[h!]
    \centering
    \includegraphics[scale=.25]{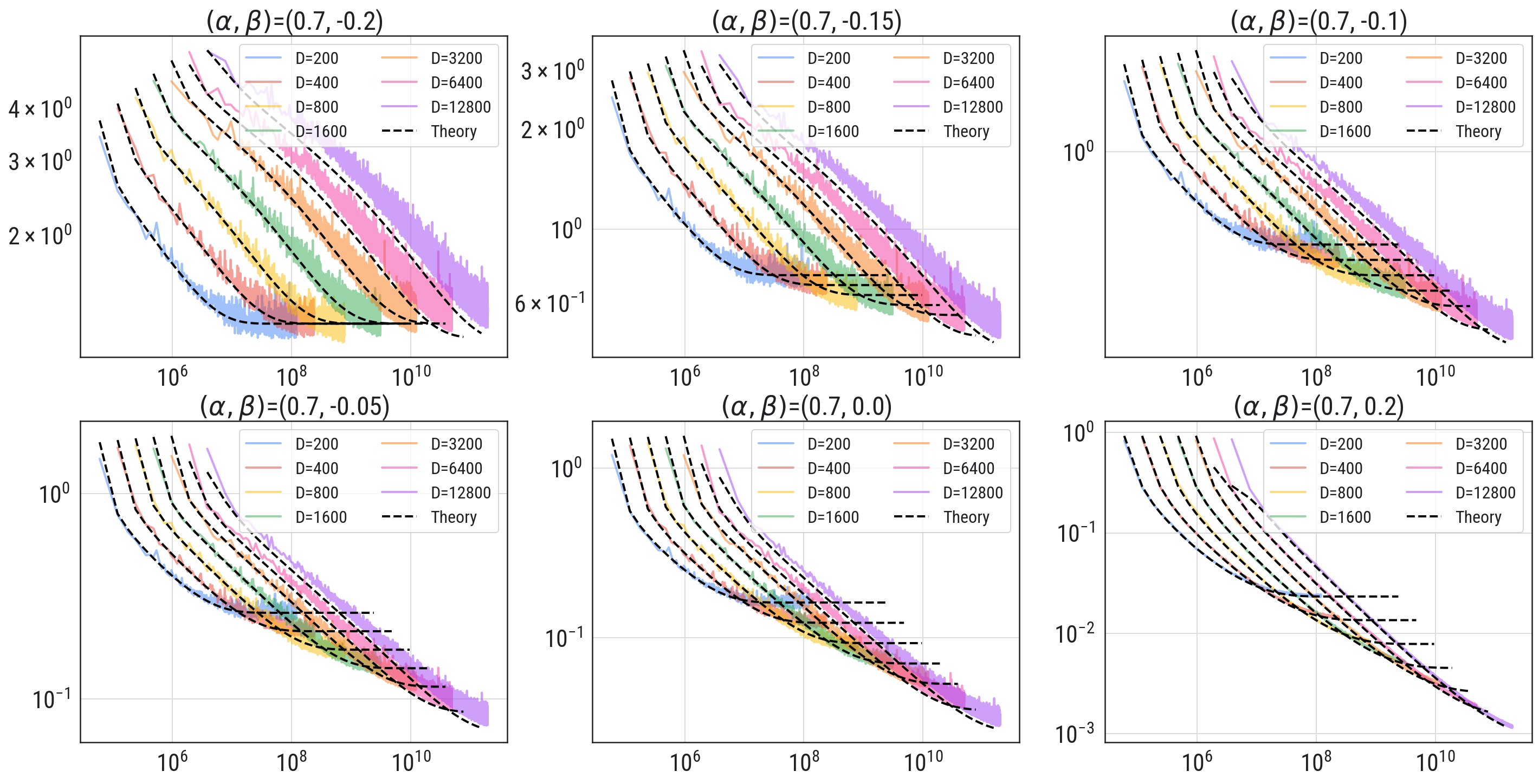}
    \caption{\textbf{Negative $\beta$.} Sweeping $\beta=-0.2$ to $\beta=0.2$. We see good agreement between Volterra (theory) dynamics and SGD dynamics.}
    \label{fig:appendix-negative-beta}
\end{figure}

\subsection{Additional plots for different phases}
We summarize the measurement of the scaling law exponents and optimal parameter count exponents in Fig.~\ref{fig:appendix-compute-optimal-front-all-phases} and Fig.~\ref{fig:appendix-optimal-params-exponents-all-phases}, resp. 
\begin{figure}
    \centering
    \includegraphics[scale=0.18]{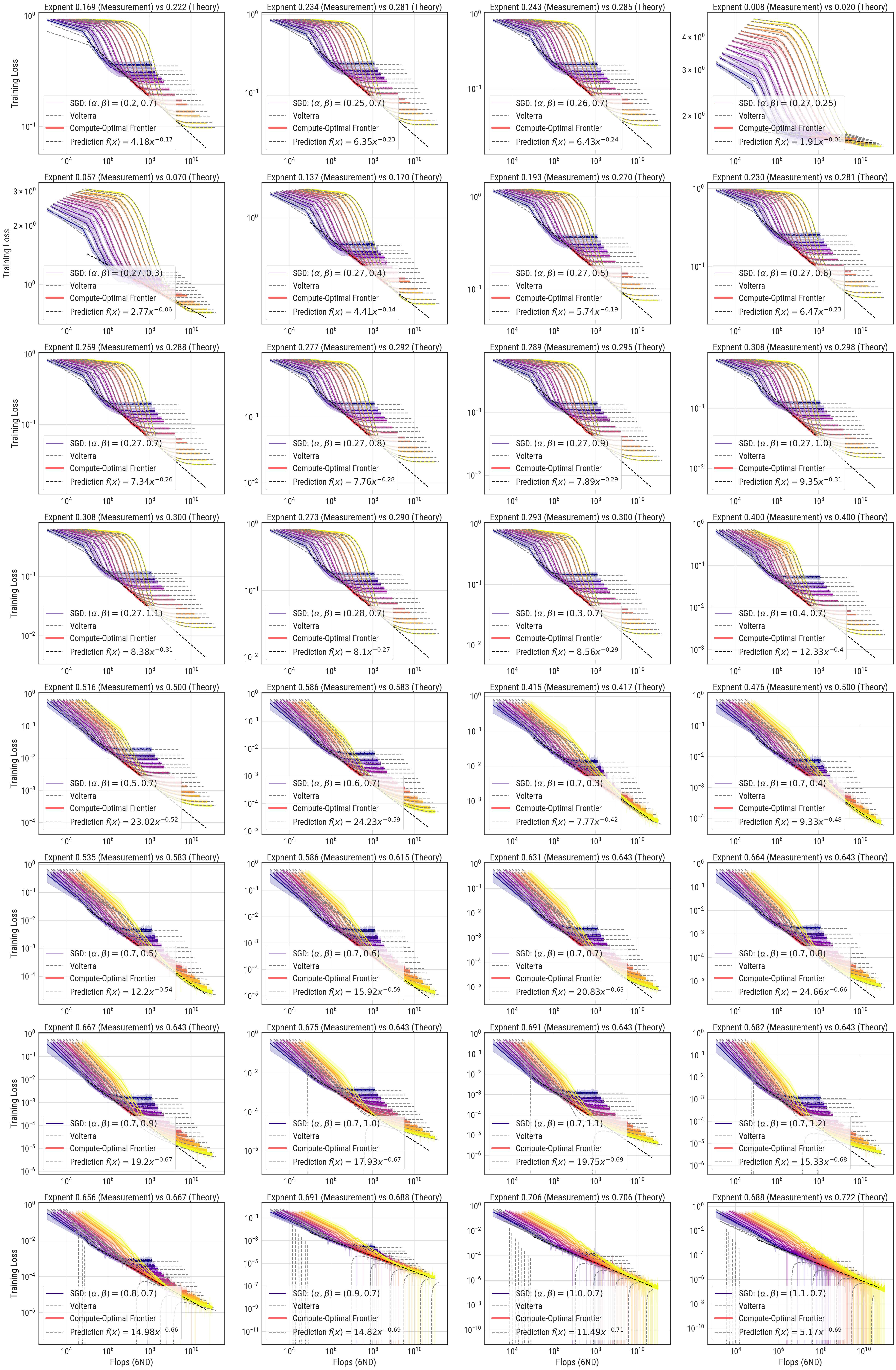}
    \caption{Theory vs. empirical scaling law across different phases.}
    \label{fig:appendix-compute-optimal-front-all-phases}
\end{figure}

\begin{figure}
    \centering
    \includegraphics[scale=0.18]{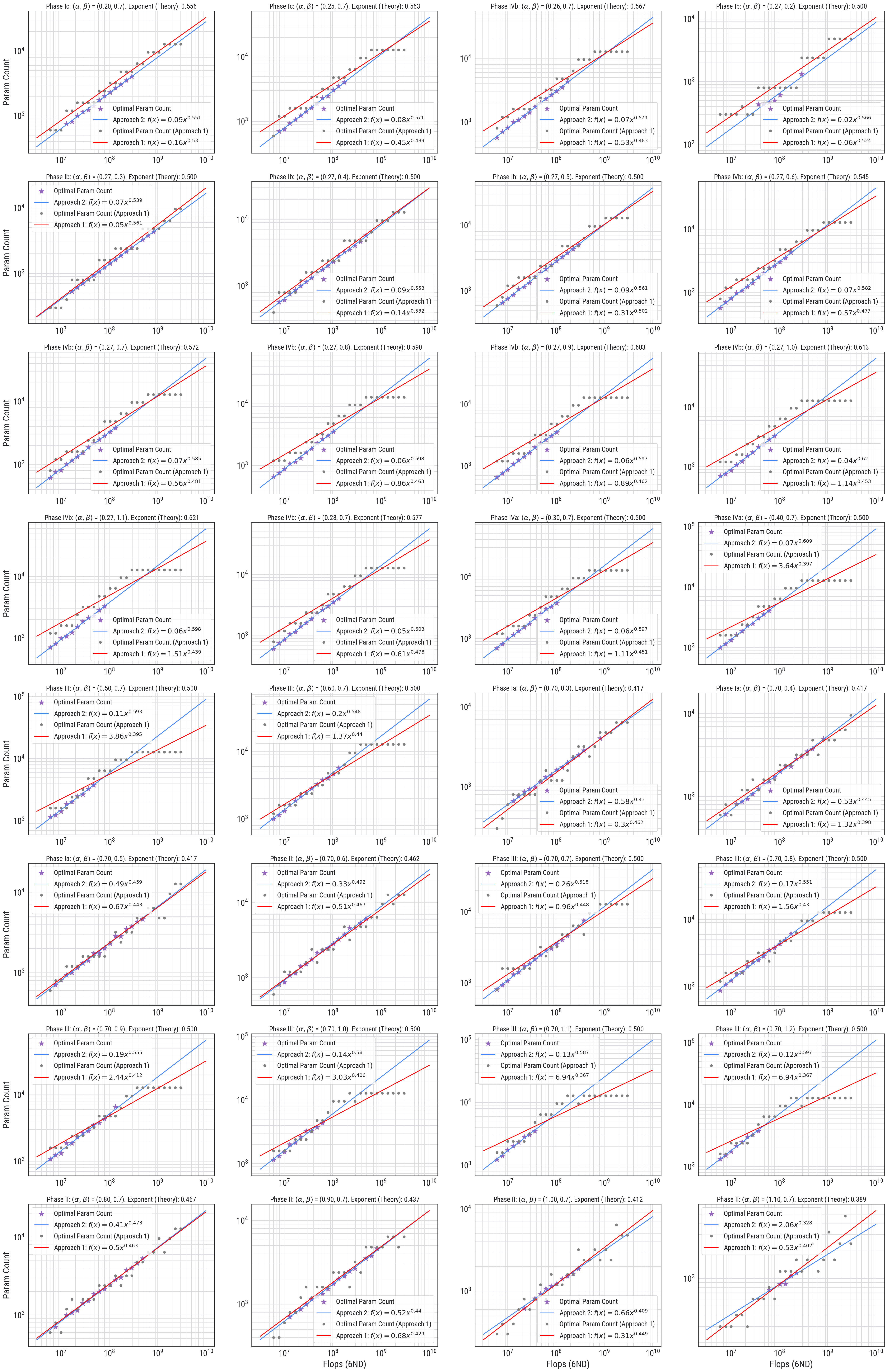}
    \caption{Theory vs. empirical optimal parameter count across different phases.}
    \label{fig:appendix-optimal-params-exponents-all-phases}
\end{figure}

\end{document}